\setlist[enumerate,1]{label={\normalfont(\roman*)},leftmargin=1.6cm}
\newtheorem{theorem}{Theorem}
\newtheorem{corollary}[theorem]{Corollary}
\newtheorem{proposition}[theorem]{Proposition}
\newtheorem{lemma}[theorem]{Lemma}
\newtheorem{definition}[theorem]{Definition}
\newtheorem{remark}{Remark}
\renewcommand{\eqref}[1]{Eq.~(\ref{#1})}
\newcommand{\figref}[1]{Fig.~\ref{#1}}
\title{Two-Layer Feature Reduction for Sparse-Group Lasso via Decomposition of Convex Sets}
\author{Jie Wang}
\author{Jieping Ye}
\affil{Computer Science and Engineering, Arizona State University,
            USA}
\date{}
\begin{document}

\maketitle

\begin{abstract}
Sparse-Group Lasso (SGL) has been shown to be a powerful regression technique for simultaneously discovering group and within-group sparse patterns by using a combination of the $\ell_1$ and $\ell_2$ norms. However, in large-scale applications, the complexity of the regularizers entails great computational challenges. In this \mbox{paper}, we propose a novel \textbf{t}wo-\textbf{l}ayer \textbf{f}eature \textbf{re}duction method (TLFre) for SGL via a decomposition of its dual feasible set. The \mbox{two}-layer reduction is able to \mbox{quickly} identify the inactive groups and the inactive features, respectively, which are guaranteed to be absent from the sparse representation and can be removed from the optimization. Existing feature reduction methods are only applicable for sparse models with one sparsity-inducing regularizer. To our best knowledge, TLFre is \emph{the first one} that is capable of dealing with \emph{multiple} sparsity-inducing \mbox{regularizers}. Moreover, TLFre has a very low computational cost and can be integrated with any existing solvers. We also develop a screening method---called DPC (\textbf{d}ecom\textbf{p}osition of \textbf{c}onvex set)---for the nonnegative Lasso problem. Experiments on both synthetic and real data sets show that TLFre and DPC improve the efficiency of SGL and nonnegative Lasso by several orders of magnitude.
\end{abstract}


\section{Introduction}

Sparse-Group Lasso (SGL) \cite{Friedman,Simon2013} is a powerful regression technique in identifying \mbox{important} \mbox{groups} and features simultaneously. To yield sparsity at both group and individual feature \mbox{levels}, SGL combines the Lasso \cite{Tibshirani1996} and group Lasso \cite{Yuan2006} penalties. In recent years, SGL has found great success in a wide range of applications, including but not limited to machine learning \cite{Vidyasagar2014,Yogatama2014}, signal processing \cite{Sprechmann2011}, bioinformatics \cite{Peng2010} etc. Many research efforts have been devoted to developing efficient solvers for SGL \cite{Friedman,Simon2013,Liu2010,Vincent2014}. However, when the feature dimension is extremely high, the complexity of the SGL regularizers imposes great computational challenges. Therefore, there is an increasingly urgent need for nontraditional techniques to address the challenges posed by the massive volume of the data sources.

Recently, El Ghaoui et al. \cite{ElGhaoui2012} proposed a promising feature reduction method, called \emph{SAFE screening}, to screen out the so-called \emph{inactive} features, which have zero coefficients in the solution, from the optimization. Thus, the size of the data matrix needed for the training phase can be \mbox{significantly} reduced, which may lead to substantial improvement in the efficiency of solving sparse models. \mbox{Inspired} by SAFE, various exact and heuristic feature screening methods have been proposed for many sparse models such as Lasso \cite{Wang2013,Liu2014,Tibshirani11,Xiang2012}, group Lasso \cite{Wang2013,Wang,Tibshirani11}, etc. It is worthwhile to mention that the discarded features by exact feature screening methods such as SAFE \cite{ElGhaoui2012}, DOME \cite{Xiang2012} and EDPP \cite{Wang2013} are guaranteed to have zero coefficients in the solution. However, heuristic feature screening methods like Strong Rule \cite{Tibshirani11} may mistakenly discard features which have \mbox{nonzero} coefficients in the solution. More recently, the idea of exact feature screening has been extended to exact sample screening, which screens out the nonsupport vectors in SVM \cite{Ogawa2013,Wang2014} and LAD \cite{Wang2014}. As a promising data reduction tool, exact feature/sample screening would be of great practical importance because they can effectively reduce the data size without sacrificing the optimality \cite{Ogawa2014}. 

However, all of the existing feature/sample screening methods are only applicable for the sparse models with one sparsity-inducing regularizer. In this paper, we propose an exact two-layer feature screening method, called TLFre, for the SGL problem. The two-layer reduction is able to quickly identify the inactive groups and the inactive features, respectively, which are guaranteed to have zero coefficients in the solution. To the best of our knowledge, TLFre is the first screening method which is capable of dealing with multiple sparsity-inducing regularizers. 

We note that most of the existing exact feature screening methods involve an estimation of the dual optimal solution. The difficulty in developing screening methods for sparse models with multiple sparsity-inducing regularizers like SGL is that the dual feasible set is the sum of simple convex sets. Thus, to determine the feasibility of a given point, we need to know if it is decomposable with respect to the summands, which is itself a nontrivial problem (see Section \ref{section:basics}). One of our major contributions is that we derive an elegant decomposition method of any dual feasible solutions of SGL via the framework of Fenchel's duality (see Section \ref{section:Fenchel_Dual_SGL}). Based on the Fenchel's dual problem of SGL, we motivate TLFre by an in-depth exploration of its geometric properties and the optimality conditions in Section \ref{section:screening_rules_SGL}.
We derive the set of the regularization parameter values corresponding to zero solutions.
To develop TLFre, we need to estimate the upper bounds involving the dual optimal solution. To this end, we first give an accurate estimation of the dual optimal solution via the normal cones. Then, we formulate the estimation of the upper bounds via nonconvex optimization problems. We show that these nonconvex problems admit closed form solutions.

The rest of this paper is organized as follows. In Section \ref{section:basics}, we briefly review some basics of the SGL problem. We then derive the Fenchel's dual of SGL with nice geometric properties under the elegant framework of Fenchel's Duality in Section \ref{section:Fenchel_Dual_SGL}. In Section \ref{section:screening_rules_SGL}, we develop the TLFre screening rule for SGL. To demonstrate the flexibility of the proposed framework, we extend TLFre to the nonnegative Lasso problem in Section \ref{section:nnlasso}. Experiments in Section \ref{section:experiments} on both synthetic and real data sets demonstrate that the speedup gained by the proposed screening rules in solving SGL and nonnegative Lasso can be orders of magnitude.

\textbf{Notation}: Let $\|\cdot\|_1$, $\|\cdot\|$ and $\|\cdot\|_{\infty}$ be the $\ell_1$, $\ell_2$ and $\ell_{\infty}$ norms, respectively. Denote by $\mathcal{B}_1^n$, $\mathcal{B}^n$, and $\mathcal{B}_{\infty}^n$ the unit $\ell_1$, $\ell_2$, and $\ell_{\infty}$ norm balls in $\mathbb{R}^n$ (we omit the superscript if it is clear from the context). For a set $\mathcal{C}$, let $\textup{int}\,\mathcal{C}$ be its interior. If $\mathcal{C}$ is closed and convex, we define the projection operator as
$\mathbf{P}_{\mathcal{C}}(\mathbf{w}):=\textup{argmin}_{\mathbf{u}\in\mathcal{C}}\|\mathbf{w}-\mathbf{u}\|$.
We denote by $\mathbf{I}_{\mathcal{C}}(\cdot)$ the indicator function of $\mathcal{C}$, which is $0$ on $\mathcal{C}$ and $\infty$ elsewhere.
Let $\Gamma_0(\mathbb{R}^n)$ be the class of proper closed convex functions on $\mathbb{R}^n$. For $f\in\Gamma_0(\mathbb{R}^n)$, let $\partial f$ be its subdifferential. The domain of $f$ is the set $\textup{dom}\,f:=\{\mathbf{w}:f(\mathbf{w})<\infty\}$.
For $\mathbf{w}\in\mathbb{R}^n$, let $[\mathbf{w}]_i$ be its $i^{th}$ component. For $\gamma\in\mathbb{R}$, let $\textup{sgn}(\gamma)=\textup{sign}(\gamma)$ if $\gamma\neq0$, and $\textup{sgn}(0)=0$. We define
\begin{align*}
	\textup{SGN}(\mathbf{w})=
	\left\{
	\mathbf{s}\in\mathbb{R}^n:[\mathbf{s}]_i\in
	\begin{cases}
		\textup{sign}([\mathbf{w}]_i),\hspace{2mm}\textup{if}\hspace{1mm}[\mathbf{w}]_i\neq0;\\
		[-1,1],\hspace{7mm}\textup{if}\hspace{1mm}[\mathbf{w}]_i=0.
	\end{cases}
	\right\}
\end{align*}
We denote by $\gamma_+=\max(\gamma,0)$. Then, the shrinkage operator $\mathcal{S}_{\gamma}(\mathbf{w}):\mathbb{R}^n\rightarrow\mathbb{R}^n$ with $\gamma\geq0$ is
\begin{align}\label{def:shrinkage_SGL}
	[\mathcal{S}_{\gamma}(\mathbf{w})]_i=(|[\mathbf{w}]_i|-\gamma)_+\textup{sgn}([\mathbf{w}]_i),\,i=1,\ldots,n.
\end{align}

\section{Basics and Motivation}\label{section:basics}

In this section, we briefly review some basics of SGL.
Let $\textbf{y}\in\mathbb{R}^N$ be the response vector and $\textbf{X}\in\mathbb{R}^{N\times p}$ be the matrix of features. With the group information available, the SGL problem \cite{Friedman} is
\begin{align}\label{prob:SGL0}
	\min_{\beta\in\mathbb{R}^p}\,\frac{1}{2}\left\|\textbf{y}-\sum\nolimits_{g=1}^G\textbf{X}_g\beta_g\right\|^2+\lambda_1\sum\nolimits_{g=1}^G\sqrt{n_g}\|\beta_g\|+\lambda_2\|\beta\|_1,
\end{align}
where $n_g$ is the number of features in the $g^{th}$ group, $\textbf{X}_g\in\mathbb{R}^{N\times n_g}$ denotes the predictors in that group with the corresponding coefficient vector $\beta_g$, and $\lambda_1,\lambda_2$ are positive regularization parameters. Without loss of generality, let $\lambda_1=\alpha\lambda$ and $\lambda_2=\lambda$ with $\alpha>0$. Then, problem (\ref{prob:SGL0}) becomes:
\begin{align}\label{prob:SGL}
	\min_{\beta\in\mathbb{R}^p}\,\frac{1}{2}\left\|\textbf{y}-\sum\nolimits_{g=1}^G\textbf{X}_g\beta_g\right\|^2+\lambda\left(\alpha\sum\nolimits_{g=1}^G\sqrt{n_g}\|\beta_g\|+\|\beta\|_1\right).
\end{align}
By the Lagrangian multipliers method \cite{Boyd04} (see the supplement), the dual problem of SGL is
\begin{align}\label{prob:dual_SGL_Lagrangian}
	\sup_{\theta}\,\left\{\tfrac{1}{2}\|\mathbf{y}\|^2-\tfrac{1}{2}\left\|\tfrac{\mathbf{y}}{\lambda}-\theta\right\|^2: \mathbf{X}_g^T\theta\in\mathcal{D}_g^{\alpha}:=\alpha\sqrt{n_g}\mathcal{B}+\mathcal{B}_{\infty},\,g=1,\ldots,G\right\}.
\end{align}
It is well-known that the dual feasible set of Lasso is the intersection of closed half spaces (thus a polytope); for group Lasso, the dual feasible set is the intersection of ellipsoids. Surprisingly, the geometric properties of these dual feasible sets play fundamentally important roles in most of the existing screening methods for sparse models with one sparsity-inducing regularizer \cite{Wang2014,Liu2014,Wang2013,ElGhaoui2012}.

When we incorporate multiple sparse-inducing regularizers to the sparse models, problem (\ref{prob:dual_SGL_Lagrangian}) indicates that the dual feasible set can be much more complicated.  Although (\ref{prob:dual_SGL_Lagrangian}) provides a geometric description of the dual feasible set of SGL, it is not suitable for further analysis. Notice that, \emph{even the feasibility of a given point $\theta$ is not easy to determine}, since it is nontrivial to tell if $\mathbf{X}_g^T\theta$ can be decomposed into $\mathbf{b}_1+\mathbf{b}_2$ with $\mathbf{b}_1\in \alpha\sqrt{n_g}\mathcal{B}$ and $\mathbf{b}_2\in\mathcal{B}_{\infty}$. Therefore, to develop screening methods for SGL, it is desirable to gain deeper understanding of the sum of simple convex sets.

In the next section, we analyze the dual feasible set of SGL in depth via the Fenchel's Duality Theorem. We show that for each $\mathbf{X}^T_g\theta\in\mathcal{D}_g^{\alpha}$, Fenchel's duality naturally leads to an explicit decomposition $\mathbf{X}^T_g\theta=\mathbf{b}_1+\mathbf{b}_2$, with one belonging to $\alpha\sqrt{n_g}\mathcal{B}$ and the other one belonging to $\mathcal{B}_{\infty}$. This lays the foundation of the proposed screening method for SGL.

\section{The Fenchel's Dual Problem of SGL}\label{section:Fenchel_Dual_SGL}
In Section \ref{subsection:Fenchel_Dual_SGL}, we derive the Fenchel's dual of SGL via Fenchel's Duality Theorem. We then motivate TLFre and sketch our approach in Section \ref{subsection:general_rule_SGL}. In Section \ref{subsection:lambda12_0_solution}, we discuss the \mbox{geometric} properties of the Fenchel's dual of SGL and derive the set of $(\lambda,\alpha)$ leading to zero solutions.

\subsection{The Fenchel's Dual of SGL via Fenchel's Duality Theorem}\label{subsection:Fenchel_Dual_SGL}

To derive the Fenchel's dual problem of SGL, we need the Fenchel's Duality Theorem as stated in Theorem \ref{thm:fenchel_duality}. The conjugate of $f\in\Gamma_0(\mathbb{R}^n)$ is the function $f^*\in\Gamma_0(\mathbb{R}^n)$ defined by
\begin{align}\label{eqn:conjugate}
	f^*(\mathbf{z})=\sup\nolimits_{\mathbf{w}}\,\langle \mathbf{w}, \mathbf{z}\rangle-f(\mathbf{w}).
\end{align}
\begin{theorem}\label{thm:fenchel_duality}\textup{[Fenchel's Duality Theorem]}
	Let $f\in\Gamma_0(\mathbb{R}^N)$, $\Omega\in\Gamma_0(\mathbb{R}^p)$, and $\mathcal{T}(\beta) = \mathbf{y} - \mathbf{X}\beta$ be an affine mapping from $\mathbb{R}^p$ to $\mathbb{R}^N$. Let $p^*, d^*\in[-\infty,\infty]$ be primal and dual values defined, respectively, by the Fenchel problems:
	\begin{align*}
		p^*=\inf\nolimits_{\beta\in\mathbb{R}^p}\,f(\mathbf{y}-\mathbf{X}\beta)+\lambda\Omega(\beta);\hspace{2mm}
		d^*=\sup\nolimits_{\theta\in\mathbb{R}^N}\,-f^*(\lambda\theta)-\lambda\Omega^*(\mathbf{X}^T\theta)+\lambda\langle \mathbf{y},\theta\rangle.
	\end{align*}
	One has $p^*\geq d^*$. If, furthermore, $f$ and $\Omega$ satisfy the condition
	$0\in\textup{int}\,\left(\textup{dom}\,f-\mathbf{y}+\mathbf{X}\textup
	{dom}\,\Omega\right)$,
	then the equality holds, i.e., $p^*=d^*$, and the supreme is attained in the dual problem if finite.
	%
	%
\end{theorem}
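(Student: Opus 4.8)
The plan is to split the claim into weak duality ($p^*\geq d^*$), which holds unconditionally, and strong duality with dual attainment, which needs the constraint qualification. For weak duality I would invoke the Fenchel--Young inequality twice: for any $\beta$ and $\theta$,
\begin{align*}
	f(\mathbf{y}-\mathbf{X}\beta) + f^*(\lambda\theta) &\geq \langle \mathbf{y}-\mathbf{X}\beta,\, \lambda\theta\rangle, \\
	\lambda\Omega(\beta) + \lambda\Omega^*(\mathbf{X}^T\theta) &\geq \lambda\langle \mathbf{X}\beta,\, \theta\rangle.
\end{align*}
Adding these and cancelling the $\langle \mathbf{X}\beta,\theta\rangle$ terms leaves $f(\mathbf{y}-\mathbf{X}\beta)+\lambda\Omega(\beta) \geq \lambda\langle\mathbf{y},\theta\rangle - f^*(\lambda\theta) - \lambda\Omega^*(\mathbf{X}^T\theta)$; taking the infimum over $\beta$ on the left and the supremum over $\theta$ on the right yields $p^*\geq d^*$.

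For strong duality I would introduce the value (perturbation) function $\phi(\mathbf{u})=\inf_{\beta}\{f(\mathbf{y}-\mathbf{X}\beta+\mathbf{u})+\lambda\Omega(\beta)\}$, so that $p^*=\phi(0)$. Because the map $(\beta,\mathbf{u})\mapsto f(\mathbf{y}-\mathbf{X}\beta+\mathbf{u})+\lambda\Omega(\beta)$ is jointly convex, its partial infimum $\phi$ is convex. A direct computation of the conjugate---substituting $\mathbf{v}=\mathbf{y}-\mathbf{X}\beta+\mathbf{u}$ to decouple the two suprema---gives $\phi^*(\theta)=f^*(\theta)-\langle\mathbf{y},\theta\rangle+\lambda\Omega^*(\mathbf{X}^T\theta/\lambda)$, and after the change of variable $\theta\mapsto\lambda\theta$ one recognizes $\sup_{\theta}(-\phi^*(\theta))=d^*$, i.e. $d^*=\phi^{**}(0)$. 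The biconjugate inequality $\phi(0)\geq\phi^{**}(0)$ recovers weak duality, so the entire statement reduces to closing the gap $\phi(0)=\phi^{**}(0)$ together with dual attainment.

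The crux is to show that $\phi$ is subdifferentiable at the origin. I would first observe that $\textup{dom}\,\phi=\textup{dom}\,f-\mathbf{y}+\mathbf{X}\,\textup{dom}\,\Omega$, so the stated condition is precisely $0\in\textup{int}\,\textup{dom}\,\phi$. If $\phi(0)=-\infty$, then weak duality forces $d^*=-\infty=p^*$ and there is nothing to attain; otherwise $\phi(0)$ is finite, and since a convex function that is finite at an interior point of its domain cannot take the value $-\infty$ there, $\phi$ is finite on a neighborhood of $0$. A finite convex function is continuous (indeed locally Lipschitz) on the interior of its domain, hence $\partial\phi(0)\neq\emptyset$. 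Choosing any $\theta^*\in\partial\phi(0)$, the Fenchel--Young equality gives $\phi(0)+\phi^*(\theta^*)=\langle 0,\theta^*\rangle=0$, so $d^*\geq -\phi^*(\theta^*)=\phi(0)=p^*$; combined with weak duality this yields $p^*=d^*$ with the supremum attained at the suitably rescaled subgradient $\theta^*/\lambda$. The main obstacle is exactly this last step---deducing the existence of a subgradient of the value function at $0$ from the interior condition---which is where the constraint qualification is indispensable; everything else is bookkeeping with conjugates and the Fenchel--Young inequality.
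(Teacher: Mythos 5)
Your proof is correct, and it is essentially the argument the paper relies on: the paper omits the proof of this theorem, deferring to Theorem 3.3.5 of Borwein and Lewis, whose proof is exactly your perturbation-function argument (convexity of the value function $\phi$, identification of $d^*$ with $\phi^{**}(0)$, and subdifferentiability of $\phi$ at $0$ under the interiority condition), while your weak-duality step via two applications of Fenchel--Young is the same computation the paper itself carries out later in the proof of its Theorem 4. No gaps; the properness argument for $\phi$ (finite at an interior point of its domain, hence never $-\infty$, hence locally Lipschitz and subdifferentiable at $0$) is exactly the point where the constraint qualification is used, as you say.
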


We omit the proof of Theorem \ref{thm:fenchel_duality} since it is a slight modification of Theorem 3.3.5 in \cite{Borwein2006}.

Let $f(\mathbf{w})=\frac{1}{2}\|\mathbf{w}\|^2$, and $\lambda\Omega(\beta)$ be the second term in (\ref{prob:SGL}). Then, SGL can be written as
\begin{align}\label{prob:general}
	\min\nolimits_{\beta}\,f(\mathbf{y}-\mathbf{X}\beta)+\lambda\Omega(\beta).
\end{align}
To derive the Fenchel's dual problem of SGL, Theorem \ref{thm:fenchel_duality} implies that we need to find $f^*$ and $\Omega^*$. It is well-known that $f^*(\mathbf{z})=\frac{1}{2}\|\mathbf{z}\|^2$. Therefore, we only need to find $\Omega^*$, where the concept \emph{infimal convolution} is needed:
\begin{definition}\textup{\cite{Bauschke2011}}\label{def:inf_conv}
	Let $h,g\in\Gamma_0(\mathbb{R}^n)$. The infimal convolution of $h$ and $g$ is defined by
	\begin{align}\label{eqn:inf_conv}
	(h\Box g)(\xi)=\inf\nolimits_{\eta}\,h(\eta)+g(\xi-\eta),
	\end{align}
	and it is exact at a point $\xi$ if there exists a $\eta^*(\xi)$ such that 
	\begin{align}\label{eqn:inf_conv_exact}
	(h\Box g)(\xi)=h(\eta^*(\xi))+g(\xi-\eta^*(\xi)).
	\end{align}
	$h\Box g$ is exact if it is exact at every point of its domain, in which case it is denoted by $h\boxdot g$.
\end{definition}

With the infimal convolution, we derive the conjugate function of $\Omega$ in Lemma \ref{lemma:conjugate_Omega_SGL}.

\begin{lemma}\label{lemma:conjugate_Omega_SGL}
Let $\Omega_1^{\alpha}(\beta)=\alpha\sum\nolimits_{g=1}^G\sqrt{n_g}\|\beta_g\|$, $\Omega_2(\beta)=\|\beta\|_1$ and $\Omega(\beta)=\Omega_1^{\alpha}(\beta)+\Omega_2(\beta)$. Moreover, let $\mathcal{C}_g^{\alpha}=\alpha\sqrt{n_g}\mathcal{B}\subset\mathbb{R}^{n_g}$, $g=1,\ldots,G$. Then, the following hold:
\begin{enumerate}	
\item $(\Omega_1^{\alpha})^*(\xi)=\sum\nolimits_{g=1}^G\mathbf{I}_{\mathcal{C}_g^{\alpha}}\left(\xi_g\right),\hspace{5mm} (\Omega_2)^*(\xi)=\sum\nolimits_{g=1}^{G}\mathbf{I}_{\mathcal{B}_{\infty}}\left(\xi_g\right)$,
\item $\Omega^*(\xi)=\left((\Omega_1^{\alpha})^*\boxdot(\Omega_2)^*\right)(\xi)=\sum\nolimits_{g=1}^G\,\mathbf{I}_{\mathcal{B}}\left(\frac{\xi_g-\mathbf{P}_{\mathcal{B}_{\infty}}(\xi_g)}{\alpha\sqrt{n_g}}\right),$
\end{enumerate}	
where $\xi_g\in\mathbb{R}^{n_g}$ is the sub-vector of $\xi$ corresponding to the $g^{th}$ group.
\end{lemma}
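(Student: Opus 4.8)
The plan is to handle the two parts in order, with part (ii) built on part (i) together with the conjugate-of-a-sum machinery. For part (i), I would first exploit block separability: since the groups partition the coordinates of $\beta$, both $\Omega_1^{\alpha}$ and $\Omega_2$ split into sums of functions acting on the disjoint blocks $\beta_g$, and the conjugate of such a separable sum is the sum of the blockwise conjugates. This reduces the computation to a single group. On one block, $\alpha\sqrt{n_g}\|\cdot\|$ is a positive multiple of the $\ell_2$ norm, so the standard fact that the conjugate of $c\|\cdot\|$ is the indicator of the radius-$c$ ball in the dual norm (with $\ell_2$ self-dual) gives $(\alpha\sqrt{n_g}\|\cdot\|)^*(\xi_g)=\mathbf{I}_{\mathcal{C}_g^{\alpha}}(\xi_g)$; the same fact with $\ell_1$--$\ell_{\infty}$ duality yields $(\|\cdot\|_1)^*(\xi_g)=\mathbf{I}_{\mathcal{B}_{\infty}}(\xi_g)$. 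Summing over $g$ produces the two stated formulas.

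For part (ii), the key tool is the conjugate-of-sum identity $(\Omega_1^{\alpha}+\Omega_2)^*=(\Omega_1^{\alpha})^*\Box(\Omega_2)^*$, which, together with exactness of the infimal convolution (so that $\Box$ becomes $\boxdot$), holds once a constraint qualification on the domains of the primal summands is met. Here the qualification is immediate: both $\Omega_1^{\alpha}$ and $\Omega_2$ are finite-valued norms, so $\textup{dom}\,\Omega_1^{\alpha}=\textup{dom}\,\Omega_2=\mathbb{R}^p$ and their relative interiors intersect. Substituting the part-(i) expressions and invoking separability once more, the infimal convolution decouples across groups into
\[
\Omega^*(\xi)=\sum\nolimits_{g=1}^G\,\inf_{\eta_g}\,\mathbf{I}_{\mathcal{C}_g^{\alpha}}(\eta_g)+\mathbf{I}_{\mathcal{B}_{\infty}}(\xi_g-\eta_g),
\]
and each summand is exactly the indicator of the Minkowski sum $\mathcal{C}_g^{\alpha}+\mathcal{B}_{\infty}$ evaluated at $\xi_g$.

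The final step is the geometric identity $\mathbf{I}_{\mathcal{C}_g^{\alpha}+\mathcal{B}_{\infty}}(\xi_g)=\mathbf{I}_{\mathcal{B}}\!\left((\xi_g-\mathbf{P}_{\mathcal{B}_{\infty}}(\xi_g))/(\alpha\sqrt{n_g})\right)$. I would argue that $\xi_g\in\mathcal{C}_g^{\alpha}+\mathcal{B}_{\infty}$ iff there is some $\mathbf{b}\in\mathcal{B}_{\infty}$ with $\|\xi_g-\mathbf{b}\|\leq\alpha\sqrt{n_g}$, i.e.\ iff the Euclidean distance from $\xi_g$ to the closed convex set $\mathcal{B}_{\infty}$ is at most $\alpha\sqrt{n_g}$. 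Since that distance is attained at $\mathbf{P}_{\mathcal{B}_{\infty}}(\xi_g)$ and equals $\|\xi_g-\mathbf{P}_{\mathcal{B}_{\infty}}(\xi_g)\|$ by definition of the projection, the membership is equivalent to $(\xi_g-\mathbf{P}_{\mathcal{B}_{\infty}}(\xi_g))/(\alpha\sqrt{n_g})\in\mathcal{B}$, and summing over $g$ closes the proof. I expect the main obstacle to be the careful justification of the conjugate-of-sum equality together with the exactness of the infimal convolution; once the constraint qualification is verified (which is trivial here because both norms are finite everywhere), the remaining reductions are the routine separability argument and the distance-to-$\mathcal{B}_{\infty}$ characterization.
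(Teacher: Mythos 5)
Your proposal is correct and follows essentially the same route as the paper: part (i) by blockwise conjugation of norms, and part (ii) by the conjugate-of-sum theorem (whose continuity/domain qualification is trivially met) followed by a group-wise reduction in which the decisive fact is that the Euclidean distance from $\xi_g$ to $\mathcal{B}_{\infty}$ is attained at $\mathbf{P}_{\mathcal{B}_{\infty}}(\xi_g)$. Your detour through the identity $\mathbf{I}_{\mathcal{C}_g^{\alpha}}\boxdot\mathbf{I}_{\mathcal{B}_{\infty}}=\mathbf{I}_{\mathcal{C}_g^{\alpha}+\mathcal{B}_{\infty}}$ is only a cosmetic reordering of the paper's direct computation of the minimizing $\eta_g$ (and is in fact the same observation the paper records separately via Lemma \ref{lemma:infconv_sets} when equating the two dual formulations).
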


To prove Lemma \ref{lemma:conjugate_Omega_SGL}, we first cite the following technical result.

\begin{theorem}\label{thm:infconv_sum}\textup{\cite{Hiriart-Urruty2006}}
	Let $f_1,\cdots,f_k\in\Gamma_0(\mathbb{R}^n)$. Suppose there is a point in $\cap_{i=1}^k\textup{dom}\,f_i$ at which $f_1,\cdots,f_{k-1}$ is continuous. Then, for all $p\in\mathbb{R}^n$:
	\begin{align*}
		(f_1+\cdots+f_k)^*(p)=\min_{p_1+\cdots+p_k=p}[f_1^*(p_1)+\cdots+f_k^*(p_k)].
	\end{align*}
\end{theorem}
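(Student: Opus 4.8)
The plan is to prove the two inequalities separately, obtaining the reverse one together with attainment of the infimum by reading the right-hand side as a Fenchel dual value, and then to lift the case $k=2$ to general $k$ by induction. The inequality $(f_1+\cdots+f_k)^*(p)\le\inf_{p_1+\cdots+p_k=p}\sum_{i}f_i^*(p_i)$ is the elementary half: for any splitting $p=\sum_i p_i$ one has $\langle p,\mathbf{x}\rangle-\sum_i f_i(\mathbf{x})=\sum_i(\langle p_i,\mathbf{x}\rangle-f_i(\mathbf{x}))$, and the supremum over $\mathbf{x}$ of the left-hand side is bounded above by the sum of the individual suprema, namely $\sum_i f_i^*(p_i)$; minimizing over splittings gives the bound. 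The whole content of the theorem therefore lies in the reverse inequality and in the claim that the infimum is achieved, i.e.\ is a minimum.

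For the base case $k=2$ I would absorb the linear functional defining the conjugate into one summand. Writing $(f_1+f_2)^*(p)=-\inf_{\mathbf{x}}\big[f_1(\mathbf{x})+g(\mathbf{x})\big]$ with $g(\mathbf{x})=f_2(\mathbf{x})-\langle p,\mathbf{x}\rangle$, one checks $g^*(q)=f_2^*(q+p)$. The Fenchel duality theorem (cf.\ Theorem \ref{thm:fenchel_duality}, in its standard two-function form) gives $\inf_{\mathbf{x}}[f_1(\mathbf{x})+g(\mathbf{x})]=\sup_{q}[-f_1^*(q)-g^*(-q)]$, and the hypothesis that $f_1$ be continuous at some point of $\textup{dom}\,f_1\cap\textup{dom}\,f_2$ is exactly the constraint qualification under which the dual supremum is attained. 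Substituting $g^*(-q)=f_2^*(p-q)$ and negating converts this into $(f_1+f_2)^*(p)=\min_{q}[f_1^*(q)+f_2^*(p-q)]$, the desired exact infimal convolution with $p_1=q$, $p_2=p-q$.

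It is worth isolating where the qualification is essential. The identity $(f_1^*\Box f_2^*)^*=f_1^{**}+f_2^{**}=f_1+f_2$ holds unconditionally for proper functions, so biconjugation already yields $(f_1+f_2)^*=\overline{f_1^*\Box f_2^*}$, the lower semicontinuous hull of the infimal convolution; the continuity assumption is precisely what forces this infimal convolution to be itself closed and exact, thereby removing the closure and making the minimum attained. To pass to general $k$ I would induct, grouping $F:=f_1+\cdots+f_{k-1}$ with $f_k$. A common point $\bar{\mathbf{x}}\in\bigcap_{i}\textup{dom}\,f_i$ at which $f_1,\dots,f_{k-1}$ are continuous makes $F$ continuous at $\bar{\mathbf{x}}$ while keeping $\bar{\mathbf{x}}\in\textup{dom}\,f_k$, so the $k=2$ result applies to the pair $(F,f_k)$; the same $\bar{\mathbf{x}}$ also validates the qualification needed to expand $F^*$ by the inductive hypothesis. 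Chaining the two exact infimal convolutions and relabelling the minimizing variables produces the $k$-fold formula, still with the minimum attained.

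The main obstacle is the attainment statement rather than the value of the conjugate: the elementary inequality and even the equality up to closure are automatic, and the real work is to deploy the constraint qualification so that the Fenchel dual supremum is genuinely achieved. The delicate bookkeeping will be verifying that the single continuity hypothesis propagates correctly through the induction, so that at every grouping step both the expansion of $F^*$ and the two-function duality retain attained optima.
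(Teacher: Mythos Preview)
The paper does not prove this statement: Theorem~\ref{thm:infconv_sum} is quoted from \cite{Hiriart-Urruty2006} as a known result and used as a black box in the proof of Lemma~\ref{lemma:conjugate_Omega_SGL}. There is therefore no ``paper's own proof'' to compare against.

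That said, your proposal is a correct and standard proof of the result. The elementary inequality is handled cleanly, and recasting $(f_1+f_2)^*(p)$ as the negative of a Fenchel primal value so that the reverse inequality \emph{with attainment} falls out of the Fenchel--Rockafellar duality theorem is exactly the right move; the continuity hypothesis on $f_1$ at a point of $\textup{dom}\,f_1\cap\textup{dom}\,f_2$ is precisely the constraint qualification that forces the dual supremum to be achieved. The induction is also set up properly: grouping $F=f_1+\cdots+f_{k-1}$ against $f_k$, the common point $\bar{\mathbf{x}}$ at which $f_1,\dots,f_{k-1}$ are continuous makes $F$ continuous there and lies in $\textup{dom}\,f_k$, so the two-function case applies to the pair $(F,f_k)$; the same $\bar{\mathbf{x}}$ validates the inductive hypothesis for expanding $F^*$ (only continuity of $f_1,\dots,f_{k-2}$ is needed, which is automatic). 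Your remark that without the qualification one only gets $(f_1+f_2)^*=\overline{f_1^*\Box f_2^*}$ correctly isolates where the hypothesis does its work.
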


We now give the proof of Lemma \ref{lemma:conjugate_Omega_SGL}.
\begin{proof}
	The first part can be derived directly by the definition as follows:
	\begin{align*}
		(\Omega_1^{\alpha})^*(\xi)=&\sup_{\beta}\,\langle\beta,\xi\rangle-\Omega_1(\beta)=\sum_{g=1}^G\alpha\sqrt{n_g}\left(\sup_{\beta_g}\left\langle\beta_g,\frac{\xi_g}{\alpha\sqrt{n_g}}\right\rangle-\|\beta_g\|\right)\\
		=&\sum_{g=1}^G\alpha\sqrt{n_g}\mathbf{I}_{\mathcal{B}}\left(\frac{\xi_g}{\alpha\sqrt{n_g}}\right)=\sum_{g=1}^G\mathbf{I}_{\mathcal{B}}\left(\frac{\xi_g}{\alpha\sqrt{n_g}}\right)=\sum_{g=1}^G\mathbf{I}_{\mathcal{C}_g^{\alpha}}(\xi_g).
	\end{align*}
	\begin{align*}
		(\Omega_2)^*(\xi)=\sup_{\beta}\,\langle\beta,\xi\rangle-\Omega_2(\beta)=\mathbf{I}_{\mathcal{B}_{\infty}}\left({\xi}\right)=\sum_{g=1}^{G}\mathbf{I}_{\mathcal{B}_{\infty}}\left({\xi_g}\right).
	\end{align*}
	
	To show the second part, Theorem \ref{thm:infconv_sum} indicates that we only need to show $(\Omega_1^{\alpha})^*\Box(\Omega_2)^*(\xi)$ is exact (note that $\Omega_1^{\alpha}$ and $\Omega_2$ are continuous everywhere).
	Let us now compute $(\Omega_1^{\alpha})^*\Box(\Omega_2)^*$.
	\begin{align}\label{eqn:infconv_SGL}
		\left((\Omega_1)^*\Box(\Omega_2)^*\right)(\xi)=&\inf_{\eta}\,(\Omega_1)^*(\xi-\eta)+(\Omega_2)^*(\eta)\\ \nonumber
		=&\sum_{g=1}^G\,\inf_{\eta_g}\mathbf{I}_{\mathcal{B}}\left(\frac{\xi_g-\eta_g}{\alpha\sqrt{n_g}}\right)+\mathbf{I}_{\mathcal{B}_{\infty}}\left({\eta_g}\right)\\ \nonumber
		=&\sum_{g=1}^G\,\inf_{\|\eta_g\|_{\infty}\leq 1}\mathbf{I}_{\mathcal{B}}\left(\frac{\xi_g-\eta_g}{\alpha\sqrt{n_g}}\right)
	\end{align}
	To solve the optimization problem in (\ref{eqn:infconv_SGL}), i.e.,
	\begin{align}\label{prob:infconv_SGL}
		\mu_g^*=\inf_{\eta_g}\,\left\{\mathbf{I}_{\mathcal{B}}\left(\frac{\xi_g-\eta_g}{\alpha\sqrt{n_g}}\right): \|\eta_g\|_{\infty}\leq 1\right\},
	\end{align}
	we can consider the following problem
	\begin{align}\label{prob:projection_SGL}
		\nu_g^*=\inf_{\eta_g}\,\left\{\frac{1}{\alpha\sqrt{n_g}}\|\xi_g-\eta_g\|: \|\eta_g\|_{\infty}\leq 1\right\}.
	\end{align}
	We can see that the optimal solution of problem (\ref{prob:projection_SGL}) must also be an optimal solution of problem (\ref{prob:infconv_SGL}).
	Let $\eta^*_g(\xi_g)$ be the optimal solution of (\ref{prob:projection_SGL}). We can see that $\eta^*_g(\xi_g)$ is indeed the projection of $\xi_g$ on $\mathcal{B}_{\infty}$, which admits a closed form solution:
	\begin{align*}
		[\eta_g^*(\xi_g)]_i=[\mathbf{P}_{\mathcal{B}_{\infty}}(\xi_g)]_i=
		\begin{cases}
			1,\hspace{7.5mm}{\rm if}\hspace{1mm}[\xi_g]_i>1,\\
			[\xi_g]_i,\hspace{3mm}{\rm if}\hspace{1mm}|[\xi_g]_i|\leq1,\\
			-1,\hspace{5mm}{\rm if}\hspace{1mm}[\xi_g]_i<-1.
		\end{cases}
	\end{align*}
	Thus, problem (\ref{prob:infconv_SGL}) can be solved as
	\begin{align*}
		\mu^*_g=\mathbf{I}_{\mathcal{B}}\left(\frac{\xi_g-\mathbf{P}_{\mathcal{B}_{\infty}}(\xi_g)}{\alpha\sqrt{n_g}}\right).
	\end{align*}
	Hence, the infimal convolution in \eqref{eqn:infconv_SGL} is exact and Theorem \ref{thm:infconv_sum} leads to
	\begin{align}\label{eqn:infconv_nSGL2}
		\Omega^*(\xi)=\left((\Omega_1^{\alpha})^*\boxdot(\Omega_2)^*\right)(\xi)
		=\sum_{g=1}^G\,\mathbf{I}_{\mathcal{B}}\left(\frac{\xi_g-\mathbf{P}_{\mathcal{B}_{\infty}}(\xi_g)}{\alpha\sqrt{n_g}}\right),
	\end{align}
	which completes the proof.
\end{proof}

Note that $\mathbf{P}_{\mathcal{B}_{\infty}}(\xi_g)$ admits a closed form solution, i.e.,
$[\mathbf{P}_{\mathcal{B}_{\infty}}(\xi_g)]_i=\textup{sgn}\left([\xi_g]_i\right)\min\left(\left|[\xi_g]_i\right|,1\right)$.
Combining Theorem \ref{thm:fenchel_duality} and Lemma \ref{lemma:conjugate_Omega_SGL}, the Fenchel's dual of SGL can be derived as follows.
\begin{theorem}\label{thm:dual_SGL}
For the SGL problem in (\ref{prob:SGL}), the following hold:
\begin{enumerate}	
\item The Fenchel's dual of SGL is given by:
\begin{align}\label{prob:dual_SGL_Fenchel}
\inf_{\theta}\,\left\{\tfrac{1}{2}\|\tfrac{\mathbf{y}}{\lambda}-\theta\|^2-\tfrac{1}{2}\|\mathbf{y}\|^2: \left\|\mathbf{X}_g^T\theta-\mathbf{P}_{\mathcal{B}_{\infty}}(\mathbf{X}_g^T\theta)\right\|\leq\alpha\sqrt{n_g},\,g=1,\ldots,G\right\}.
\end{align}
\item Let $\beta^*(\lambda,\alpha)$ and $\theta^*(\lambda,\alpha)$ be the optimal solutions of problems (\ref{prob:SGL}) and (\ref{prob:dual_SGL_Fenchel}), respectively. Then,
\begin{align}\label{eqn:KKT1_SGL}
\lambda\theta^*(\lambda,\alpha)=&\mathbf{y}-\mathbf{X}\beta^*(\lambda,\alpha),\\ \label{eqn:KKT2_SGL}
\mathbf{X}_g^T\theta^*(\lambda,\alpha)\in&
\alpha\sqrt{n_g}\partial \|\beta_g^*(\lambda,\alpha)\|+\partial \|\beta^*_g(\lambda,\alpha)\|_1,\,g=1,\ldots,G.
\end{align}
\end{enumerate}
\end{theorem}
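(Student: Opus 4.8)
The plan is to apply the Fenchel Duality Theorem (Theorem~\ref{thm:fenchel_duality}) directly, using the conjugates we have already computed, and then read off the optimality conditions from the structure of the attaining arguments. First I would substitute $f(\mathbf{w})=\tfrac12\|\mathbf{w}\|^2$ and $\Omega=\Omega_1^\alpha+\Omega_2$ into the dual objective $-f^*(\lambda\theta)-\lambda\Omega^*(\mathbf{X}^T\theta)+\lambda\langle\mathbf{y},\theta\rangle$. Using $f^*(\mathbf{z})=\tfrac12\|\mathbf{z}\|^2$ gives $-\tfrac12\lambda^2\|\theta\|^2+\lambda\langle\mathbf{y},\theta\rangle$, which after adding and subtracting $\tfrac12\|\mathbf{y}\|^2$ completes the square into $\tfrac12\|\mathbf{y}\|^2-\tfrac12\|\mathbf{y}-\lambda\theta\|^2 = \tfrac12\|\mathbf{y}\|^2-\tfrac{\lambda^2}{2}\|\tfrac{\mathbf{y}}{\lambda}-\theta\|^2$. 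For the regularizer term, Lemma~\ref{lemma:conjugate_Omega_SGL}(ii) shows $\Omega^*(\mathbf{X}^T\theta)=\sum_g\mathbf{I}_{\mathcal{B}}\big((\mathbf{X}_g^T\theta-\mathbf{P}_{\mathcal{B}_\infty}(\mathbf{X}_g^T\theta))/(\alpha\sqrt{n_g})\big)$, which is finite (equal to $0$) precisely when each argument lies in $\mathcal{B}$, i.e. when $\|\mathbf{X}_g^T\theta-\mathbf{P}_{\mathcal{B}_\infty}(\mathbf{X}_g^T\theta)\|\le\alpha\sqrt{n_g}$. Thus maximizing the dual objective subject to finiteness of $\Omega^*$ yields exactly the constrained problem~(\ref{prob:dual_SGL_Fenchel}); I would also note that a sign flip of the supremum into an infimum converts the maximization of a concave objective into the minimization displayed. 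It remains to verify the qualification condition $0\in\textup{int}(\textup{dom}\,f-\mathbf{y}+\mathbf{X}\,\textup{dom}\,\Omega)$, which holds trivially here because $\textup{dom}\,f=\mathbb{R}^N$, so the strong duality $p^*=d^*$ and attainment follow immediately; this gives part (i).

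For part (ii), the plan is to derive the two KKT relations from the Fenchel optimality conditions that accompany strong duality. Equation~(\ref{eqn:KKT1_SGL}) is the primal-dual link: at optimality the dual variable recovers the residual, which I would obtain either from the first-order stationarity of the primal objective $f(\mathbf{y}-\mathbf{X}\beta)+\lambda\Omega(\beta)$ in $\beta$ together with $\nabla f(\mathbf{w})=\mathbf{w}$, or equivalently from the standard Fenchel equality conditions $\lambda\theta^*\in\partial f(\mathbf{y}-\mathbf{X}\beta^*)$. Since $f=\tfrac12\|\cdot\|^2$ is differentiable with gradient equal to its argument, this reads $\lambda\theta^*=\mathbf{y}-\mathbf{X}\beta^*$, giving~(\ref{eqn:KKT1_SGL}). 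Equation~(\ref{eqn:KKT2_SGL}) comes from the other Fenchel condition, $\mathbf{X}^T\theta^*\in\partial\Omega(\beta^*)$. Using the sum rule $\partial\Omega=\partial\Omega_1^\alpha+\partial\Omega_2$ (valid because $\Omega_1^\alpha$ and $\Omega_2$ are finite and continuous everywhere, so no constraint-qualification issue arises), and decomposing groupwise via $\partial\Omega_1^\alpha(\beta)=\alpha\sqrt{n_g}\,\partial\|\beta_g\|$ and $\partial\Omega_2(\beta)=\partial\|\beta_g\|_1$ on each block, I would arrive at $\mathbf{X}_g^T\theta^*\in\alpha\sqrt{n_g}\partial\|\beta_g^*\|+\partial\|\beta_g^*\|_1$ for each $g$.

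The main subtlety, and where I would spend the most care, is being precise about which direction of the Fenchel subdifferential inclusion we are using and ensuring the optimality conditions are stated for the \emph{attained} optima rather than merely as sufficient conditions. The Fenchel Duality Theorem as stated guarantees $p^*=d^*$ and dual attainment, but extracting the pair of inclusions $\lambda\theta^*\in\partial f(\mathbf{y}-\mathbf{X}\beta^*)$ and $\mathbf{X}^T\theta^*\in\partial\Omega(\beta^*)$ requires invoking the characterization of equality in the Fenchel--Young inequality (the zero duality gap forces both Young inequalities to hold with equality simultaneously). I would state this characterization cleanly, confirm the requisite continuity/qualification hypotheses so that the subdifferential sum rule applies blockwise, and then the two displayed KKT conditions drop out. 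Everything else is the routine conjugate substitution and completion of the square already prepared by Lemma~\ref{lemma:conjugate_Omega_SGL}.
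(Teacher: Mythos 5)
Your proposal is correct and follows essentially the same route as the paper: part (i) by substituting $f^*=\tfrac12\|\cdot\|^2$ and the conjugate $\Omega^*$ from Lemma~\ref{lemma:conjugate_Omega_SGL} into Theorem~\ref{thm:fenchel_duality} (with the qualification condition holding trivially since $\textup{dom}\,f=\mathbb{R}^N$), and part (ii) by noting that the zero duality gap forces equality in both Fenchel--Young inequalities, from which the two inclusions $\lambda\theta^*\in\partial f(\mathbf{y}-\mathbf{X}\beta^*)$ and $\mathbf{X}^T\theta^*\in\partial\Omega(\beta^*)$ follow. The extra care you flag about the equality case of Fenchel--Young is exactly the step the paper carries out via its Lemma~\ref{lemma:FY_inequality}.
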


To show Theorem \ref{thm:dual_SGL}, we need the Fenchel-Young inequality as follows:
\begin{lemma}\label{lemma:FY_inequality}
	\textup{\textbf{[Fenchel-Young inequality]} \cite{Borwein2006}} Any point $\mathbf{z}\in\mathbb{R}^n$ and $\mathbf{w}$ in the domain of a function $h:\mathbb{R}^n\rightarrow(-\infty,\infty]$ satisfy the inequality
	\begin{align*}
	h(\mathbf{w})+h^*(\mathbf{z})\geq\langle \mathbf{w},\mathbf{z}\rangle.
	\end{align*}
	Equality holds if and only if $\mathbf{z}\in\partial h(\mathbf{w})$.
\end{lemma}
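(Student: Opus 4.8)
The plan is to derive both assertions directly from the definition of the conjugate in \eqref{eqn:conjugate} together with the definition of the subdifferential, since the Fenchel--Young inequality is essentially a repackaging of those two definitions and requires no deep machinery.

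First I would establish the inequality. By \eqref{eqn:conjugate}, $h^*(\mathbf{z})=\sup_{\mathbf{u}}\langle\mathbf{u},\mathbf{z}\rangle-h(\mathbf{u})$, so the supremum over all $\mathbf{u}$ dominates the value obtained at the single point $\mathbf{u}=\mathbf{w}$, giving
\[
h^*(\mathbf{z})\geq\langle\mathbf{w},\mathbf{z}\rangle-h(\mathbf{w}).
\]
Because $\mathbf{w}\in\textup{dom}\,h$, the number $h(\mathbf{w})$ is finite and may be transposed to the left-hand side without ambiguity, yielding $h(\mathbf{w})+h^*(\mathbf{z})\geq\langle\mathbf{w},\mathbf{z}\rangle$. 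This settles the inequality for every $\mathbf{z}\in\mathbb{R}^n$.

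Next I would characterize the equality case through a chain of reversible equivalences. Equality means $h^*(\mathbf{z})=\langle\mathbf{w},\mathbf{z}\rangle-h(\mathbf{w})$, i.e.\ the supremum defining $h^*(\mathbf{z})$ is attained at $\mathbf{u}=\mathbf{w}$. This is equivalent to the statement that $\langle\mathbf{w},\mathbf{z}\rangle-h(\mathbf{w})\geq\langle\mathbf{u},\mathbf{z}\rangle-h(\mathbf{u})$ for every $\mathbf{u}$, which, after rearrangement, reads $h(\mathbf{u})-h(\mathbf{w})\geq\langle\mathbf{z},\mathbf{u}-\mathbf{w}\rangle$ for all $\mathbf{u}$. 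This last inequality is precisely the defining subgradient condition for $\mathbf{z}\in\partial h(\mathbf{w})$. Since every step in the chain is an equivalence, equality holds if and only if $\mathbf{z}\in\partial h(\mathbf{w})$.

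I do not anticipate a genuine obstacle: neither convexity nor closedness of $h$ is invoked, as both the inequality and the equality characterization flow purely from the variational definitions. The only point requiring minor care is the finiteness of $h(\mathbf{w})$, guaranteed by the hypothesis $\mathbf{w}\in\textup{dom}\,h$; this is what legitimizes moving $h(\mathbf{w})$ across the inequality and rules out an indeterminate $\infty-\infty$ in the equality analysis.
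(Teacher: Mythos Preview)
Your argument is correct and is the standard derivation of the Fenchel--Young inequality directly from the definition of the conjugate and of the subdifferential. Note, however, that the paper does not supply its own proof of this lemma: it is simply quoted from \cite{Borwein2006} and used as a tool in the proof of Theorem~\ref{thm:dual_SGL}. So there is no ``paper's proof'' to compare against; your write-up is a valid self-contained justification of the cited result.
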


We now give the proof of Theorem \ref{thm:dual_SGL}.
\begin{proof}
	We first show the first part. Combining Theorem \ref{thm:fenchel_duality} and Lemma \ref{lemma:conjugate_Omega_SGL}, the Fenchel's dual of SGL can be written as:
	\begin{align*}
	\sup_{\theta}\,-\frac{\lambda^2}{2}\|\theta\|^2-\sum\nolimits_{g=1}^G\,\lambda\mathbf{I}_{\mathcal{B}}\left(\frac{\mathbf{X}_g^T\theta-\mathbf{P}_{\mathcal{B}_{\infty}}(\mathbf{X}_g^T\theta)}{\alpha\sqrt{n_g}}\right)+\lambda\langle\mathbf{y},\theta\rangle,
	\end{align*}
	which is equivalent to problem (\ref{prob:dual_SGL_Fenchel}).
	
	To show the second half, we have the following inequalities by Fenchel-Young inequality:
	\begin{align}\label{ineqn:FY_f_SGL}
	f(\mathbf{y}-\mathbf{X}\beta)+f^*(\lambda\theta)&\geq\langle\mathbf{y}-\mathbf{X}\beta,\lambda\theta\rangle,\\\label{ineqn:FY_Omega_SGL}
	\lambda\Omega(\beta)+\lambda\Omega^*(\mathbf{X}^T\theta)&\geq\lambda\langle\beta,\mathbf{X}^T\theta\rangle.
	\end{align}
	We sum the inequalities in (\ref{ineqn:FY_f_SGL}) and (\ref{ineqn:FY_Omega_SGL}) together and get
	\begin{align}\label{ineqn:weak_duality_SGL}
	f(\mathbf{y}-\mathbf{X}\beta)+\lambda\Omega(\beta)\geq-f^*(\lambda\theta)-\lambda\Omega^*(\mathbf{X}^T\theta)+\lambda\langle\mathbf{y},\theta\rangle.
	\end{align}
	Clearly, the left and right hand sides of inequality (\ref{ineqn:weak_duality_SGL}) are the objective functions of the pair of Fenchel's problems. Because $\textup{dom}\,f=\mathbb{R}^N$ and $\textup{dom}\,\Omega=\mathbb{R}^p$, we have $$0\in\textup{int}\,(\textup{dom}\,f-\mathbf{y}+\mathbf{X}\textup{dom}\,\Omega).$$ Thus, the equality in (\ref{ineqn:weak_duality_SGL}) holds at $\beta^*(\lambda,\alpha)$ and $\theta^*(\lambda,\alpha)$, i.e., 
	\begin{align*}
	f(\mathbf{y}-\mathbf{X}\beta^*(\lambda,\alpha))+\lambda\Omega(\beta^*(\lambda,\alpha))=-f^*(\lambda\theta^*(\lambda,\alpha))-\lambda\Omega^*(\mathbf{X}^T\theta^*(\lambda,\alpha))+\lambda\langle\mathbf{y},\theta^*(\lambda,\alpha)\rangle.
	\end{align*}
	Therefore, the equality holds in both (\ref{ineqn:FY_f_SGL}) and (\ref{ineqn:FY_Omega_SGL}) at $\beta^*(\lambda,\alpha)$ and $\theta^*(\lambda,\alpha)$. By applying Lemma \ref{lemma:FY_inequality} again, we have
	\begin{align*}
	\lambda\theta^*(\lambda,\alpha)&\in\partial f(\mathbf{y}-\mathbf{X}\beta^*(\lambda,\alpha))=\mathbf{y}-\mathbf{X}\beta^*(\lambda,\alpha),\\
	\mathbf{X}^T\theta^*(\lambda,\alpha)&\in\partial \Omega(\beta^*(\lambda,\alpha))=\partial \Omega_1^{\alpha}(\beta^*(\lambda,\alpha))+\partial\Omega_2(\beta^*(\lambda,\alpha)),
	\end{align*}
	which completes the proof.
\end{proof}
\eqref{eqn:KKT1_SGL} and \eqref{eqn:KKT2_SGL} are the so-called KKT conditions \textup{\cite{Boyd04}} and can also be obtained by the Lagrangian multiplier method \textup{(}see \ref{subsection:Lagrangian_supplement} in the supplement\textup{)}.

\begin{remark}\label{remark:shrinkage_feasible_SGL}
	We note that the shrinkage operator can also be expressed by
	\begin{align}\label{eqn:shrinkage}
		\mathcal{S}_{\gamma}(\mathbf{w})= \mathbf{w}-\mathbf{P}_{\gamma\mathcal{B}_{\infty}}(\mathbf{w}),\hspace{2mm}\gamma\geq0.
	\end{align}
	Therefore, problem (\ref{prob:dual_SGL_Fenchel}) can be written more compactly as
	\begin{align}\label{prob:dual_SGL_Shrinkage_Fenchel}
		\inf_{\theta}\,\left\{\tfrac{1}{2}\|\tfrac{\mathbf{y}}{\lambda}-\theta\|^2-\tfrac{1}{2}\|\mathbf{y}\|^2: \left\|\mathcal{S}_{1}(\mathbf{X}_g^T\theta)\right\|\leq\alpha\sqrt{n_g},\,g=1,\ldots,G\right\}.
	\end{align}
\end{remark}

\noindent\textbf{The equivalence between the dual formulations}
	 For the SGL problem, its Lagrangian dual in (\ref{prob:dual_SGL_Lagrangian}) and Fenchel's dual in (\ref{prob:dual_SGL_Fenchel}) are indeed equivalent to each other. We bridge them together by the following lemma.
\begin{lemma}\label{lemma:infconv_sets}\textup{\cite{Bauschke2011}}
	Let $\mathcal{C}_1$ and $\mathcal{C}_2$ be nonempty subsets of $\mathbb{R}^n$. Then $\mathbf{I}_{\mathcal{C}_1}\boxdot\mathbf{I}_{\mathcal{C}_2}=\mathbf{I}_{\mathcal{C}_1+\mathcal{C}_2}$.
\end{lemma}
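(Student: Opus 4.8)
The plan is to compute $\mathbf{I}_{\mathcal{C}_1}\Box\mathbf{I}_{\mathcal{C}_2}$ directly from Definition \ref{def:inf_conv} and then separately verify that the infimum is always attained, which is precisely the content encoded by the symbol $\boxdot$. Since the statement only assumes $\mathcal{C}_1,\mathcal{C}_2$ nonempty, I would deliberately keep the argument at the level of set membership so that it does not covertly rely on closedness or convexity.

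First I would unfold the definition: for any $\xi\in\mathbb{R}^n$,
\begin{align*}
(\mathbf{I}_{\mathcal{C}_1}\Box\mathbf{I}_{\mathcal{C}_2})(\xi)=\inf_{\eta}\,\mathbf{I}_{\mathcal{C}_1}(\eta)+\mathbf{I}_{\mathcal{C}_2}(\xi-\eta).
\end{align*}
The key observation is that the summand equals $0$ when both $\eta\in\mathcal{C}_1$ and $\xi-\eta\in\mathcal{C}_2$, and equals $+\infty$ otherwise; there are no intermediate values. Hence the infimum is $0$ precisely when some splitting $\eta\in\mathcal{C}_1$ with $\xi-\eta\in\mathcal{C}_2$ exists, and is $+\infty$ when no such splitting exists.

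Next I would translate the existence of such a splitting into membership in the Minkowski sum: writing $\xi=\eta+(\xi-\eta)$, the existence of $\eta\in\mathcal{C}_1$ with $\xi-\eta\in\mathcal{C}_2$ is, by the very definition of $\mathcal{C}_1+\mathcal{C}_2$, the statement $\xi\in\mathcal{C}_1+\mathcal{C}_2$. This yields the pointwise identity $(\mathbf{I}_{\mathcal{C}_1}\Box\mathbf{I}_{\mathcal{C}_2})(\xi)=\mathbf{I}_{\mathcal{C}_1+\mathcal{C}_2}(\xi)$ for every $\xi$, establishing the equality of the two functions.

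Finally, to upgrade $\Box$ to $\boxdot$, I would check exactness as in \eqref{eqn:inf_conv_exact}. For $\xi$ in the domain $\mathcal{C}_1+\mathcal{C}_2$, the very decomposition $\xi=\eta^*+(\xi-\eta^*)$ witnessing membership supplies a minimizer: with this $\eta^*$ the summand attains the value $0$, which already equals $(\mathbf{I}_{\mathcal{C}_1}\Box\mathbf{I}_{\mathcal{C}_2})(\xi)$, so the infimum is achieved at every point of the domain. I do not anticipate any genuine obstacle—the whole argument is a direct rewriting—and the single point meriting care is exactly the one flagged above: because the hypothesis allows arbitrary nonempty (possibly non-closed, non-convex) sets, the proof must use only the set-membership structure of indicator functions, which the computation above does, so it remains valid verbatim.
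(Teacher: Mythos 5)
Your proof is correct. The paper does not prove this lemma at all---it is cited directly from \cite{Bauschke2011}---and your direct computation (the summand $\mathbf{I}_{\mathcal{C}_1}(\eta)+\mathbf{I}_{\mathcal{C}_2}(\xi-\eta)$ takes only the values $0$ and $+\infty$, so the infimum is $0$ exactly when $\xi\in\mathcal{C}_1+\mathcal{C}_2$ and is attained at any witnessing decomposition, giving exactness) is the standard argument for the cited result; your remark that it uses only set membership, and hence needs no closedness or convexity, is also apt.
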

In view of Lemmas \ref{lemma:conjugate_Omega_SGL} and \ref{lemma:infconv_sets}, and recall that $\mathcal{D}_g^{\alpha}=\mathcal{C}_g^{\alpha}+\mathcal{B}_{\infty}$, we have
\begin{align}\label{eqn:conjugate_Omega2_SGL}
\Omega^*(\xi)=\left((\Omega_1^{\alpha})^*\boxdot(\Omega_2)^*\right)(\xi)=\sum\nolimits_{g=1}^G\left(\mathbf{I}_{\mathcal{C}_g^{\alpha}}\boxdot\mathbf{I}_{\mathcal{B}_{\infty}}\right)(\xi_g)=\sum\nolimits_{g=1}^G\,\mathbf{I}_{\mathcal{D}_g^{\alpha}}(\xi_g).
\end{align}
Combining \eqref{eqn:conjugate_Omega2_SGL} and Theorem \ref{thm:fenchel_duality}, we obtain the dual formulation of SGL in (\ref{prob:dual_SGL_Lagrangian}). Therefore, the dual formulations of SGL in (\ref{prob:dual_SGL_Lagrangian}) and (\ref{prob:dual_SGL_Fenchel}) are the same.

\begin{remark}
	An appealing advantage of the Fenchel's dual in (\ref{prob:dual_SGL_Fenchel}) is that we have a natural decomposition of all points $\xi_g\in\mathcal{D}_g^{\alpha}$:
	$
	\xi_g=\mathbf{P}_{\mathcal{B}_{\infty}}(\xi_g)+\mathcal{S}_1(\xi_g))
	$
	with $\mathbf{P}_{\mathcal{B}_{\infty}}(\xi_g)\in\mathcal{B}_{\infty}$ and $\mathcal{S}_1(\xi_g)\in\mathcal{C}_g^{\alpha}$. As a result, this leads to a convenient way to determine the feasibility of any dual variable $\theta$ by checking if $\mathcal{S}_1(\mathbf{X}_g^T\theta)\in\mathcal{C}_g^{\alpha}$, $g=1,\ldots,G$.
\end{remark}

\subsection{Motivation of the Two-Layer Screening Rules}\label{subsection:general_rule_SGL}

We motive the two-layer screening rules via the KKT condition in \eqref{eqn:KKT2_SGL}. As implied by the name, there are two layers in our method. The first layer aims to identify the inactive groups, and the second layer is designed to detect the inactive features for the remaining groups.

by \eqref{eqn:KKT2_SGL}, we have the following cases by noting  $\partial \|\mathbf{w}\|_1=\textup{SGN}(\mathbf{w})$ and
\begin{align*}
	\partial \|\mathbf{w}\|=
	\begin{cases}
		\left\{\frac{\mathbf{w}}{\|\mathbf{w}\|}\right\},\hspace{13mm}\textup{if}\hspace{1mm}\mathbf{w}\neq0,\\
		\{\mathbf{u}: \|\mathbf{u}\|\leq1\}, \hspace{2.5mm}\textup{if}\hspace{1mm}\mathbf{w}=0.
	\end{cases}\hspace{5mm}
\end{align*}
\textbf{Case 1.} If $\beta_g^*(\lambda,\alpha)\neq0$, we have
\begin{align}\label{eqn:KKT1_Nonzero_SGL}
	[\mathbf{X}_g^T\theta^*(\lambda,\alpha)]_i\in
	\begin{cases}
		\alpha\sqrt{n_g}\frac{[\beta_g^*(\lambda,\alpha)]_i}{\|\beta^*_g(\lambda,\alpha)\|}+\textup{sign}([\beta_g^*(\lambda,\alpha)]_i), \hspace{2mm}\textup{if}\hspace{1mm}[\beta_g^*(\lambda,\alpha)]_i\neq0,\\
		[-1,1],\hspace{46.5mm}\textup{if}\hspace{1mm}[\beta_g^*(\lambda,\alpha)]_i=0.
	\end{cases}
\end{align}
In view of \eqref{eqn:KKT1_Nonzero_SGL}, we can see that
\begin{align}\label{eqn:c1_a_SGL}
	\textup{(a):}\hspace{2mm}&\mathcal{S}_{1}(\mathbf{X}_g^T\theta^*(\lambda,\alpha))=\alpha\sqrt{n_g}\tfrac{\beta^*_g(\lambda_1,\lambda_2)}{\|\beta^*_g(\lambda_1,\lambda_2)\|}\hspace{1mm}\textup{and}\hspace{1mm}\|\mathcal{S}_{1}(\mathbf{X}_g^T\theta^*(\lambda,\alpha))\|=\alpha\sqrt{n_g},\\\label{eqn:c1_b_SGL}
	\textup{(b):}\hspace{2mm}&\textup{If}\hspace{1mm}\left|[\mathbf{X}_g^T\theta^*(\lambda,\alpha]_i\right|\leq 1\hspace{1mm}\textup{then}\hspace{1mm} [\beta^*_g(\lambda,\alpha)]_i=0.
\end{align}
\textbf{Case 2.} If $\beta_g^*(\lambda,\alpha)=0$, we have
\begin{align}\label{eqn:c2_SGL}
	[\mathbf{X}_g^T\theta^*(\lambda,\alpha)]_i\in\alpha\sqrt{n_g}[\mathbf{u}_g]_i+[-1,1],\,\|\mathbf{u}_g\|\leq1.
\end{align}

\textbf{The first layer (group-level) of TLFre} From (\ref{eqn:c1_a_SGL}) in \textbf{Case 1}, we have
\begin{align}\tag{R1}\label{rule1_SGL}
	\left\|\mathcal{S}_{1}(\mathbf{X}_g^T\theta^*(\lambda,\alpha))\right\|<\alpha\sqrt{n_g}\Rightarrow \beta_g^*(\lambda,\alpha)=0.
\end{align}
Clearly, (\ref{rule1_SGL}) can be used to identify the inactive groups and thus a group-level screening rule.

\textbf{The second layer (feature-level) of TLFre}
Let $\mathbf{x}_{g_i}$ be the $i^{th}$ column of $\mathbf{X}_g$. We have $[\mathbf{X}_g^T\theta^*(\lambda,\alpha)]_i=\mathbf{x}_{g_i}^T\theta^*(\lambda,\alpha)$. In view of (\ref{eqn:c1_b_SGL}) and (\ref{eqn:c2_SGL}), we can see that
\begin{align}\tag{R2}\label{rule2_SGL}
	\left|\mathbf{x}_{g_i}^T\theta^*(\lambda,\alpha)\right|\leq1\Rightarrow [\beta^*_g(\lambda,\alpha)]_i=0.
\end{align}
Different from (\ref{rule1_SGL}), (\ref{rule2_SGL}) detects the inactive features and thus it is a feature-level screening rule.

However, we cannot directly apply (\ref{rule1_SGL}) and (\ref{rule2_SGL}) to identify the inactive groups/features because both need to know $\theta^*(\lambda,\alpha)$. Inspired by the SAFE rules \cite{ElGhaoui2012}, we can first estimate a region $\Theta$ containing $\theta^*(\lambda,\alpha)$. Let $\mathbf{X}_g^T\Theta=\{\mathbf{X}_g^T\theta:\theta\in\Theta\}$. Then, (\ref{rule1_SGL}) and (\ref{rule2_SGL}) can be relaxed as follows:
\begin{align}\tag{R1$^*$}\label{rrule1_SGL}
	&\sup\nolimits_{\xi_g}\,\left\{\|\mathcal{S}_{1}(\xi_g)\|:\xi_g\in\Xi_g\supseteq\mathbf{X}_g^T\Theta\right\}<\alpha\sqrt{n_g}\Rightarrow\beta_g^*(\lambda,\alpha)=0,\\\tag{R2$^*$}\label{rrule2_SGL}
	&\sup\nolimits_{\theta}\,\left\{\left|\mathbf{x}_{g_i}^T\theta\right|:\theta\in\Theta\right\}\leq 1\Rightarrow[\beta^*_g(\lambda,\alpha)]_i=0.
\end{align}

Inspired by (\ref{rrule1_SGL}) and (\ref{rrule2_SGL}), we develop TLFre via the following three steps:
\begin{enumerate}
\item[\textbf{Step 1}.] Given $\lambda$ and $\alpha$, we estimate a region $\Theta$ that contains $\theta^*(\lambda,\alpha)$.
\item[\textbf{Step 2}.] We solve for the supreme values in (\ref{rrule1_SGL}) and (\ref{rrule2_SGL}).
\item[\textbf{Step 3}.] By plugging in the supreme values from \textbf{Step 2}, (\ref{rrule1_SGL}) and (\ref{rrule2_SGL}) result in the desired two-layer screening rules for SGL.
\end{enumerate}

\subsection{The Set of Parameter Values Leading to Zero Solutions}\label{subsection:lambda12_0_solution}

In this section, we explore the geometric properties of the Fenchel's dual of SGL in depth---based on which we can derive the set of parameter values such that the primal optimal solutions are 0. We consider the SGL problem in (\ref{prob:SGL}) and (\ref{prob:SGL0}) in Section \ref{sssec:zero_sol_SGL} and \ref{sssec:zero_sol_SGL0}, respectively.

\subsubsection{The Set of Parameter Values Leading to Zero Solutions of Problem (\ref{prob:SGL})}\label{sssec:zero_sol_SGL}
Consider the SGL problem in (\ref{prob:SGL}). For notational convenience, let
\begin{align*}
\mathcal{F}_g^{\alpha}=\{\theta:\|\mathcal{S}_{1}(\mathbf{X}_g^T\theta)\|\leq\alpha\sqrt{n_g}\},\,g=1,\ldots,G.
\end{align*} 
We denote the feasible set of the Fenchel's dual of SGL by $$\mathcal{F}^{\alpha}=\cap_{g=1,\ldots,G}\,\mathcal{F}_g^{\alpha}.$$
In view of problem (\ref{prob:dual_SGL_Fenchel}) [or (\ref{prob:dual_SGL_Shrinkage_Fenchel})], we can see that $\theta^*(\lambda,\alpha)$ is the projection of $\mathbf{y}/\lambda$ on $\mathcal{F}^{\alpha}$, i.e.,
\begin{align}\label{eqn:projection_dual_SGL}
\theta^*(\lambda,\alpha)=\mathbf{P}_{\mathcal{F}^{\alpha}}(\mathbf{y}/\lambda).
\end{align}
Thus, if $\mathbf{y}/\lambda\in\mathcal{F}^{\alpha}$, we have $\theta^*(\lambda,\alpha)=\mathbf{y}/\lambda$. Moreover, by (\ref{rule1_SGL}), we can see that $\beta^*(\lambda,\alpha)=0$ if $\mathbf{y}/\lambda$ is an \emph{interior} point of $\mathcal{F}^{\alpha}$.
Indeed, we have the following stronger result.

\begin{theorem}\label{thm:lambda_alpha_SGL}
For the SGL problem, let $\lambda_{\rm max}^{\alpha}=\max_{g}\,\{\rho_g:\left\|\mathcal{S}_{1}(\mathbf{X}_g^T\mathbf{y}/\rho_g)\right\|=\alpha\sqrt{n_g}\}$.
Then, the following statements are equivalent:

\vspace{2mm}
\begin{enumerate*}[itemjoin=\hfill]
	\item $\dfrac{\mathbf{y}}{\lambda}\in\mathcal{F}^{\alpha}$,
	\item $\theta^*(\lambda,\alpha)=\dfrac{\mathbf{y}}{\lambda}$,
	\item $\beta^*(\lambda,\alpha)=0$,
	\item $\lambda\geq\lambda_{\rm max}^{\alpha}$.
\end{enumerate*}
\end{theorem}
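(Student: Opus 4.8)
The plan is to treat condition~(i), namely $\mathbf{y}/\lambda\in\mathcal{F}^{\alpha}$, as a hub and establish (i)$\Leftrightarrow$(ii), (i)$\Leftrightarrow$(iii), and (i)$\Leftrightarrow$(iv) one at a time. The tools I would use are the projection characterization \eqref{eqn:projection_dual_SGL}, the KKT relation \eqref{eqn:KKT1_SGL}, the subdifferential optimality condition of the primal at $\beta=0$, and a monotonicity argument for the threshold $\lambda_{\max}^{\alpha}$. The first block relating (i), (ii), and (iii) is essentially bookkeeping with the optimality conditions already in hand; the genuinely new content is the description $\lambda\geq\lambda_{\max}^{\alpha}$, which I expect to be the main obstacle.

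For (i)$\Leftrightarrow$(ii), I would argue directly from \eqref{eqn:projection_dual_SGL}: if $\mathbf{y}/\lambda\in\mathcal{F}^{\alpha}$ then its projection onto $\mathcal{F}^{\alpha}$ is itself, so $\theta^*(\lambda,\alpha)=\mathbf{y}/\lambda$; conversely $\theta^*(\lambda,\alpha)$ is always feasible, so $\theta^*(\lambda,\alpha)=\mathbf{y}/\lambda$ forces $\mathbf{y}/\lambda\in\mathcal{F}^{\alpha}$. For (i)$\Leftrightarrow$(iii) I would \emph{avoid} routing through \eqref{eqn:KKT1_SGL}, since $\theta^*(\lambda,\alpha)=\mathbf{y}/\lambda$ only yields $\mathbf{X}\beta^*=0$, which need not force $\beta^*=0$ when $\mathbf{X}$ is rank-deficient. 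Instead I would use the primal optimality condition: $0$ minimizes the SGL objective iff $0\in-\mathbf{X}^T\mathbf{y}+\lambda\partial\Omega(0)$, and since $\partial\Omega(0)=\partial\Omega_1^{\alpha}(0)+\partial\Omega_2(0)$ has the groupwise form $\{u:u_g\in\mathcal{C}_g^{\alpha}+\mathcal{B}_{\infty}=\mathcal{D}_g^{\alpha}\}$, this is exactly $\mathbf{X}_g^T\mathbf{y}/\lambda\in\mathcal{D}_g^{\alpha}$ for every $g$, i.e. $\mathbf{y}/\lambda\in\mathcal{F}^{\alpha}$ (recall $\theta\in\mathcal{F}_g^{\alpha}\Leftrightarrow\mathbf{X}_g^T\theta\in\mathcal{D}_g^{\alpha}$ from the definition of $\mathcal{F}_g^{\alpha}$ and the remark following Theorem~\ref{thm:dual_SGL}). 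This closes (i)$\Leftrightarrow$(iii) in both directions; note that (iii)$\Rightarrow$(ii) also drops out immediately by substituting $\beta^*=0$ into \eqref{eqn:KKT1_SGL}.

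The heart of the argument is (i)$\Leftrightarrow$(iv). For each group I would study $\phi_g(\rho):=\|\mathcal{S}_{1}(\mathbf{X}_g^T\mathbf{y}/\rho)\|$ as a function of $\rho>0$. Writing the $i$-th coordinate magnitude of $\mathcal{S}_1(\mathbf{X}_g^T\mathbf{y}/\rho)$ as $(|[\mathbf{X}_g^T\mathbf{y}]_i|/\rho-1)_+$, each coordinate is continuous and non-increasing in $\rho$, so $\phi_g$ is continuous and non-increasing, with $\phi_g(\rho)\to\infty$ as $\rho\to0^+$ and $\phi_g(\rho)\to0$ as $\rho\to\infty$ whenever $\mathbf{X}_g^T\mathbf{y}\neq0$. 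Where $\phi_g$ is positive it is in fact strictly decreasing, so the equation $\phi_g(\rho)=\alpha\sqrt{n_g}>0$ has a unique root, which is the $\rho_g$ of the statement, and $\mathbf{y}/\lambda\in\mathcal{F}_g^{\alpha}\Leftrightarrow\phi_g(\lambda)\leq\alpha\sqrt{n_g}\Leftrightarrow\lambda\geq\rho_g$. Intersecting over $g$ then gives $\mathbf{y}/\lambda\in\mathcal{F}^{\alpha}\Leftrightarrow\lambda\geq\max_g\rho_g=\lambda_{\max}^{\alpha}$, which is (i)$\Leftrightarrow$(iv).

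The main obstacle is pinning down the well-definedness and the threshold behavior of $\rho_g$: I must verify the monotonicity, continuity, and limiting behavior of $\phi_g$ rigorously from the coordinatewise formula \eqref{def:shrinkage_SGL} for $\mathcal{S}_1$, and handle the degenerate groups with $\mathbf{X}_g^T\mathbf{y}=0$, which satisfy $\phi_g\equiv0<\alpha\sqrt{n_g}$, impose no constraint, and contribute no finite $\rho_g$ (so are simply dropped from the maximum). Provided at least one group has $\mathbf{X}_g^T\mathbf{y}\neq0$, the maximum defining $\lambda_{\max}^{\alpha}$ is attained and the chain of equivalences is complete.
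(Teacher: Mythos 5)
Your proposal is correct and, for the equivalences (i)$\Leftrightarrow$(ii) and (i)$\Leftrightarrow$(iv), follows essentially the same path as the paper: the projection identity \eqref{eqn:projection_dual_SGL} for the former, and monotonicity of $\lambda\mapsto\|\mathcal{S}_{1}(\mathbf{X}_g^T\mathbf{y}/\lambda)\|$ for the latter (the paper asserts the monotonicity in one line; your coordinatewise analysis of $(|[\mathbf{X}_g^T\mathbf{y}]_i|/\rho-1)_+$, the strict decrease on the set where $\phi_g>0$, and the treatment of groups with $\mathbf{X}_g^T\mathbf{y}=0$ supply the details the paper leaves implicit, and in effect also justify Lemma \ref{lemma:rho}).

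Where you genuinely diverge is the link to (iii). The paper goes through the dual: from $\theta^*=\mathbf{y}/\lambda$ and \eqref{eqn:KKT1_SGL} it gets $\mathbf{X}\beta^*=0$, and then rules out a nonzero optimal $\beta'$ with $\mathbf{X}\beta'=0$ by comparing objective values, $h(0)=\tfrac{1}{2}\|\mathbf{y}\|^2<h(\beta')$. So the rank-deficiency worry you raise is in fact already handled there; your criticism of that route is slightly off target. You instead work on the primal side, characterizing when $0$ is a minimizer via $\mathbf{X}^T\mathbf{y}\in\lambda\,\partial\Omega(0)$ and identifying $\partial\Omega(0)$ groupwise with $\mathcal{D}_g^{\alpha}=\mathcal{C}_g^{\alpha}+\mathcal{B}_{\infty}$, which ties (iii) directly to (i) without passing through the dual at all. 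That is a clean and arguably more transparent argument, and it reuses the set decomposition that is the paper's main theme. The one thing to watch is that your argument, as written, establishes ``$0$ is \emph{an} optimal solution $\Leftrightarrow$ (i)'', whereas the theorem asserts $\beta^*(\lambda,\alpha)=0$; when $\mathbf{X}$ is rank-deficient the minimizer need not be unique a priori, so you still owe the short uniqueness step the paper supplies: since the loss is strictly convex in $\mathbf{X}\beta$, every minimizer $\beta'$ has $\mathbf{X}\beta'=\mathbf{X}\cdot 0=0$, and then $h(\beta')=h(0)$ forces $\Omega(\beta')=0$, i.e.\ $\beta'=0$. With that one sentence added, your proof is complete.
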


\begin{proof}
The equivalence between (i) and (ii) can be see from the fact that $\theta^*(\lambda,\alpha)=\mathbf{P}_{\mathcal{F}^{\alpha}}(\mathbf{y}/\lambda)$.
	
Next, we show (ii)$\Leftrightarrow$(iii). Let us first show (ii)$\Rightarrow$(iii). We assume that $\theta^*(\lambda,\alpha)=\mathbf{y}/\lambda$. By the KKT condition in (\ref{eqn:KKT1_SGL}), we have $\mathbf{X}\beta^*(\lambda,\alpha)=0$.
We claim that $\beta^*(\lambda,\alpha)=0$. To see this, let $\beta'\neq0$ with $\mathbf{X}\beta'=0$ be another optimal solution of SGL. We denote by $h$ the objective function of SGL in (\ref{prob:SGL}). Then, we have
\begin{align*}
h(0)=\frac{1}{2}\|\mathbf{y}\|^2<h(\beta')=\frac{1}{2}\|\mathbf{y}\|^2+\lambda_1\sum_g\sqrt{n_g}\|\beta'_g\|+\lambda_2\|\beta'\|_1,
\end{align*}
which contradicts with the assumption $\beta'\neq0$ is also an optimal solution. This contradiction indicates that $\beta^*(\lambda,\alpha)$ must be $0$. The converse direction, i.e., (ii)$\Leftarrow$(iii), can be derived directly from the KKT condition in \eqref{eqn:KKT1_SGL}.
	
Finally, we show the equivalence (i)$\Leftrightarrow$(iv). Indeed, in view of the dual problem in (\ref{prob:dual_SGL_Shrinkage_Fenchel}), we can see that $\mathbf{y}/\lambda\in\mathcal{F}^{\alpha}$ if and only if
\begin{align}\label{ineqn:feasibility_y_SGL}
\|\mathcal{S}_{1}(\mathbf{X}_g^T\mathbf{y}/\lambda)\|\leq\alpha\sqrt{n_g},\,g=1,\ldots,G.
\end{align}
We note that $\|\mathcal{S}_{1}(\mathbf{X}_g^T\mathbf{y}/\lambda)\|$ is monotonically decreasing with respect to $\lambda$. Thus, the inequality in (\ref{ineqn:feasibility_y_SGL}) is equivalent to (iv), which completes the proof.
\end{proof}

We note that $\rho_g$ in the definition of $\lambda_{\rm max}^{\alpha}$ admits a closed form solution. For notational convenience, let $|\mathbf{w}|$ be the vector by taking absolute value of $\mathbf{w}$ component-wisely and $[\mathbf{w}]^{(k)}$ be the vector consisting of the first $k$ components of $\mathbf{w}$.
\begin{lemma}\label{lemma:rho}
We sort  $0\neq|\mathbf{X}_g^T\mathbf{y}|\in\mathbb{R}^{n_g}$ in descending order and denote it by $\mathbf{z}$.
\begin{enumerate}
\item If there exists $[\mathbf{z}]_k$ such that $\|\mathcal{S}_{1}(\mathbf{X}_g^T\mathbf{y}/[\mathbf{z}]_k)\|=\alpha\sqrt{n_g}$, then $\rho_g=[\mathbf{z}]_k$.
\item Otherwise, let $\tau_i=\|\mathcal{S}_{1}(\mathbf{X}_g^T\mathbf{y}/[\mathbf{z}]_i)\|$, $i=1,\ldots,n_g$, and $\tau_{n_g+1}=\infty$. There exists a $k$ such that $\alpha\sqrt{n_g}\in(\tau_{k},\tau_{k+1})$, and $\rho_g\in([\mathbf{z}]_{k+1},[\mathbf{z}]_k)$ is the root of
$$(k-\alpha^2n_g)\rho^2-2\rho\|[\mathbf{z}]^{(k)}\|_1+\|[\mathbf{z}]^{(k)}\|^2=0.$$
\end{enumerate}
\end{lemma}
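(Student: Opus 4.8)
The plan is to reduce the scalar equation $\|\mathcal{S}_1(\mathbf{X}_g^T\mathbf{y}/\rho)\|=\alpha\sqrt{n_g}$ defining $\rho_g$ to a piecewise-quadratic equation in $\rho$, and then to locate its root using monotonicity. First I would obtain a clean closed form for the left-hand side. Writing $\mathbf{v}=\mathbf{X}_g^T\mathbf{y}$ and using $\rho>0$ together with the definition of the shrinkage operator in \eqref{def:shrinkage_SGL}, each coordinate satisfies $[\mathcal{S}_1(\mathbf{v}/\rho)]_i=\tfrac{1}{\rho}(|[\mathbf{v}]_i|-\rho)_+\,\textup{sgn}([\mathbf{v}]_i)$, so that, after sorting $|\mathbf{v}|$ into $\mathbf{z}$,
\[
\phi(\rho):=\|\mathcal{S}_1(\mathbf{X}_g^T\mathbf{y}/\rho)\|=\tfrac{1}{\rho}\sqrt{\sum\nolimits_{i=1}^{n_g}([\mathbf{z}]_i-\rho)_+^2}.
\]
The key structural observation is that the index set contributing to this sum is constant on each interval between consecutive sorted values: if $\rho\in([\mathbf{z}]_{k+1},[\mathbf{z}]_k)$ (adopting the convention $[\mathbf{z}]_{n_g+1}=0$ for the innermost interval), then precisely the first $k$ terms are active, whence $\rho^2\phi(\rho)^2=\|[\mathbf{z}]^{(k)}\|^2-2\rho\|[\mathbf{z}]^{(k)}\|_1+k\rho^2$, where I use that $\|[\mathbf{z}]^{(k)}\|_1=\sum_{i=1}^k[\mathbf{z}]_i$ because all entries of $\mathbf{z}$ are nonnegative.

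Next I would exploit monotonicity to pin down which interval contains the root. As already noted in the proof of Theorem \ref{thm:lambda_alpha_SGL}, $\phi$ is decreasing in $\rho$; in fact it is continuous and strictly decreasing on $(0,[\mathbf{z}]_1)$, with $\phi(\rho)\to\infty$ as $\rho\to0^+$ and $\phi([\mathbf{z}]_1)=0$. Evaluating at the breakpoints yields the values $\tau_i=\phi([\mathbf{z}]_i)$, which therefore increase from $\tau_1=0$ up to $\tau_{n_g}$, with $\tau_{n_g+1}=\infty$. Since $\alpha\sqrt{n_g}>0$ lies in the range of $\phi$, exactly one of two mutually exclusive situations occurs: either $\alpha\sqrt{n_g}=\tau_k$ for some $k$, in which case $\phi([\mathbf{z}]_k)=\alpha\sqrt{n_g}$ and strict monotonicity forces $\rho_g=[\mathbf{z}]_k$, giving part (i); or $\alpha\sqrt{n_g}\in(\tau_k,\tau_{k+1})$ for a unique $k$, and the intermediate value theorem applied to the continuous strictly decreasing $\phi$ produces a unique $\rho_g\in([\mathbf{z}]_{k+1},[\mathbf{z}]_k)$ with $\phi(\rho_g)=\alpha\sqrt{n_g}$.

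Finally, on this interval the closed form from the first step applies verbatim, so squaring $\phi(\rho_g)=\alpha\sqrt{n_g}$ and clearing the positive factor $\rho_g^2$ turns $\phi(\rho_g)^2=\alpha^2 n_g$ into $(k-\alpha^2 n_g)\rho_g^2-2\|[\mathbf{z}]^{(k)}\|_1\rho_g+\|[\mathbf{z}]^{(k)}\|^2=0$, which is exactly the stated quadratic; part (ii) then identifies $\rho_g$ as its root in $([\mathbf{z}]_{k+1},[\mathbf{z}]_k)$. I expect the main obstacle to be careful bookkeeping rather than any deep idea: one must argue that the active index set is exactly $\{1,\dots,k\}$ on each interval (handling ties, where an interval collapses to a point and is simply skipped, and the boundary convention $[\mathbf{z}]_{n_g+1}=0$ at the innermost step), and one must invoke strict monotonicity of $\phi$ so that the root inside the interval is unique---making the phrase ``the root'' of the quadratic unambiguous even though a quadratic has two roots in general, since at most one can lie in the interval where the sign of $\phi(\rho)^2-\alpha^2 n_g$ changes.
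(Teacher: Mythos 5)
Your proposal is correct and follows exactly the route the paper has in mind: the paper omits the proof of Lemma \ref{lemma:rho}, remarking only that $\|\mathcal{S}_{1}(\mathbf{X}_g^T\mathbf{y}/\rho)\|^2=\alpha^2 n_g$ is piecewise quadratic in $\rho$, and your argument is a careful, complete writeup of precisely that observation (closed form for $\phi$ on each interval between sorted breakpoints, strict monotonicity and the intermediate value theorem to locate and uniquely identify the root). No gaps; the attention to ties and to the boundary convention $[\mathbf{z}]_{n_g+1}=0$ is exactly the bookkeeping the paper leaves implicit.
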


We omit the proof of Lemma \ref{lemma:rho} because it is a direct consequence by noting that $$\|\mathcal{S}_{1}(\mathbf{X}_g^T\mathbf{y}/\lambda)\|^2=\alpha^2n_g$$ is piecewise quadratic.

\subsubsection{The Set of Parameter Values Leading to Zero Solutions of Problem (\ref{prob:SGL0})}\label{sssec:zero_sol_SGL0}
 
Theorem \ref{thm:lambda_alpha_SGL} implies that the optimal solution $\beta^*(\lambda,\alpha)$ is $0$ as long as $\mathbf{y}/\lambda\in\mathcal{F}^{\alpha}$. This geometric property also leads to an explicit characterization of the set of $(\lambda_1, \lambda_2)$ such that the corresponding solution of problem (\ref{prob:SGL0}) is $0$. We denote by $\bar{\beta}^*(\lambda_1,\lambda_2)$ the optimal solution of problem (\ref{prob:SGL0}).

\begin{corollary}\label{cor:lambdamx_SGL}
For the SGL problem in (\ref{prob:SGL0}), let $\lambda_1^{\rm max}(\lambda_2)=\max_{g}\,\tfrac{1}{\sqrt{n_g}}\|\mathcal{S}_{\lambda_2}(\mathbf{X}_g^T\mathbf{y})\|$. Then, 
\begin{enumerate}
\item $\bar{\beta}^*(\lambda_1,\lambda_2)=0\Leftrightarrow\lambda_1\geq\lambda_1^{\rm max}(\lambda_2)$.  
\item If $\lambda_1\geq\lambda_1^{\rm max}:=\max_g\tfrac{1}{\sqrt{n_g}}\|\mathbf{X}_g^T\mathbf{y}\|$ or $\lambda_2\geq\lambda_2^{\rm max}:=\|\mathbf{X}^T\mathbf{y}\|_{\infty}$, then $\bar{\beta}^*(\lambda_1,\lambda_2)=0$.
\end{enumerate}
\end{corollary}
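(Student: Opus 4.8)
The plan is to reduce both statements to the already-established equivalences in Theorem~\ref{thm:lambda_alpha_SGL} by exploiting the change of variables $\lambda_1=\alpha\lambda$, $\lambda_2=\lambda$ that links problem~(\ref{prob:SGL0}) to problem~(\ref{prob:SGL}). Under this reparametrization, $\bar{\beta}^*(\lambda_1,\lambda_2)$ coincides with $\beta^*(\lambda,\alpha)$ for $\lambda=\lambda_2$ and $\alpha=\lambda_1/\lambda_2$, so the vanishing of the primal solution is governed by the dual feasibility condition $\mathbf{y}/\lambda\in\mathcal{F}^{\alpha}$, i.e. $\|\mathcal{S}_{1}(\mathbf{X}_g^T\mathbf{y}/\lambda)\|\leq\alpha\sqrt{n_g}$ for every $g$.

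The key technical tool I would use is the positive-homogeneity identity of the shrinkage operator: for any $c>0$ one has $\mathcal{S}_{1}(\mathbf{w}/c)=\tfrac{1}{c}\mathcal{S}_{c}(\mathbf{w})$, which follows immediately from the componentwise definition in~(\ref{def:shrinkage_SGL}) since $(|[\mathbf{w}]_i|/c-1)_+=\tfrac{1}{c}(|[\mathbf{w}]_i|-c)_+$ and the sign is unchanged. Applying this with $c=\lambda_2$ turns the feasibility inequality $\|\mathcal{S}_{1}(\mathbf{X}_g^T\mathbf{y}/\lambda_2)\|\leq\alpha\sqrt{n_g}$ into $\tfrac{1}{\lambda_2}\|\mathcal{S}_{\lambda_2}(\mathbf{X}_g^T\mathbf{y})\|\leq(\lambda_1/\lambda_2)\sqrt{n_g}$, i.e. $\tfrac{1}{\sqrt{n_g}}\|\mathcal{S}_{\lambda_2}(\mathbf{X}_g^T\mathbf{y})\|\leq\lambda_1$. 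Taking the maximum over $g$ shows this holds for all groups precisely when $\lambda_1\geq\lambda_1^{\rm max}(\lambda_2)$, and combining with the equivalence (i)$\Leftrightarrow$(iii) of Theorem~\ref{thm:lambda_alpha_SGL} yields part~(i).

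For part~(ii), I would bound $\lambda_1^{\rm max}(\lambda_2)$ and invoke part~(i). For the first condition, the elementary pointwise estimate $|[\mathcal{S}_{\lambda_2}(\mathbf{w})]_i|=(|[\mathbf{w}]_i|-\lambda_2)_+\leq|[\mathbf{w}]_i|$ gives $\|\mathcal{S}_{\lambda_2}(\mathbf{X}_g^T\mathbf{y})\|\leq\|\mathbf{X}_g^T\mathbf{y}\|$, whence $\lambda_1^{\rm max}(\lambda_2)\leq\lambda_1^{\rm max}$ for every $\lambda_2>0$; thus $\lambda_1\geq\lambda_1^{\rm max}$ forces $\lambda_1\geq\lambda_1^{\rm max}(\lambda_2)$ and part~(i) closes the argument. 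For the second condition, if $\lambda_2\geq\|\mathbf{X}^T\mathbf{y}\|_{\infty}$ then every component satisfies $|[\mathbf{X}_g^T\mathbf{y}]_i|\leq\lambda_2$, so $\mathcal{S}_{\lambda_2}(\mathbf{X}_g^T\mathbf{y})=0$ for all $g$ and hence $\lambda_1^{\rm max}(\lambda_2)=0<\lambda_1$; part~(i) again gives $\bar{\beta}^*(\lambda_1,\lambda_2)=0$.

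I do not anticipate a serious obstacle here: once the homogeneity identity for $\mathcal{S}$ is in hand, every step is a direct rewriting or an elementary monotonicity bound. The only point requiring mild care is the bookkeeping of the reparametrization---checking that the product $\alpha\lambda$ collapses exactly to $\lambda_1$, and that the threshold-versus-scaling roles of $\lambda_2$ are tracked correctly when passing from $\|\mathcal{S}_{1}(\mathbf{X}_g^T\mathbf{y}/\lambda_2)\|\leq\alpha\sqrt{n_g}$ to its $\mathcal{S}_{\lambda_2}$ form.
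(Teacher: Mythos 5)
Your proposal is correct, and it reaches the same destination as the paper but by a slightly cleaner route for part (i). The paper handles part (i) by deriving the Fenchel's dual of problem (\ref{prob:SGL0}) from scratch [the problem in (\ref{prob:dual_SGL0_Fenchel})], writing down its KKT conditions, and then asserting that the four statements (feasibility of $\mathbf{y}$, $\bar{\theta}^*=\mathbf{y}$, $\bar{\beta}^*=0$, $\lambda_1\geq\lambda_1^{\rm max}(\lambda_2)$) are equivalent ``by a similar argument as in the proof of Theorem \ref{thm:lambda_alpha_SGL}''---i.e., it repeats the structure of that proof for the $(\lambda_1,\lambda_2)$ parametrization. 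You instead invoke Theorem \ref{thm:lambda_alpha_SGL} directly through the substitution $\lambda=\lambda_2$, $\alpha=\lambda_1/\lambda_2$, and translate its feasibility condition $\|\mathcal{S}_{1}(\mathbf{X}_g^T\mathbf{y}/\lambda)\|\leq\alpha\sqrt{n_g}$ into the stated form via the positive-homogeneity identity $\mathcal{S}_{1}(\mathbf{w}/c)=\tfrac{1}{c}\mathcal{S}_{c}(\mathbf{w})$, which is immediate from (\ref{def:shrinkage_SGL}). This buys you a shorter argument that reuses the theorem as a black box rather than re-deriving a second dual problem; what it costs is that the paper's version also records the dual formulation (\ref{prob:dual_SGL0_Fenchel}) and its optimality conditions explicitly, which the paper wants on hand anyway. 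Your part (ii) --- bounding $\lambda_1^{\rm max}(\lambda_2)\leq\lambda_1^{\rm max}$ via $\|\mathcal{S}_{\lambda_2}(\mathbf{w})\|\leq\|\mathbf{w}\|$, and noting $\mathcal{S}_{\lambda_2}(\mathbf{X}_g^T\mathbf{y})=0$ when $\lambda_2\geq\|\mathbf{X}^T\mathbf{y}\|_{\infty}$ --- is essentially identical to the paper's, differing only in that you funnel both cases through the inequality $\lambda_1\geq\lambda_1^{\rm max}(\lambda_2)$ of part (i) while the paper checks $\mathbf{y}\in\mathcal{F}(\lambda_1,\lambda_2)$ directly; these are equivalent given part (i). The only bookkeeping point worth making explicit is that the reduction requires $\lambda_2>0$ so that $\alpha=\lambda_1/\lambda_2$ is well defined, which holds under the paper's standing assumption that both regularization parameters are positive.
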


Before we prove Corollary \ref{cor:lambdamx_SGL}, we first derive the Fenchel's dual of (\ref{prob:SGL0}).  
By letting $f(\mathbf{w})=\frac{1}{2}\|\mathbf{w}\|^2$ and $\Omega(\beta)=\lambda_1\sum_{g=1}^G\sqrt{n_g}\|\beta_g\|+\lambda_2\|\beta\|_1$, the SGL problem in (\ref{prob:SGL0}) can be written as:
\begin{align*}
\min_{\beta}\,f(\mathbf{y}-\mathbf{X}\beta)+\Omega(\beta).
\end{align*}
Then, by Fenchel's Duality Theorem, the Fenchel's dual problem of (\ref{prob:SGL0}) is
\begin{align}\label{prob:dual_SGL0_Fenchel}
\inf_{\theta}\,\left\{\frac{1}{2}\|\mathbf{y}-\theta\|^2-\frac{1}{2}\|\mathbf{y}\|^2: \left\|\mathcal{S}_{\lambda_2}(\mathbf{X}_g^T\theta)\right\|\leq\lambda_1\sqrt{n_g},\,g=1,\ldots,G\right\}.
\end{align}
Let $\bar{\beta}^*(\lambda_1,\lambda_2)$ and $\bar{\theta}^*(\lambda_1,\lambda_2)$ be the optimal solutions of problem (\ref{prob:SGL0}) and (\ref{prob:dual_SGL0_Fenchel}). The optimality conditions can be written as
\begin{align}\label{eqn:KKT1_SGL0}
\bar{\theta}^*(\lambda_1,\lambda_2)=&\mathbf{y}-\mathbf{X}\bar{\beta}^*(\lambda_1,\lambda_2),\\ \label{eqn:KKT2_SGL0}
\mathbf{X}_g^T\bar{\theta}^*(\lambda_1,\lambda_2)&\in
\lambda_1\sqrt{n_g}\partial \|\bar{\beta}_g^*(\lambda_1,\lambda_2)\|+\lambda_2\partial \|\bar{\beta}^*_g(\lambda_1,\lambda_2)\|_1,\,g=1,\ldots,G.
\end{align}
We denote by $\mathcal{F}(\lambda_1,\lambda_2)$ the feasible set of problem (\ref{prob:dual_SGL0_Fenchel}). It is easy to see that
\begin{align*}
\bar{\theta}^*(\lambda_1,\lambda_2)=\mathbf{P}_{\mathcal{F}(\lambda_1,\lambda_2)}(\mathbf{y}).
\end{align*}

We now present the proof of Corollary \ref{cor:lambdamx_SGL}.

\begin{proof}
	For notational convenience, let
	\begin{enumerate}
		\item[(i).] $\mathbf{y}\in\mathcal{F}(\lambda_1,\lambda_2)$,
		\item[(ii).] $\bar{\theta}^*(\lambda_1,\lambda_2)=\mathbf{y}$,
		\item[(iii).] $\bar{\beta}^*(\lambda_1,\lambda_2)=0$,
		\item[(iv).] $\lambda_1\geq\lambda_1^{\rm max}(\lambda_2)=\max_g\frac{1}{\sqrt{n_g}}\|\mathcal{S}_{\lambda_2}(\mathbf{X}_g^T\mathbf{y})\|$.
	\end{enumerate}
	The first half of the statement is (iii)$\Leftrightarrow$(iv). Indeed, 
	by a similar argument as in the proof of Theorem \ref{thm:lambda_alpha_SGL}, we can see that the above statements are all equivalent to each other.
	
	We now show the second half. We first show that
	\begin{align}\label{statement:lambda1mx}
	\lambda_1\geq\lambda_1^{\rm max}\Rightarrow \bar{\beta}^*(\lambda_1,\lambda_2)=0.
	\end{align}
	By the first half, we only need to show
	\begin{align*}
	\lambda_1\geq\lambda_1^{\rm max}\Rightarrow \mathbf{y}\in\mathcal{F}(\lambda_1,\lambda_2).
	\end{align*}
	Indeed, the definition of $\lambda_1$ implies that
	\begin{align*}
	\|\mathbf{X}_g^T\mathbf{y}\|\leq \lambda_1\sqrt{n_g},\,g=1,\ldots,G.
	\end{align*}
	We note that for any $\lambda_2\geq 0$, we have
	\begin{align*}
	\|\mathcal{S}_{\lambda_2}(\mathbf{X}_g^T\mathbf{y})\|\leq\|\mathbf{X}_g^T\mathbf{y}\|.
	\end{align*}
	Therefore, we can see that
	\begin{align*}
	\|\mathcal{S}_{\lambda_2}(\mathbf{X}_g^T\mathbf{y})\|\leq\|\mathbf{X}_g^T\mathbf{y}\|\leq \lambda_1\sqrt{n_g},\,g=1,\ldots,G\Rightarrow \mathbf{y}\in\mathcal{F}(\lambda_1,\lambda_2).
	\end{align*}
	The proof of (\ref{statement:lambda1mx}) is complete.
	
	Similarly, to show that $\lambda_2\geq\lambda_2^{\rm max}\Rightarrow\bar{\beta}^*(\lambda_1,\lambda_2)$, we only need to show
	\begin{align*}
	\lambda_2\geq\lambda_2^{\rm max}\Rightarrow \mathbf{y}\in\mathcal{F}(\lambda_1,\lambda_2).
	\end{align*}
	By the definition of $\lambda_2$, we can see that
	\begin{align*}
	\|\mathbf{X}_g^T\mathbf{y}\|_{\infty}\leq\lambda_2,\,g=1,\ldots,G\Rightarrow \|\mathcal{S}_{\lambda_2}(\mathbf{X}_g^T\mathbf{y})\|=0\leq \lambda_1\sqrt{n_g},\,g=1,\ldots,G.
	\end{align*}
	Thus, we have $\mathbf{y}\in\mathcal{F}(\lambda_1,\lambda_2)$, which completes the proof.
\end{proof}

\section{The Two-Layer Screening Rules for SGL}\label{section:screening_rules_SGL}
We follow the three steps in Section \ref{subsection:general_rule_SGL} to develop TLFre. In Section \ref{subsection:estimation_dual_SGL}, we give an accurate estimation of $\theta^*(\lambda,\alpha)$ via normal cones \cite{Ruszczynski2006}. Then, we compute the supreme values in (\ref{rrule1_SGL}) and (\ref{rrule2_SGL}) by solving nonconvex problems in Section \ref{subsection:optimization_SGL}. We present the TLFre rules in Section \ref{subsection:screening_rules_SGL}.

\subsection{Estimation of the Dual Optimal Solution}\label{subsection:estimation_dual_SGL}

Because of the geometric property of the dual problem in (\ref{prob:dual_SGL_Fenchel}), i.e., $\theta^*(\lambda,\alpha)=\mathbf{P}_{\mathcal{F}^{\alpha}}(\mathbf{y}/\lambda)$, we have a very useful characterization of the dual optimal solution via the so-called normal cones \cite{Ruszczynski2006}.
\begin{proposition}\label{prop:normal_cone}\textup{\cite{Ruszczynski2006,Bauschke2011}}
	For a closed convex set $\mathcal{C}\in\mathbb{R}^n$ and a point $\mathbf{w}\in\mathcal{C}$, the normal cone to $\mathcal{C}$ at $\mathbf{w}$ is defined by
	\begin{align}\label{def:projection1}
		N_{\mathcal{C}}(\mathbf{w})=\{\mathbf{v}:\langle\mathbf{v},\mathbf{w}'-\mathbf{w}\rangle\leq0,\,\forall \mathbf{w}'\in\mathcal{C}\}.
	\end{align}
	Then, the following hold:
		\begin{enumerate}
			\item
			$N_{\mathcal{C}}(\mathbf{w})=\{\mathbf{v}:\mathbf{P}_{\mathcal{C}}(\mathbf{w}+\mathbf{v})=\mathbf{w}\}$.
			\item $\mathbf{P}_{\mathcal{C}}(\mathbf{w}+\mathbf{v})=\mathbf{w},\,\forall\,\mathbf{v}\in N_{\mathcal{C}}(\mathbf{w})$.
			\item Let $\overline{\mathbf{w}}\notin\mathcal{C}$. Then, $\mathbf{w}=\mathbf{P}_{\mathcal{C}}(\overline{\mathbf{w}})\Leftrightarrow \overline{\mathbf{w}}-\mathbf{w}\in N_{\mathcal{C}}(\mathbf{w})$.
			\item  Let $\overline{\mathbf{w}}\notin\mathcal{C}$ and $\mathbf{w}=\mathbf{P}_{\mathcal{C}}(\overline{\mathbf{w}})$. Then, $\mathbf{P}_{\mathcal{C}}(\mathbf{w}+t(\overline{\mathbf{w}}-\mathbf{w}))=\mathbf{w}$ for all $t\geq0$.
		\end{enumerate}
\end{proposition}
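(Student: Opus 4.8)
The plan is to reduce all four statements to a single analytic fact: the variational (obtuse-angle) characterization of the metric projection onto a closed convex set. Concretely, I would first establish that for any $\mathbf{z}\in\mathbb{R}^n$ and any $\mathbf{w}\in\mathcal{C}$,
\begin{align*}
\mathbf{w}=\mathbf{P}_{\mathcal{C}}(\mathbf{z})\iff\langle\mathbf{z}-\mathbf{w},\mathbf{w}'-\mathbf{w}\rangle\leq0\quad\forall\,\mathbf{w}'\in\mathcal{C}.
\end{align*}
This is the only nontrivial input. To prove it, I would note that $\mathbf{P}_{\mathcal{C}}(\mathbf{z})$ minimizes the smooth convex function $\mathbf{u}\mapsto\tfrac{1}{2}\|\mathbf{z}-\mathbf{u}\|^2$ over the convex set $\mathcal{C}$; the first-order optimality condition for a differentiable convex objective on a convex feasible set is exactly that the negative gradient $\mathbf{z}-\mathbf{w}$ makes a nonacute angle with every feasible direction $\mathbf{w}'-\mathbf{w}$, which is the displayed inequality. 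Existence and uniqueness of the minimizer follow from coercivity, strict convexity, and the closedness of $\mathcal{C}$.

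Given this characterization, Part~(i) is immediate: substituting $\mathbf{z}=\mathbf{w}+\mathbf{v}$ turns the right-hand side into $\langle\mathbf{v},\mathbf{w}'-\mathbf{w}\rangle\leq0$ for all $\mathbf{w}'\in\mathcal{C}$, which is precisely the defining condition $\mathbf{v}\in N_{\mathcal{C}}(\mathbf{w})$ in \eqref{def:projection1}. Hence $\{\mathbf{v}:\mathbf{P}_{\mathcal{C}}(\mathbf{w}+\mathbf{v})=\mathbf{w}\}=N_{\mathcal{C}}(\mathbf{w})$. Part~(ii) is then just the forward inclusion of the set identity in Part~(i), and Part~(iii) follows by taking $\mathbf{v}=\overline{\mathbf{w}}-\mathbf{w}$ (so that $\mathbf{w}+\mathbf{v}=\overline{\mathbf{w}}$) in Part~(i), noting that either side of the claimed equivalence forces $\mathbf{w}\in\mathcal{C}$, which is what is needed for the normal cone to be defined at $\mathbf{w}$.

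For Part~(iv), I would use the elementary observation that $N_{\mathcal{C}}(\mathbf{w})$ is a cone: if $\mathbf{v}$ satisfies $\langle\mathbf{v},\mathbf{w}'-\mathbf{w}\rangle\leq0$ for all $\mathbf{w}'\in\mathcal{C}$, then so does $t\mathbf{v}$ for every $t\geq0$, since scaling by a nonnegative number preserves the sign of the inner product. By Part~(iii), $\overline{\mathbf{w}}-\mathbf{w}\in N_{\mathcal{C}}(\mathbf{w})$, hence $t(\overline{\mathbf{w}}-\mathbf{w})\in N_{\mathcal{C}}(\mathbf{w})$ for all $t\geq0$, and Part~(ii) then yields $\mathbf{P}_{\mathcal{C}}(\mathbf{w}+t(\overline{\mathbf{w}}-\mathbf{w}))=\mathbf{w}$.

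The only place requiring genuine work is the projection characterization itself; once that is in hand, Parts~(i)--(iv) are a short chain of substitutions together with the cone property of $N_{\mathcal{C}}$. I therefore do not anticipate any real obstacle beyond correctly invoking the first-order optimality condition for convex minimization, which is the crux of the argument.
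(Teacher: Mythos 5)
Your proof is correct. Note, however, that the paper does not prove this proposition at all---it is stated as a known result with citations to \cite{Ruszczynski2006,Bauschke2011}---and your argument is precisely the standard textbook one: establish the variational characterization $\mathbf{w}=\mathbf{P}_{\mathcal{C}}(\mathbf{z})\Leftrightarrow\langle\mathbf{z}-\mathbf{w},\mathbf{w}'-\mathbf{w}\rangle\leq0$ for all $\mathbf{w}'\in\mathcal{C}$ via first-order optimality, then obtain (i)--(iii) by substitution and (iv) from the fact that $N_{\mathcal{C}}(\mathbf{w})$ is a cone, so there is nothing in the paper to compare it against and no gap to report.
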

By Theorem \ref{thm:lambda_alpha_SGL}, $\theta^*(\bar{\lambda},\alpha)$ is known if $\bar{\lambda}=\lambda_{\rm max}^{\alpha}$. Thus, we can estimate $\theta^*(\lambda,\alpha)$ in terms of $\theta^*(\bar{\lambda},\alpha)$. Due to the same reason, \emph{we only consider the cases with $\lambda<\lambda_{\rm max}^{\alpha}$ for $\theta^*(\lambda,\alpha)$} to be estimated.
\begin{remark}
	In many applications, the parameter values that perform the best are usually unknown. To determine appropriate parameter values, commonly used approaches such as cross validation and stability selection involve solving SGL many times over a grip of parameter values. Thus, given $\{\alpha^{(i)}\}_{i=1}^{\mathcal{I}}$ and $\lambda^{(1)}\geq\cdots\geq\lambda^{(\mathcal{J})}$, we can fix the value of $\alpha$ each time and solve SGL by varying the value of $\lambda$. We repeat the process until we solve SGL for all of the parameter values.
\end{remark}
\begin{theorem}\label{thm:estimation_SGL}
	For the SGL problem in (\ref{prob:SGL}), suppose that $\theta^*(\bar{\lambda},\alpha)$ is known with $\bar{\lambda}\leq\lambda_{\rm max}^{\alpha}$. Let $\rho_g$, $g=1,\ldots,G$, be defined by Theorem \ref{thm:lambda_alpha_SGL}. For any $\lambda\in(0,\bar{\lambda})$, we define
	\begin{align*}
		&\mathbf{n}_{\alpha}(\bar{\lambda})=
		\begin{cases}
			\dfrac{\mathbf{y}}{\bar{\lambda}}-\theta^*(\bar{\lambda},\alpha),\hspace{10.5mm}\textup{if}\hspace{1mm}\bar{\lambda}<\lambda_{\rm max}^{\alpha},\\
			\mathbf{X}_*\mathcal{S}_{1}\left(\mathbf{X}_*^T\frac{\mathbf{y}}{\lambda_{\rm max}^{\alpha}}\right),\hspace{3.5mm}\textup{if}\hspace{1mm}\bar{\lambda}=\lambda_{\rm max}^{\alpha},
		\end{cases}
		\hspace{-3mm}\textup{where}\hspace{1mm}\mathbf{X}_*=\textup{argmax}_{\mathbf{X}_g}\,\rho_g,\\
		&\mathbf{v}_{\alpha}(\lambda,\bar{\lambda})=\frac{\mathbf{y}}{\lambda}-\theta^*(\bar{\lambda},\alpha), \\
		&\mathbf{v}_{\alpha}(\lambda,\bar{\lambda})^{\perp} = \mathbf{v}_{\alpha}(\lambda,\bar{\lambda}) - \frac{\langle\mathbf{v}_{\alpha}(\lambda,\bar{\lambda}),\mathbf{n}_{\alpha}(\bar{\lambda})\rangle}{\vphantom{\widetilde{\bar{\lambda}}}\|\mathbf{n}_{\alpha}(\bar{\lambda})\|^2}\mathbf{n}_{\alpha}(\bar{\lambda}).
	\end{align*}
	Then, the following hold:
	\begin{enumerate}
		\item $\mathbf{n}_{\alpha}(\bar{\lambda})\in N_{\mathcal{F}^{\alpha}}(\theta^*(\bar{\lambda},\alpha))$,
		\item $\|\theta^*(\lambda,\alpha)-(\theta^*(\bar{\lambda},\alpha)+\tfrac{1}{2}\mathbf{v}^{\perp}_{\alpha}(\lambda,\bar{\lambda}))\|\leq \tfrac{1}{2}\|\mathbf{v}^{\perp}_{\alpha}(\lambda,\bar{\lambda})\|$.
	\end{enumerate}
\end{theorem}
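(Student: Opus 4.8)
The plan is to prove the two claims in order, the second building on the first. Throughout I abbreviate $\theta_1=\theta^*(\bar\lambda,\alpha)$, $\theta_2=\theta^*(\lambda,\alpha)$, $\mathbf{n}=\mathbf{n}_\alpha(\bar\lambda)$ and $\mathbf{v}=\mathbf{v}_\alpha(\lambda,\bar\lambda)$, and recall from \eqref{eqn:projection_dual_SGL} that each $\theta_i$ is the projection of the respective point onto the closed convex set $\mathcal{F}^\alpha=\cap_g\mathcal{F}_g^\alpha$.

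For claim (i) I would split on the two cases in the definition of $\mathbf{n}$. If $\bar\lambda<\lambda_{\rm max}^\alpha$, then Theorem \ref{thm:lambda_alpha_SGL} gives $\mathbf{y}/\bar\lambda\notin\mathcal{F}^\alpha$; since $\theta_1=\mathbf{P}_{\mathcal{F}^\alpha}(\mathbf{y}/\bar\lambda)$, Proposition \ref{prop:normal_cone}(iii) yields at once $\mathbf{n}=\mathbf{y}/\bar\lambda-\theta_1\in N_{\mathcal{F}^\alpha}(\theta_1)$. If $\bar\lambda=\lambda_{\rm max}^\alpha$, then $\theta_1=\mathbf{y}/\lambda_{\rm max}^\alpha$ and the group $\mathbf{X}_*$ attaining $\max_g\rho_g$ is precisely the one whose constraint $\|\mathcal{S}_1(\mathbf{X}_g^T\theta)\|\le\alpha\sqrt{n_g}$ is active at $\theta_1$. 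Here I would compute a subgradient of the convex function $\phi_*(\theta)=\|\mathcal{S}_1(\mathbf{X}_*^T\theta)\|$ at $\theta_1$: by the chain rule, using that the Jacobian of $\mathcal{S}_1$ acts as the identity on the support of $\mathcal{S}_1(\mathbf{X}_*^T\theta_1)$, this subgradient collapses to $\tfrac{1}{\alpha\sqrt{n_*}}\mathbf{X}_*\mathcal{S}_1(\mathbf{X}_*^T\theta_1)$, a positive multiple of $\mathbf{n}$. Since a subgradient of an active constraint lies in the normal cone of its sublevel set $\mathcal{F}_*^\alpha$, and since $\mathcal{F}^\alpha\subseteq\mathcal{F}_*^\alpha$ forces $N_{\mathcal{F}_*^\alpha}(\theta_1)\subseteq N_{\mathcal{F}^\alpha}(\theta_1)$, we obtain $\mathbf{n}\in N_{\mathcal{F}^\alpha}(\theta_1)$. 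This boundary case is where I expect the main difficulty, as the composition $\|\mathcal{S}_1(\cdot)\|$ is nonsmooth and the subgradient identity must be justified with care.

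For claim (ii) I first reduce the ball inclusion to an inner-product inequality: expanding $\|\theta_2-(\theta_1+\tfrac12\mathbf{v}^\perp)\|^2$ shows the asserted bound is equivalent to $\|\theta_2-\theta_1\|^2\le\langle\theta_2-\theta_1,\mathbf{v}^\perp\rangle$. Because $\theta_1\in\mathcal{F}^\alpha$ we have $\mathbf{P}_{\mathcal{F}^\alpha}(\theta_1)=\theta_1$, so firm nonexpansiveness of the metric projection onto a convex set (equivalently, the variational inequality characterizing the projection) applied to the pair $(\mathbf{y}/\lambda,\theta_1)$ gives $\|\theta_2-\theta_1\|^2\le\langle\theta_2-\theta_1,\mathbf{y}/\lambda-\theta_1\rangle=\langle\theta_2-\theta_1,\mathbf{v}\rangle$. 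It then remains to pass from $\mathbf{v}$ to $\mathbf{v}^\perp$. Writing $\mathbf{v}=\mathbf{v}^\perp+\tfrac{\langle\mathbf{v},\mathbf{n}\rangle}{\|\mathbf{n}\|^2}\mathbf{n}$, the gap between the two right-hand sides is $\tfrac{\langle\mathbf{v},\mathbf{n}\rangle}{\|\mathbf{n}\|^2}\langle\theta_2-\theta_1,\mathbf{n}\rangle$, which I would show is nonpositive via two observations. First, $\langle\theta_2-\theta_1,\mathbf{n}\rangle\le0$ holds because $\mathbf{n}\in N_{\mathcal{F}^\alpha}(\theta_1)$ (claim (i)) and $\theta_2\in\mathcal{F}^\alpha$. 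Second, $\langle\mathbf{v},\mathbf{n}\rangle\ge0$ holds in both cases: if $\bar\lambda<\lambda_{\rm max}^\alpha$ one writes $\mathbf{v}=\tfrac{\bar\lambda}{\lambda}\mathbf{n}+(\tfrac{\bar\lambda}{\lambda}-1)\theta_1$, so $\langle\mathbf{v},\mathbf{n}\rangle=\tfrac{\bar\lambda}{\lambda}\|\mathbf{n}\|^2+(\tfrac{\bar\lambda}{\lambda}-1)\langle\theta_1,\mathbf{n}\rangle\ge0$ using $\tfrac{\bar\lambda}{\lambda}>1$ and $0\in\mathcal{F}^\alpha$ (whence $\langle\mathbf{n},\theta_1\rangle\ge0$ by the normal-cone inequality); in the boundary case $\mathbf{v}$ is a positive multiple of $\mathbf{y}$ and $\langle\mathbf{y},\mathbf{n}\rangle=\lambda_{\rm max}^\alpha\langle\mathbf{u},\mathcal{S}_1(\mathbf{u})\rangle$ with $\mathbf{u}=\mathbf{X}_*^T\mathbf{y}/\lambda_{\rm max}^\alpha$, which is nonnegative since $[\mathbf{u}]_i[\mathcal{S}_1(\mathbf{u})]_i=|[\mathbf{u}]_i|(|[\mathbf{u}]_i|-1)_+\ge0$ coordinatewise. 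Hence the gap is a product of a nonnegative and a nonpositive factor, giving $\langle\theta_2-\theta_1,\mathbf{v}\rangle\le\langle\theta_2-\theta_1,\mathbf{v}^\perp\rangle$ and closing the chain.

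A final bookkeeping point is that $\mathbf{v}^\perp$ is well defined only when $\mathbf{n}\neq0$; this is immediate when $\bar\lambda<\lambda_{\rm max}^\alpha$ (there $\mathbf{y}/\bar\lambda\notin\mathcal{F}^\alpha$ while $\theta_1\in\mathcal{F}^\alpha$), and in the boundary case follows from $\|\mathcal{S}_1(\mathbf{X}_*^T\theta_1)\|=\alpha\sqrt{n_*}>0$ under the standing nondegeneracy of the active group. Aside from the subgradient computation in claim (i), every step is either the elementary ball/inner-product identity, firm nonexpansiveness of the projection, or the defining inequality of the normal cone, so the boundary-case gradient identity is the only genuinely delicate ingredient.
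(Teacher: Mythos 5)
Your proof is correct, and its overall skeleton matches the paper's: Proposition \ref{prop:normal_cone}(iii) for the interior case of part (i), reduction of the ball inclusion to $\|\theta_2-\theta_1\|^2\leq\langle\theta_2-\theta_1,\mathbf{v}^{\perp}\rangle$, the projection variational inequality to get $\|\theta_2-\theta_1\|^2\leq\langle\theta_2-\theta_1,\mathbf{v}\rangle$, and the sign analysis of $\langle\mathbf{v},\mathbf{n}\rangle\geq0$ and $\langle\mathbf{n},\theta_2-\theta_1\rangle\leq0$ to pass from $\mathbf{v}$ to $\mathbf{v}^{\perp}$. The genuine divergence is the boundary case $\bar{\lambda}=\lambda_{\rm max}^{\alpha}$ of part (i): the paper verifies $\langle\mathbf{n},\theta-\theta_1\rangle\leq0$ for all $\theta\in\mathcal{F}^{\alpha}$ by an explicit computation --- decomposing $\mathcal{S}_1$ via $\mathbf{P}_{\mathcal{B}_{\infty}}$, showing the cross term \eqref{eqn:normvec_bdy_2_SGL} is nonpositive coordinatewise, and finishing with Cauchy--Schwarz in \eqref{eqn:normvec_bdy_3_SGL} --- whereas you note that $\phi_*(\theta)=\|\mathcal{S}_1(\mathbf{X}_*^T\theta)\|$ is convex (it is the distance of $\mathbf{X}_*^T\theta$ to $\mathcal{B}_{\infty}$) and that its (sub)gradient at the active point is a positive multiple of $\mathbf{n}$, so the subgradient inequality $\langle\nabla\phi_*(\theta_1),\theta-\theta_1\rangle\leq\phi_*(\theta)-\phi_*(\theta_1)\leq0$ gives the normal-cone membership in two lines, with no constraint qualification needed for that inclusion direction. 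This is cleaner and reusable for any active convex constraint; the one thing you should tighten is the gradient formula itself, which is most safely justified through the distance-function interpretation (yielding $\nabla\phi_*(\theta_1)=\mathbf{X}_*\mathcal{S}_1(\mathbf{X}_*^T\theta_1)/\|\mathcal{S}_1(\mathbf{X}_*^T\theta_1)\|$, well defined since $\|\mathcal{S}_1(\mathbf{X}_*^T\theta_1)\|=\alpha\sqrt{n_*}>0$) rather than the informal ``Jacobian acts as the identity on the support.'' Two further minor differences, both sound: for $\langle\mathbf{y},\mathbf{n}\rangle\geq0$ in the boundary case you compute $\langle\mathbf{u},\mathcal{S}_1(\mathbf{u})\rangle\geq0$ coordinatewise while the paper plugs $\theta=0$ into the normal-cone inequality, and for $\bar{\lambda}<\lambda_{\rm max}^{\alpha}$ your decomposition $\mathbf{v}=\tfrac{\bar{\lambda}}{\lambda}\mathbf{n}+(\tfrac{\bar{\lambda}}{\lambda}-1)\theta_1$ avoids the Cauchy--Schwarz step in \eqref{ineqn:iyn_2_SGL}.
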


\begin{proof}
	\begin{enumerate}
		\item Suppose that $\bar{\lambda}<\lambda_{\rm max}^{\alpha}$. Theorem \ref{thm:lambda_alpha_SGL} implies that $\mathbf{y}/\bar{\lambda}\notin\mathcal{F}^{\alpha}$ and thus
		\begin{align*}
		{\mathbf{y}}/{\bar{\lambda}}-\mathbf{P}_{\mathcal{F}^{\alpha}}\left({\mathbf{y}}/\bar{\lambda}\right)={\mathbf{y}}/{\bar{\lambda}}-\theta^*(\bar{\lambda},\alpha)\neq0.
		\end{align*}
		By the third part of Proposition \ref{prop:normal_cone}, we can see that
		\begin{align}\label{eqn:n_lambda}
		{\mathbf{y}}/\bar{\lambda}-\theta^*(\bar{\lambda},\alpha)\in N_{\mathcal{F}^{\alpha}}(\theta^*(\bar{\lambda},\alpha)).
		\end{align}
		Thus, the statement holds for all $\bar{\lambda}<\lambda_{\rm max}^{\alpha}$.

		Suppose that $\bar{\lambda}=\lambda_{\rm max}^{\alpha}$.
		By Theorem \ref{thm:lambda_alpha_SGL}, we have
		\begin{align*}
		\theta^*(\bar{\lambda},\alpha)=\mathbf{y}/\bar{\lambda}\in\mathcal{F}^{\alpha}.
		\end{align*}
		In view of the definition of $\mathbf{X}_*$, we have
		\begin{align*}
		\left\|\mathcal{S}_{1}\left(\mathbf{X}_*^T\tfrac{\mathbf{y}}{\lambda_{\rm max}^{\alpha}}\right)\right\|=\alpha\sqrt{n_*},
		\end{align*}
		where $n_*$ is the number of feature contained in $\mathbf{X}_*$. Moreover, it is easy to see that
		\begin{align*}
		\|\mathcal{S}_{1}(\mathbf{X}_*^T\theta)\|\leq\alpha\sqrt{n_*},\,\forall \theta\in\mathcal{F}^{\alpha}.
		\end{align*}
		Therefore, to prove the statement, we need to show that
		\begin{align}\label{ineqn:normal_vector_boundary_SGL}
		\left\langle\mathbf{X}_*\mathcal{S}_{1}\left(\mathbf{X}_*^T\tfrac{\mathbf{y}}{\lambda_{\rm max}^{\alpha}}\right),\theta-\tfrac{\mathbf{y}}{\lambda_{\rm max}^{\alpha}}\right\rangle\leq0,\,\forall \theta\in\mathcal{F}^{\alpha}.
		\end{align}
		Recall Remark \ref{remark:shrinkage_feasible_SGL}, we have the following identity [see \eqref{eqn:shrinkage}]
		\begin{align*}
		\mathcal{S}_{1}\left(\mathbf{X}^T_*\tfrac{\mathbf{y}}{\lambda_{\rm max}^{\alpha}}\right)=\mathbf{X}^T_*\tfrac{\mathbf{y}}{\lambda_{\rm max}^{\alpha}}-\mathbf{P}_{\mathcal{B}_{\infty}}\left(\mathbf{X}^T_*\tfrac{\mathbf{y}}{\lambda_{\rm max}^{\alpha}}\right).
		\end{align*}
		Thus, we have
		\begin{align}\label{eqn:normvec_bdy_SGL}
		&\left\langle\mathbf{X}_*\mathcal{S}_{1}\left(\mathbf{X}_*^T\tfrac{\mathbf{y}}{\lambda_{\rm max}^{\alpha}}\right),\theta-\tfrac{\mathbf{y}}{\lambda_{\rm max}^{\alpha}}\right\rangle\\ \nonumber
		=&\left\langle\mathcal{S}_{1}\left(\mathbf{X}_*^T\tfrac{\mathbf{y}}{\lambda_{\rm max}^{\alpha}}\right),\mathbf{X}_*^T\left(\theta-\tfrac{\mathbf{y}}{\lambda_{\rm max}^{\alpha}}\right)+\mathbf{P}_{\mathcal{B}_{\infty}}\left(\mathbf{X}_*^T\tfrac{\mathbf{y}}{\lambda_{\rm max}^{\alpha}}\right)-\mathbf{P}_{\mathcal{B}_{\infty}}\left(\mathbf{X}_*^T\tfrac{\mathbf{y}}{\lambda_{\rm max}^{\alpha}}\right)\right\rangle\\ \nonumber
		=&\left\langle\mathcal{S}_{1}\left(\mathbf{X}_*^T\tfrac{\mathbf{y}}{\lambda_{\rm max}^{\alpha}}\right),\mathbf{X}_*^T\theta-\mathbf{P}_{\mathcal{B}_{\infty}}\left(\mathbf{X}_*^T\tfrac{\mathbf{y}}{\lambda_{\rm max}^{\alpha}}\right)\right\rangle-\left\|\mathcal{S}_{1}\left(\mathbf{X}_*^T\tfrac{\mathbf{y}}{\lambda_{\rm max}^{\alpha}}\right)\right\|^2\\ \nonumber
		=&\left\langle\mathcal{S}_{1}\left(\mathbf{X}_*^T\tfrac{\mathbf{y}}{\lambda_{\rm max}^{\alpha}}\right),\mathbf{X}_*^T\theta-\mathbf{P}_{\mathcal{B}_{\infty}}\left(\mathbf{X}_*^T\tfrac{\mathbf{y}}{\lambda_{\rm max}^{\alpha}}\right)\right\rangle-\alpha^2n_*.
		\end{align}
		Consider the first term on the right hand side of \eqref{eqn:normvec_bdy_SGL}, we have
		\begin{align}\label{eqn:normvec_bdy_1_SGL}
		&\left\langle\mathcal{S}_{1}\left(\mathbf{X}_*^T\tfrac{\mathbf{y}}{\lambda_{\rm max}^{\alpha}}\right),\mathbf{X}_*^T\theta-\mathbf{P}_{\mathcal{B}_{\infty}}\left(\mathbf{X}_*^T\tfrac{\mathbf{y}}{\lambda_{\rm max}^{\alpha}}\right)\right\rangle\\ \nonumber
		=&\left\langle\mathcal{S}_{1}\left(\mathbf{X}_*^T\tfrac{\mathbf{y}}{\lambda_{\rm max}^{\alpha}}\right),\mathbf{X}_*^T\theta-\mathbf{P}_{\mathcal{B}_{\infty}}(\mathbf{X}_*^T\theta)+\mathbf{P}_{\mathcal{B}_{\infty}}(\mathbf{X}_*^T\theta)-\mathbf{P}_{\mathcal{B}_{\infty}}\left(\mathbf{X}_*^T\tfrac{\mathbf{y}}{\lambda_{\rm max}^{\alpha}}\right)\right\rangle\\ \nonumber
		=&\left\langle\mathcal{S}_{1}\left(\mathbf{X}_*^T\tfrac{\mathbf{y}}{\lambda_{\rm max}^{\alpha}}\right),\mathcal{S}_{1}(\mathbf{X}_*^T\theta)\right\rangle+\left\langle\mathcal{S}_{1}\left(\mathbf{X}_*^T\tfrac{\mathbf{y}}{\lambda_{\rm max}^{\alpha}}\right),\mathbf{P}_{\mathcal{B}_{\infty}}(\mathbf{X}_*^T\theta)-\mathbf{P}_{\mathcal{B}_{\infty}}\left(\mathbf{X}_*^T\tfrac{\mathbf{y}}{\lambda_{\rm max}^{\alpha}}\right)\right\rangle.
		\end{align}
		Let $\mathcal{P}=\{i:[\mathbf{X}_*^T\frac{\mathbf{y}}{\lambda_{\rm max}^{\alpha}}]_i>1\}$ and $\mathcal{N}=\{i:[\mathbf{X}_*^T\frac{\mathbf{y}}{\lambda_{\rm max}^{\alpha}}]_i<-1\}$. We note that the second term on the right hand side of \eqref{eqn:normvec_bdy_1_SGL} can be written as
		\begin{align}\label{eqn:normvec_bdy_2_SGL}
		&\left\langle\mathcal{S}_{1}\left(\mathbf{X}_*^T\tfrac{\mathbf{y}}{\lambda_{\rm max}^{\alpha}}\right),\mathbf{P}_{\mathcal{B}_{\infty}}(\mathbf{X}_*^T\theta)-\mathbf{P}_{\mathcal{B}_{\infty}}\left(\mathbf{X}_*^T\tfrac{\mathbf{y}}{\lambda_{\rm max}^{\alpha}}\right)\right\rangle\\ \nonumber
		=&\sum_{i\in\mathcal{P}}\left([\mathbf{X}_*^T\tfrac{\mathbf{y}}{\lambda_{\rm max}^{\alpha}}]_i-1\right)\left([\mathbf{P}_{\mathcal{B}_{\infty}}(\mathbf{X}_*^T\theta)]_i-1\right)+\sum_{j\in\mathcal{N}}\left([\mathbf{X}_*^T\tfrac{\mathbf{y}}{\lambda_{\rm max}^{\alpha}}]_j+1\right)\left([\mathbf{P}_{\mathcal{B}_{\infty}}(\mathbf{X}_*^T\theta)]_j+1\right).
		\end{align}
		Because $\|\mathbf{P}_{\mathcal{B}_{\infty}}(\mathbf{X}_*^T\theta)\|_{\infty}\leq1$, we can see that \eqref{eqn:normvec_bdy_2_SGL} is non-positive. Therefore, by \eqref{eqn:normvec_bdy_1_SGL}, we have
		\begin{align}\label{eqn:normvec_bdy_3_SGL}
		\left\langle\mathcal{S}_{1}\left(\mathbf{X}_*^T\tfrac{\mathbf{y}}{\lambda_{\rm max}^{\alpha}}\right),\mathbf{X}_*^T\theta-\mathbf{P}_{\mathcal{B}_{\infty}}\left(\mathbf{X}_*^T\tfrac{\mathbf{y}}{\lambda_{\rm max}^{\alpha}}\right)\right\rangle \leq& \left\langle\mathcal{S}_{1}\left(\mathbf{X}_*^T\tfrac{\mathbf{y}}{\lambda_{\rm max}^{\alpha}}\right),\mathcal{S}_{1}(\mathbf{X}_*^T\theta)\right\rangle\\ \nonumber
		\leq&\left\|\mathcal{S}_{1}\left(\mathbf{X}_*^T\tfrac{\mathbf{y}}{\lambda_{\rm max}^{\alpha}}\right)\right\|\left\|\mathcal{S}_{1}(\mathbf{X}_*^T\theta)\right\|\\ \nonumber
		\leq&\alpha^2n_*.
		\end{align}
		Combining \eqref{eqn:normvec_bdy_SGL} and the inequality in (\ref{eqn:normvec_bdy_3_SGL}), we can see that the inequality in (\ref{ineqn:normal_vector_boundary_SGL}) holds. Thus, the statement holds for $\bar{\lambda}=\lambda_{\rm max}^{\alpha}$. This completes the proof.
		
		\item We now show the second half. It is easy to see that the statement is equivalent to
		\begin{align}\label{ineqn:estimate1_SGL}
		\|\theta^*(\lambda,\alpha)-\theta^*(\bar{\lambda},\alpha)\|^2\leq \langle\theta^*(\lambda,\alpha)-\theta^*(\bar{\lambda},\alpha),\,\mathbf{v}_{\alpha}^{\perp}(\lambda,\bar{\lambda})\rangle.
		\end{align}
		Thus, we will show that the inequality in (\ref{ineqn:estimate1_SGL}) holds.
		
		Because of the first half, we have
		\begin{align}\label{ineqn:n1_o_SGL}
		\langle\mathbf{n}_{\alpha}(\bar{\lambda}),\,\theta-\theta^*(\bar{\lambda},\alpha)\rangle\leq0,\hspace{2mm}\forall\,\theta\in\mathcal{F}^{\alpha}.
		\end{align}
		By letting $\theta=\theta^*(\lambda,\alpha)$, the inequality in (\ref{ineqn:n1_o_SGL}) leads to
		\begin{align}\label{ineqn:n1_1_SGL}
		\langle\mathbf{n}_{\alpha}(\bar{\lambda}),\,\theta^*(\lambda,\alpha)-\theta^*(\bar{\lambda},\alpha)\rangle\leq0.
		\end{align}
		In view of the first half and by letting $\theta=0$, the inequality in (\ref{ineqn:n1_o_SGL}) leads to
		\begin{align}\label{ineqn:n1_r_SGL}
		\langle\mathbf{n}_{\alpha}(\bar{\lambda}),\,0-\theta^*(\bar{\lambda},\alpha)\rangle\leq0\Rightarrow
		\begin{cases}
		\langle\mathbf{n}_{\alpha}(\bar{\lambda}),\,\mathbf{y}\rangle\geq0,\hspace{9mm}\textup{if}\hspace{1mm}\bar{\lambda}=\lambda_{\rm max}^{\alpha},\\
		\|\mathbf{y}\|/{\bar{\lambda}}\geq\|\theta^*(\bar{\lambda},\alpha)\|,\hspace{2mm}\textup{if}\hspace{1mm}\bar{\lambda}<\lambda_{\rm max}^{\alpha}.
		\end{cases}
		\end{align}
		Moreover, the first half also leads to $\frac{\mathbf{y}}{\lambda}-\theta^*(\lambda,\alpha)\in N_{\mathcal{F}^{\alpha}}(\theta^*(\lambda,\alpha))$. Thus, we have
		\begin{align}\label{ineqn:n2_o_SGL}
		\langle\tfrac{\mathbf{y}}{\lambda}-\theta^*(\lambda,\alpha), \theta-\theta^*(\lambda,\alpha)\rangle\leq0,\hspace{2mm}\forall\,\theta\in\mathcal{F}^{\alpha}.
		\end{align}
		By letting $\theta=\theta^*(\bar{\lambda},\alpha)$, the inequality in (\ref{ineqn:n2_o_SGL}) results in
		\begin{align}\label{ineqn:n2_1_SGL}
		\langle\tfrac{\mathbf{y}}{\lambda}-\theta^*(\lambda,\alpha), \theta^*(\bar{\lambda},\alpha)-\theta^*(\lambda,\alpha)\rangle\leq0,\hspace{2mm}\forall\,\theta\in\mathcal{F}^{\alpha}.
		\end{align}
		We can see that the inequality in (\ref{ineqn:n2_1_SGL}) is equivalent to
		\begin{align}\label{ineqn:n2_2_SGL}
		\|\theta^*(\lambda,\alpha)-\theta^*(\bar{\lambda},\alpha)\|^2\leq& \langle\theta^*(\lambda,\alpha)-\theta^*(\bar{\lambda},\alpha),\,\mathbf{v}_{\alpha}(\lambda,\bar{\lambda})\rangle.
		\end{align}
		
		On the other hand, the right hand side of (\ref{ineqn:estimate1_SGL}) can be rewritten as
		\begin{align}\label{ineqn:n2_3_SGL}
		&\langle\theta^*(\lambda,\alpha)-\theta^*(\bar{\lambda},\alpha),\,\mathbf{v}_{\alpha}^{\perp}(\lambda,\bar{\lambda})\rangle\\ \nonumber
		=&\langle\theta^*(\lambda,\alpha)-\theta^*(\bar{\lambda},\alpha),\,\mathbf{v}_{\alpha}(\lambda,\bar{\lambda})\rangle-\langle\theta^*(\lambda,\alpha)-\theta^*(\bar{\lambda},\alpha),\,\mathbf{v}_{\alpha}(\lambda,\bar{\lambda})-\mathbf{v}^{\perp}_{\alpha}(\lambda,\bar{\lambda})\rangle\\ \nonumber
		=&\langle\theta^*(\lambda,\alpha)-\theta^*(\bar{\lambda},\alpha),\,\mathbf{v}_{\alpha}(\lambda,\bar{\lambda})\rangle-\left\langle\theta^*(\lambda,\alpha)-\theta^*(\bar{\lambda},\alpha),\,\tfrac{\langle\mathbf{v}_{\alpha}(\lambda,\bar{\lambda}),\mathbf{n}_{\alpha}(\bar{\lambda})\rangle}{\vphantom{\widetilde{\bar{\lambda}}}\|\mathbf{n}_{\alpha}(\bar{\lambda})\|^2}\mathbf{n}_{\alpha}(\bar{\lambda})\right\rangle.
		\end{align}
		
		In view of (\ref{ineqn:n1_1_SGL}), (\ref{ineqn:n2_2_SGL}) and (\ref{ineqn:n2_3_SGL}), we can see that (\ref{ineqn:estimate1_SGL}) holds if $\langle\mathbf{v}_{\alpha}(\lambda,\bar{\lambda}),\mathbf{n}_{\alpha}(\bar{\lambda})\rangle\geq0$. Indeed, 
		\begin{align}\label{eqn:ivn_SGL}
		\langle\mathbf{v}_{\alpha}(\lambda,\bar{\lambda}),\mathbf{n}_{\alpha}(\bar{\lambda})\rangle=&\left\langle{\mathbf{y}}/{\lambda}-\theta^*(\bar{\lambda},\alpha),\mathbf{n}_{\alpha}(\bar{\lambda})\right\rangle\\ \nonumber
		=&\left({1}/{\lambda}-{1}/{\bar{\lambda}}\right)\langle\mathbf{y},\mathbf{n}_{\alpha}(\bar{\lambda})\rangle+\langle{\mathbf{y}}/{\bar{\lambda}}-\theta^*(\bar{\lambda},\alpha),\mathbf{n}_{\alpha}(\bar{\lambda})\rangle
		\end{align}
		Consider the first term on the right hand side of \eqref{eqn:ivn_SGL}. 
		By the first half of (\ref{ineqn:n1_r_SGL}), we have
		\begin{align}\label{ineqn:iyn_1_SGL}
		\langle\mathbf{y},\mathbf{n}_{\alpha}(\bar{\lambda})\rangle\geq0,\hspace{2mm}\textup{if}\hspace{1mm}\bar{\lambda}=\lambda_{\rm max}^{\alpha}.
		\end{align}
		Suppose that $\bar{\lambda}<\lambda_{\rm max}^{\alpha}$. By the second half of (\ref{ineqn:n1_r_SGL}), we can see that
		\begin{align}\label{ineqn:iyn_2_SGL}
		\langle\mathbf{y},\mathbf{n}_{\alpha}(\bar{\lambda})\rangle=\langle\mathbf{y},{\mathbf{y}}/{\bar{\lambda}}-\theta^*(\bar{\lambda},\alpha)\rangle\geq 1/{\bar{\lambda}}\|\mathbf{y}\|^2-\|\mathbf{y}\|\|\theta^*(\bar{\lambda},\alpha)\|\geq0.
		\end{align}
		Consider the second term on the right hand side of \eqref{eqn:ivn_SGL}. It is easy to see that
		\begin{align}\label{eqn:ivn_2_SGL}
		\langle{\mathbf{y}}/{\bar{\lambda}}-\theta^*(\bar{\lambda},\alpha),\mathbf{n}_{\alpha}(\bar{\lambda})\rangle=
		\begin{cases}
		0,\hspace{14mm}\textup{if}\hspace{1mm}\bar{\lambda}=\lambda_{\rm max}^{\alpha},\\
		\|\mathbf{n}_{\alpha}(\bar{\lambda})\|^2,\hspace{2mm}\textup{if}\hspace{1mm}\bar{\lambda}<\lambda_{\rm max}^{\alpha}.
		\end{cases}
		\end{align}
		Combining (\ref{ineqn:iyn_1_SGL}), (\ref{ineqn:iyn_2_SGL}) and \eqref{eqn:ivn_2_SGL}, we have $\langle\mathbf{v}_{\alpha}(\lambda,\bar{\lambda}),\mathbf{n}_{\alpha}(\bar{\lambda})\rangle\geq0$, which completes the proof.
	\end{enumerate}
\end{proof}

For notational convenience, we denote
\begin{align}
\mathbf{o}_{\alpha}(\lambda,\bar{\lambda})=\theta^*(\bar{\lambda},\alpha)+\tfrac{1}{2}\mathbf{v}^{\perp}_{\alpha}(\lambda,\bar{\lambda}).
\end{align}
Theorem \ref{thm:estimation_SGL} shows that $\theta^*(\lambda,\alpha)$ lies inside the ball of radius $\tfrac{1}{2}\|\mathbf{v}^{\perp}_{\alpha}(\lambda,\bar{\lambda})\|$ centered at $\mathbf{o}_{\alpha}(\lambda,\bar{\lambda})$.

\subsection{Solving for the Supreme Values via Nonconvex Optimization}\label{subsection:optimization_SGL}

We solve the optimization problems in (\ref{rrule1_SGL}) and (\ref{rrule2_SGL}). To simplify notations, let
\begin{align}\label{eqn:ball1_SGL}
	\Theta &= \{\theta:\|\theta-\mathbf{o}_{\alpha}(\lambda,\bar{\lambda})\|\leq \tfrac{1}{2}\|\mathbf{v}^{\perp}_{\alpha}(\lambda,\bar{\lambda})\|\},\\ \label{eqn:ball2_SGL}
	\Xi_g &=\left\{\xi_g:\|\xi_g-\mathbf{X}_g^T\mathbf{o}_{\alpha}(\lambda,\bar{\lambda})\|\leq \tfrac{1}{2}\|\mathbf{v}^{\perp}_{\alpha}(\lambda,\bar{\lambda})\|\|\mathbf{X}_g\|_2\right\},\,g=1,\ldots,G.
\end{align}
Theorem \ref{thm:estimation_SGL} indicates that $\theta^*(\lambda,\alpha)\in\Theta$. Moreover, we can see that $\mathbf{X}_g^{T}\Theta\subseteq\Xi_g$, $g=1,\ldots,G$. To develop the TLFre rule by (\ref{rrule1_SGL}) and (\ref{rrule2_SGL}), we need to solve the following  optimization problems:
\begin{align}\label{prob:supreme1_SGL}
	s_g^*(\lambda,\bar{\lambda};\alpha)=&\sup\nolimits_{\xi_g}\,\{\|\mathcal{S}_{1}(\xi_g)\|:\xi_g\in\Xi_g\},\,g=1,\ldots,G,\\ \label{prob:supreme2_SGL}
	t_{g_i}^*(\lambda,\bar{\lambda};\alpha)=&\sup\nolimits_{\theta}\,\{|\mathbf{x}_{g_i}^T\theta|:\theta\in\Theta\},\,i=1,\ldots,n_g,\,g=1,\ldots,G.
\end{align}

\subsubsection{The Solution of Problem (\ref{prob:supreme1_SGL})} We consider the following equivalent problem of (\ref{prob:supreme1_SGL}):
\begin{align}\label{prob:supreme11_SGL}
	\tfrac{1}{2}\left(s_g^*(\lambda,\bar{\lambda};\alpha)\right)^2=\sup\nolimits_{\xi_g}\,\left\{\tfrac{1}{2}\|\mathcal{S}_{1}(\xi_g)\|^2:\xi_g\in\Xi_g\right\}.
\end{align}
We can see that the objective function of problem (\ref{prob:supreme11_SGL}) is \emph{continuously differentiable} and the feasible set is a ball. Thus,  problem (\ref{prob:supreme11_SGL}) is  \emph{nonconvex} because we need to \emph{maximize} a convex function subject to a convex set. We first derive the necessary optimality conditions in Lemma \ref{lemma:optimality_condition_supreme1_SGL} and then deduce the closed form solutions of problems (\ref{prob:supreme1_SGL}) and (\ref{prob:supreme11_SGL}) in Theorem \ref{thm:supreme1_SGL}. 

\begin{lemma}\label{lemma:optimality_condition_supreme1_SGL}
	Let $\Xi_g^*$ be the set of optimal solutions of  (\ref{prob:supreme11_SGL}) and $\xi_g^*\in\Xi_g^*$. Then, the following hold:\\
	\textup{(i)} Suppose that $\xi_g^*$ is an interior point of $\Xi_g$. Then, $\Xi_g$ is a subset of $\mathcal{B}_{\infty}$.  \\
	\textup{(ii)} Suppose that $\xi_g^*$ is a boundary point of $\Xi_g$. Then, there exists $\mu^*\geq0$ such that
	\begin{align}\label{eqn:opt_condition_supreme1_SGL}
	\mathcal{S}_{1}(\xi_g^*)=\mu^*\left(\xi_g^*-\mathbf{X}_g^T\mathbf{o}_{\alpha}(\lambda,\bar{\lambda})\right).
	\end{align}
	\textup{(iii)} Suppose that there exists $\xi_g^0\in\Xi_g$ and $\xi_g^0\notin\mathcal{B}_{\infty}$. Then, we have\\
	\indent\hspace{5mm} \textup{(iiia)} $\xi_g^*\notin\mathcal{B}_{\infty}$ and $\xi_g^*$ is a boundary point of $\Xi_g$, i.e., $$\|\xi_g^*-\mathbf{X}_g^T\mathbf{o}_{\alpha}(\lambda,\bar{\lambda})\|=\tfrac{1}{2}\|\mathbf{v}^{\perp}_{\alpha}(\lambda,\bar{\lambda})\|\|\mathbf{X}_g\|_2.$$
	
	\indent\hspace{5mm} \textup{(iiib)} The optimality condition in \eqref{eqn:opt_condition_supreme1_SGL} holds with $\mu^*>0$.
\end{lemma}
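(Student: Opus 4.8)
The plan is to regard problem (\ref{prob:supreme11_SGL}) as the maximization of the function $\phi(\xi_g):=\tfrac12\|\mathcal{S}_1(\xi_g)\|^2$ over the Euclidean ball $\Xi_g$, whose center I abbreviate as $\mathbf{c}_g:=\mathbf{X}_g^T\mathbf{o}_{\alpha}(\lambda,\bar{\lambda})$, and to read each assertion off the first-order optimality conditions. The preliminary facts I would record are that, by \eqref{def:shrinkage_SGL}, $\phi(\xi_g)=\tfrac12\sum_i(|[\xi_g]_i|-1)_+^2$ is continuously differentiable with $\nabla\phi(\xi_g)=\mathcal{S}_1(\xi_g)$, and that $\phi\geq0$ with $\phi(\xi_g)=0$ if and only if $\xi_g\in\mathcal{B}_{\infty}$. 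Since $\phi$ is continuous and $\Xi_g$ is compact the supremum is attained, so $\Xi_g^*\neq\emptyset$ as assumed. I would work under the nondegenerate assumption that the radius $\tfrac12\|\mathbf{v}^{\perp}_{\alpha}(\lambda,\bar{\lambda})\|\|\mathbf{X}_g\|_2$ is positive; otherwise $\Xi_g$ is a single point and the screening is trivial.

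For part (i), if $\xi_g^*$ is an interior maximizer then the gradient must vanish, giving $\mathcal{S}_1(\xi_g^*)=\nabla\phi(\xi_g^*)=0$; hence $\xi_g^*\in\mathcal{B}_{\infty}$ and the optimal value is $\phi(\xi_g^*)=0$. Because $\phi\geq0$ everywhere while $0$ is its maximum over $\Xi_g$, it follows that $\phi\equiv0$ on $\Xi_g$, and the zero-set characterization of $\phi$ then yields $\Xi_g\subseteq\mathcal{B}_{\infty}$.

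For part (ii), I would use the first-order necessary condition for maximizing a differentiable function over a convex set, namely $\langle\nabla\phi(\xi_g^*),\xi_g-\xi_g^*\rangle\leq0$ for all $\xi_g\in\Xi_g$; by the definition of the normal cone in Proposition \ref{prop:normal_cone} this says precisely $\mathcal{S}_1(\xi_g^*)=\nabla\phi(\xi_g^*)\in N_{\Xi_g}(\xi_g^*)$. It then remains to compute the normal cone of a ball at a boundary point: since the outward normal to $\Xi_g$ at $\xi_g^*$ is the direction $\xi_g^*-\mathbf{c}_g$, one has $N_{\Xi_g}(\xi_g^*)=\{\mu(\xi_g^*-\mathbf{c}_g):\mu\geq0\}$, which is exactly \eqref{eqn:opt_condition_supreme1_SGL} with some $\mu^*\geq0$.

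For part (iii), the existence of $\xi_g^0\in\Xi_g$ with $\xi_g^0\notin\mathcal{B}_{\infty}$ forces $\phi(\xi_g^*)\geq\phi(\xi_g^0)>0$, so $\mathcal{S}_1(\xi_g^*)\neq0$ and therefore $\xi_g^*\notin\mathcal{B}_{\infty}$. Moreover $\Xi_g\not\subseteq\mathcal{B}_{\infty}$, so by the contrapositive of (i) the point $\xi_g^*$ cannot be interior; being in the closed set $\Xi_g$, it must lie on the boundary, which establishes (iiia). For (iiib) I apply (ii) to obtain \eqref{eqn:opt_condition_supreme1_SGL} with $\mu^*\geq0$; were $\mu^*=0$ the condition would give $\mathcal{S}_1(\xi_g^*)=0$, i.e.\ $\xi_g^*\in\mathcal{B}_{\infty}$, contradicting (iiia), so $\mu^*>0$. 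The one step requiring genuine care is part (ii): justifying the first-order condition for a maximization (rather than minimization) of a not-necessarily-concave objective over the ball, and correctly identifying its normal cone; the remaining arguments are bookkeeping built on the gradient formula $\nabla\phi=\mathcal{S}_1$ and the sign properties of $\phi$.
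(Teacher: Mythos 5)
Your proposal is correct and follows essentially the same route as the paper: identify $\nabla\phi=\mathcal{S}_1$, invoke the first-order condition $\nabla\phi(\xi_g^*)\in N_{\Xi_g}(\xi_g^*)$, and read off the three cases from the normal cone of the ball (trivial in the interior, a ray at the boundary). The only cosmetic difference is that you justify the first-order condition directly via directional derivatives for a differentiable objective, while the paper cites a subdifferential version of the same fact (Proposition \ref{prop:optimal_condition_nonconvex}); everything else, including the $\phi\equiv 0$ argument in (i) and the nonvanishing argument forcing $\mu^*>0$ in (iiib), matches the paper's proof.
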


To show Lemma \ref{lemma:optimality_condition_supreme1_SGL}, we need the following proposition.

\begin{proposition}\label{prop:optimal_condition_nonconvex}
	\textup{\cite{Hiriart-Urruty1988}} Suppose that $h\in\Gamma_0$ and $\mathcal{C}$ is a nonempty closed convex set. If $\mathbf{w}^*\in\mathcal{C}$ is a local maximum of $h$ on $\mathcal{C}$, then $\partial h(\mathbf{w}^*)\subseteq N_{\mathcal{C}}(\mathbf{w}^*)$.
\end{proposition}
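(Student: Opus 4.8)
The plan is to prove the inclusion directly from the definitions of the subdifferential and the normal cone, exploiting the convexity of both $h$ and $\mathcal{C}$ along line segments emanating from $\mathbf{w}^*$. Recall from \eqref{def:projection1} that $N_{\mathcal{C}}(\mathbf{w}^*)=\{\mathbf{v}:\langle\mathbf{v},\mathbf{w}'-\mathbf{w}^*\rangle\leq0,\ \forall\,\mathbf{w}'\in\mathcal{C}\}$, so it suffices to fix an arbitrary $\mathbf{v}\in\partial h(\mathbf{w}^*)$ and an arbitrary $\mathbf{w}'\in\mathcal{C}$ and to show $\langle\mathbf{v},\mathbf{w}'-\mathbf{w}^*\rangle\leq0$. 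If $\partial h(\mathbf{w}^*)=\emptyset$ the inclusion is vacuous; otherwise the existence of a subgradient forces $h(\mathbf{w}^*)$ to be finite, so that the local maximality of $\mathbf{w}^*$ on $\mathcal{C}$ is meaningful.

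First I would use the convexity of $\mathcal{C}$ to stay feasible: for $t\in[0,1]$ set $\mathbf{w}(t)=\mathbf{w}^*+t(\mathbf{w}'-\mathbf{w}^*)$, which lies in $\mathcal{C}$. Since $\mathbf{w}^*$ is a local maximum of $h$ on $\mathcal{C}$, there is $\varepsilon>0$ with $h(\mathbf{w})\leq h(\mathbf{w}^*)$ for every $\mathbf{w}\in\mathcal{C}$ satisfying $\|\mathbf{w}-\mathbf{w}^*\|\leq\varepsilon$. For all sufficiently small $t>0$ we have $\|\mathbf{w}(t)-\mathbf{w}^*\|=t\|\mathbf{w}'-\mathbf{w}^*\|\leq\varepsilon$, hence $h(\mathbf{w}(t))\leq h(\mathbf{w}^*)$; in particular $h(\mathbf{w}(t))$ is finite for such $t$.

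Next I would invoke the convexity of $h$ through the subgradient inequality for $\mathbf{v}\in\partial h(\mathbf{w}^*)$, namely
\begin{align*}
	h(\mathbf{w}(t))\geq h(\mathbf{w}^*)+\langle\mathbf{v},\mathbf{w}(t)-\mathbf{w}^*\rangle=h(\mathbf{w}^*)+t\langle\mathbf{v},\mathbf{w}'-\mathbf{w}^*\rangle.
\end{align*}
Combining this with $h(\mathbf{w}(t))\leq h(\mathbf{w}^*)$ yields $t\langle\mathbf{v},\mathbf{w}'-\mathbf{w}^*\rangle\leq0$, and dividing by $t>0$ gives $\langle\mathbf{v},\mathbf{w}'-\mathbf{w}^*\rangle\leq0$. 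As $\mathbf{w}'\in\mathcal{C}$ was arbitrary, this places $\mathbf{v}$ in $N_{\mathcal{C}}(\mathbf{w}^*)$, and as $\mathbf{v}\in\partial h(\mathbf{w}^*)$ was arbitrary, the inclusion $\partial h(\mathbf{w}^*)\subseteq N_{\mathcal{C}}(\mathbf{w}^*)$ follows.

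There is no genuinely hard step here; the argument is a one-sided directional estimate. The only points requiring care are bookkeeping ones: ensuring $h(\mathbf{w}^*)$ is finite so the comparison is legitimate (handled by the nonemptiness of $\partial h(\mathbf{w}^*)$), confirming that the segment $\mathbf{w}(t)$ remains in $\mathcal{C}$ (convexity of $\mathcal{C}$) and within the local-maximum neighborhood for small $t$, and noting that the subgradient inequality holds even when $h(\mathbf{w}(t))=+\infty$, a case the local-maximum hypothesis already excludes for small $t$. The essential structural ingredients are exactly the convexity of $\mathcal{C}$ (to produce admissible feasible directions) and the convexity of $h$ (to convert the subgradient into a global lower bound), which is why the statement would fail for nonconvex $\mathcal{C}$ or for merely directionally differentiable $h$.
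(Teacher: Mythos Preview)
Your proof is correct. The paper does not prove this proposition at all: it is quoted as a known result from \cite{Hiriart-Urruty1988} and used as a black box in the proof of Lemma~\ref{lemma:optimality_condition_supreme1_SGL}. Your argument supplies exactly the standard elementary justification---combining the subgradient inequality for $h$ with the convexity of $\mathcal{C}$ along the segment $\mathbf{w}(t)$---and there is nothing to compare it against in the paper itself.
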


We now present the proof of Lemma \ref{lemma:optimality_condition_supreme1_SGL}.

\begin{proof}
	To simplify notations, let
	\begin{align}\label{eqn:c_r_supreme1_SGL}
	\mathbf{c}=\mathbf{X}_g^T\mathbf{o}_{\alpha}(\lambda,\bar{\lambda})\hspace{2mm}\textup{and}\hspace{1mm} r = \frac{1}{2}\|\mathbf{v}_{\alpha}^{\perp}(\lambda,\bar{\lambda})\|\|\mathbf{X}_g\|_2.
	\end{align}
	By \eqref{def:shrinkage_SGL}, we have
	\begin{align}\label{eqn:obj_fun_supreme1_SGL}
	h(\mathbf{w}):=\frac{1}{2}\|\mathcal{S}_{1}(\mathbf{w})\|^2=\frac{1}{2}\sum_i\,(|[\mathbf{w}]_i|-1)_+^2.
	\end{align}
	It is easy to see that $h(\cdot)$ is continuously differentiable. Indeed, we have
	\begin{align}\label{eqn:gradient_supreme1_SGL}
	\nabla h(\mathbf{w})=
	\mathcal{S}_{1}(\mathbf{w}).
	\end{align}
	Then, problem (\ref{prob:supreme11_SGL}) can be written as
	\begin{align}\label{prob:supreme12_SGL}
	\frac{1}{2}(s_g^*(\lambda,\bar{\lambda};\alpha))^2=\sup_{\xi_g}\,\left\{h(\xi_g)=\frac{1}{2}\sum_i\,([\xi_g]_i-1)_+^2:\,\xi_g\in\Xi_g\right\},
	\end{align}
	where $\Xi_g=\{\xi_g:\|\xi_g-\mathbf{c}\|\leq r\}$.
	Then, Proposition \ref{prop:optimal_condition_nonconvex} results in
	\begin{align}\label{eqn:general_opt_condition_supreme1_SGL}
	\mathcal{S}_{1}(\xi_g^*)=\nabla h(\xi_g^*)=\partial h(\xi_g^*)\subseteq N_{\Xi_g}(\xi^*_g).
	\end{align}
	\begin{enumerate}
		\item[(i)] Suppose that $\xi_g^*$ is an interior point of $\Xi_g$. Then, we have $N_{\Xi_g}(\xi^*_g)=0$. By \eqref{eqn:general_opt_condition_supreme1_SGL}, we can see that
		\begin{align*}
		0=\mathcal{S}_{1}(\xi_g^*)\Rightarrow 0=\frac{1}{2}\|\mathcal{S}_{1}(\xi_g^*)\|^2=\frac{1}{2}(s_g^*(\lambda,\bar{\lambda};\alpha))^2=\sup_{\xi_g}\,\left\{\frac{1}{2}\|\mathcal{S}_{1}(\xi_g)\|^2:\xi_g\in\Xi_g\right\}.
		\end{align*}
		Therefore, we have
		\begin{align}\label{eqn:Xi_subset_C_supreme1_SGL}
		\|\mathcal{S}_{1}(\xi_g)\|=0,\,\forall\, \xi_g\in\Xi_g.
		\end{align}
		Because $\mathcal{S}_{1}(\xi_g)=\xi_g-\mathbf{P}_{\mathcal{B}_{\infty}}(\xi_g)$ (see Remark \ref{remark:shrinkage_feasible_SGL}), \eqref{eqn:Xi_subset_C_supreme1_SGL} implies that
		\begin{align*}
		\xi_g=\mathbf{P}_{\mathcal{B}_{\infty}}(\xi_g),\,\forall\, \xi_g\in\Xi_g\Rightarrow \xi_g\in\mathcal{B}_{\infty},\,\forall\,\xi_g\in\Xi_g.
		\end{align*}
		This completes the proof.
		\item[(ii)] Suppose that $\xi_g^*$ is a boundary point of $\Xi_g$. We can see that
		\begin{align}\label{eqn:normal_cone_supreme1_SGL}
		N_{\Xi_g}(\xi_g^*)=\{\mu(\xi_g^*-\mathbf{c}),\,\mu\geq0\}.
		\end{align}
		Then, \eqref{eqn:opt_condition_supreme1_SGL} follows by combining \eqref{eqn:normal_cone_supreme1_SGL} and the optimality condition in (\ref{eqn:general_opt_condition_supreme1_SGL}).
		\item[(iii)]
		Suppose that there exists $\xi_g^0\in\Xi_g$ and $\xi_g^0\notin\mathcal{B}_{\infty}$.
		\begin{enumerate}
			\item[(iiia)] The definition of $\xi_g^0$ leads to
			\begin{align*}
			0<\|\mathcal{S}_{1}(\xi_g^0)\|\leq\|\mathcal{S}_{1}(\xi_g^*)\|\Rightarrow \xi_g^*\notin\mathcal{B}_{\infty}.
			\end{align*}
			Moreover, we can see that $\xi_g^*$ is a boundary point of $\Xi_g$. Because if $\xi_g^*$ is an interior point of $\Xi_g$, the first part implies that $\Xi_g\subset\mathcal{B}_{\infty}$. This contradicts with the existence of $\xi_g^0$. Thus, $\xi_g^*$ must be a boundary point of $\Xi_g$, i.e. $\|\xi_g^*-\mathbf{c}\|=r$.
			\item[(iiib)] Because  $\xi_g^*$ is a boundary point of $\Xi_g$, the second part implies that \eqref{eqn:opt_condition_supreme1_SGL} holds.
			Moreover, from (iiia), we know that $\xi_g^*\notin\mathcal{B}_{\infty}$. Therefore, both sides of \eqref{eqn:opt_condition_supreme1_SGL} are nonzero and thus $\mu^*>0$. This completes the proof.
		\end{enumerate}
	\end{enumerate}\vspace{-8mm}
\end{proof}

Based on the necessary optimality conditions in Lemma \ref{lemma:optimality_condition_supreme1_SGL}, we derive the closed form solutions of (\ref{prob:supreme1_SGL}) and (\ref{prob:supreme11_SGL}) in the following Theorem. The notations are the same as the ones in the proof of Lemma \ref{lemma:optimality_condition_supreme1_SGL} [see \eqref{eqn:c_r_supreme1_SGL} and \eqref{eqn:obj_fun_supreme1_SGL}].

\begin{theorem}\label{thm:supreme1_SGL}
	For problems (\ref{prob:supreme1_SGL}) and (\ref{prob:supreme11_SGL}), let $\mathbf{c}=\mathbf{X}_g^T\mathbf{o}_{\alpha}(\lambda,\bar{\lambda})$, $r=\tfrac{1}{2}\|\mathbf{v}^{\perp}_{\alpha}(\lambda,\bar{\lambda})\|\|\mathbf{X}_g\|_2$ and $\Xi_g^*$ be the set of the optimal solutions. 
	\begin{enumerate}
		\item Suppose that $\mathbf{c}\notin\mathcal{B}_{\infty}$, i.e., $\|\mathbf{c}\|_{\infty}>1$. Let $\mathbf{u}={r\mathcal{S}_{1}(\mathbf{c})}/{\|\mathcal{S}_{1}(\mathbf{c})\|}$. Then, 
		\begin{align}\label{eqn:sol1_supreme1_SGL}
		s_g^*(\lambda,\bar{\lambda};\alpha)={\|\mathcal{S}_{1}(\mathbf{c})\|}+r\hspace{2mm}\textup{and}\hspace{2mm}\Xi_g^*=\{\mathbf{c}+\mathbf{u}\}.
		\end{align}
		\item Suppose that $\mathbf{c}$ is a boundary point of $\mathcal{B}_{\infty}$, i.e., $\|\mathbf{c}\|_{\infty}=1$. Then, 
		\begin{align}\label{eqn:sol2_supreme1_SGL}
		s_g^*(\lambda,\bar{\lambda};\alpha)=r\hspace{2mm}\textup{and}\hspace{2mm}\Xi_g^*=\left\{\mathbf{c}+\mathbf{u}:\mathbf{u}\in N_{\mathcal{B}_{\infty}}(\mathbf{c}),\|\mathbf{u}\|=r\right\}.
		\end{align}
		\item Suppose that $\mathbf{c}\in\textup{int}\,\mathcal{B}_{\infty}$, i.e., $\|\mathbf{c}\|_{\infty}<1$. Let $i^*\in\mathcal{I}^*=\{i:|[\mathbf{c}]_i|=\|\mathbf{c}\|_{\infty}\}$. Then, 
		\begin{align}\label{eqn:sol3_supreme1_SGL}
		&s_g^*(\lambda,\bar{\lambda};\alpha)=\left(\|\mathbf{c}\|_{\infty}+r-1\right)_+,\\ \nonumber
		&\Xi_g^*=
		\begin{cases}
		\Xi_g,\hspace{44.5mm}\textup{if}\hspace{1mm}\Xi_g\subset\mathcal{B}_{\infty},\\
		\left\{\mathbf{c}+r\cdot\textup{sgn}([\mathbf{c}]_{i^*})\mathbf{e}_{i^*}:\,i^*\in\mathcal{I}^*\right\},\hspace{1.5mm}\textup{if}\hspace{1mm}\Xi_g\not\subset\mathcal{B}_{\infty}\,\textup{and}\,\mathbf{c}\neq0,\\
		\left\{r\cdot\mathbf{e}_{i^*}, -r\cdot\mathbf{e}_{i^*}:\,i^*\in\mathcal{I}^*\right\},\hspace{8.5mm}\textup{if}\hspace{1mm}\Xi_g\not\subset\mathcal{B}_{\infty}\,\textup{and}\,\mathbf{c}=0,\\
		\end{cases}
		\end{align}
		where $\mathbf{e}_i$ is the $i^{th}$ standard basis vector.
	\end{enumerate}
\end{theorem}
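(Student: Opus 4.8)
The plan is to prove each formula by exhibiting a feasible point that attains the claimed value and then matching it with an upper bound on $s_g^*:=\sup\{\|\mathcal{S}_{1}(\xi_g)\|:\xi_g\in\Xi_g\}$, after which the optimal set $\Xi_g^*$ is recovered by tracing the equality cases. I will rely on two structural facts. First, since $\mathcal{S}_{1}(\xi)=\xi-\mathbf{P}_{\mathcal{B}_{\infty}}(\xi)$ (Remark \ref{remark:shrinkage_feasible_SGL}) and projections onto convex sets are firmly nonexpansive, the residual map $\mathcal{S}_{1}$ is nonexpansive, so $\|\mathcal{S}_{1}(\xi_g)-\mathcal{S}_{1}(\mathbf{c})\|\le\|\xi_g-\mathbf{c}\|\le r$. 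Second, Lemma \ref{lemma:optimality_condition_supreme1_SGL} tells me that whenever $\Xi_g\not\subseteq\mathcal{B}_{\infty}$ any maximizer $\xi_g^*$ lies on the boundary sphere $\|\xi_g^*-\mathbf{c}\|=r$, lies outside $\mathcal{B}_{\infty}$, and satisfies $\mathcal{S}_{1}(\xi_g^*)=\mu^*(\xi_g^*-\mathbf{c})$ with $\mu^*>0$ by \eqref{eqn:opt_condition_supreme1_SGL}.

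For the two cases with $\|\mathbf{c}\|_{\infty}\ge1$ the nonexpansive estimate already yields the sharp bound $\|\mathcal{S}_{1}(\xi_g)\|\le\|\mathcal{S}_{1}(\mathbf{c})\|+r$ by the triangle inequality. When $\mathbf{c}\notin\mathcal{B}_{\infty}$ I take $\hat{\xi}_g=\mathbf{c}+\mathbf{u}$ with $\mathbf{u}=r\mathcal{S}_{1}(\mathbf{c})/\|\mathcal{S}_{1}(\mathbf{c})\|$; a coordinatewise check shows $\mathcal{S}_{1}(\hat{\xi}_g)=\mathcal{S}_{1}(\mathbf{c})+\mathbf{u}$, since $\mathbf{u}$ only pushes the already-saturated coordinates further out and leaves the interior coordinates fixed, so $\|\mathcal{S}_{1}(\hat{\xi}_g)\|=\|\mathcal{S}_{1}(\mathbf{c})\|+r$, matching the bound. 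Uniqueness follows by tracing the same equality conditions: tight nonexpansiveness forces $\mathbf{P}_{\mathcal{B}_{\infty}}(\xi_g)=\mathbf{P}_{\mathcal{B}_{\infty}}(\mathbf{c})$, whence $\mathcal{S}_{1}(\xi_g)-\mathcal{S}_{1}(\mathbf{c})=\xi_g-\mathbf{c}$, and the triangle-inequality equality then forces this to be a nonnegative multiple of $\mathcal{S}_{1}(\mathbf{c})$ of length $r$, pinning $\xi_g=\mathbf{c}+\mathbf{u}$ and giving \eqref{eqn:sol1_supreme1_SGL}. When $\|\mathbf{c}\|_{\infty}=1$ we have $\mathcal{S}_{1}(\mathbf{c})=0$, so the bound reads $s_g^*\le r$; equality forces $\|\xi_g-\mathbf{c}\|=r$ together with $\|\mathcal{S}_{1}(\xi_g)\|=\|\xi_g-\mathbf{c}\|$, and the latter is exactly tightness of nonexpansiveness, equivalent to $\mathbf{P}_{\mathcal{B}_{\infty}}(\xi_g)=\mathbf{c}$, i.e. $\xi_g-\mathbf{c}\in N_{\mathcal{B}_{\infty}}(\mathbf{c})$ by the projection--normal-cone characterization in Proposition \ref{prop:normal_cone}. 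This yields \eqref{eqn:sol2_supreme1_SGL}.

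The genuinely nonconvex case is $\mathbf{c}\in\textup{int}\,\mathcal{B}_{\infty}$, where $\mathcal{S}_{1}(\mathbf{c})=0$ makes the nonexpansive bound $s_g^*\le r$ too loose. If $\Xi_g\subseteq\mathcal{B}_{\infty}$ the objective vanishes identically and $s_g^*=0=(\|\mathbf{c}\|_{\infty}+r-1)_+$, so I assume $\Xi_g\not\subseteq\mathcal{B}_{\infty}$, equivalently $\|\mathbf{c}\|_{\infty}+r>1$. Here I would prove the sharp bound by a coordinatewise reduction to a uniform threshold: writing $\delta_i=|[\xi_g]_i-[\mathbf{c}]_i|$ and $a_{\min}=1-\|\mathbf{c}\|_{\infty}>0$, the per-coordinate triangle inequality gives $|[\mathcal{S}_{1}(\xi_g)]_i|=(|[\xi_g]_i|-1)_+\le(\delta_i-(1-|[\mathbf{c}]_i|))_+\le(\delta_i-a_{\min})_+$, hence $\|\mathcal{S}_{1}(\xi_g)\|^2\le\sum_i(\delta_i-a_{\min})_+^2$ with $\sum_i\delta_i^2=\|\xi_g-\mathbf{c}\|^2\le r^2$. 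It then remains to establish the scalar inequality $\sum_i(\delta_i-a_{\min})_+^2\le(r-a_{\min})_+^2$ over $\delta\ge0,\ \|\delta\|\le r$; setting $t_i=\delta_i-a_{\min}$ on the active set $\{i:\delta_i>a_{\min}\}$ and using $\sum_i t_i\ge(\sum_i t_i^2)^{1/2}$ together with cardinality $\ge1$ collapses the budget constraint to $\big((\sum_i t_i^2)^{1/2}+a_{\min}\big)^2\le r^2$, i.e. $\|\mathcal{S}_{1}(\xi_g)\|\le r-a_{\min}=\|\mathbf{c}\|_{\infty}+r-1$, which is attained by the point $\mathbf{c}+r\,\textup{sgn}([\mathbf{c}]_{i^*})\mathbf{e}_{i^*}$. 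This concentration step---showing that spreading the $\ell_2$ budget across several coordinates is never better than driving a single maximal coordinate past the threshold---is the crux, and I expect it to be the main obstacle.

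Finally, equality throughout the chain forces exactly one active coordinate, that coordinate to attain $|[\mathbf{c}]_{i^*}|=\|\mathbf{c}\|_{\infty}$ (so $i^*\in\mathcal{I}^*$), the entire budget to be spent outward along it (so inactive coordinates remain at $[\mathbf{c}]_i$), giving $\xi_g^*=\mathbf{c}+r\,\textup{sgn}([\mathbf{c}]_{i^*})\mathbf{e}_{i^*}$ when $\mathbf{c}\neq0$. When $\mathbf{c}=0$ every coordinate lies in $\mathcal{I}^*$ and $\textup{sgn}(0)=0$ leaves the outward direction undetermined, so both signs survive and the optimal set becomes $\{\pm r\,\mathbf{e}_{i^*}:i^*\in\mathcal{I}^*\}$; this recovers \eqref{eqn:sol3_supreme1_SGL} and completes the proof.
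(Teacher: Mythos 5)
Your proof is correct, but it takes a genuinely different route from the paper's. The paper works through the necessary stationarity condition $\mathcal{S}_{1}(\xi_g^*)=\mu^*(\xi_g^*-\mathbf{c})$ supplied by Lemma \ref{lemma:optimality_condition_supreme1_SGL} (itself from Proposition \ref{prop:optimal_condition_nonconvex}, $\partial h(\xi_g^*)\subseteq N_{\Xi_g}(\xi_g^*)$), then solves for the multiplier in the regimes $\mu^*>1$, $\mu^*=1$, $\mu^*\in(0,1)$ using the projection identities of Proposition \ref{prop:normal_cone}; in the interior case it reduces everything to minimizing $\|\phi_g-\mathbf{c}\|$ over the boundary of $\mathcal{B}_{\infty}$. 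You instead certify the supremum directly: an upper bound matched by an explicit feasible point, with $\Xi_g^*$ read off from the equality cases. For $\|\mathbf{c}\|_{\infty}\geq1$ your bound comes from nonexpansiveness of $\mathcal{S}_{1}=I-\mathbf{P}_{\mathcal{B}_{\infty}}$ plus the triangle inequality, and the coordinatewise verification that $\mathcal{S}_{1}(\mathbf{c}+\mathbf{u})=\mathcal{S}_{1}(\mathbf{c})+\mathbf{u}$ is sound. For $\|\mathbf{c}\|_{\infty}<1$ your concentration inequality is the crux and it does go through: in your notation, $r^2\geq S^2+2a_{\min}L+|A|a_{\min}^2\geq(S+a_{\min})^2$ using $L\geq S$ and $|A|\geq1$, so $S\leq r-a_{\min}$, with equality only when $|A|=1$, the active coordinate lies in $\mathcal{I}^*$, and the entire $\ell_2$ budget is spent outward along it---which recovers exactly the claimed optimal sets, including the sign ambiguity at $\mathbf{c}=0$. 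What each approach buys: yours is more elementary and self-contained (you never actually need Lemma \ref{lemma:optimality_condition_supreme1_SGL} or the nonconvex-maximization machinery once the direct bounds are in place, and optimality is certified rather than located by enumerating stationary points), while the paper's route showcases the normal-cone framework it reuses elsewhere and avoids case-specific inequality work. One step you should spell out if you write this up: the equivalence ``tight nonexpansiveness $\Leftrightarrow\mathbf{P}_{\mathcal{B}_{\infty}}(\xi_g)=\mathbf{P}_{\mathcal{B}_{\infty}}(\mathbf{c})$'' needs the one-line coordinatewise justification that equality $|s(a)-s(b)|=|a-b|$ for the scalar shrinkage forces the segment $[a,b]$ into a region where $s$ has slope one (or $a=b$), hence equal projections.
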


\begin{proof}
	\begin{enumerate}
		\item[(i)] Suppose that $\mathbf{c}\notin\mathcal{B}_{\infty}$. By the third part of Lemma \ref{lemma:optimality_condition_supreme1_SGL}, we have
		\begin{align} \label{eqn:opt_sol_bdy_SGL}
		&\xi_g^*\notin\mathcal{B}_{\infty},\hspace{2mm}\|\xi_g^*-\mathbf{c}\|=r,\\
		\label{eqn:opt_cond_case1_SGL}
		&\xi_g^*-\mathbf{P}_{\mathcal{B}_{\infty}}(\xi_g^*)=\mathcal{S}_{1}(\xi_g^*)=\mu^*(\xi_g^*-\mathbf{c}),\,\mu^*>0.
		\end{align}
		By \eqref{eqn:opt_cond_case1_SGL}, we can see that $\mu^*\neq1$ because otherwise we would have $\mathbf{c}=\mathbf{P}_{\mathcal{B}_{\infty}}(\xi_g^*)\in\mathcal{B}_{\infty}$. Moreover, we can only consider the cases with $\mu^*>1$ because $\|\mathcal{S}_{1}(\xi_g^*)\|=\mu^*r$ and we aim to maximize $\|\mathcal{S}_{1}(\xi_g^*)\|$. Therefore, if we can find a solution with $\mu^*>1$, there is no need to consider the cases with $\mu^*\in(0,1)$.
		
		Suppose that $\mu^*>1$. Then, \eqref{eqn:opt_cond_case1_SGL} leads to
		\begin{align}\label{eqn:c_supreme1_SGL}
		\mathbf{c}=&\mathbf{P}_{\mathcal{B}_{\infty}}(\xi_g^*)+\left(1-\frac{1}{\mu^*}\right)\left(\xi_g^*-\mathbf{P}_{\mathcal{B}_{\infty}}(\xi_g^*)\right),\\ \label{eqn:xi_supreme1_SGL}
		\xi_g^*=&\mathbf{P}_{\mathcal{B}_{\infty}}(\xi_g^*)+\frac{\mu^*}{\mu^*-1}\left(\mathbf{c}-\mathbf{P}_{\mathcal{B}_{\infty}}(\xi_g^*)\right).
		\end{align}
		In view of part (iv) of Proposition \ref{prop:normal_cone} and \eqref{eqn:c_supreme1_SGL}, we have
		\begin{align}\label{eqn:projc_projxi_supreme1_SGL}
		\mathbf{P}_{\mathcal{B}_{\infty}}(\mathbf{c})=\mathbf{P}_{\mathcal{B}_{\infty}}(\xi_g^*).
		\end{align}
		Therefore, \eqref{eqn:xi_supreme1_SGL} can be rewritten as
		\begin{align}\label{eqn:shrink_xi_c_supreme1_SGL}
		\mathcal{S}_{1}(\xi_g^*)=\xi_g^*-\mathbf{P}_{\mathcal{B}_{\infty}}(\xi_g^*)=\frac{\mu^*}{\mu^*-1}\left(\mathbf{c}-\mathbf{P}_{\mathcal{B}_{\infty}}(\mathbf{c})\right)=\frac{\mu^*}{\mu^*-1}\mathcal{S}_{1}(\mathbf{c}).
		\end{align}
		Combining \eqref{eqn:opt_cond_case1_SGL} and \eqref{eqn:shrink_xi_c_supreme1_SGL}, we have
		\begin{align}\label{eqn:mu_supreme1_SGL}
		\frac{\mu^*}{\mu^*-1}\|\mathcal{S}_{1}(\mathbf{c})\|=\mu^*\|\xi_g^*-\mathbf{c}\|=\mu^*r\Rightarrow\mu^*=1+\frac{\|\mathcal{S}_{1}(\mathbf{c})\|}{r}>1.
		\end{align}
		The statement holds by plugging \eqref{eqn:mu_supreme1_SGL} and \eqref{eqn:projc_projxi_supreme1_SGL} into \eqref{eqn:xi_supreme1_SGL} and \eqref{eqn:shrink_xi_c_supreme1_SGL}. Moreover, the above discussion implies that $\Xi_g^*$ only contains one element as shown in \eqref{eqn:sol1_supreme1_SGL}.
		
		\item[(ii)] Suppose that $\mathbf{c}$ is a boundary point of $\mathcal{B}_{\infty}$. Then, we can find a point $\xi_g^0\in\Xi_g$ and $\xi_g^0\notin\mathcal{B}_{\infty}$. By the third part of Lemma \ref{lemma:optimality_condition_supreme1_SGL}, we also have \eqref{eqn:opt_sol_bdy_SGL} and \eqref{eqn:opt_cond_case1_SGL} hold. We claim that $\mu^*\in(0,1]$. The argument is as follows.
		
		Suppose that $\mu^*>1$. By the same argument as in the proof of the first part, we can see that \eqref{eqn:shrink_xi_c_supreme1_SGL} holds. Because $\mathcal{S}_{1}(\xi_g^*)\neq0$ by \eqref{eqn:opt_sol_bdy_SGL}, we have $\mathcal{S}_{1}(\mathbf{c})\neq0$. This implies that $\mathbf{c}\notin\mathcal{B}_{\infty}$. Thus, we have a contradiction, which implies that $\mu^*\in(0,1]$.
		
		Let us consider the cases with $\mu^*=1$. Because $\|\mathcal{S}_{1}(\xi_g^*)\|=\mu^*r$ [see \eqref{eqn:opt_cond_case1_SGL}] and we want to maximize $\|\mathcal{S}_{1}(\xi_g^*)\|$, there is no need to consider the cases with $\mu^*\in(0,1)$ if we can find solutions of problem (\ref{prob:supreme1_SGL}) with $\mu^*=1$. Therefore, \eqref{eqn:opt_cond_case1_SGL} leads to
		\begin{align*}
		\mathbf{P}_{\mathcal{B}_{\infty}}(\xi_g^*)=\mathbf{c}.
		\end{align*}
		By part (iii) of Proposition \ref{prop:normal_cone}, we can see that
		\begin{align}\label{eqn:pxi_c_supreme1_SGL}
		\mathbf{P}_{\mathcal{B}_{\infty}}(\xi_g^*)=\mathbf{c}\Leftrightarrow \xi_g^*-\mathbf{c}\in N_{\mathcal{B}_{\infty}}(\mathbf{c}).
		\end{align}
		Combining \eqref{eqn:pxi_c_supreme1_SGL} and \eqref{eqn:opt_sol_bdy_SGL}, the statement holds immediately, which confirms that $\mu^*=1$.
		
		\item[(iii)] Suppose that $\mathbf{c}$ is an interior point of $\mathcal{B}_{\infty}$.
		
		\begin{enumerate}
			\item We first consider the cases with $\Xi_g\subset\mathcal{B}_{\infty}$. Then, we can see that
			\begin{align*}
			\mathcal{S}_{1}(\xi)=0,\,\forall \xi\in\Xi_g\Rightarrow\Xi_g^*=\Xi_g.
			\end{align*}
			In other words, an arbitrary point of $\Xi_g$ is an optimal solution of problem (\ref{prob:supreme1_SGL}).
			Thus, we have
			\begin{align*}
			&\mathbf{c}+r\cdot\textup{sgn}(\mathbf{e}_{i^*})\mathbf{e}_{i^*}\in\Xi_g^*,\\
			&s_g^*(\lambda,\bar{\lambda};\alpha)=0.
			\end{align*}
			On the other hand, we can see that
			\begin{align*}
			\mathbf{c}-r\mathbf{e}_i\in\Xi_g\subset\mathcal{B}_{\infty},\,\mathbf{c}+r\mathbf{e}_i\in\Xi_g\subset\mathcal{B}_{\infty},\,i=1,\ldots,n_g\Rightarrow \|\mathbf{c}\|_{\infty}+r\leq1.
			\end{align*}
			Therefore, we have
			\begin{align*}
			(\|\mathbf{c}\|_{\infty}+r-1)_+=0,
			\end{align*}
			and thus
			\begin{align*}
			s_g^*(\lambda,\bar{\lambda};\alpha)=(\|\mathbf{c}\|_{\infty}+r-1)_+.
			\end{align*}
			\item Suppose that $\Xi_g\not\subset\mathcal{B}_{\infty}$, i.e., there exists $\xi^0\in\Xi_g$ such that $\xi^0\notin \mathcal{B}_{\infty}$. By the third part of Lemma \ref{lemma:optimality_condition_supreme1_SGL}, we have \eqref{eqn:opt_sol_bdy_SGL} and \eqref{eqn:opt_cond_case1_SGL} hold. Moreover, in view of the proof of the first and second part, we can see that $\mu^*\in(0,1)$. Therefore, \eqref{eqn:opt_cond_case1_SGL} leads to
			\begin{align}\label{eqn:conv_comb_supreme1_SGL}
			(1-\mu^*)\xi_g^*+\mu^*\mathbf{c}=\mathbf{P}_{\mathcal{B}_{\infty}}(\xi_g^*).
			\end{align}
			By rearranging the terms of \eqref{eqn:conv_comb_supreme1_SGL}, we have
			\begin{align}\label{eqn:conv_comb2_supreme1_SGL}
			\mathbf{P}_{\mathcal{B}_{\infty}}(\xi_g^*)-\mathbf{c}=(1-\mu^*)(\xi_g^*-\mathbf{c}).
			\end{align}
			Because $\mu^*\in(0,1)$, \eqref{eqn:conv_comb_supreme1_SGL} implies that $\mathbf{P}_{\mathcal{B}_{\infty}}(\xi_g^*)$ lies on the line segment connecting $\xi_g^*$ and $\mathbf{c}$. Thus, we have
			\begin{align}
			\|\xi_g^*-\mathbf{P}_{\mathcal{B}_{\infty}}(\xi_g^*)\|+\|\mathbf{P}_{\mathcal{B}_{\infty}}(\xi_g^*)-\mathbf{c}\|=\|\xi_g^*-\mathbf{c}\|=r.
			\end{align}
			Therefore, to maximize $\|\mathcal{S}_{1}(\xi_g^*)\|=\|\xi_g^*-\mathbf{P}_{\mathcal{B}_{\infty}}(\xi_g^*)\|$, we need to minimize $\|\mathbf{P}_{\mathcal{B}_{\infty}}(\xi_g^*)-\mathbf{c}\|$. Because $\xi_g^*\notin\mathcal{B}_{\infty}$, we can see that $\mathbf{P}_{\mathcal{B}_{\infty}}(\xi_g^*)$ is a boundary point of $\mathcal{B}_{\infty}$. Therefore, we need to solve the following minimization problem:
			\begin{align}\label{prob:proj_faces_supreme1_SGL}
			\min_{\phi_g}\,\{\|\phi_g-\mathbf{c}\|:\|\phi_g\|_{\infty}=1\}.
			\end{align}
			
			Suppose that $\mathbf{c}=0$. We can see that the set of optimal solutions of problem ({\ref{prob:proj_faces_supreme1_SGL}}) is
			\begin{align*}
			\Phi_g^*=\{\mathbf{e}_i\}_{i=1}^{n_g}\cup\{-\mathbf{e}_i\}_{i=1}^{n_g}.
			\end{align*}
			For each $\phi_g^*\in\Phi_g^*$, we set it as $\mathbf{P}_{\mathcal{B}_{\infty}}(\xi_g^*)$. In view of \eqref{eqn:conv_comb2_supreme1_SGL} and \eqref{eqn:opt_sol_bdy_SGL}, the statement follows immediately.
			
			Suppose that $\mathbf{c}\neq0$. Recall that $\mathcal{I}^*=\{i^*:|[\mathbf{c}]_{i^*}|=\|\mathbf{c}\|_{\infty}\}$.
			It is easy to see that
			\begin{align*}
			\Phi_g^*=\left\{\phi_{i^*}:[\phi_{i^*}]_k=
			\begin{cases}
			\textup{sgn}([\mathbf{c}]_{i^*}),\hspace{6mm}\textup{if}\hspace{1mm}k=i^*,\\
			[\mathbf{c}]_{k},\hspace{14mm}\textup{otherwise},
			\end{cases}
			i^*\in\mathcal{I}^*
			\right\}.
			\end{align*}
			We can see that
			\begin{align*}
			\phi_{i^*}-\mathbf{c}=(1-|[\mathbf{c}]_{\infty}|)\textup{sgn}([\mathbf{c}]_{i^*})\mathbf{e}_{i^*},\,i^*\in\mathcal{I}^*.
			\end{align*}
			For each $\phi_{i^*}$, we set it to $\mathbf{P}_{\mathcal{B}_{\infty}}(\xi_g^*)$. Then, we can see that the statement holds by
			\eqref{eqn:conv_comb2_supreme1_SGL} and \eqref{eqn:opt_sol_bdy_SGL}. This completes the proof.
		\end{enumerate}
	\end{enumerate}
\end{proof}

\subsubsection{The Solution of Problem (\ref{prob:supreme2_SGL})} Problem (\ref{prob:supreme2_SGL}) can be solved directly via the Cauchy-Schwarz inequality.
\begin{theorem}\label{thm:supreme2_SGL}
	For problem (\ref{prob:supreme2_SGL}), we have
	$t^*_{g_i}(\lambda,\bar{\lambda};\alpha)=|\mathbf{x}_{g_i}^T\mathbf{o}_{\alpha}(\lambda,\bar{\lambda})|+\tfrac{1}{2}\|\mathbf{v}^{\perp}_{\alpha}(\lambda,\bar{\lambda})\|\|\mathbf{x}_{g_i}\|$.
\end{theorem}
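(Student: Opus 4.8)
The plan is to recognize problem (\ref{prob:supreme2_SGL}) as the maximization of the absolute value of a linear functional over the Euclidean ball $\Theta$ in (\ref{eqn:ball1_SGL}), which is centered at $\mathbf{o}_{\alpha}(\lambda,\bar{\lambda})$ with radius $r:=\tfrac{1}{2}\|\mathbf{v}^{\perp}_{\alpha}(\lambda,\bar{\lambda})\|$. Unlike the supremum in (\ref{prob:supreme1_SGL}), here the objective is linear (up to the absolute value) rather than a convex quadratic, so the Cauchy--Schwarz inequality delivers both a sharp upper bound and an explicit maximizer, and no nonconvex analysis is needed.

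First I would parametrize an arbitrary feasible point as $\theta=\mathbf{o}_{\alpha}(\lambda,\bar{\lambda})+\mathbf{d}$ with $\|\mathbf{d}\|\leq r$, so that $\mathbf{x}_{g_i}^T\theta=\mathbf{x}_{g_i}^T\mathbf{o}_{\alpha}(\lambda,\bar{\lambda})+\mathbf{x}_{g_i}^T\mathbf{d}$. Applying the Cauchy--Schwarz inequality gives $|\mathbf{x}_{g_i}^T\mathbf{d}|\leq\|\mathbf{x}_{g_i}\|\,\|\mathbf{d}\|\leq r\,\|\mathbf{x}_{g_i}\|$, and then the triangle inequality yields
\begin{align*}
|\mathbf{x}_{g_i}^T\theta|\leq|\mathbf{x}_{g_i}^T\mathbf{o}_{\alpha}(\lambda,\bar{\lambda})|+r\,\|\mathbf{x}_{g_i}\|,
\end{align*}
which is exactly the claimed value $t^*_{g_i}(\lambda,\bar{\lambda};\alpha)$ after substituting $r=\tfrac{1}{2}\|\mathbf{v}^{\perp}_{\alpha}(\lambda,\bar{\lambda})\|$. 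This shows the right-hand side is an upper bound for the supremum.

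Next I would verify attainability so that the supremum equals the bound. Taking $\mathbf{d}=\sigma\,r\,\mathbf{x}_{g_i}/\|\mathbf{x}_{g_i}\|$ with $\sigma=\textup{sgn}(\mathbf{x}_{g_i}^T\mathbf{o}_{\alpha}(\lambda,\bar{\lambda}))$ keeps $\theta$ feasible (since $\|\mathbf{d}\|=r$) and makes both inequalities above tight, giving $|\mathbf{x}_{g_i}^T\theta|=|\mathbf{x}_{g_i}^T\mathbf{o}_{\alpha}(\lambda,\bar{\lambda})|+r\,\|\mathbf{x}_{g_i}\|$. The only points requiring a word of care are the degenerate cases: when $\mathbf{x}_{g_i}^T\mathbf{o}_{\alpha}(\lambda,\bar{\lambda})=0$ either sign $\sigma\in\{+1,-1\}$ attains the maximum, and when $\mathbf{x}_{g_i}=0$ both sides collapse to $0$ trivially. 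I do not expect any genuine obstacle here; the entire argument is a one-line Cauchy--Schwarz estimate plus an explicit witness, and the main thing to get right is simply tracking the absolute value so that the maximizer is chosen in the correct direction.
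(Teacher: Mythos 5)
Your proof is correct and follows essentially the same route as the paper: parametrize the ball around $\mathbf{o}_{\alpha}(\lambda,\bar{\lambda})$, apply the triangle and Cauchy--Schwarz inequalities for the upper bound, and exhibit a maximizer along the direction of $\mathbf{x}_{g_i}$. Your explicit sign factor $\sigma$ in the witness is in fact a small improvement in care over the paper's version, which omits it.
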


\begin{proof}
	To simplify notations, let $\mathbf{o}=\mathbf{o}_{\alpha}(\lambda,\bar{\lambda})$, $r=\frac{1}{2}\|\mathbf{v}_{\alpha}^{\perp}(\lambda,\bar{\lambda})\|$ and $t^*_g=t^*_g(\lambda,\bar{\lambda};\alpha)$. Therefore, the set $\Theta$ in \eqref{eqn:ball1_SGL} can be written as
	\begin{align*}
	\Theta=\{\mathbf{o}+\mathbf{v}:\|\mathbf{v}\|\leq r\}.
	\end{align*}
	Then, problem (\ref{prob:supreme2_SGL}) becomes
	\begin{align*}
	t_{g_i}^*=\sup_{\mathbf{v}}\,\{|\mathbf{x}_{g_i}^T(\mathbf{o}+\mathbf{v})|:\|\mathbf{v}\|\leq r\}.
	\end{align*}
	We can see that
	\begin{align*}
	|\mathbf{x}_{g_i}^T(\mathbf{o}+\mathbf{v})|\leq |\mathbf{x}_{g_i}^T\mathbf{o}|+|\mathbf{x}_{g_i}^T\mathbf{v}|\leq |\mathbf{x}_{g_i}^T\mathbf{o}|+\|\mathbf{x}_{g_i}\|\|\mathbf{v}\|\leq |\mathbf{x}_{g_i}^T\mathbf{o}|+\|\mathbf{x}_{g_i}\|r.
	\end{align*}
	Thus, we have
	\begin{align*}
	t^*_{g_i}\leq |\mathbf{x}_{g_i}^T\mathbf{o}|+\|\mathbf{x}_{g_i}\|r.
	\end{align*}
	Consider $\mathbf{v}^*=r\mathbf{x}_{g_i}/\|\mathbf{x}_{g_i}\|$. It is easy to see that $\mathbf{o}+\mathbf{v}^*\in\Theta$ and
	\begin{align*}
	|\mathbf{x}_{g_i}^T(\mathbf{o}+\mathbf{v})|=|\mathbf{x}_{g_i}^T\mathbf{o}|+\|\mathbf{x}_{g_i}\|r.
	\end{align*}
	Therefore, we have
	\begin{align*}
	t^*_{g_i}=|\mathbf{x}_{g_i}^T\mathbf{o}|+\|\mathbf{x}_{g_i}\|r,
	\end{align*}
	which completes the proof.
\end{proof}

\subsection{The Proposed Two-Layer Screening Rules}\label{subsection:screening_rules_SGL}

To develop the two-layer screening rules for SGL, we only need to plug the supreme values $s_g^*(\lambda_2,\bar{\lambda}_2;\lambda_1)$ and $t_{g_i}^*(\lambda_2,\bar{\lambda}_2;\lambda_1)$ in (\ref{rrule1_SGL}) and (\ref{rrule2_SGL}). We present the TLFre rule as follows.
\begin{theorem}\label{thm:TLFre_SGL}
	For the SGL problem in (\ref{prob:SGL}), suppose that we are given $\alpha$ and a sequence of parameter values $\lambda_{\rm max}^{\alpha}=\lambda^{(0)}>\lambda^{(1)}>\ldots>\lambda^{(\mathcal{J})}$. Moreover, assume that $\beta^*(\lambda^{(j)},\alpha)$ is known for an integer $0\leq j<\mathcal{J}$. Let $\theta^*(\lambda^{(j)},\alpha)$, $\mathbf{v}_{\alpha}^{\perp}(\lambda^{(j+1)},\lambda^{(j)})$ and $s_g^*(\lambda^{(j+1)},\lambda^{(j)};\alpha)$ be given by \eqref{eqn:KKT1_SGL}, Theorems \ref{thm:estimation_SGL} and \ref{thm:supreme1_SGL}, respectively. Then, for $g=1,\ldots,G$, the following holds
	\begin{align}\tag{$\mathcal{L}_1$}\label{rule:L1}
		s_g^*(\lambda^{(j+1)},\lambda^{(j)};\alpha)<\alpha\sqrt{n_g}\Rightarrow \beta^*_g(\lambda^{(j+1)},\alpha)=0.
	\end{align}
	For the $\hat{g}^{th}$ group that does not pass the rule in (\ref{rule:L1}), we have $[\beta^*_{\hat{g}}(\lambda^{(j+1)},\alpha)]_i=0$ if
	\begin{align}\tag{$\mathcal{L}_2$}\label{rule:L2}
		\left|\mathbf{x}_{\hat{g}_i}^T\left(\frac{\mathbf{y}-\mathbf{X}\beta^*(\lambda^{(j)},\alpha)}{\lambda^{(j)}}+\frac{1}{2}\mathbf{v}_{\alpha}^{\perp}(\lambda^{(j+1)},\lambda^{(j)})\right)\right|+\frac{1}{2}\|\mathbf{v}_{\alpha}^{\perp}(\lambda^{(j+1)},\lambda^{(j)})\|\|\mathbf{x}_{\hat{g}_i}\|\leq1.
	\end{align}
\end{theorem}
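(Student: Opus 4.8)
The plan is to assemble the three-step program of Section~\ref{subsection:general_rule_SGL} with $\bar\lambda=\lambda^{(j)}$ and $\lambda=\lambda^{(j+1)}$, so that (\ref{rule:L1}) and (\ref{rule:L2}) become nothing more than the relaxed rules \eqref{rrule1_SGL} and \eqref{rrule2_SGL} with the explicit supremum values substituted in. First I would recover the dual point attached to the known primal solution: the KKT relation \eqref{eqn:KKT1_SGL} gives $\theta^*(\lambda^{(j)},\alpha)=(\mathbf{y}-\mathbf{X}\beta^*(\lambda^{(j)},\alpha))/\lambda^{(j)}$, which in turn renders $\mathbf{v}_\alpha^{\perp}(\lambda^{(j+1)},\lambda^{(j)})$ and the center $\mathbf{o}_\alpha(\lambda^{(j+1)},\lambda^{(j)})=\theta^*(\lambda^{(j)},\alpha)+\tfrac12\mathbf{v}_\alpha^{\perp}(\lambda^{(j+1)},\lambda^{(j)})$ fully computable from the data. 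Theorem~\ref{thm:estimation_SGL} then certifies that $\theta^*(\lambda^{(j+1)},\alpha)$ lies in the ball $\Theta$ of \eqref{eqn:ball1_SGL}, whence $\mathbf{X}_g^T\theta^*(\lambda^{(j+1)},\alpha)\in\mathbf{X}_g^T\Theta\subseteq\Xi_g$ for every $g$, the inclusion following from the definition \eqref{eqn:ball2_SGL} together with $\|\mathbf{X}_g^T\mathbf{v}\|\le\|\mathbf{X}_g\|_2\|\mathbf{v}\|$.

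For the group layer, because $\mathbf{X}_g^T\theta^*(\lambda^{(j+1)},\alpha)$ is feasible for problem \eqref{prob:supreme1_SGL}, I would bound $\|\mathcal{S}_{1}(\mathbf{X}_g^T\theta^*(\lambda^{(j+1)},\alpha))\|\le s_g^*(\lambda^{(j+1)},\lambda^{(j)};\alpha)$, where the right-hand side is the closed form supplied by Theorem~\ref{thm:supreme1_SGL}. Hence the hypothesis $s_g^*<\alpha\sqrt{n_g}$ forces $\|\mathcal{S}_{1}(\mathbf{X}_g^T\theta^*(\lambda^{(j+1)},\alpha))\|<\alpha\sqrt{n_g}$, and the exact group rule \eqref{rule1_SGL} delivers $\beta_g^*(\lambda^{(j+1)},\alpha)=0$; this is precisely (\ref{rule:L1}).

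For the feature layer, fix a surviving group $\hat g$ and a column $\mathbf{x}_{\hat g_i}$. Since $\theta^*(\lambda^{(j+1)},\alpha)\in\Theta$, Theorem~\ref{thm:supreme2_SGL} gives $|\mathbf{x}_{\hat g_i}^T\theta^*(\lambda^{(j+1)},\alpha)|\le t_{\hat g_i}^*(\lambda^{(j+1)},\lambda^{(j)};\alpha)=|\mathbf{x}_{\hat g_i}^T\mathbf{o}_\alpha(\lambda^{(j+1)},\lambda^{(j)})|+\tfrac12\|\mathbf{v}_\alpha^{\perp}(\lambda^{(j+1)},\lambda^{(j)})\|\|\mathbf{x}_{\hat g_i}\|$. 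Substituting the expression for $\mathbf{o}_\alpha$ from the first paragraph, together with the KKT form of $\theta^*(\lambda^{(j)},\alpha)$, shows that the displayed condition in (\ref{rule:L2}) is exactly $t_{\hat g_i}^*\le 1$; the exact feature rule \eqref{rule2_SGL} then yields $[\beta_{\hat g}^*(\lambda^{(j+1)},\alpha)]_i=0$. The entire argument is an assembly step: all the genuine work has already been done in Theorems~\ref{thm:estimation_SGL}, \ref{thm:supreme1_SGL} and \ref{thm:supreme2_SGL}. The one place I would watch carefully is the initialization $j=0$, where $\bar\lambda=\lambda_{\rm max}^{\alpha}$ and $\theta^*(\bar\lambda,\alpha)=\mathbf{y}/\bar\lambda$ sits on the boundary of $\mathcal{F}^{\alpha}$; there $\mathbf{n}_\alpha(\bar\lambda)$ is taken from the alternative branch of Theorem~\ref{thm:estimation_SGL}, so I must confirm that the estimation ball, and hence both rules, remain valid in that degenerate case---which Theorem~\ref{thm:estimation_SGL} already guarantees.
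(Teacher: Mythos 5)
Your proposal is correct and is exactly the argument the paper intends: the paper itself presents Theorem~\ref{thm:TLFre_SGL} as a direct assembly obtained by plugging the supremum values from Theorems~\ref{thm:supreme1_SGL} and~\ref{thm:supreme2_SGL} into the relaxed rules \eqref{rrule1_SGL} and \eqref{rrule2_SGL}, using the KKT identity \eqref{eqn:KKT1_SGL} and the estimation ball of Theorem~\ref{thm:estimation_SGL}. Your explicit treatment of the inclusion $\mathbf{X}_g^T\Theta\subseteq\Xi_g$ and the boundary case $\bar\lambda=\lambda_{\rm max}^{\alpha}$ is consistent with, and slightly more careful than, the paper's terse presentation.
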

(\ref{rule:L1}) and (\ref{rule:L2}) are the first layer and second layer screening rules of TLFre, respectively. 

\section{Extension to Nonnegative Lasso}\label{section:nnlasso}

The framework of TLFre is applicable to a large class of sparse models with multiple regularizers. As an example, we extend TLFre to nonnegative Lasso:
\begin{align}\label{prob:nnlassoo}
\min_{\beta\in\mathbb{R}^p}\,\left\{\frac{1}{2}\left\|\textbf{y}-\textbf{X}\beta\right\|^2+\lambda\|\beta\|_1:\,\beta\in\mathbb{R}_+^p\right\},
\end{align}
where $\lambda>0$ is the regularization parameter and $\mathbb{R}_+^p$ is the nonnegative orthant of $\mathbb{R}^p$. In Section \ref{ssec:fenchel_dual_nnlasso}, we transform the constraint $\beta\in\mathbb{R}_+^p$ to a regularizer and derive the Fenchel's dual of the nonnegative Lasso problem. We then motivate the screening method---called DPC since the key step is to \textbf{d}ecom\textbf{p}ose a \textbf{c}onvex set via Fenchel's Duality Theorem---via the KKT conditions in Section \ref{ssec:general_rule_nnlasso}. In Section \ref{ssec:lambdamx_nnlasso}, we analyze the geometric properties of the dual problem and derive the set of parameter values leading to zero solutions. We then develop the screening method for nonnegative Lasso in Section \ref{ssec:DPC_nnlasso}.

\subsection{The Fenchel's Dual of Nonnegative Lasso}\label{ssec:fenchel_dual_nnlasso}
Let $\textbf{I}_{\mathbb{R}_+^p}$ be the indicator function of $\mathbb{R}_+^p$. By noting that $\textbf{I}_{\mathbb{R}_+^p}=\lambda\textbf{I}_{\mathbb{R}_+^p}$ for any $\lambda>0$, we can rewrite the nonnegative Lasso problem in (\ref{prob:nnlassoo}) as
\begin{align}\label{prob:nnlasso}
\min_{\beta\in\mathbb{R}^p}\,\frac{1}{2}\left\|\textbf{y}-\textbf{X}\beta\right\|^2+\lambda\|\beta\|_1+\lambda\textbf{I}_{\mathbb{R}_+^p}(\beta).
\end{align}
In other words, we incorporate the constraint $\beta\in\mathbb{R}_+^p$ to the objective function as an additional regularizer. As a result, the nonnegative lasso problem in (\ref{prob:nnlasso}) has two regularizers. Thus, similar to SGL, we can derive the Fenchel's dual of nonnegative Lasso via Theorem \ref{thm:fenchel_duality}. 

We now proceed by following a similar procedure as the one in Section \ref{subsection:Fenchel_Dual_SGL}. We note that the nonnegative Lasso problem in (\ref{prob:nnlasso}) can also be formulated as the one in (\ref{prob:general}) with $f(\cdot)=\frac{1}{2}\|\cdot\|^2$ and $\Omega(\beta)=\|\beta\|_1+\textbf{I}_{\mathbb{R}_+^p}(\beta)$. To derive the Fenchel's dual of nonnegative Lasso, we need to find $f^*$ and $\Omega^*$ by Theorem \ref{thm:fenchel_duality}. Since we have already seen that $f^*(\cdot)=\frac{1}{2}\|\cdot\|^2$ in Section \ref{subsection:Fenchel_Dual_SGL}, we only need to find $\Omega^*(\cdot)$.
The following result is indeed a counterpart of Lemma \ref{lemma:conjugate_Omega_SGL}.
\begin{lemma}\label{lemma:conjugate_Omega_nnlasso}
	Let $\Omega_2(\beta)=\|\beta\|_1$, $\Omega_3=\textup{\textbf{I}}_{\mathbb{R}_+^p}(\beta)$, and $\Omega(\beta)=\Omega_2(\beta)+\Omega_3(\beta)$. Then, 
	\begin{enumerate}
		\item $(\Omega_2)^*(\xi)=\textup{\textbf{I}}_{\mathcal{B}_{\infty}}(\xi)$ and $(\Omega_3)^*(\xi)=\textup{\textbf{I}}_{\mathbb{R}_-^p}(\xi)$, where $\mathbb{R}_-^p$ is the nonpositive orthant of $\mathbb{R}^p$.
		\item $\Omega^*(\xi)=((\Omega_2)^*\boxdot(\Omega_3)^*)(\xi)=\textup{\textbf{I}}_{\mathbb{R}_-^p}(\xi-\textup{\textbf{1}})$, where $\mathbb{R}^p\ni\textup{\textbf{1}}=(1,1,\ldots,1)^T$.
	\end{enumerate}
\end{lemma}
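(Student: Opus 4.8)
The plan is to mirror the two-part structure of the proof of Lemma \ref{lemma:conjugate_Omega_SGL}: first compute the two individual conjugates directly from the definition \eqref{eqn:conjugate}, then assemble $\Omega^*$ through the infimal-convolution machinery of Theorem \ref{thm:infconv_sum} and Lemma \ref{lemma:infconv_sets}.

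For part (i) I would evaluate each conjugate by hand. For $(\Omega_2)^*(\xi)=\sup_\beta\langle\beta,\xi\rangle-\|\beta\|_1$, the H\"older bound $\langle\beta,\xi\rangle\le\|\beta\|_1\|\xi\|_\infty$ shows the supremum is $0$ (attained at $\beta=0$) whenever $\|\xi\|_\infty\le1$, whereas if some $|[\xi]_i|>1$ then taking $\beta=t\,\textup{sgn}([\xi]_i)\mathbf{e}_i$ and letting $t\to\infty$ drives the objective to $+\infty$; hence $(\Omega_2)^*=\mathbf{I}_{\mathcal{B}_{\infty}}$, the familiar $\ell_1$/$\ell_\infty$ conjugate. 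For $(\Omega_3)^*(\xi)=\sup_{\beta\in\mathbb{R}_+^p}\langle\beta,\xi\rangle$, every term is nonpositive when $\xi\le0$, giving supremum $0$, while if some $[\xi]_i>0$ then sending $[\beta]_i\to\infty$ gives $+\infty$; hence $(\Omega_3)^*=\mathbf{I}_{\mathbb{R}_-^p}$.

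For part (ii) I would apply Theorem \ref{thm:infconv_sum} with $f_1=\Omega_2$ and $f_2=\Omega_3$. The continuity hypothesis is immediate: $\mathbf{1}\in\mathbb{R}_+^p=\textup{dom}\,\Omega_3$ lies where $\Omega_2=\|\cdot\|_1$ is continuous, so the theorem yields $\Omega^*=(\Omega_2)^*\,\Box\,(\Omega_3)^*$ with the minimum attained (this "$\min$" is exactly the exactness assertion, justifying the symbol $\boxdot$). To evaluate the convolution of the two indicators I would invoke Lemma \ref{lemma:infconv_sets}, obtaining $\Omega^*(\xi)=\big(\mathbf{I}_{\mathcal{B}_{\infty}}\boxdot\mathbf{I}_{\mathbb{R}_-^p}\big)(\xi)=\mathbf{I}_{\mathcal{B}_{\infty}+\mathbb{R}_-^p}(\xi)$.

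The only step needing careful bookkeeping is the Minkowski sum $\mathcal{B}_{\infty}+\mathbb{R}_-^p$. Since $\mathcal{B}_{\infty}=[-1,1]^p$ and the sum decouples coordinatewise, each coordinate contributes $[-1,1]+(-\infty,0]=(-\infty,1]$, so $\mathcal{B}_{\infty}+\mathbb{R}_-^p=\{\xi:[\xi]_i\le1,\ \forall i\}=\mathbf{1}+\mathbb{R}_-^p$, whence $\mathbf{I}_{\mathcal{B}_{\infty}+\mathbb{R}_-^p}(\xi)=\mathbf{I}_{\mathbb{R}_-^p}(\xi-\mathbf{1})$, the claimed formula. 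I do not expect a genuine obstacle here; the subtleties are merely to confirm exactness (so that $\boxdot$, not just $\Box$, is legitimate) before applying Lemma \ref{lemma:infconv_sets}, and to track the coordinatewise sum so that the shift by $\mathbf{1}$ emerges correctly.
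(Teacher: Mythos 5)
Your proposal is correct and follows essentially the route the paper intends: the paper omits this proof as ``very similar to that of Lemma~\ref{lemma:conjugate_Omega_SGL}'', whose structure (direct computation of the individual conjugates, then Theorem~\ref{thm:infconv_sum} to get the exact infimal convolution) you reproduce faithfully, and your use of Lemma~\ref{lemma:infconv_sets} plus the coordinatewise Minkowski sum $\mathcal{B}_{\infty}+\mathbb{R}_-^p=\{\xi:\xi\leq\textbf{1}\}$ to evaluate $(\Omega_2)^*\boxdot(\Omega_3)^*$ is exactly the identification the paper itself makes in the remark immediately following the lemma. All hypotheses you check (continuity of $\|\cdot\|_1$ on $\textup{dom}\,\Omega_3$, attainment of the minimum) are the right ones, so there is no gap.
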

We omit the proof of Lemma \ref{lemma:conjugate_Omega_nnlasso} since it is very similar to that of Lemma \ref{lemma:conjugate_Omega_SGL}.

\begin{remark}
Consider the second part of Lemma \ref{lemma:conjugate_Omega_nnlasso}. Let $\mathcal{C}_{\textup{\textbf{1}}}=\{\xi:\xi\leq\textup{\textbf{1}}\}$, where ``$\leq$" is defined component-wisely. We can see that 
\begin{align*}
\textup{\textbf{I}}_{\mathbb{R}_-^p}(\xi-\textup{\textbf{1}})=\textup{\textbf{I}}_{\mathcal{C}_{\textup{\textbf{1}}}}(\xi).
\end{align*}
On the other hand, Lemma \ref{lemma:infconv_sets} implies that
\begin{align*}
\Omega^*(\xi)=((\Omega_2)^*\boxdot(\Omega_3)^*)(\xi)=\textup{\textbf{I}}_{\mathcal{B}_{\infty}+\mathbb{R}_-^p}(\xi).
\end{align*}
Thus, we have $\mathcal{B}_{\infty}+\mathbb{R}_-^p=\mathcal{C}_{\textup{\textbf{1}}}$. The second part of Lemma \ref{lemma:conjugate_Omega_nnlasso} decomposes each $\xi\in\mathcal{B}_{\infty}+\mathbb{R}_-^p$ into two components: $\textup{\textbf{1}}$ and $\xi-\textup{\textbf{1}}$ that belong to $\mathcal{B}_{\infty}$ and $\mathbb{R}_-^p$, respectively.
\end{remark}

By Theorem \ref{thm:fenchel_duality} and Lemma \ref{lemma:conjugate_Omega_nnlasso}, we can derive the Fenchel's dual of nonnegative Lasso in the following theorem (which is indeed the counterpart of Theorem \ref{thm:dual_SGL}).

\begin{theorem}\label{thm:dual_nnlasso}
	For the nonnegative Lasso problem, the following hold:
	\begin{enumerate}
		\item The Fenchel's dual of nonnegative Lasso is given by:
		\begin{align}\label{prob:nnlasso_dual}
		\inf_{\theta}\,\left\{\frac{1}{2}\left\|\frac{\textup{\textbf{y}}}{\lambda}-\theta\right\|^2-\frac{1}{2}\|\textup{\textbf{y}}\|^2:\langle\textup{\textbf{x}}_i,\theta\rangle\leq1,\,i=1,\ldots,p\right\}.
		\end{align}
		\item Let $\beta^*(\lambda)$ and $\theta^*(\lambda)$ be the optimal solutions of problems (\ref{prob:nnlasso}) and (\ref{prob:nnlasso_dual}), respectively.Then, 
		\begin{align}\label{eqn:KKT1_nnlasso}
		\lambda\theta^*(\lambda)&=\textup{\textbf{y}}-\textup{\textbf{X}}\beta^*(\lambda),\\\label{eqn:KKT2_nnlasso}
		\textup{\textbf{X}}^T\theta^*(\lambda)&\in\partial\|\beta^*(\lambda)\|_1+\partial \textup{\textbf{I}}_{\mathbb{R}_+^p}(\beta^*(\lambda)).
		\end{align}
	\end{enumerate}
\end{theorem}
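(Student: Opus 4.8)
The plan is to treat the nonnegative Lasso exactly as SGL was treated in Theorem~\ref{thm:dual_SGL}, since problem~(\ref{prob:nnlasso}) already has the template form~(\ref{prob:general}) with $f(\cdot)=\tfrac12\|\cdot\|^2$ and $\Omega=\Omega_2+\Omega_3$. For the first part I would substitute $f^*(\lambda\theta)=\tfrac{\lambda^2}{2}\|\theta\|^2$ together with $\Omega^*(\mathbf{X}^T\theta)=\mathbf{I}_{\mathbb{R}_-^p}(\mathbf{X}^T\theta-\mathbf{1})$, supplied by Lemma~\ref{lemma:conjugate_Omega_nnlasso}, into the dual objective of Theorem~\ref{thm:fenchel_duality}. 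This produces the concave program $\sup_{\theta}\,-\tfrac{\lambda^2}{2}\|\theta\|^2-\lambda\mathbf{I}_{\mathbb{R}_-^p}(\mathbf{X}^T\theta-\mathbf{1})+\lambda\langle\mathbf{y},\theta\rangle$, in which the indicator term is finite precisely when $\mathbf{X}^T\theta-\mathbf{1}\in\mathbb{R}_-^p$, i.e.\ when $\mathbf{X}^T\theta\leq\mathbf{1}$ componentwise; this is exactly the constraint set $\langle\mathbf{x}_i,\theta\rangle\leq1$ of~(\ref{prob:nnlasso_dual}). Completing the square in $\theta$ rewrites the objective as $-\tfrac{\lambda^2}{2}\|\tfrac{\mathbf{y}}{\lambda}-\theta\|^2+\tfrac12\|\mathbf{y}\|^2$, and discarding the positive multiplicative factor and the additive constant shows that maximizing it is equivalent to minimizing $\tfrac12\|\tfrac{\mathbf{y}}{\lambda}-\theta\|^2-\tfrac12\|\mathbf{y}\|^2$ over the same polyhedron; both share the same minimizer, namely the projection of $\mathbf{y}/\lambda$ onto the feasible set, which gives~(\ref{prob:nnlasso_dual}).

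For the optimality conditions I would reuse the Fenchel--Young argument from the proof of Theorem~\ref{thm:dual_SGL}: write Lemma~\ref{lemma:FY_inequality} once for the pair $(f,f^*)$ and once for $(\lambda\Omega,\lambda\Omega^*)$, add them to recover weak duality exactly as in~(\ref{ineqn:weak_duality_SGL}), and then invoke strong duality so that the sum is tight at $(\beta^*(\lambda),\theta^*(\lambda))$. Tightness of the sum forces equality in each Fenchel--Young inequality, and the equality clause of Lemma~\ref{lemma:FY_inequality} yields $\lambda\theta^*(\lambda)\in\partial f(\mathbf{y}-\mathbf{X}\beta^*(\lambda))=\{\mathbf{y}-\mathbf{X}\beta^*(\lambda)\}$, which is~(\ref{eqn:KKT1_nnlasso}), and $\mathbf{X}^T\theta^*(\lambda)\in\partial\Omega(\beta^*(\lambda))$. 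Since $\Omega_2=\|\cdot\|_1$ is finite and continuous everywhere, the Moreau--Rockafellar sum rule applies and splits $\partial\Omega(\beta^*(\lambda))=\partial\|\beta^*(\lambda)\|_1+\partial\mathbf{I}_{\mathbb{R}_+^p}(\beta^*(\lambda))$, which is~(\ref{eqn:KKT2_nnlasso}).

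The one place where nonnegative Lasso genuinely departs from SGL---and the step I expect to require the most care---is the constraint qualification $0\in\textup{int}\,(\textup{dom}\,f-\mathbf{y}+\mathbf{X}\,\textup{dom}\,\Omega)$ needed to turn weak duality into strong duality. In the SGL proof this was immediate because $\textup{dom}\,\Omega=\mathbb{R}^p$, whereas here $\Omega_3=\mathbf{I}_{\mathbb{R}_+^p}$ makes $\textup{dom}\,\Omega=\mathbb{R}_+^p$ a proper subset of $\mathbb{R}^p$, so the qualification is no longer vacuous. The resolution I would use is that the smooth loss has full domain, $\textup{dom}\,f=\mathbb{R}^N$; consequently $\textup{dom}\,f-\mathbf{y}+\mathbf{X}\,\textup{dom}\,\Omega=\mathbb{R}^N$ for any nonempty $\textup{dom}\,\Omega$, so $0$ is an interior point and Theorem~\ref{thm:fenchel_duality} applies. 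The same observation $\textup{dom}\,f=\mathbb{R}^N$ is what justified the analogous step for SGL, so once it is flagged the remainder of the argument is a routine transcription of the proof of Theorem~\ref{thm:dual_SGL}.
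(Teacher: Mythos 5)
Your proposal is correct and follows exactly the route the paper intends: the paper omits this proof precisely because it is the argument of Theorem~\ref{thm:dual_SGL} transcribed with the conjugates from Lemma~\ref{lemma:conjugate_Omega_nnlasso}, which is what you carry out. Your extra care with the constraint qualification --- observing that $\textup{dom}\,f=\mathbb{R}^N$ alone forces $0\in\textup{int}\,(\textup{dom}\,f-\mathbf{y}+\mathbf{X}\,\textup{dom}\,\Omega)$ even though $\textup{dom}\,\Omega=\mathbb{R}_+^p$ is now a proper subset --- is exactly the one point where the transcription is not literal, and you handle it correctly.
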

We omit the proof of Theorem \ref{thm:dual_nnlasso} since it is very similar to that of Theorem \ref{thm:dual_SGL}.

\subsection{Motivation of the Screening Method via KKT Conditions}\label{ssec:general_rule_nnlasso}

The key to develop the DPC rule for nonnegative lasso is the KKT condition in (\ref{eqn:KKT2_nnlasso}). We can see that $\partial\|\mathbf{w}\|_1=\textup{SGN}(\mathbf{w})$ and
\begin{align*}
\partial \mathbf{I}_{\mathbb{R}_+^p}(\mathbf{w})=\left\{\xi\in\mathbb{R}^p:
[\xi]_i=
\begin{cases}
0,\hspace{13mm}\mbox{if }[\mathbf{w}]_i>0,\\
\rho,\,\rho\leq0,\hspace{2mm}\mbox{if }[\mathbf{w}]_i=0,\\
\end{cases}
\right\}.
\end{align*}
Therefore, the KKT condition in (\ref{eqn:KKT2_nnlasso}) implies that
\begin{align}\label{eqn:KKT3_nnlasso}
\langle\mathbf{x}_i,\theta^*(\lambda)\rangle\in
\begin{cases}
1,\hspace{13mm}\mbox{if }[\beta^*(\lambda)]_i>0,\\
\varrho,\,\varrho\leq1,\hspace{2mm}\mbox{if }[\beta^*(\lambda)]_i=0.
\end{cases}
\end{align}
By \eqref{eqn:KKT3_nnlasso}, we have the following rule:
\begin{align}\tag{R3}\label{R3}
\langle\mathbf{x}_i,\theta^*(\lambda)\rangle<1\Rightarrow[\beta^*(\lambda)]_i=0.
\end{align}
Because $\theta^*(\lambda)$ is unknown, we can apply (\ref{R3}) to identify the inactive features---which have $0$ coefficients in $\beta^*(\lambda)$. Similar to TLFre, we can first find a region $\Theta$ that contains $\theta^*(\lambda)$. Then, we can relax (\ref{R3}) as follows:
\begin{align}\tag{R3$^*$}\label{rrule3_nnlasso}
\sup_{\theta\in\Theta}\,\langle\mathbf{x}_i,\theta\rangle<1\Rightarrow[\beta^*(\lambda)]_i=0.
\end{align}

Inspired by (\ref{rrule3_nnlasso}), we develop DPC via the following three steps:
\begin{enumerate}
	\item[\textbf{Step 1}.] Given $\lambda$, we estimate a region $\Theta$ that contains $\theta^*(\lambda)$.
	\item[\textbf{Step 2}.] We solve the optimization problem $\omega_i=\sup_{\theta\in\Theta}\,\langle\mathbf{x}_i,\theta\rangle$.
	\item[\textbf{Step 3}.] By plugging in $\omega_i$ computed from \textbf{Step 2}, (\ref{rrule3_nnlasso}) leads to the desired screening method DPC for nonnegative Lasso.
\end{enumerate}

\subsection{Geometric Properties of the Fenchel's Dual of Nonnegative Lasso}\label{ssec:lambdamx_nnlasso}

In view of the Fenchel's dual of nonnegative Lasso in (\ref{prob:nnlasso_dual}), we can see that the optimal solution is indeed the projection of $\textbf{y}/\lambda$ onto the feasible set $\mathcal{F}=\{\theta:\langle\textbf{x}_i,\theta\rangle\leq1,\,i=1,\ldots,p\}$, i.e.,
\begin{align}\label{eqn:dual_proj_nnlasso}
\theta^*(\lambda)=\textup{\textbf{P}}_{\mathcal{F}}\left(\frac{\textbf{y}}{\lambda}\right).
\end{align}
Therefore, if $\mathbf{y}/\lambda\in\mathcal{F}$, \eqref{eqn:dual_proj_nnlasso} implies that $\theta^*(\lambda)=\mathbf{y}/\lambda$. If further $\mathbf{y}/\lambda$ is an interior point of $\mathcal{F}$, \ref{rrule3_nnlasso} implies that $\beta^*(\lambda)=0$. The next theorem gives the set of parameter values leading to $0$ solutions of nonnegative Lasso.
\begin{theorem}\label{thm:lambdamx_nnlasso}
	For the nonnegative Lasso problem (\ref{prob:nnlasso}), Let $\lambda_{\textup{max}}=\max_{i}\langle\mathbf{x}_i,\mathbf{y}\rangle$. Then, the following statements are equivalent:
	
	\vspace{2mm}
	\begin{enumerate*}[itemjoin=\hfill]
		\item $\dfrac{\mathbf{y}}{\lambda}\in\mathcal{F}$,
		\item $\theta^*(\lambda)=\dfrac{\mathbf{y}}{\lambda}$,
		\item $\beta^*(\lambda)=0$,
		\item $\lambda\geq\lambda_{\textup{max}}$.
	\end{enumerate*}
\end{theorem}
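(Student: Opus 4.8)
The plan is to prove the four statements equivalent by establishing a cycle of implications, closely mirroring the proof of Theorem \ref{thm:lambda_alpha_SGL}. First I would observe that statements (i) and (ii) are immediately equivalent from the projection characterization in \eqref{eqn:dual_proj_nnlasso}: if $\mathbf{y}/\lambda \in \mathcal{F}$, then $\mathbf{P}_{\mathcal{F}}(\mathbf{y}/\lambda) = \mathbf{y}/\lambda$ since the projection of a point onto a closed convex set containing it is the point itself, and conversely if $\theta^*(\lambda) = \mathbf{y}/\lambda$ then $\mathbf{y}/\lambda \in \mathcal{F}$ because $\theta^*(\lambda)$ is feasible by definition.

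Next I would show (ii)$\Leftrightarrow$(iii) using the KKT condition \eqref{eqn:KKT1_nnlasso}. For (ii)$\Rightarrow$(iii), assuming $\theta^*(\lambda) = \mathbf{y}/\lambda$, \eqref{eqn:KKT1_nnlasso} gives $\mathbf{y} = \mathbf{y} - \mathbf{X}\beta^*(\lambda)$, hence $\mathbf{X}\beta^*(\lambda) = 0$. I would then argue $\beta^*(\lambda) = 0$ by a strict-optimality contradiction identical in spirit to that in Theorem \ref{thm:lambda_alpha_SGL}: if $\beta' \neq 0$ with $\mathbf{X}\beta' = 0$ were optimal, the objective would satisfy $h(0) = \tfrac{1}{2}\|\mathbf{y}\|^2 < \tfrac{1}{2}\|\mathbf{y}\|^2 + \lambda\|\beta'\|_1 = h(\beta')$, contradicting optimality of $\beta'$. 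The converse (iii)$\Rightarrow$(ii) follows directly by substituting $\beta^*(\lambda) = 0$ into \eqref{eqn:KKT1_nnlasso}.

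Finally I would establish (i)$\Leftrightarrow$(iv). From the feasible set $\mathcal{F} = \{\theta : \langle \mathbf{x}_i, \theta\rangle \leq 1,\, i=1,\ldots,p\}$, the condition $\mathbf{y}/\lambda \in \mathcal{F}$ is equivalent to $\langle \mathbf{x}_i, \mathbf{y}/\lambda\rangle \leq 1$ for all $i$, i.e. $\langle \mathbf{x}_i, \mathbf{y}\rangle \leq \lambda$ for all $i$ (using $\lambda > 0$ to clear the denominator without flipping the inequality). Taking the maximum over $i$, this holds for all $i$ if and only if $\max_i \langle \mathbf{x}_i, \mathbf{y}\rangle \leq \lambda$, which is precisely $\lambda \geq \lambda_{\textup{max}}$. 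This gives the full cycle of equivalences.

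The main obstacle, such as it is, lies in the contradiction argument for (ii)$\Rightarrow$(iii): one must be careful that the uniqueness claim for $\beta^*(\lambda)$ rests on the strict positivity of the regularizer at any nonzero feasible point together with the fact that all optimal solutions share the same fitted value $\mathbf{X}\beta^*$. Because the loss $\tfrac{1}{2}\|\mathbf{y}-\mathbf{X}\beta\|^2$ depends on $\beta$ only through $\mathbf{X}\beta$, any two optima yield the same $\mathbf{X}\beta^*$, so the argument correctly isolates the nonzero-candidate $\beta'$ and derives the contradiction. The remaining steps are routine, as the nonnegativity constraint plays no role beyond what is already encoded in the dual feasible set \eqref{prob:nnlasso_dual}.
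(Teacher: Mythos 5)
Your proposal is correct and follows exactly the route the paper intends: the paper omits this proof, stating it is ``very similar to that of Theorem \ref{thm:lambda_alpha_SGL}'', and your argument is precisely that adaptation --- (i)$\Leftrightarrow$(ii) via the projection identity \eqref{eqn:dual_proj_nnlasso}, (ii)$\Leftrightarrow$(iii) via the KKT condition \eqref{eqn:KKT1_nnlasso} together with the strict-optimality contradiction, and (i)$\Leftrightarrow$(iv) by unwinding the linear constraints defining $\mathcal{F}$ (which is even more direct here than the monotonicity argument needed in the SGL case). No gaps; your care about $0$ being primal-feasible and the indicator vanishing at any feasible $\beta'$ is exactly the right point to check.
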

We omit the proof of Theorem \ref{thm:lambdamx_nnlasso} since it is very similar to that of Theorem \ref{thm:lambda_alpha_SGL}.

\subsection{The Proposed Screening Rule for Nonnegative Lasso}\label{ssec:DPC_nnlasso}

We follow the three steps in Section \ref{ssec:general_rule_nnlasso} to develop the screening rule for nonnegative Lasso. We first estimate a region that contains $\theta^*(\lambda)$. Because $\theta^*(\lambda)$ admits a closed form solution with $\lambda\geq\lambda_{\textup{max}}$ by Theorem \ref{thm:lambdamx_nnlasso}, we focus on the cases with $\lambda<\lambda_{\textup{max}}$. 
\begin{theorem}\label{thm:estimation_nnlasso}
	For the nonnegative Lasso problem, suppose that $\theta^*({\bar{\lambda}})$ is known with $\bar{\lambda}\leq\lambda_{\rm max}$. For any $\lambda\in(0,\bar{\lambda})$, we define
	\begin{align*}
	&\mathbf{n}(\bar{\lambda})=
	\begin{cases}
	\dfrac{\mathbf{y}}{\bar{\lambda}}-\theta^*(\bar{\lambda}),\hspace{5.5mm}\textup{if}\hspace{1mm}\bar{\lambda}<\lambda_{\rm max}^{\alpha},\\
	\mathbf{x}_*,\hspace{18.5mm}\textup{if}\hspace{1mm}\bar{\lambda}=\lambda_{\rm max},
	\end{cases}
	\hspace{-3mm}\textup{where}\hspace{1mm}\mathbf{x}_*=\textup{argmax}_{\mathbf{x}_i}\,\langle\mathbf{x}_i,\mathbf{y}\rangle,\\
	&\mathbf{v}(\lambda,\bar{\lambda})=\frac{\mathbf{y}}{\lambda}-\theta^*(\bar{\lambda}), \\
	&\mathbf{v}(\lambda,\bar{\lambda})^{\perp} = \mathbf{v}(\lambda,\bar{\lambda}) - \frac{\langle\mathbf{v}(\lambda,\bar{\lambda}),\mathbf{n}(\bar{\lambda})\rangle}{\vphantom{\widetilde{\bar{\lambda}}}\|\mathbf{n}(\bar{\lambda})\|^2}\mathbf{n}(\bar{\lambda}).
	\end{align*}
	Then, the following hold:
	\begin{enumerate}
		\item $\mathbf{n}(\bar{\lambda})\in N_{\mathcal{F}}(\theta^*(\bar{\lambda}))$,
		\item $\left\|\theta^*(\lambda)-\left(\theta^*(\bar{\lambda})+\dfrac{1}{2}\mathbf{v}^{\perp}(\lambda,\bar{\lambda})\right)\right\|\leq \dfrac{1}{2}\|\mathbf{v}^{\perp}(\lambda,\bar{\lambda})\|$.
	\end{enumerate}
\end{theorem}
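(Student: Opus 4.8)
The plan is to mirror the proof of Theorem~\ref{thm:estimation_SGL} almost verbatim, exploiting the projection identity $\theta^*(\lambda)=\mathbf{P}_{\mathcal{F}}(\mathbf{y}/\lambda)$ together with the normal-cone characterization in Proposition~\ref{prop:normal_cone}. The only structural difference is that here $\mathcal{F}=\{\theta:\langle\mathbf{x}_i,\theta\rangle\leq1,\,i=1,\ldots,p\}$ is a polytope, so the underlying geometry is in fact simpler than in the SGL case.

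For the first claim I would split into the two cases appearing in the definition of $\mathbf{n}(\bar{\lambda})$. When $\bar{\lambda}<\lambda_{\textup{max}}$, Theorem~\ref{thm:lambdamx_nnlasso} gives $\mathbf{y}/\bar{\lambda}\notin\mathcal{F}$, so $\mathbf{n}(\bar{\lambda})=\mathbf{y}/\bar{\lambda}-\theta^*(\bar{\lambda})\neq0$, and part~(iii) of Proposition~\ref{prop:normal_cone} applied to $\theta^*(\bar{\lambda})=\mathbf{P}_{\mathcal{F}}(\mathbf{y}/\bar{\lambda})$ immediately yields $\mathbf{n}(\bar{\lambda})\in N_{\mathcal{F}}(\theta^*(\bar{\lambda}))$. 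When $\bar{\lambda}=\lambda_{\textup{max}}$ we have $\theta^*(\bar{\lambda})=\mathbf{y}/\lambda_{\textup{max}}$ and $\mathbf{n}(\bar{\lambda})=\mathbf{x}_*$; by the definition of $\mathbf{x}_*$ one gets $\langle\mathbf{x}_*,\mathbf{y}/\lambda_{\textup{max}}\rangle=1$, while $\langle\mathbf{x}_*,\theta\rangle\leq1$ for every $\theta\in\mathcal{F}$ since $\mathbf{x}_*$ is itself one of the constraint vectors. Subtracting gives $\langle\mathbf{x}_*,\theta-\mathbf{y}/\lambda_{\textup{max}}\rangle\leq0$ for all $\theta\in\mathcal{F}$, which is exactly the defining inequality of $N_{\mathcal{F}}(\theta^*(\bar{\lambda}))$. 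This one-line computation replaces the lengthy shrinkage-operator estimate that was required in the SGL proof.

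For the second claim I would first reduce the ball inequality to the equivalent inner-product form $\|\theta^*(\lambda)-\theta^*(\bar{\lambda})\|^2\leq\langle\theta^*(\lambda)-\theta^*(\bar{\lambda}),\mathbf{v}^{\perp}(\lambda,\bar{\lambda})\rangle$, exactly as in Theorem~\ref{thm:estimation_SGL}. Two inequalities feed into it: from part~(i) together with $\theta^*(\lambda)\in\mathcal{F}$ one obtains $\langle\mathbf{n}(\bar{\lambda}),\theta^*(\lambda)-\theta^*(\bar{\lambda})\rangle\leq0$, and from $\mathbf{y}/\lambda-\theta^*(\lambda)\in N_{\mathcal{F}}(\theta^*(\lambda))$ (valid since $\lambda<\lambda_{\textup{max}}$ forces $\mathbf{y}/\lambda\notin\mathcal{F}$) evaluated at $\theta=\theta^*(\bar{\lambda})$ one obtains $\|\theta^*(\lambda)-\theta^*(\bar{\lambda})\|^2\leq\langle\theta^*(\lambda)-\theta^*(\bar{\lambda}),\mathbf{v}(\lambda,\bar{\lambda})\rangle$. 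Expanding $\mathbf{v}^{\perp}$ by its definition and combining these two facts, the target inequality follows provided $\langle\mathbf{v}(\lambda,\bar{\lambda}),\mathbf{n}(\bar{\lambda})\rangle\geq0$.

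The crux, and the one place needing genuine care, is verifying this last sign condition. I would split $\langle\mathbf{v}(\lambda,\bar{\lambda}),\mathbf{n}(\bar{\lambda})\rangle=(1/\lambda-1/\bar{\lambda})\langle\mathbf{y},\mathbf{n}(\bar{\lambda})\rangle+\langle\mathbf{y}/\bar{\lambda}-\theta^*(\bar{\lambda}),\mathbf{n}(\bar{\lambda})\rangle$. The coefficient $1/\lambda-1/\bar{\lambda}$ is positive since $\lambda<\bar{\lambda}$, and $\langle\mathbf{y},\mathbf{n}(\bar{\lambda})\rangle\geq0$ holds in the case $\bar{\lambda}=\lambda_{\textup{max}}$ because $\langle\mathbf{y},\mathbf{x}_*\rangle=\lambda_{\textup{max}}>0$, and in the case $\bar{\lambda}<\lambda_{\textup{max}}$ by evaluating part~(i) at $\theta=0\in\mathcal{F}$ to get $\langle\mathbf{n}(\bar{\lambda}),\theta^*(\bar{\lambda})\rangle\geq0$ and then running the same Cauchy--Schwarz manipulation as in the SGL argument. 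The second summand vanishes when $\bar{\lambda}=\lambda_{\textup{max}}$ (since then $\mathbf{y}/\bar{\lambda}=\theta^*(\bar{\lambda})$) and equals $\|\mathbf{n}(\bar{\lambda})\|^2\geq0$ when $\bar{\lambda}<\lambda_{\textup{max}}$. Hence both terms are nonnegative and the condition holds, completing the plan. Because the polytope structure trivializes precisely the step that was hardest for SGL, I expect no serious obstacle here; the only mild subtlety is keeping the two $\bar{\lambda}$-cases bookkept consistently throughout.
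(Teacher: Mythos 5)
Your proposal is correct and follows essentially the same route as the paper: the paper proves only the normal-cone membership at $\bar{\lambda}=\lambda_{\rm max}$ by the same one-line computation ($\langle\mathbf{x}_*,\theta\rangle\leq 1$ for $\theta\in\mathcal{F}$ versus $\langle\mathbf{x}_*,\mathbf{y}/\lambda_{\rm max}\rangle=1$) and defers the remainder to the argument of Theorem~\ref{thm:estimation_SGL}, which is exactly what you reproduce. Your spelled-out verification of the sign condition $\langle\mathbf{v}(\lambda,\bar{\lambda}),\mathbf{n}(\bar{\lambda})\rangle\geq 0$, including the use of $0\in\mathcal{F}$ and the Cauchy--Schwarz step, matches the paper's SGL template without any gap.
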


\begin{proof}
	We only show that $\mathbf{n}(\lambda_{\textup{max}})\in N_{\mathcal{F}}(\theta^*(\lambda_{\textup{max}}))$ since the proof of the other statement is very similar to that of Theorem \ref{thm:estimation_SGL}. 
	
	By Proposition \ref{prop:normal_cone} and Theorem \ref{thm:lambdamx_nnlasso}, it suffices to show that
	\begin{align}\label{ineqn:xmx_nnlasso}
	\langle\mathbf{x}_*,\theta-\mathbf{y}/\lambda_{\textup{max}}\rangle\leq0,\,\forall\,\theta\in\mathcal{F}.
	\end{align}
	Because $\theta\in\mathcal{F}$, we have $\langle\mathbf{x}_*,\theta\rangle\leq1$. The definition of $\mathbf{x}_*$ implies that $\langle\mathbf{x}_*,\mathbf{y}/\lambda_{\textup{max}}\rangle=1$. Thus, the inequality in (\ref{ineqn:xmx_nnlasso}) holds, which completes the proof. 
\end{proof}

Theorem \ref{thm:estimation_nnlasso} implies that $\theta^*(\lambda)$ is in a ball---denoted by $\mathcal{B}(\lambda,\bar{\lambda})$---of radius $\frac{1}{2}\|\mathbf{v}^{\perp}(\lambda,\bar{\lambda})\|$ centered at $\theta^*(\bar{\lambda})+\frac{1}{2}\mathbf{v}^{\perp}(\lambda,\bar{\lambda})$. Simple calculations lead to
\begin{align}
\omega_i=\sup_{\theta\in\mathcal{B}(\lambda,\bar{\lambda})}\,\langle\mathbf{x}_i,\theta\rangle=\left\langle\mathbf{x}_i,\theta^*(\bar{\lambda})+\frac{1}{2}\mathbf{v}^{\perp}(\lambda,\bar{\lambda})\right\rangle+\frac{1}{2}\|\mathbf{v}^{\perp}(\lambda,\bar{\lambda})\|\|\mathbf{x}_i\|.
\end{align}
By plugging $\omega_i$ into (\ref{rrule3_nnlasso}), we have the DPC screening rule for nonnegative Lasso as follows.
\begin{theorem}
For the nonnegative Lasso problem, suppose that we are given a sequence of parameter values $\lambda_{\textup{max}}=\lambda^{(0)}>\lambda^{(1)}>\ldots>\lambda^{(\mathcal{J})}$. Then, $[\beta^*(\lambda^{(j+1)})]_i=0$ if $\beta^*(\lambda^{(j)})$ is known and the following holds:
\begin{align}
\left\langle\mathbf{x}_i,\frac{\mathbf{y}-\mathbf{X}\beta^*(\lambda^{(j)})}{\lambda^{(j)}}+\frac{1}{2}\mathbf{v}^{\perp}(\lambda^{(j+1)},\lambda^{(j)})\right\rangle+\frac{1}{2}\|\mathbf{v}^{\perp}(\lambda^{(j+1)},\lambda^{(j)})\|\|\mathbf{x}_i\|<1.
\end{align} 
\end{theorem}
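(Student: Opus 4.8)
The plan is to instantiate the three-step framework of Section \ref{ssec:general_rule_nnlasso} at the consecutive pair $\bar{\lambda}=\lambda^{(j)}$ and $\lambda=\lambda^{(j+1)}$, assembling pieces that have already been established. First I would recover the reference point $\theta^*(\lambda^{(j)})$: since $\beta^*(\lambda^{(j)})$ is assumed known, the KKT condition \eqref{eqn:KKT1_nnlasso} gives $\theta^*(\lambda^{(j)})=(\mathbf{y}-\mathbf{X}\beta^*(\lambda^{(j)}))/\lambda^{(j)}$ in closed form, so the estimation in Theorem \ref{thm:estimation_nnlasso} is fully computable.

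Next I would invoke Theorem \ref{thm:estimation_nnlasso} with this pair to conclude that $\theta^*(\lambda^{(j+1)})$ lies in the ball $\mathcal{B}(\lambda^{(j+1)},\lambda^{(j)})$ of radius $\frac{1}{2}\|\mathbf{v}^{\perp}(\lambda^{(j+1)},\lambda^{(j)})\|$ centered at $\theta^*(\lambda^{(j)})+\frac{1}{2}\mathbf{v}^{\perp}(\lambda^{(j+1)},\lambda^{(j)})$. Taking $\Theta=\mathcal{B}(\lambda^{(j+1)},\lambda^{(j)})$ as the containing region required by the relaxed rule \ref{rrule3_nnlasso}, Step 2 amounts to evaluating $\omega_i=\sup_{\theta\in\Theta}\langle\mathbf{x}_i,\theta\rangle$, which by the Cauchy--Schwarz bound over a ball equals $\langle\mathbf{x}_i,\theta^*(\lambda^{(j)})+\frac{1}{2}\mathbf{v}^{\perp}(\lambda^{(j+1)},\lambda^{(j)})\rangle+\frac{1}{2}\|\mathbf{v}^{\perp}(\lambda^{(j+1)},\lambda^{(j)})\|\|\mathbf{x}_i\|$, exactly the displayed computation preceding the theorem.

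Finally I would substitute $\theta^*(\lambda^{(j)})=(\mathbf{y}-\mathbf{X}\beta^*(\lambda^{(j)}))/\lambda^{(j)}$ into this expression so that $\omega_i$ matches the left-hand side of the stated inequality, and then apply the relaxed rule \ref{rrule3_nnlasso}: the hypothesis $\omega_i<1$ yields $\sup_{\theta\in\Theta}\langle\mathbf{x}_i,\theta\rangle<1$, hence in particular $\langle\mathbf{x}_i,\theta^*(\lambda^{(j+1)})\rangle<1$ because $\theta^*(\lambda^{(j+1)})\in\Theta$, which by \ref{R3} forces $[\beta^*(\lambda^{(j+1)})]_i=0$.

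Since every component has already been proved, I do not expect a genuine obstacle; the theorem is essentially a bookkeeping corollary of Theorem \ref{thm:estimation_nnlasso} and rule \ref{rrule3_nnlasso}. The only point requiring a modicum of care is the supremum computation in Step 2: one must verify both the Cauchy--Schwarz upper bound and its attainment at the boundary point of $\Theta$ lying in the direction $\mathbf{x}_i/\|\mathbf{x}_i\|$ from the center, mirroring the argument in the proof of Theorem \ref{thm:supreme2_SGL}, so that $\omega_i$ is the exact supremum rather than a loose bound.
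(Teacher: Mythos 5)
Your proposal is correct and follows essentially the same route as the paper: the theorem is obtained by taking $\Theta$ to be the ball from Theorem \ref{thm:estimation_nnlasso}, evaluating $\omega_i=\sup_{\theta\in\Theta}\langle\mathbf{x}_i,\theta\rangle$ exactly as in the display preceding the theorem, substituting $\theta^*(\lambda^{(j)})=(\mathbf{y}-\mathbf{X}\beta^*(\lambda^{(j)}))/\lambda^{(j)}$ from \eqref{eqn:KKT1_nnlasso}, and invoking (\ref{rrule3_nnlasso}). Your added care about attainment of the supremum (mirroring Theorem \ref{thm:supreme2_SGL}) is sound but not even needed, since only the upper bound direction of the implication is used.
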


\section{Experiments}\label{section:experiments}

We evaluate TLFre for SGL and DPC for nonnegative Lasso in Sections \ref{ssec:exp_SGL} and \ref{ssec:exp_nnlasso}, respectively, on both synthetic and real data sets. To the best of knowledge, the TLFre and DPC are the first screening methods for SGL and nonnegative Lasso, respectively.

\subsection{TLFre for SGL}\label{ssec:exp_SGL}

We perform experiments to evaluate TLFre on synthetic and real data sets in Sections \ref{sssec:exp_syn_SGL} and \ref{sssec:exp_ADNI_SGL}, respectively. To measure the performance of TLFre, we compute the \emph{rejection ratios} of (\ref{rule:L1}) and (\ref{rule:L2}), respectively. Specifically, let $m$ be the number of features that have $0$ coefficients in the solution, $\overline{\mathcal{G}}$ be the index set of groups that are discarded by (\ref{rule:L1}) and $\overline{p}$ be the number of inactive features that are detected by (\ref{rule:L2}). The rejection ratios of (\ref{rule:L1}) and (\ref{rule:L2}) are defined by 
$r_1=\frac{\sum_{g\in\overline{\mathcal{G}}}n_g}{m}$ and $r_2=\frac{|\overline{p}|}{m}$, respectively. Moreover, we report the \emph{speedup} gained by TLFre, i.e., the ratio of the running time of solver without screening to the running time of solver with TLFre. The solver used in this paper is from SLEP \cite{SLEP}. 

To determine appropriate values of $\alpha$ and $\lambda$ by cross validation or stability selection, we can run TLFre with as many parameter values as we need. Given a data set, for illustrative purposes only,  we select seven values of $\alpha$ from $\{\tan(\psi):\psi=5^{\circ},15^{\circ},30^{\circ},45^{\circ},60^{\circ},75^{\circ},85^{\circ}\}$. Then, for each value of $\alpha$, we run TLFre along a sequence of $100$ values of $\lambda$ equally spaced on the logarithmic scale of $\lambda/\lambda_{\rm max}^{\alpha}$ from $1$ to $0.01$. Thus, $700$ pairs of parameter values of $(\lambda,\alpha)$ are sampled in total. 

\subsubsection{Simulation Studies}\label{sssec:exp_syn_SGL}

We perform experiments on two synthetic data sets that are commonly used in the literature \cite{Tibshirani11,Zou2005}. The true model is $\mathbf{y}=\mathbf{X}\beta^*+0.01\epsilon$, $\epsilon\sim N(0,1)$. We generate two data sets with $250\times 10000$ entries: Synthetic 1 and Synthetic 2. We randomly break the $10000$ features into $1000$ groups. For Synthetic 1,  the entries of the data matrix ${\bf X}$ are i.i.d. standard Gaussian with
pairwise correlation zero, i.e., ${\rm corr}({\bf x}_i,{\bf x}_i)=0$. For Synthetic 2, the entries of the data matrix ${\bf X}$ are drawn from i.i.d. standard Gaussian with pairwise correlation $0.5^{|i-j|}$, i.e., ${\rm corr}({\bf x}_i,{\bf x}_j)=0.5^{|i-j|}$. To construct $\beta^*$, we first randomly select $\gamma_1$ percent of groups. Then, for each selected group, we randomly select $\gamma_2$ percent of features. The selected components of $\beta^*$ are populated from a standard Gaussian and the remaining ones are set to $0$. We set $\gamma_1=\gamma_2=10$ for Synthetic 1 and $\gamma_1=\gamma_2=20$ for Synthetic 2.

\begin{figure*}[t]
	\centering{
		\subfigure[] { \label{fig:syn1_eff_region}
			\includegraphics[width=0.22\columnwidth]{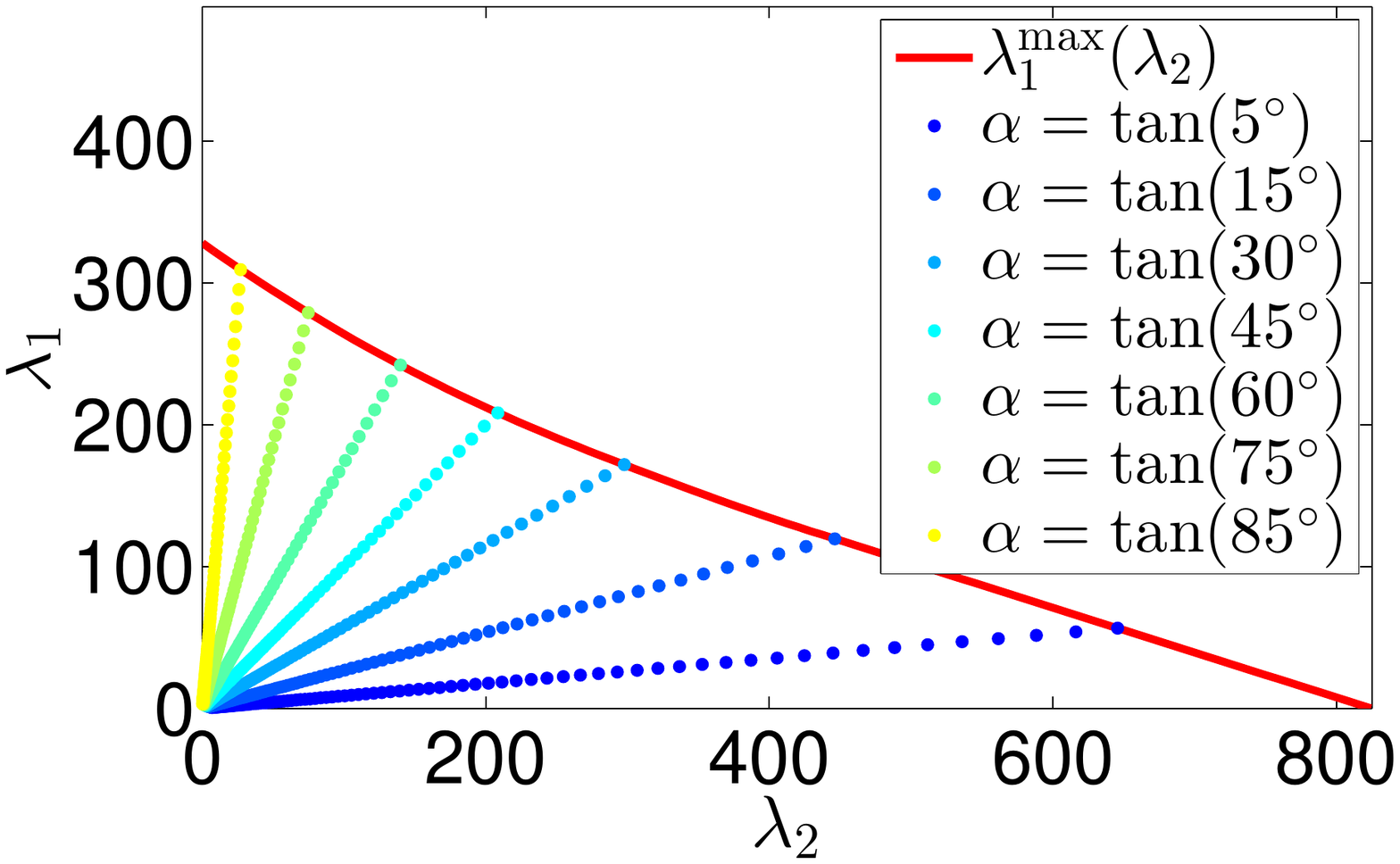}
		}
		\subfigure[$\alpha=\tan(5^{\circ})$] { \label{fig:syn1_5}
			\includegraphics[width=0.22\columnwidth]{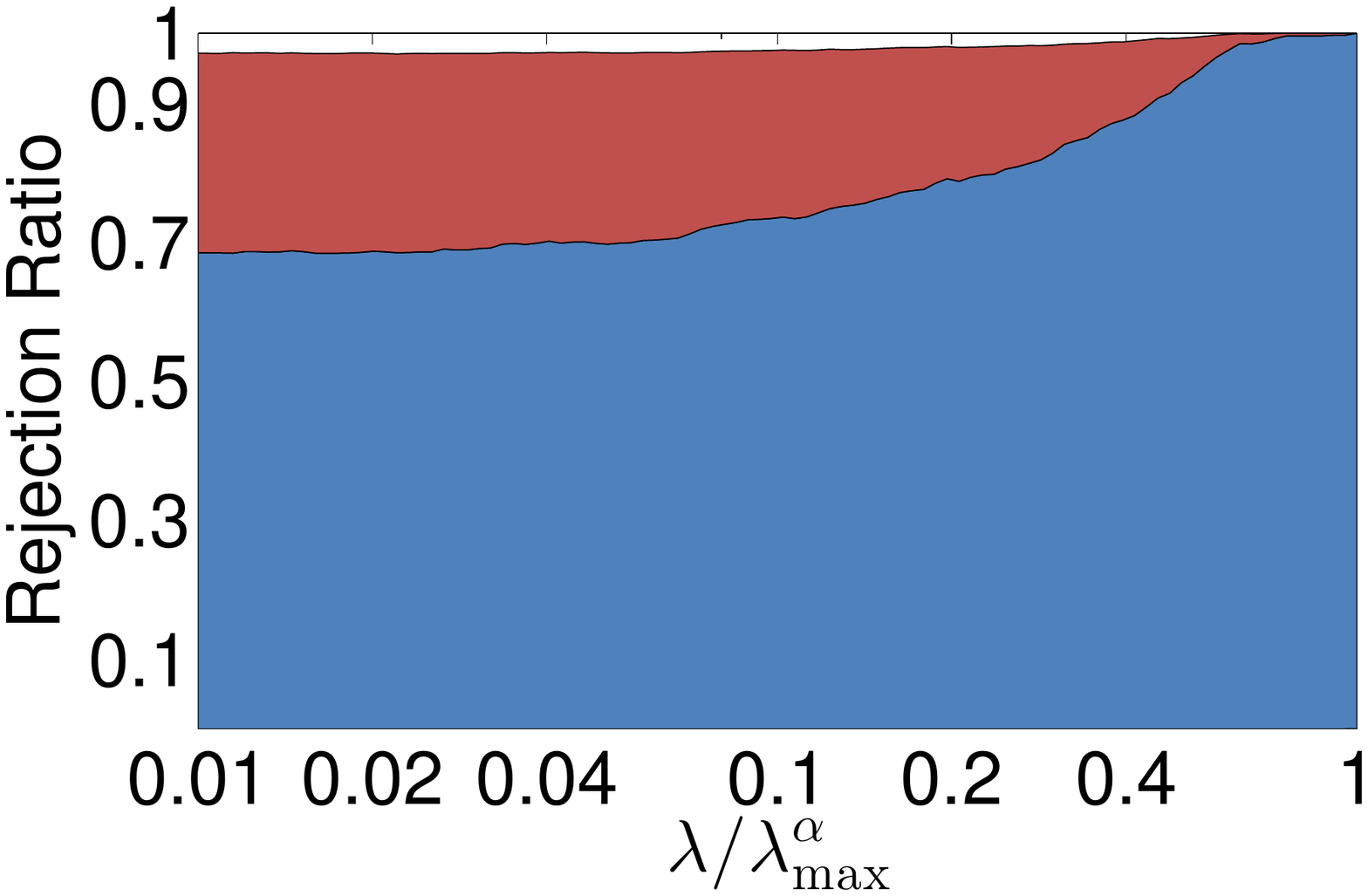}
		}
		\subfigure[$\alpha=\tan(15^{\circ})$] { \label{fig:syn1_15}
			\includegraphics[width=0.22\columnwidth]{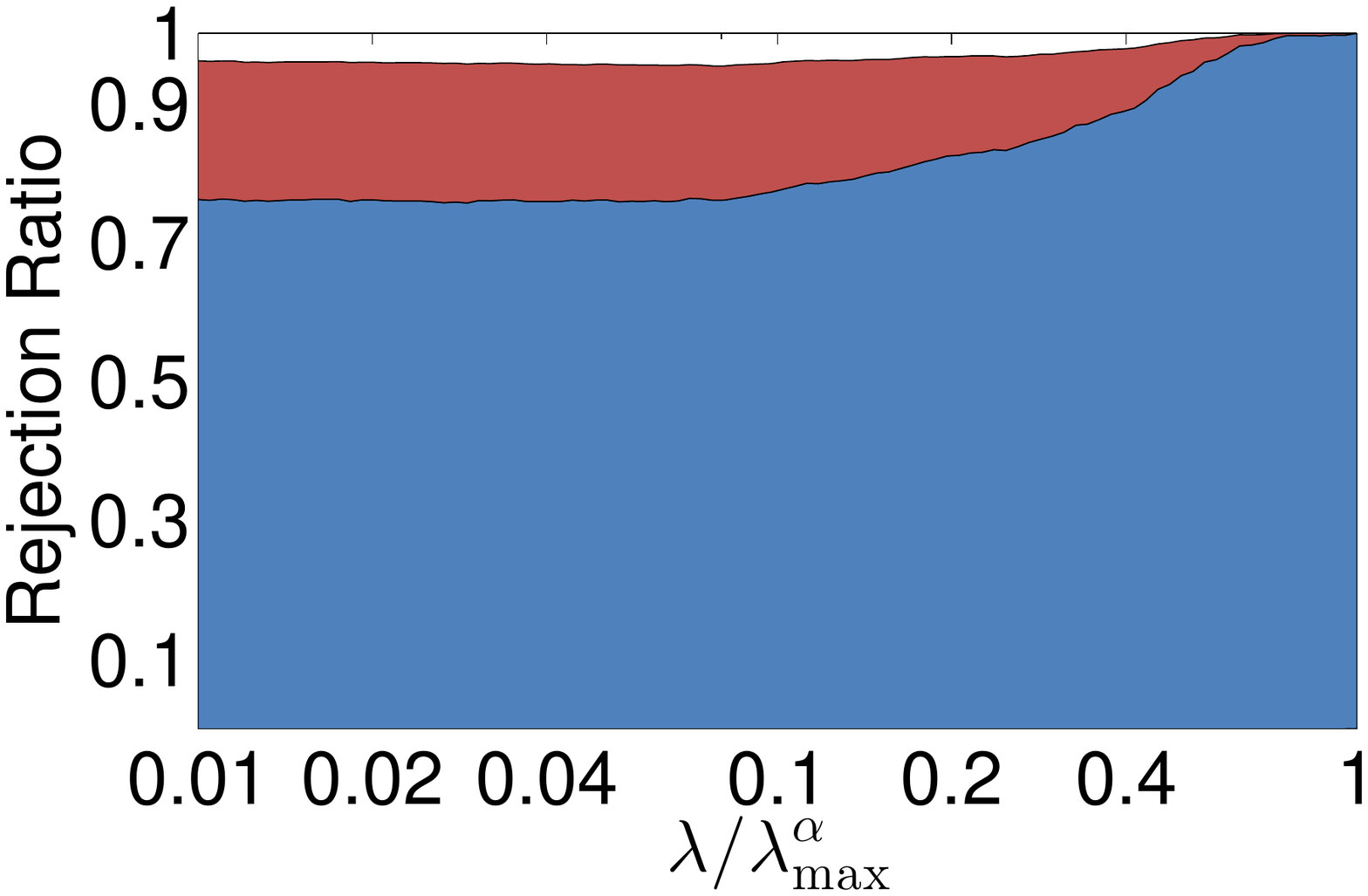}
		}
		\subfigure[$\alpha=\tan(30^{\circ})$] { \label{fig:syn1_30}
			\includegraphics[width=0.22\columnwidth]{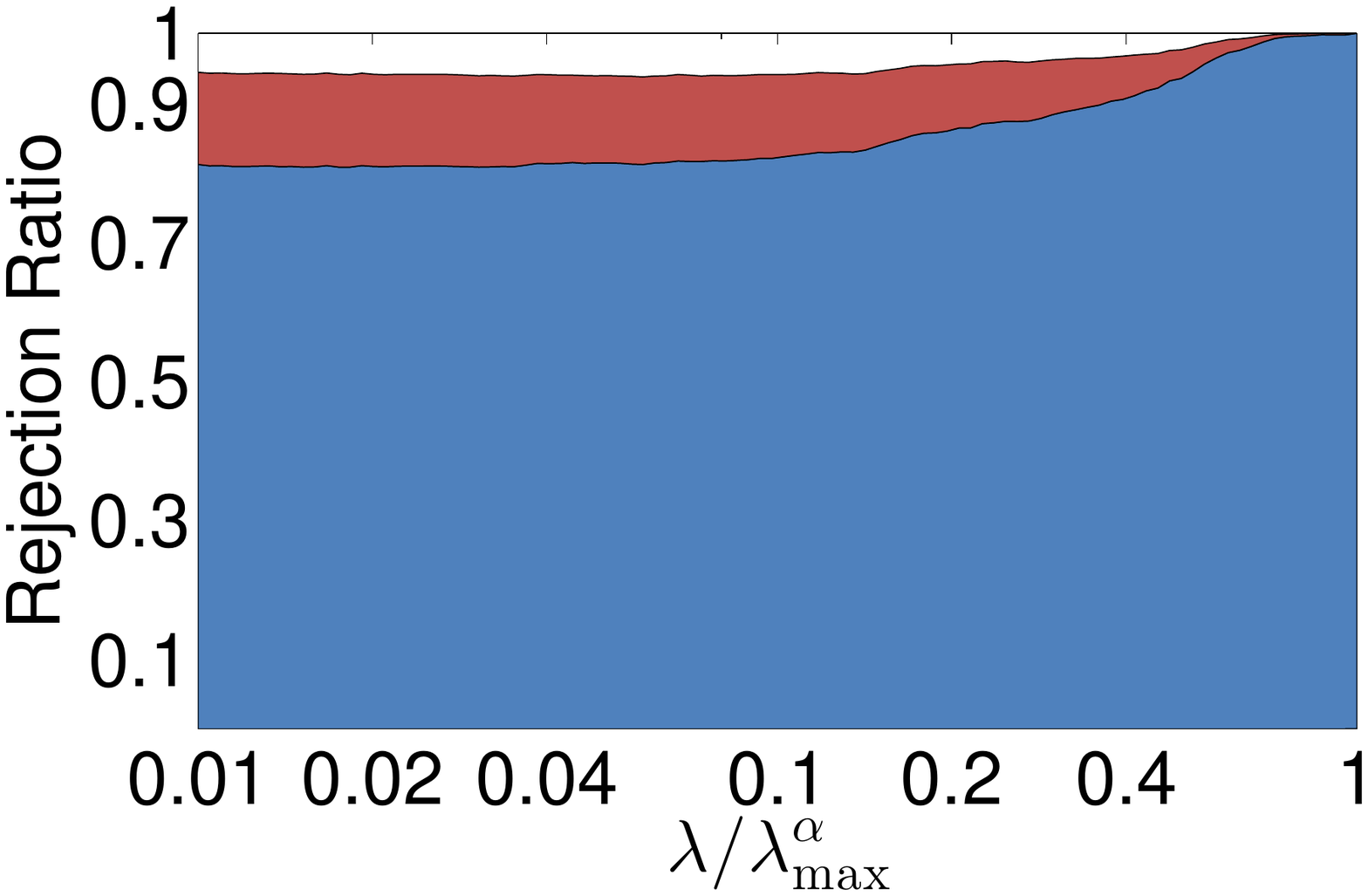}
		}\\[-0.35cm]
		\subfigure[$\alpha=\tan(45^{\circ})$] { \label{fig:syn1_45}
			\includegraphics[width=0.22\columnwidth]{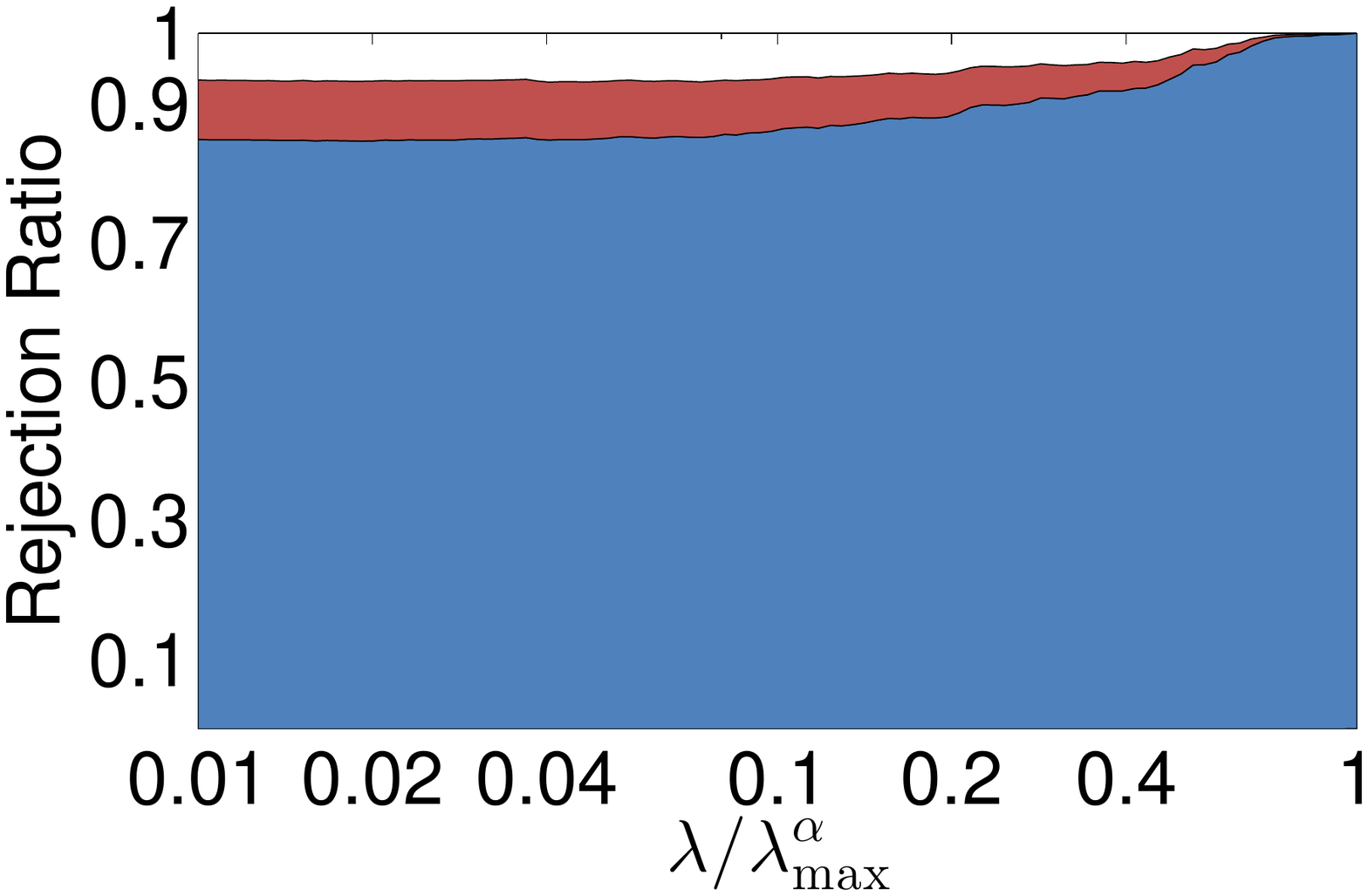}
		}
		\subfigure[$\alpha=\tan(60^{\circ})$] { \label{fig:syn1_60}
			\includegraphics[width=0.22\columnwidth]{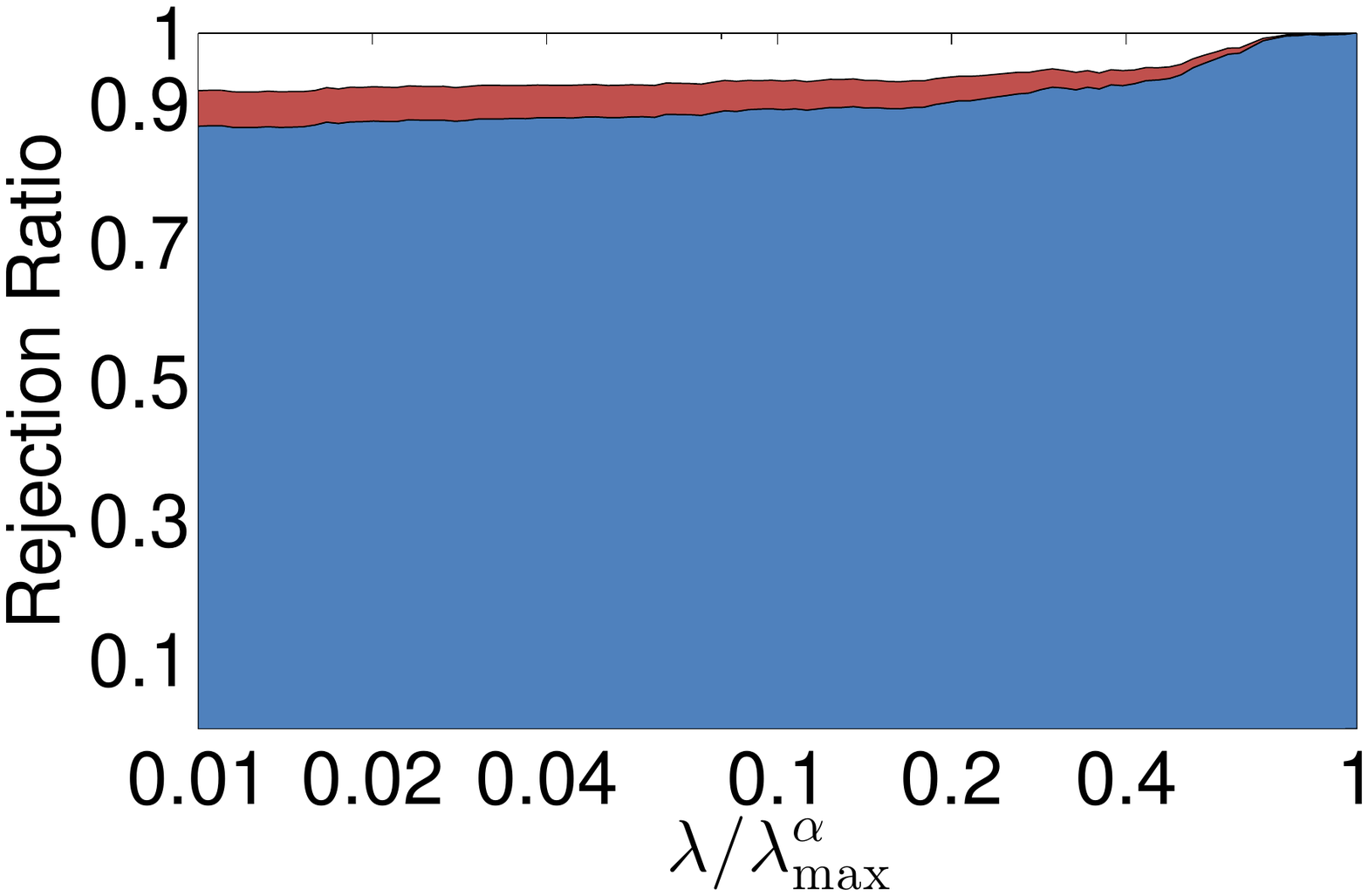}
		}
		\subfigure[$\alpha=\tan(75^{\circ})$] { \label{fig:syn1_75}
			\includegraphics[width=0.22\columnwidth]{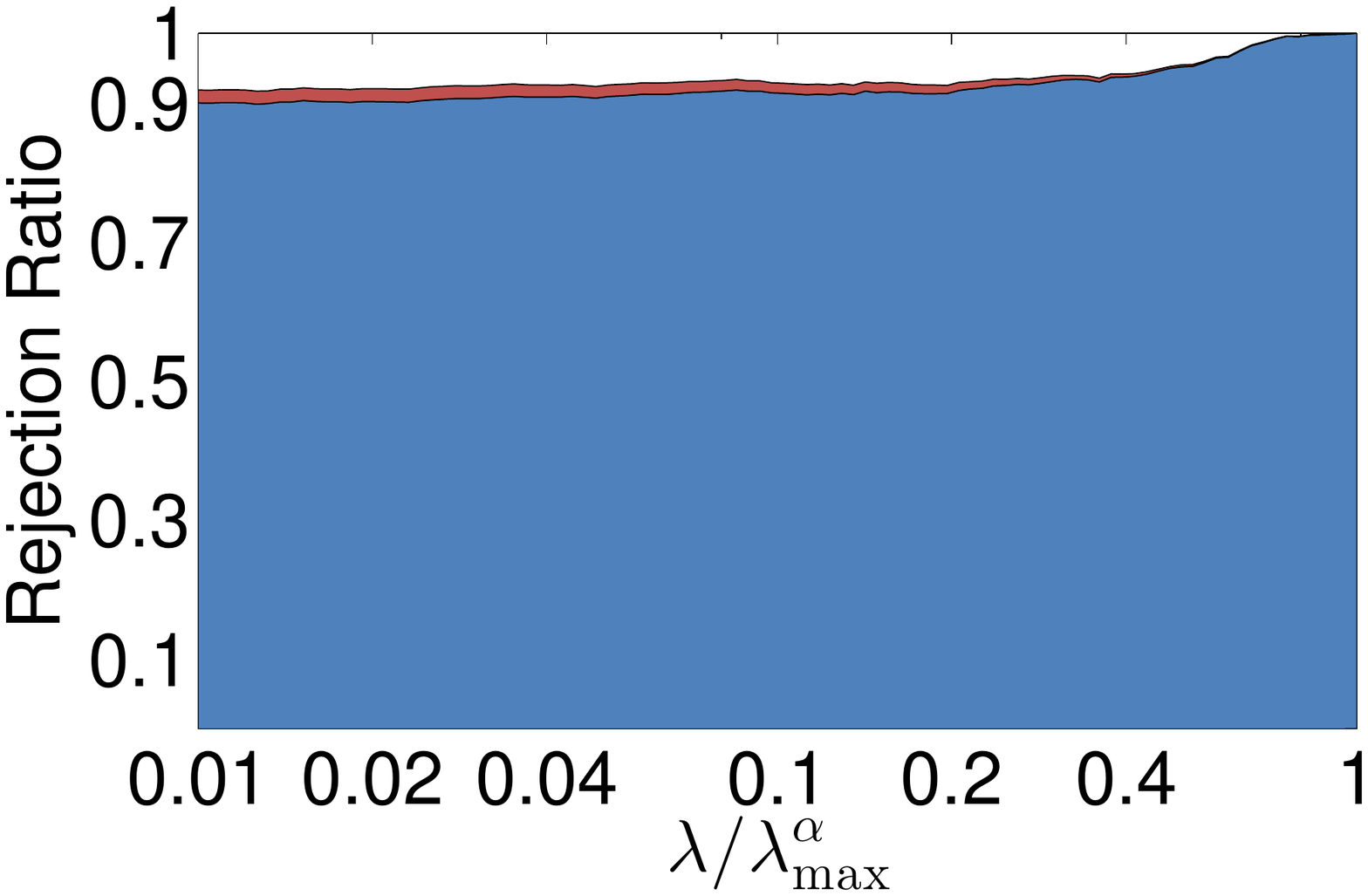}
		}
		\subfigure[$\alpha=\tan(85^{\circ})$] { \label{fig:syn1_85}
			\includegraphics[width=0.22\columnwidth]{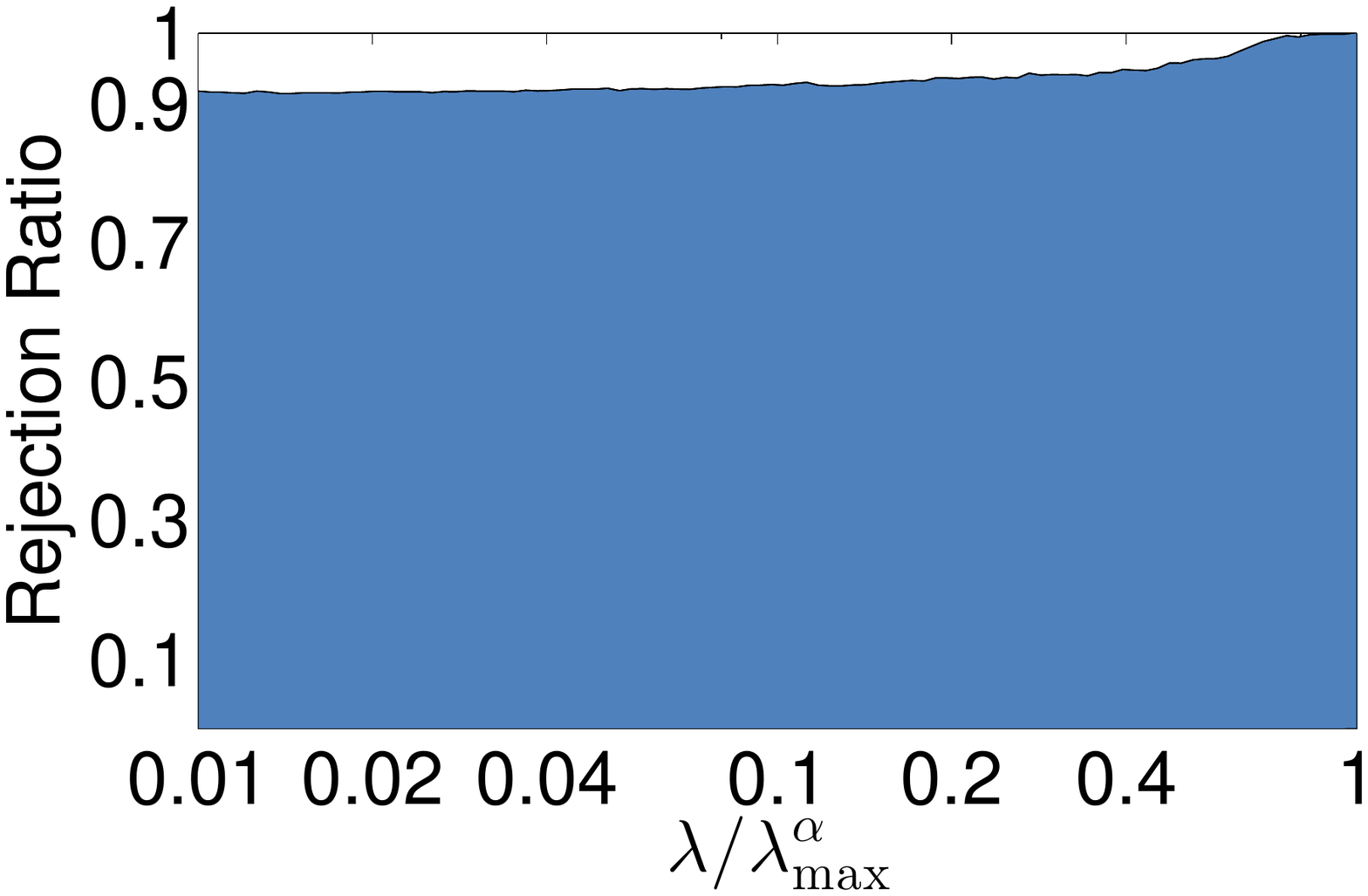}
		}
	}\\[-0.4cm]
	\caption{Rejection ratios of TLFre on the Synthetic 1 data set. }
	\vspace{-0.1in}
	\label{fig:synthetic1}
\end{figure*}

The figures in the upper left corner of \figref{fig:synthetic1} and \figref{fig:synthetic2} show the plots of $\lambda_1^{\rm max}(\lambda_2)$ (see Corollary \ref{cor:lambdamx_SGL}) and the sampled parameter values of $\lambda$ and $\alpha$ (recall that $\lambda_1=\alpha\lambda$ and $\lambda_2=\lambda$). For the other figures, the blue and red regions represent the rejection ratios of (\ref{rule:L1}) and (\ref{rule:L2}), respectively. We can see that TLFre is very effective in discarding inactive groups/features; that is, more than $90\%$ of inactive features  
can be detected. Moreover, we can observe that the first layer screening (\ref{rule:L1}) becomes more effective with a larger $\alpha$. Intuitively, this is because the group Lasso penalty plays a more important role in enforcing the sparsity with a larger value of $\alpha$ (recall that $\lambda_1=\alpha\lambda$). The top and middle parts of Table \ref{table:TLFre_runtime_sync} indicate that the speedup gained by TLFre is very significant (up to $30$ times) and TLFre is very efficient. Compared to the running time of the solver without screening, the running time of TLFre is negligible. The running time of TLFre includes that of computing $\|\mathbf{X}_g\|_2$, $g=1,\ldots,G$, which can be efficiently computed by the power method \cite{Halko2011}. Indeed, this can be shared for TLFre with different parameter values.

\begin{figure*}[t]
	\centering{
		\subfigure[] { \label{fig:syn2_eff_region}
			\includegraphics[width=0.22\columnwidth]{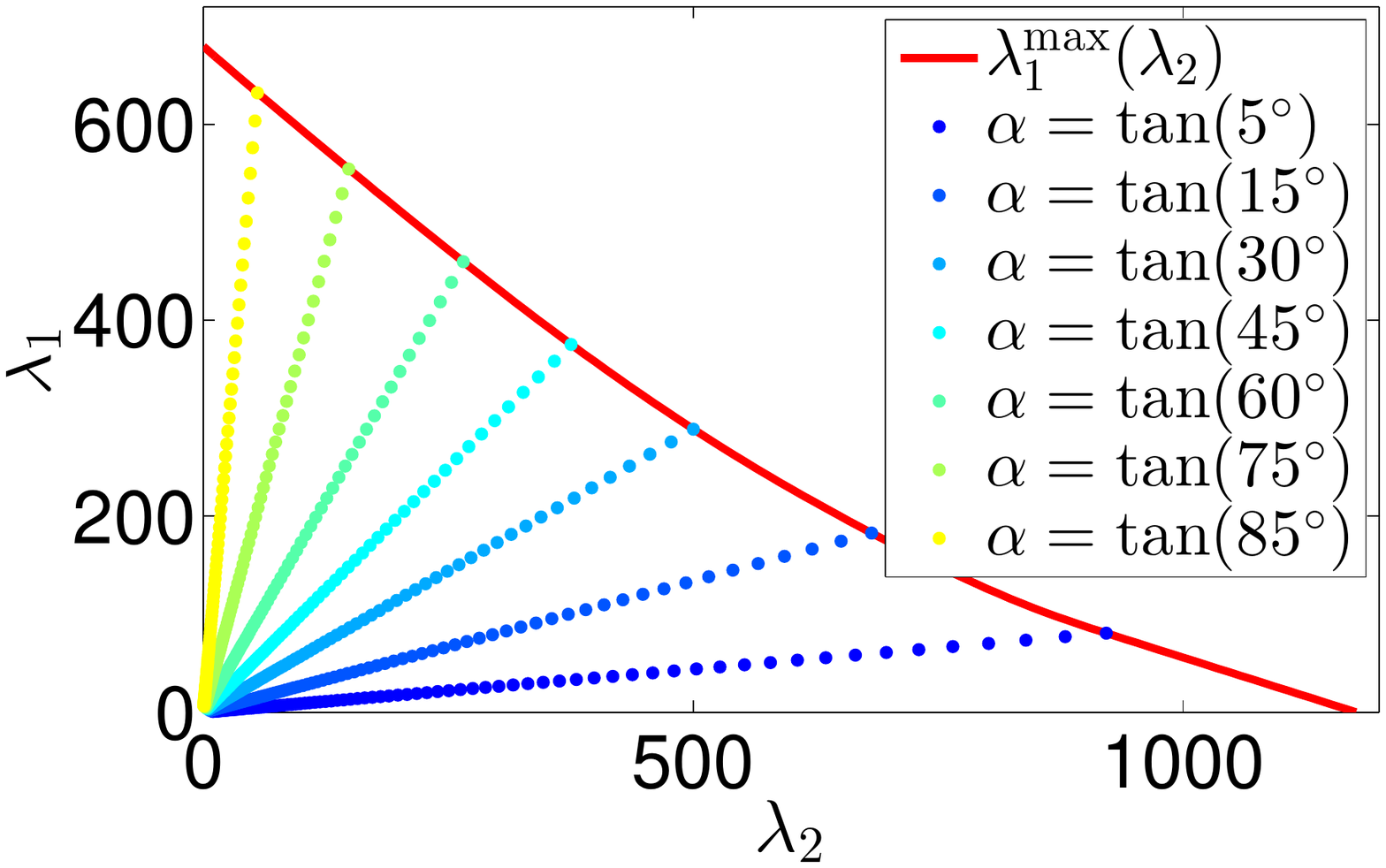}
		}
		\subfigure[$\alpha=\tan(5^{\circ})$] { \label{fig:syn2_5}
			\includegraphics[width=0.23\columnwidth]{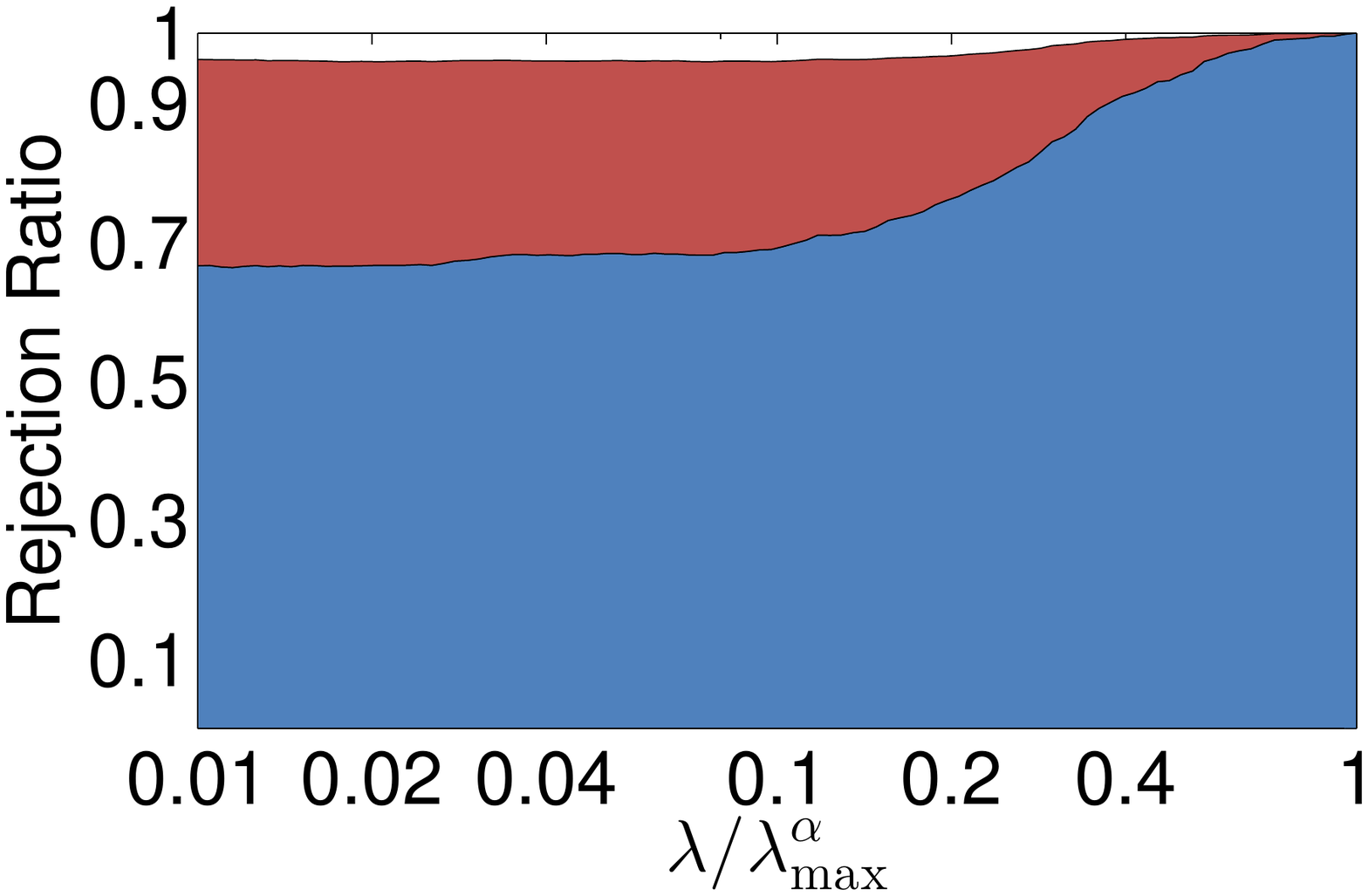}
		}
		\subfigure[$\alpha=\tan(15^{\circ})$] { \label{fig:syn2_15}
			\includegraphics[width=0.22\columnwidth]{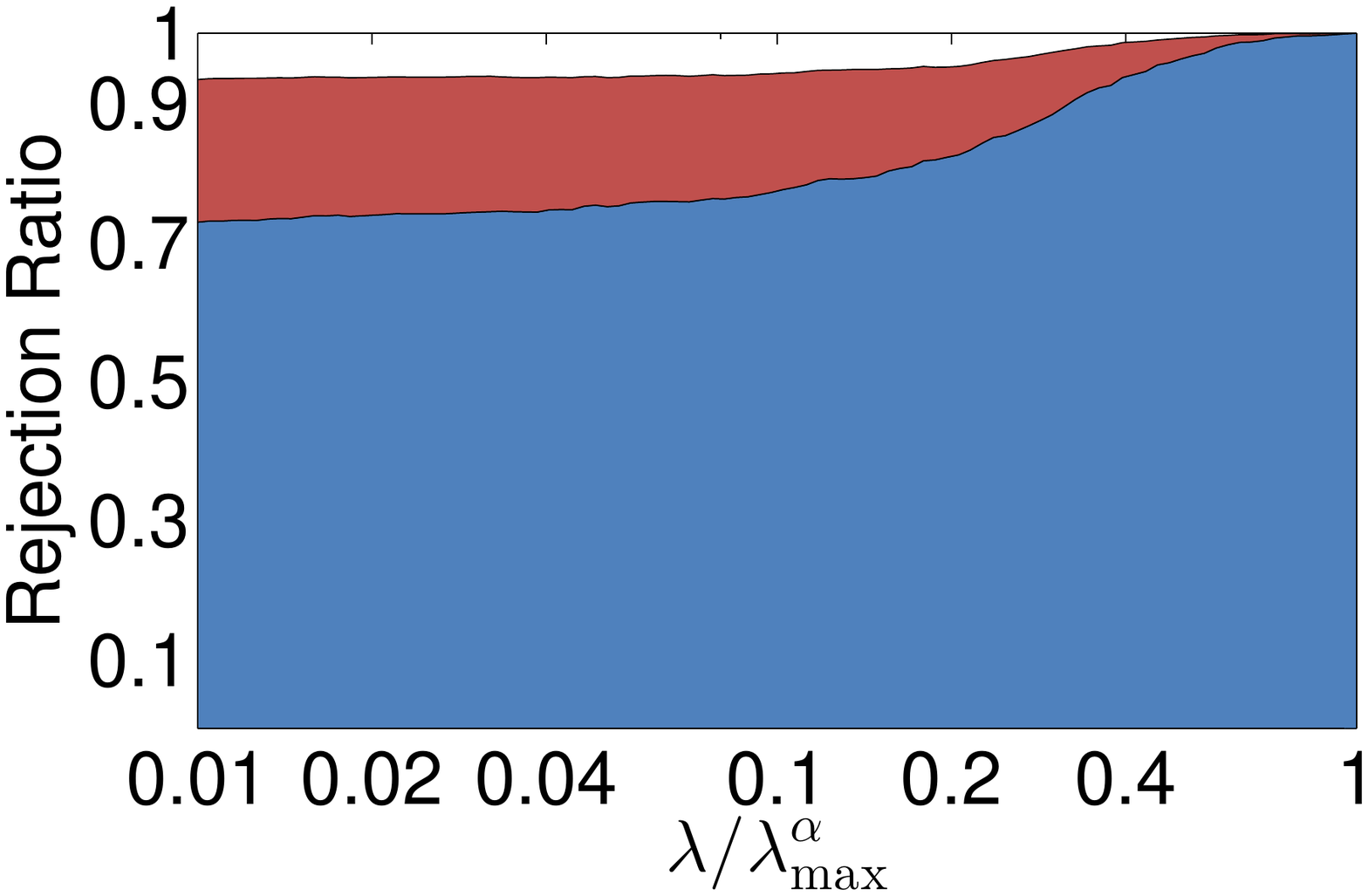}
		}
		\subfigure[$\alpha=\tan(30^{\circ})$] { \label{fig:syn2_30}
			\includegraphics[width=0.22\columnwidth]{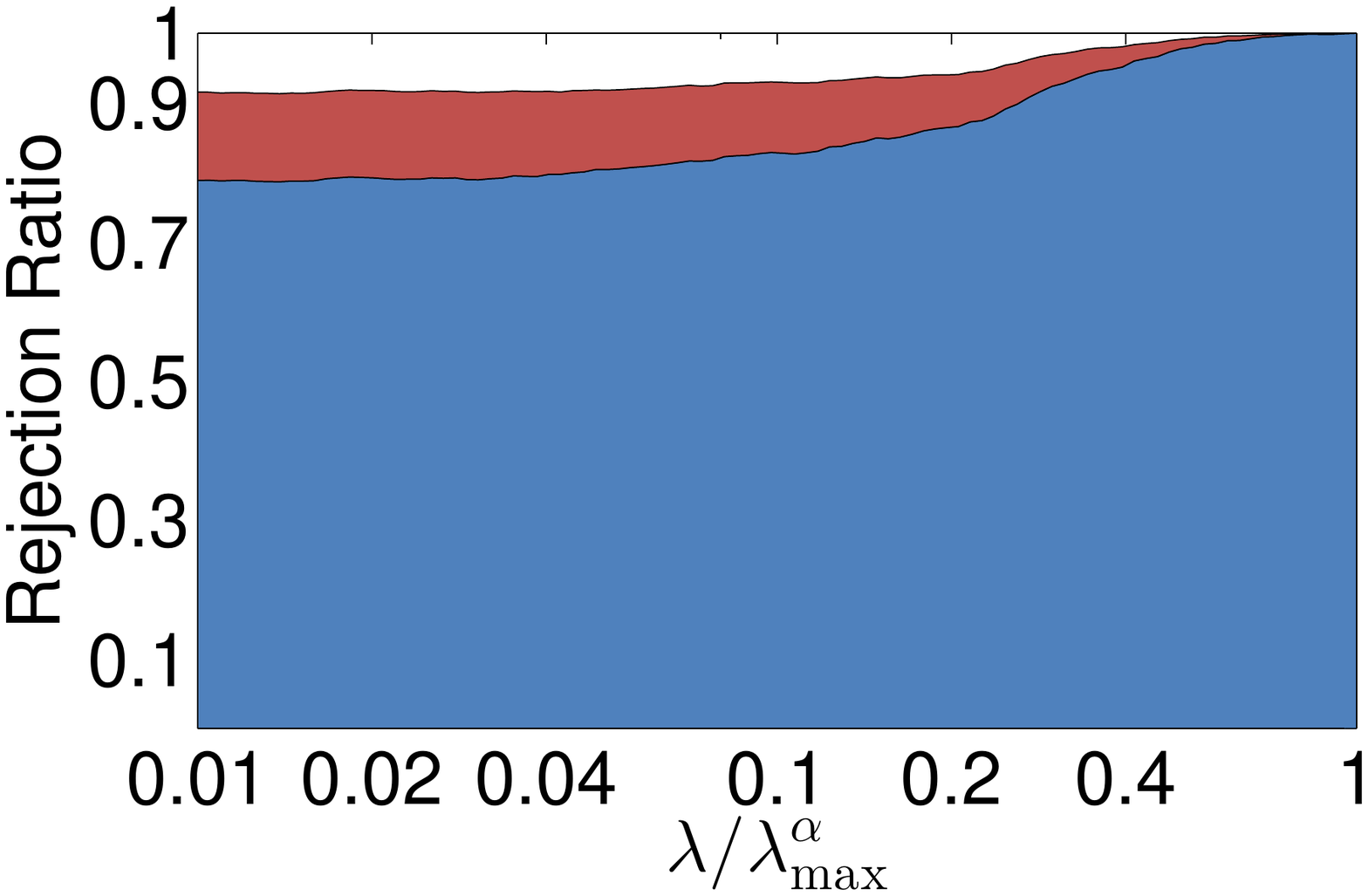}
		}\\[-.3cm]
		\subfigure[$\alpha=\tan(45^{\circ})$] { \label{fig:syn2_45}
			\includegraphics[width=0.22\columnwidth]{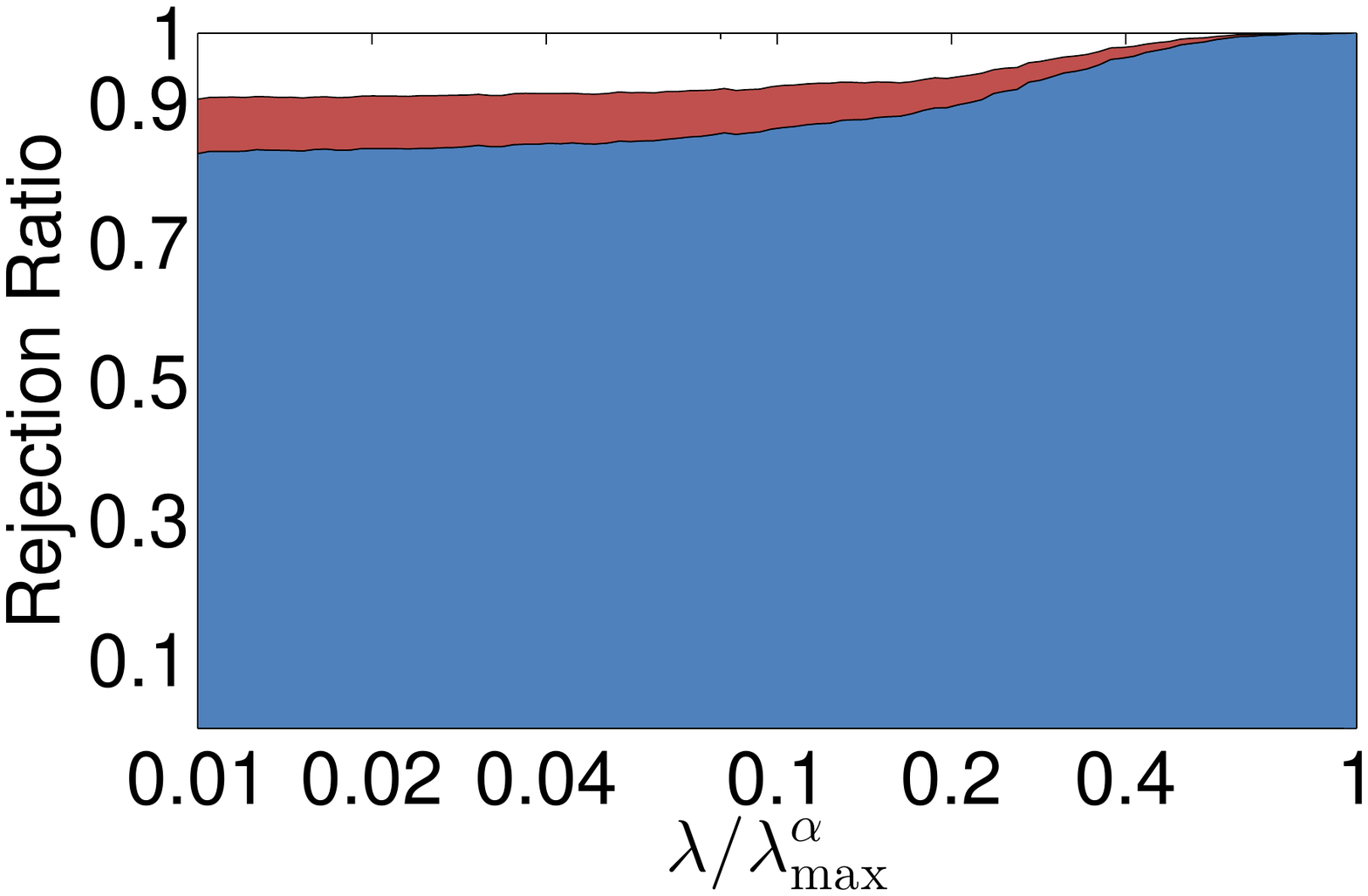}
		}
		\subfigure[$\alpha=\tan(60^{\circ})$] { \label{fig:syn2_60}
			\includegraphics[width=0.22\columnwidth]{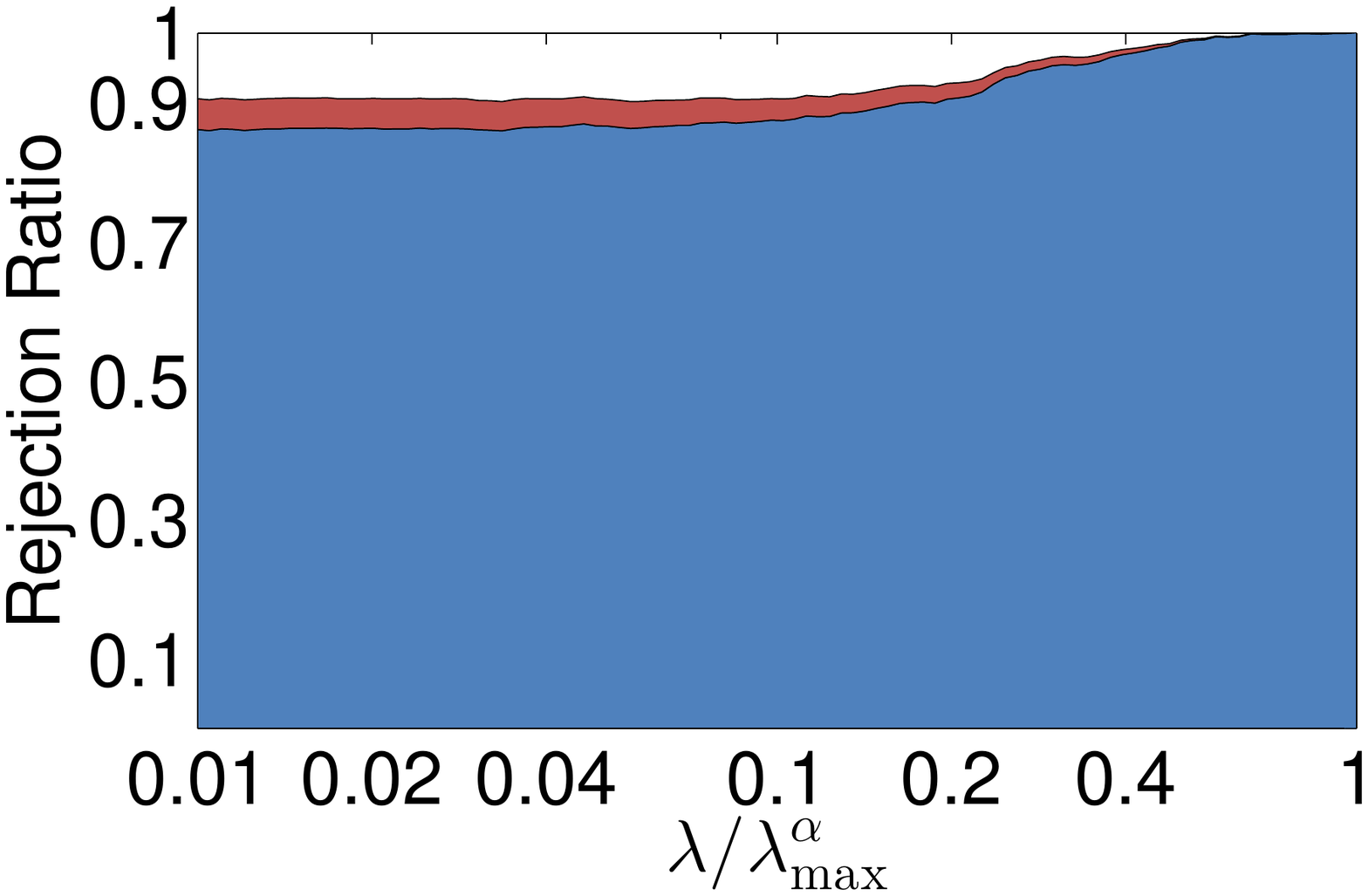}
		}
		\subfigure[$\alpha=\tan(75^{\circ})$] { \label{fig:syn2_75}
			\includegraphics[width=0.22\columnwidth]{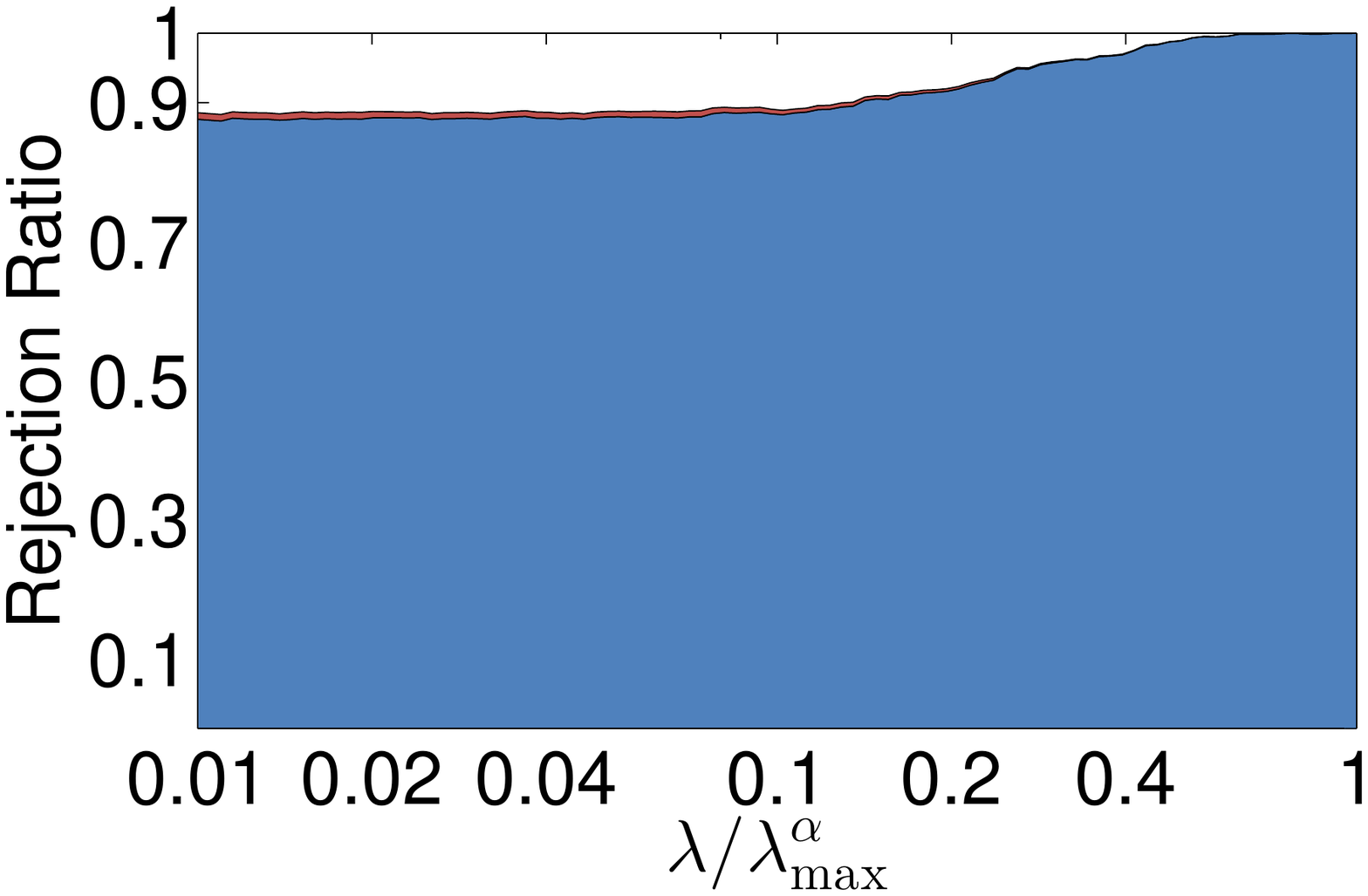}
		}
		\subfigure[$\alpha=\tan(85^{\circ})$] { \label{fig:syn2_85}
			\includegraphics[width=0.22\columnwidth]{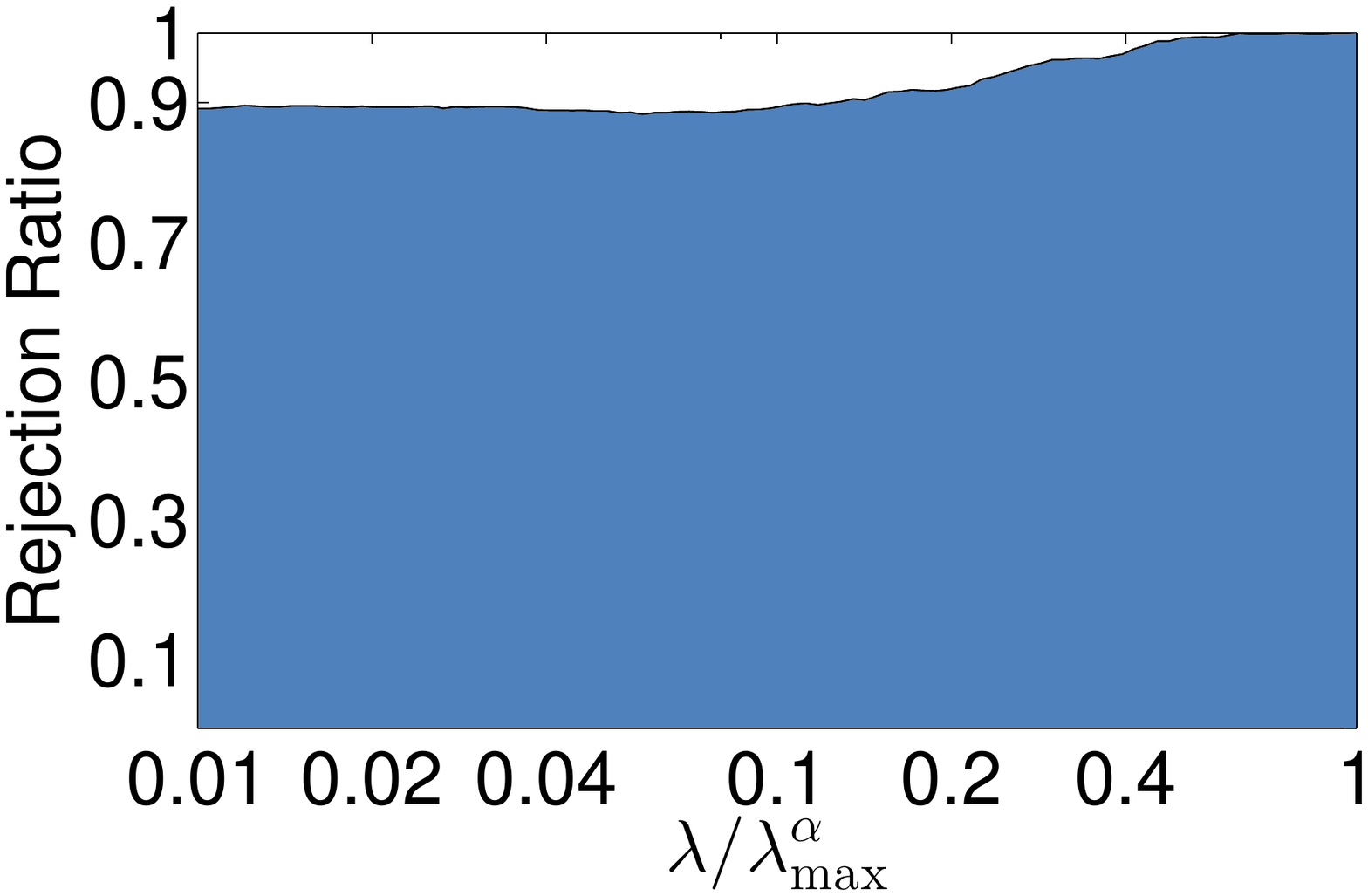}
		}
	}\\[-0.3cm]
	\caption{Rejection ratios of TLFre on the Synthetic 2 data set. }
	\vspace{-0.1in}
	\label{fig:synthetic2}
\end{figure*}

\setlength{\tabcolsep}{.18em}
\begin{table}
	\begin{center}
		\caption{Running time (in seconds) for solving SGL along a sequence of $100$ tuning parameter values of $\lambda$ equally spaced on the logarithmic scale of ${\lambda}/{\lambda_{\rm max}^{\alpha}}$ from $1.0$ to $0.01$ by (a): the solver \cite{SLEP} without screening; (b): the solver combined with TLFre. The data sets are Synthetic 1 and Synthetic 2.
		}\label{table:TLFre_runtime_sync}\vspace{1mm}
		\begin{scriptsize}
			\def\arraystretch{1.25}
			\begin{tabular}{ l c|c|c|c|c|c|c|c| }
				\cline{2-9}
				& \multicolumn{1}{|c|}{$\alpha$} &  $\tan(5^{\circ})$ & $\tan(15^{\circ})$ &  $\tan(30^{\circ})$ & $\tan(45^{\circ})$ & $\tan(60^{\circ})$ & $\tan(75^{\circ})$ & $\tan(85^{\circ})$ \\
				\cline{2-9}\\ [-2.5ex]\hline
				\multicolumn{1}{|r|}{\multirow{4}{*}{Synthetic 1}}  & \multicolumn{1}{|c|}{solver} & 298.36 & 301.74 & 308.69 & 307.71 & 311.33 & 307.53 & 291.24 \\ \cline{2-9}
				\multicolumn{1}{|r|}{}  & \multicolumn{1}{|c|}{TLFre} & 0.77 & 0.78 & 0.79 & 0.79 & 0.81 & 0.79 & 0.77 \\\cline{2-9}
				\multicolumn{1}{|r|}{} & \multicolumn{1}{|c|}{TLFre+solver} & 10.26 & 12.47 & 15.73 & 17.69 & 19.71 & 21.95 & 22.53 \\\cline{2-9}
				\multicolumn{1}{|r|}{}  & \multicolumn{1}{|c|}{\textbf{speedup}} & \textbf{29.09} & \textbf{24.19} & \textbf{19.63} & \textbf{17.40} & \text{15.79} & \textbf{14.01} & \textbf{12.93}\\\hline\hline
				\multicolumn{1}{|r|}{\multirow{4}{*}{Synthetic 2}}  & \multicolumn{1}{|c|}{solver} & 294.64 & 294.92 & 297.29 & 297.50 & 297.59 & 295.51 & 292.24\\\cline{2-9}
				\multicolumn{1}{|r|}{}  & \multicolumn{1}{|c|}{TLFre} & 0.79 & 0.80 & 0.80 & 0.81 & 0.81 & 0.81 & 0.82 \\\cline{2-9}
				\multicolumn{1}{|r|}{}  & \multicolumn{1}{|c|}{TLFre+solver} & 11.05 & 12.89 & 16.08 & 18.90 & 20.45 & 21.58 & 22.80 \\\cline{2-9}
				\multicolumn{1}{|r|}{} & \multicolumn{1}{|c|}{\textbf{speedup}} & \textbf{26.66} & \textbf{22.88} & \textbf{18.49} & \textbf{15.74} & \textbf{14.55} & \textbf{13.69} & \textbf{12.82} \\\hline 
			\end{tabular}
		\end{scriptsize}
	\end{center}
	\vspace{-0.1in}
	
\end{table}

\subsubsection{Experiments on Real Data Set}\label{sssec:exp_ADNI_SGL}

\begin{figure*}[t]
	\centering{
		\subfigure[] { \label{fig:ADNI_GMV_eff_region}
			\includegraphics[width=0.22\columnwidth]{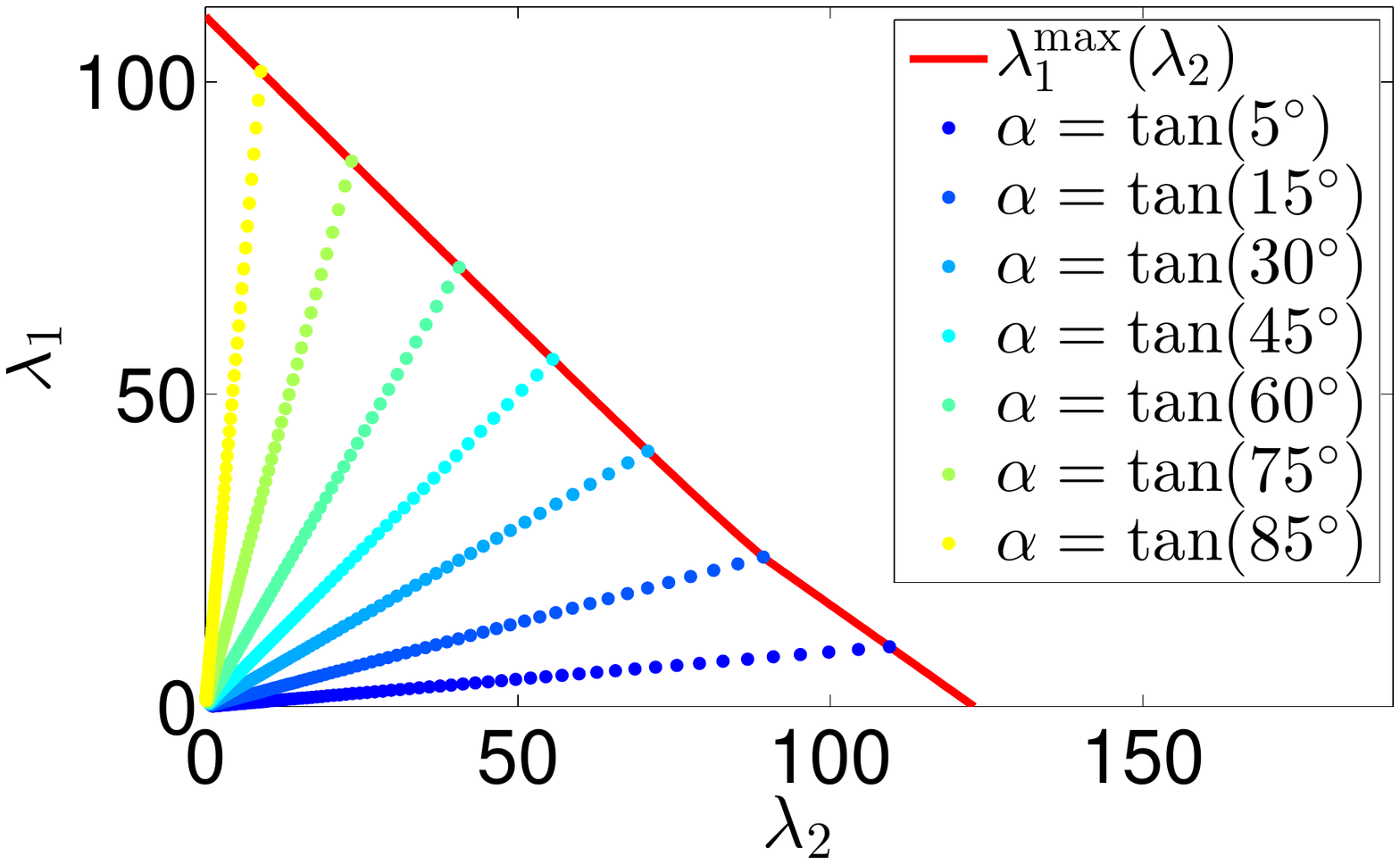}
		}
		\subfigure[$\alpha=\tan(5^{\circ})$] { \label{fig:ADNI_GMV_5}
			\includegraphics[width=0.22\columnwidth]{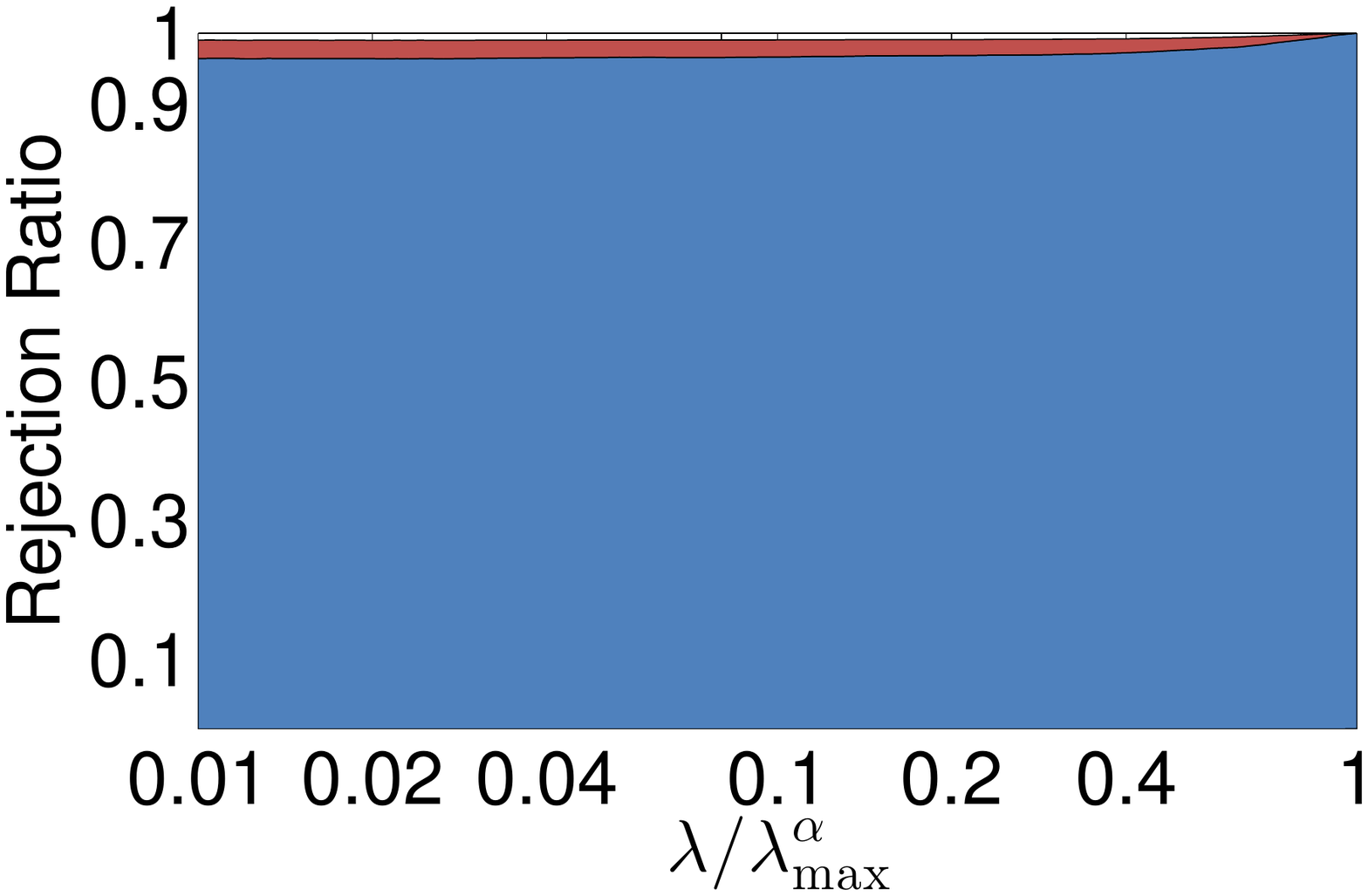}
		}
		\subfigure[$\alpha=\tan(15^{\circ})$] { \label{fig:ADNI_GMV_15}
			\includegraphics[width=0.22\columnwidth]{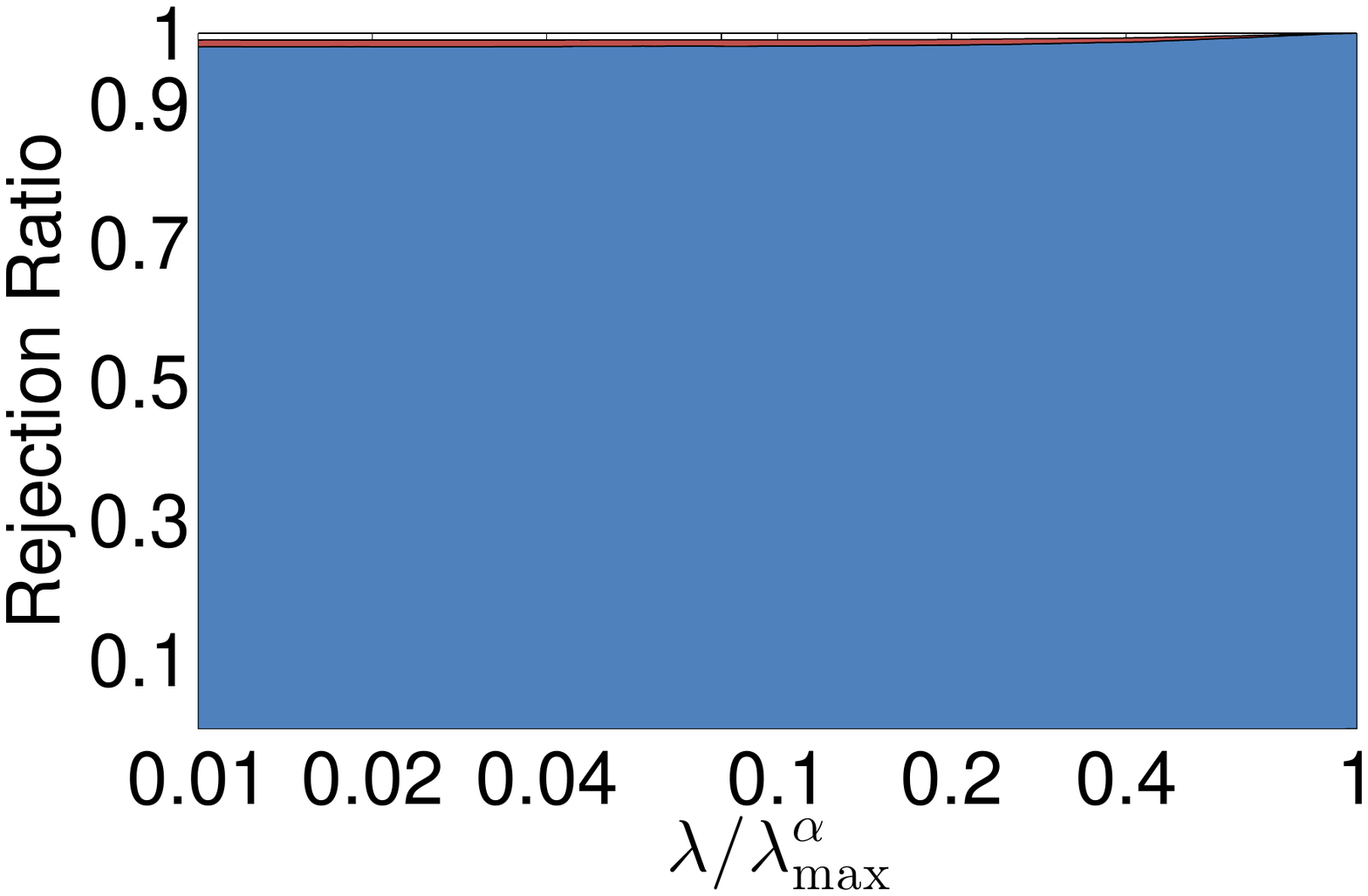}
		}
		\subfigure[$\alpha=\tan(30^{\circ})$] { \label{fig:ADNI_GMV_30}
			\includegraphics[width=0.22\columnwidth]{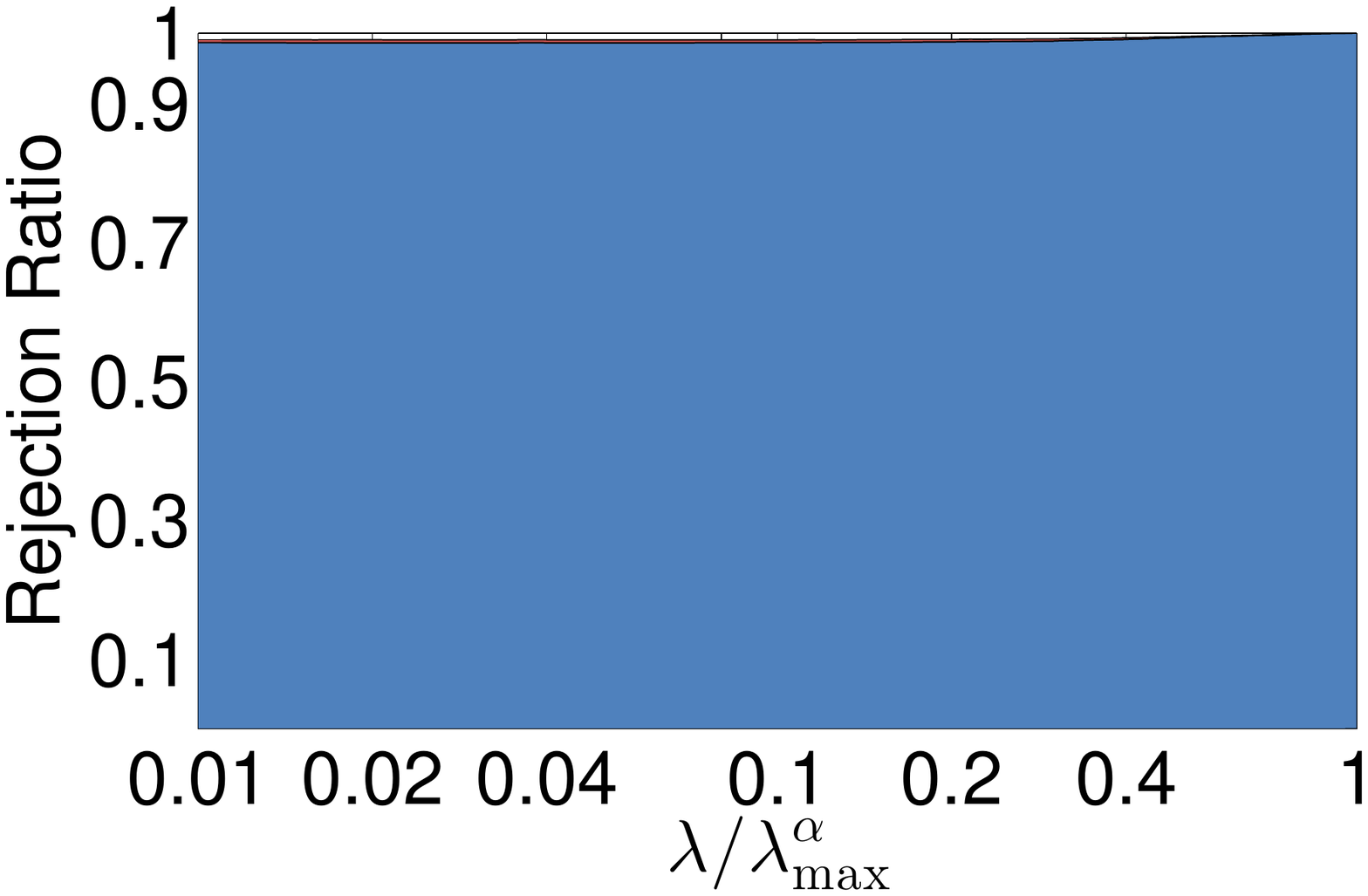}
		}\\[-0.3cm]
		\subfigure[$\alpha=\tan(45^{\circ})$] { \label{fig:ADNI_GMV_45}
			\includegraphics[width=0.22\columnwidth]{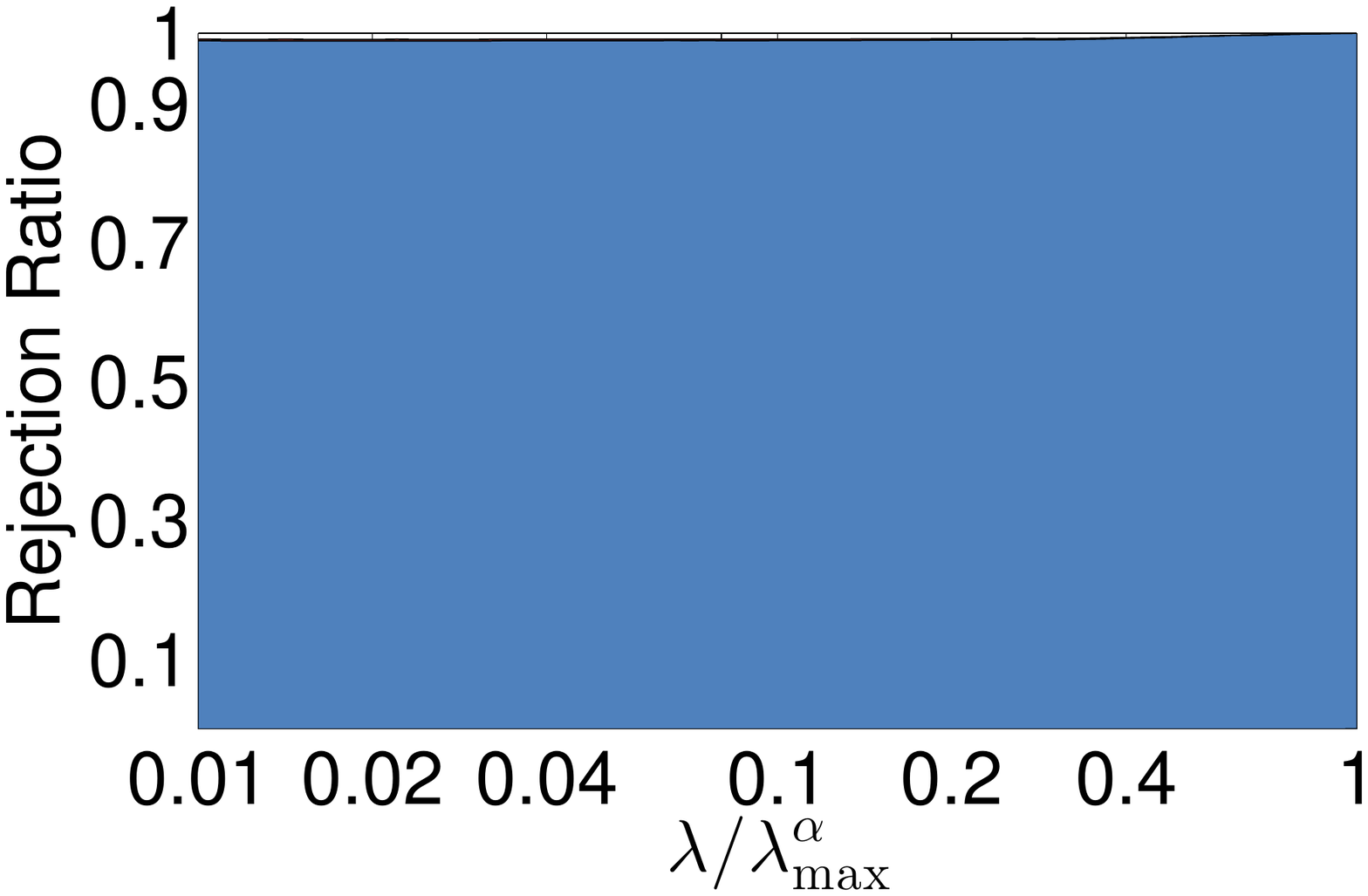}
		}
		\subfigure[$\alpha=\tan(60^{\circ})$] { \label{fig:ADNI_GMV_60}
			\includegraphics[width=0.22\columnwidth]{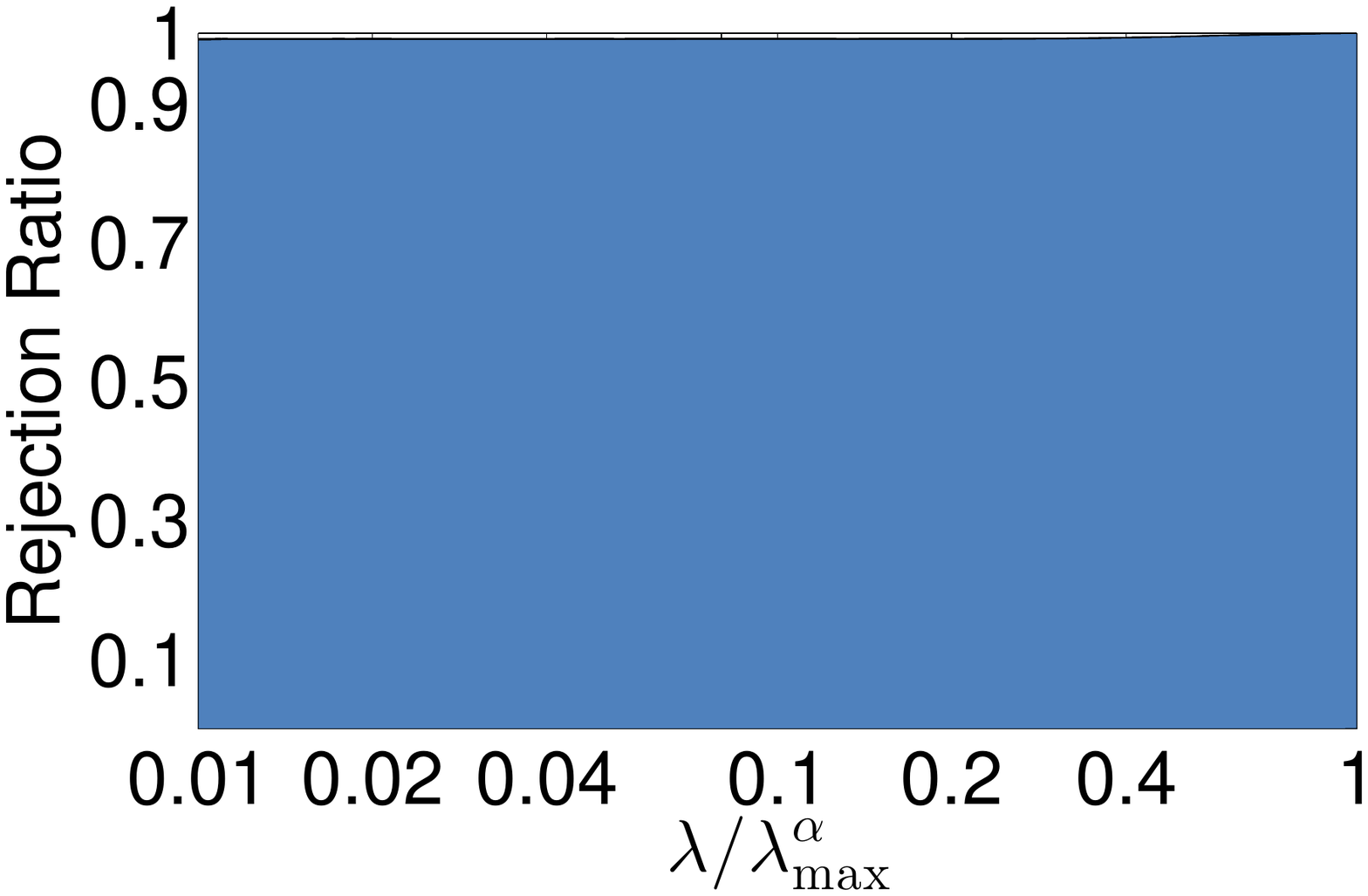}
		}
		\subfigure[$\alpha=\tan(75^{\circ})$] { \label{fig:ADNI_GMV_75}
			\includegraphics[width=0.22\columnwidth]{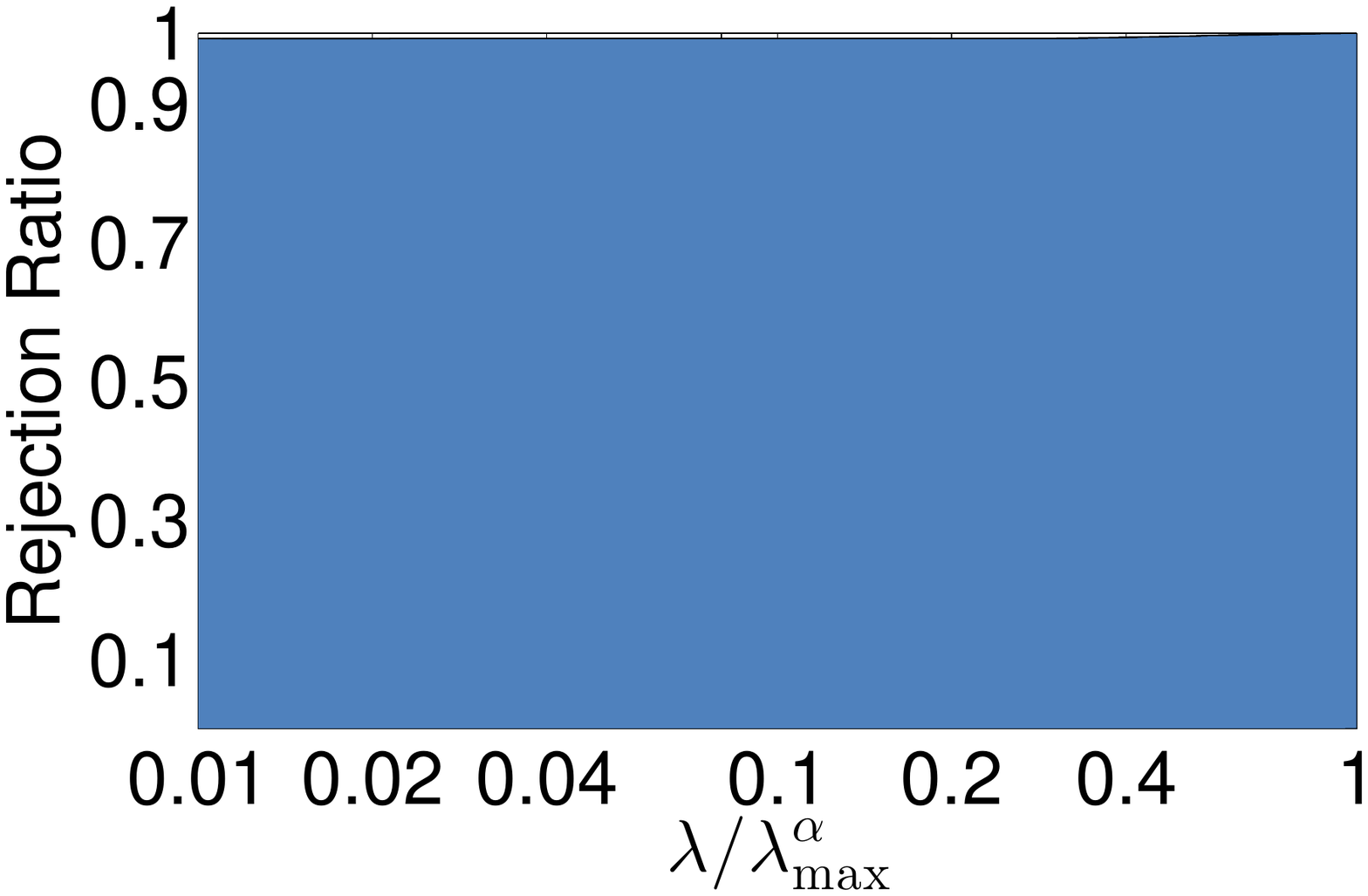}
		}
		\subfigure[$\alpha=\tan(85^{\circ})$] { \label{fig:ADNI_GMV_85}
			\includegraphics[width=0.22\columnwidth]{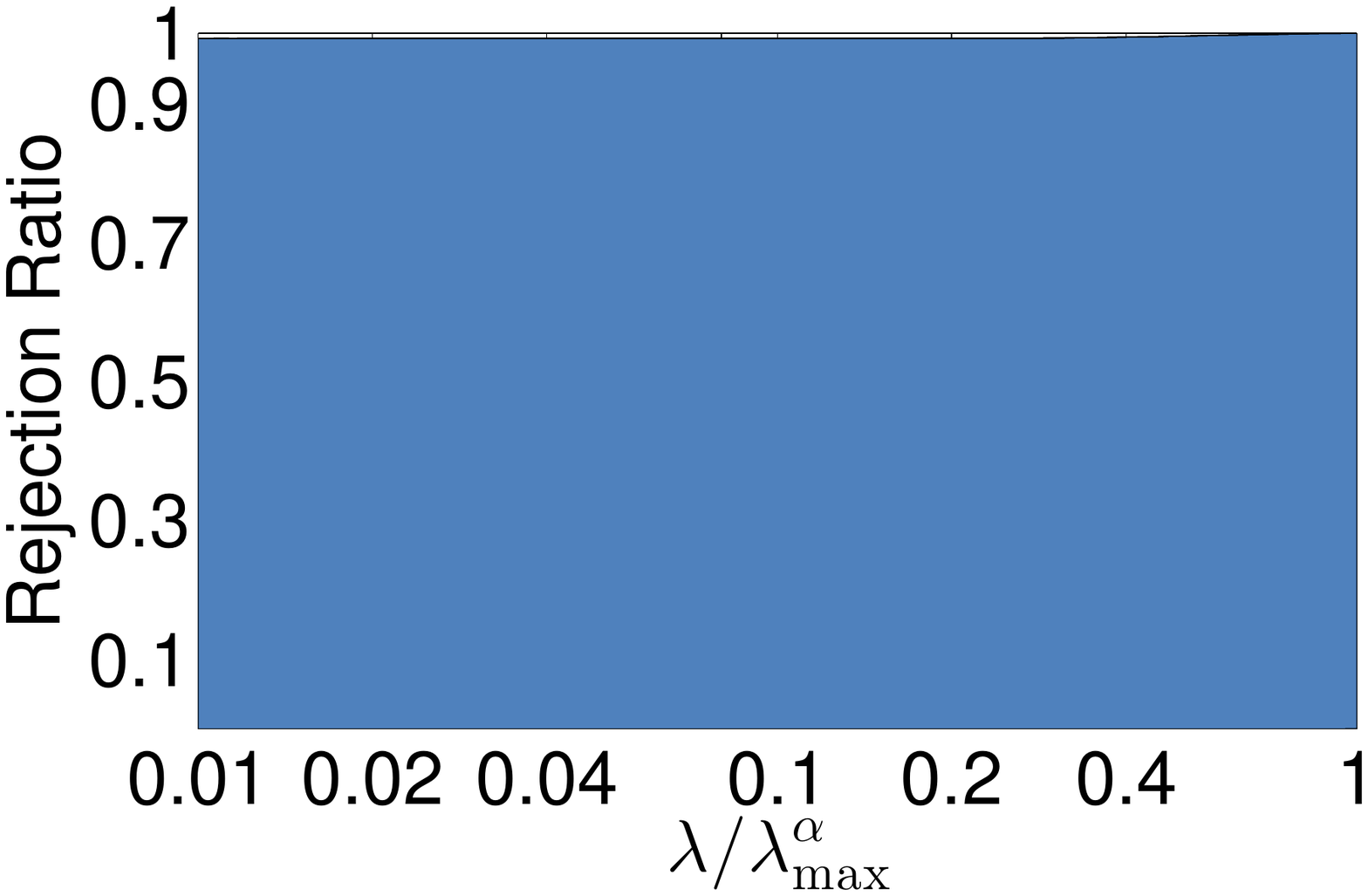}
		}
	}\\[-0.3cm]
	\caption{Rejection ratios of TLFre on the ADNI data set with grey matter volume as response. }
	\vspace{-0.15in}
	\label{fig:ADNI_GMV}
\end{figure*}

\begin{figure*}[th]
	\centering{
		\subfigure[] { \label{fig:ADNI_WMV_eff_region}
			\includegraphics[width=0.22\columnwidth]{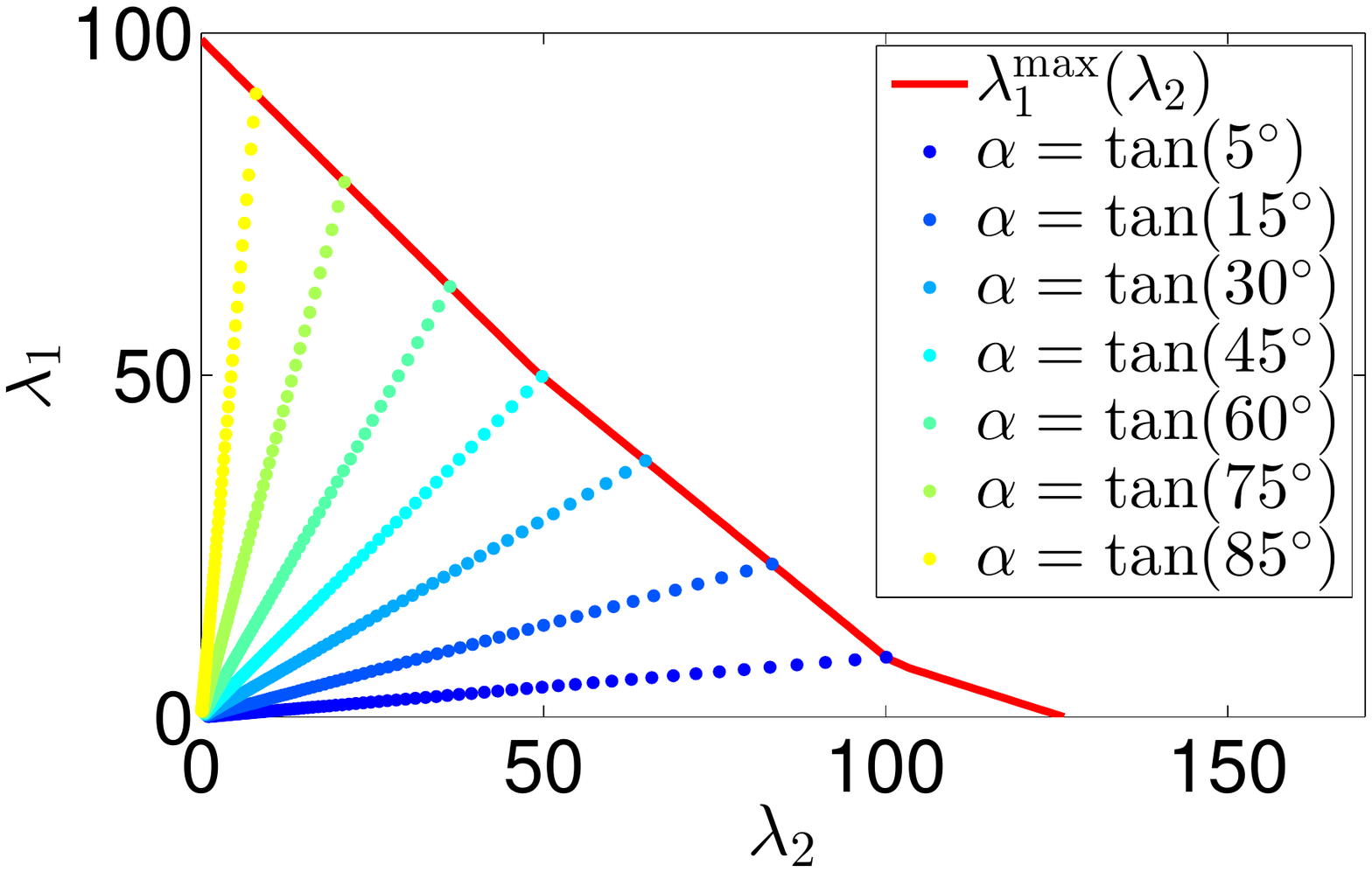}
		}
		\subfigure[$\alpha=\tan(5^{\circ})$] { \label{fig:ADNI_WMV_5}
			\includegraphics[width=0.22\columnwidth]{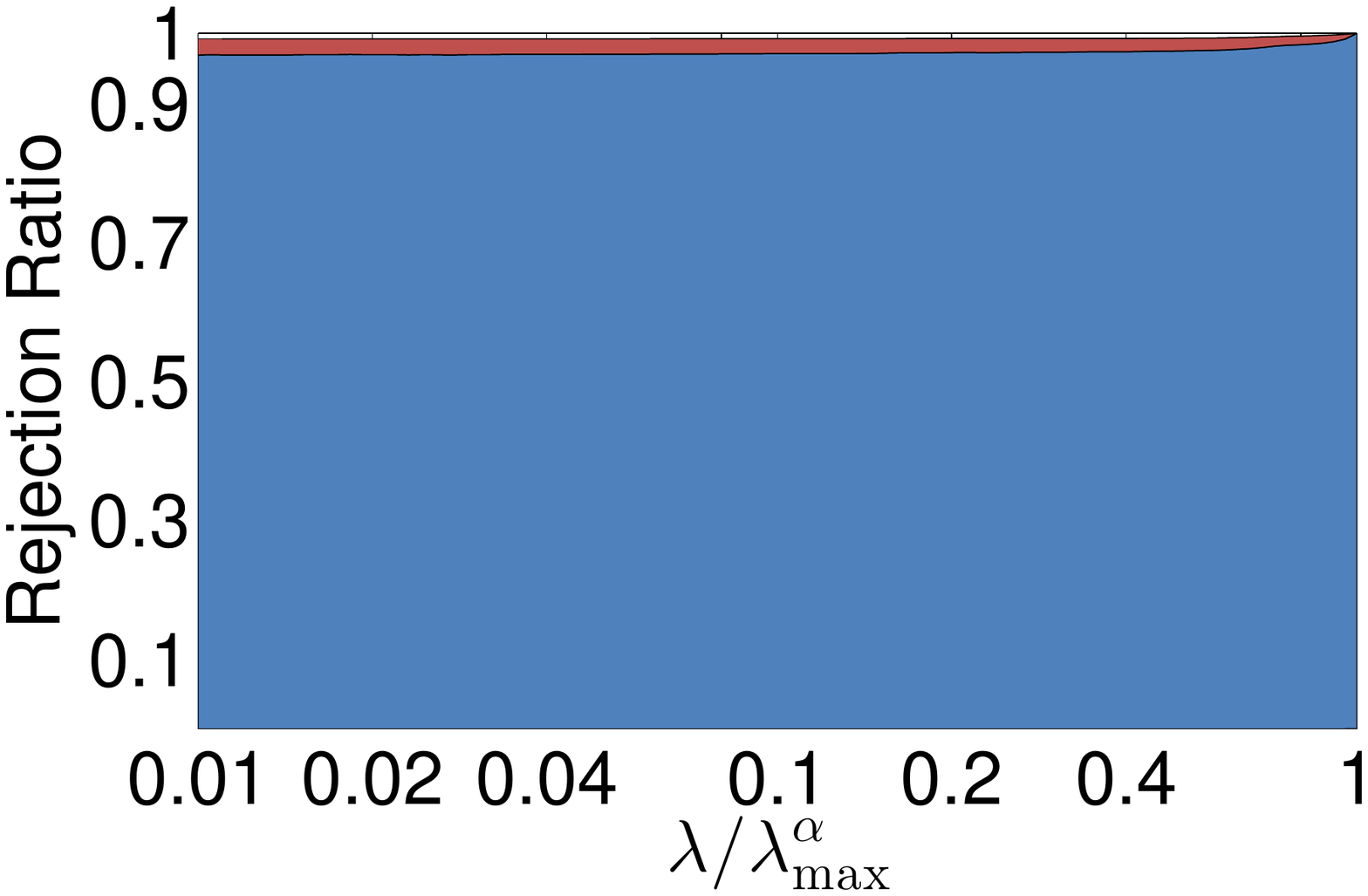}
		}
		\subfigure[$\alpha=\tan(15^{\circ})$] { \label{fig:ADNI_WMV_15}
			\includegraphics[width=0.22\columnwidth]{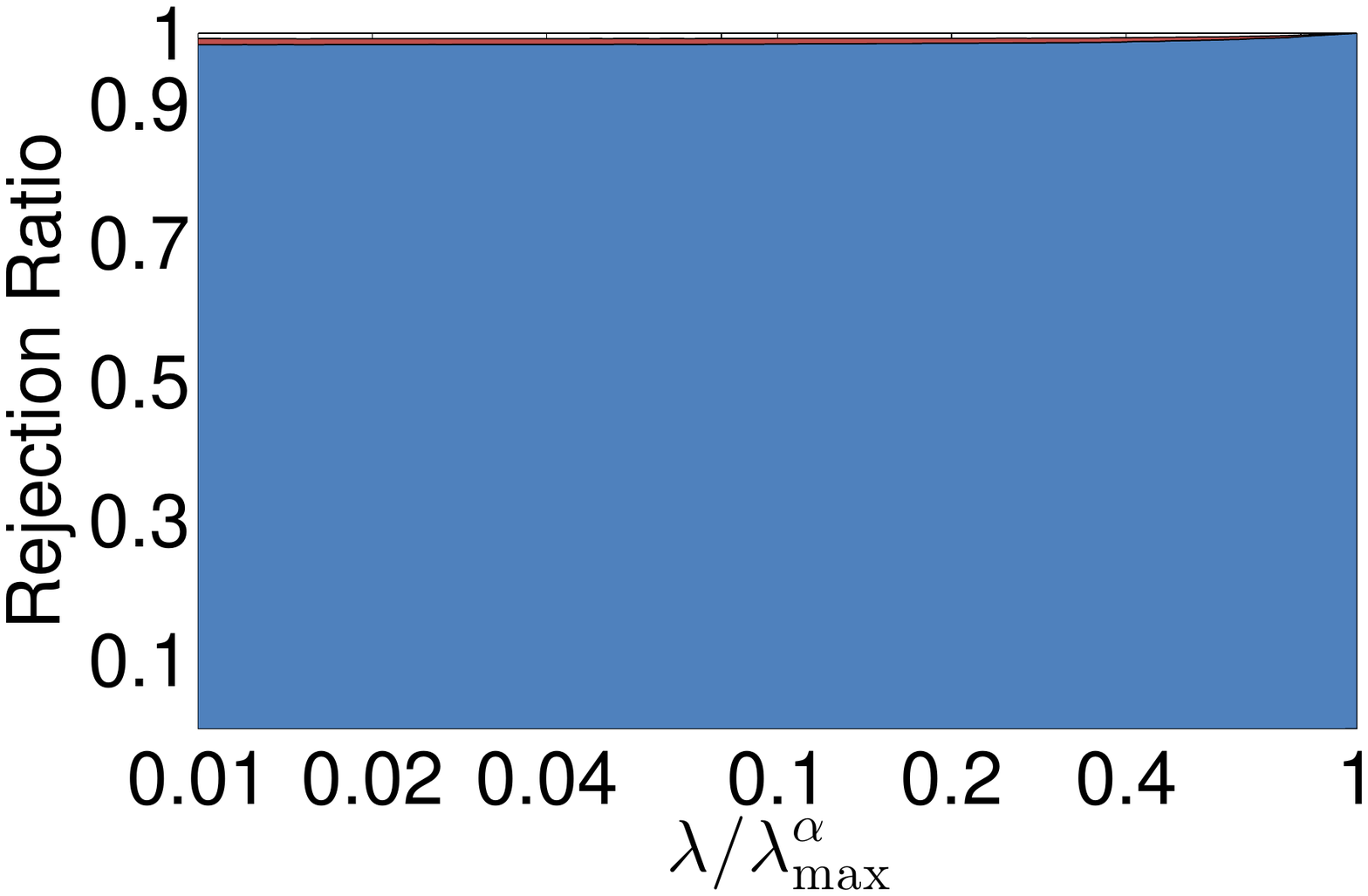}
		}
		\subfigure[$\alpha=\tan(30^{\circ})$] { \label{fig:ADNI_WMV_30}
			\includegraphics[width=0.22\columnwidth]{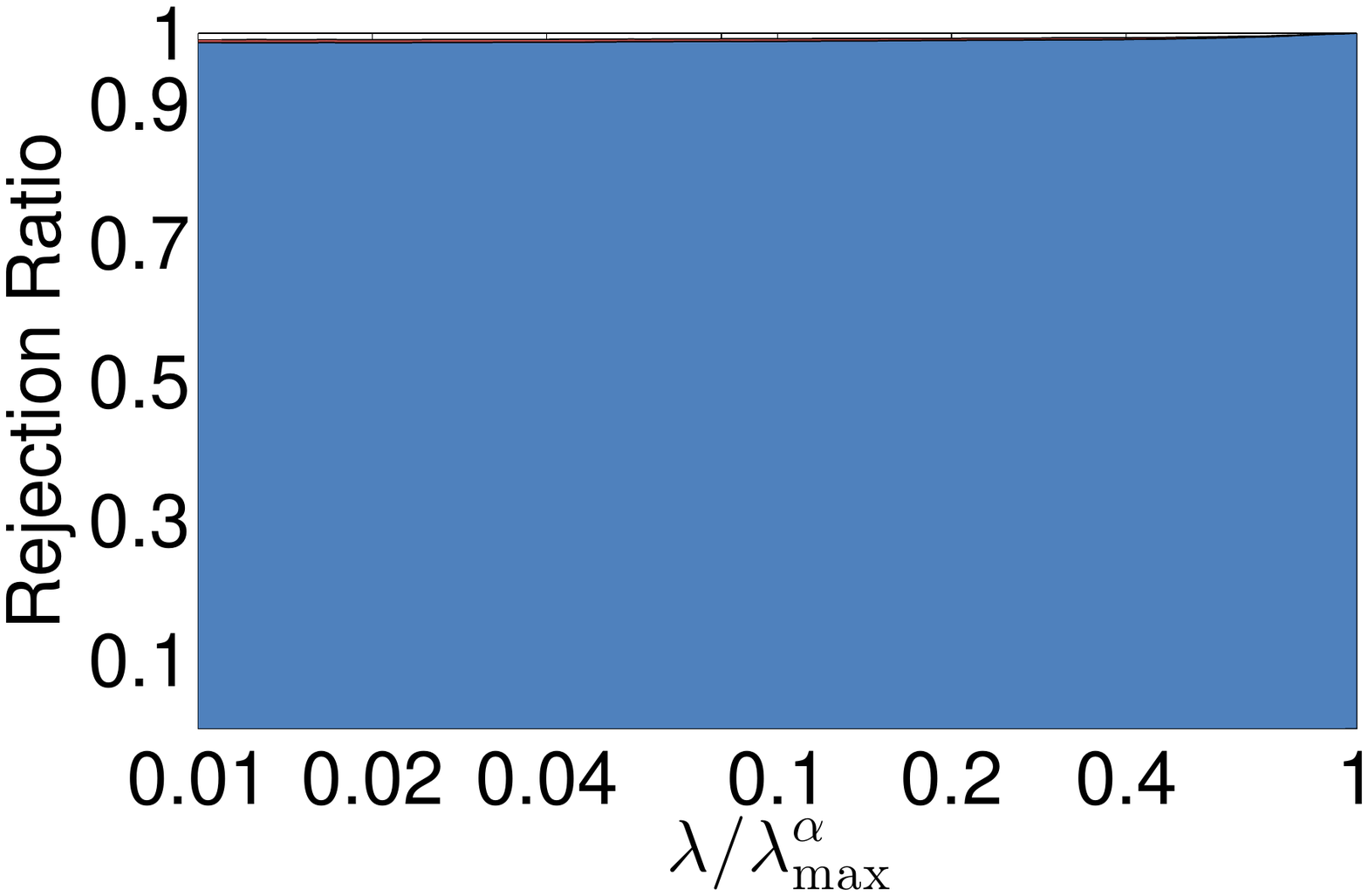}
		}\\[-0.35cm]
		\subfigure[$\alpha=\tan(45^{\circ})$] { \label{fig:ADNI_WMV_45}
			\includegraphics[width=0.22\columnwidth]{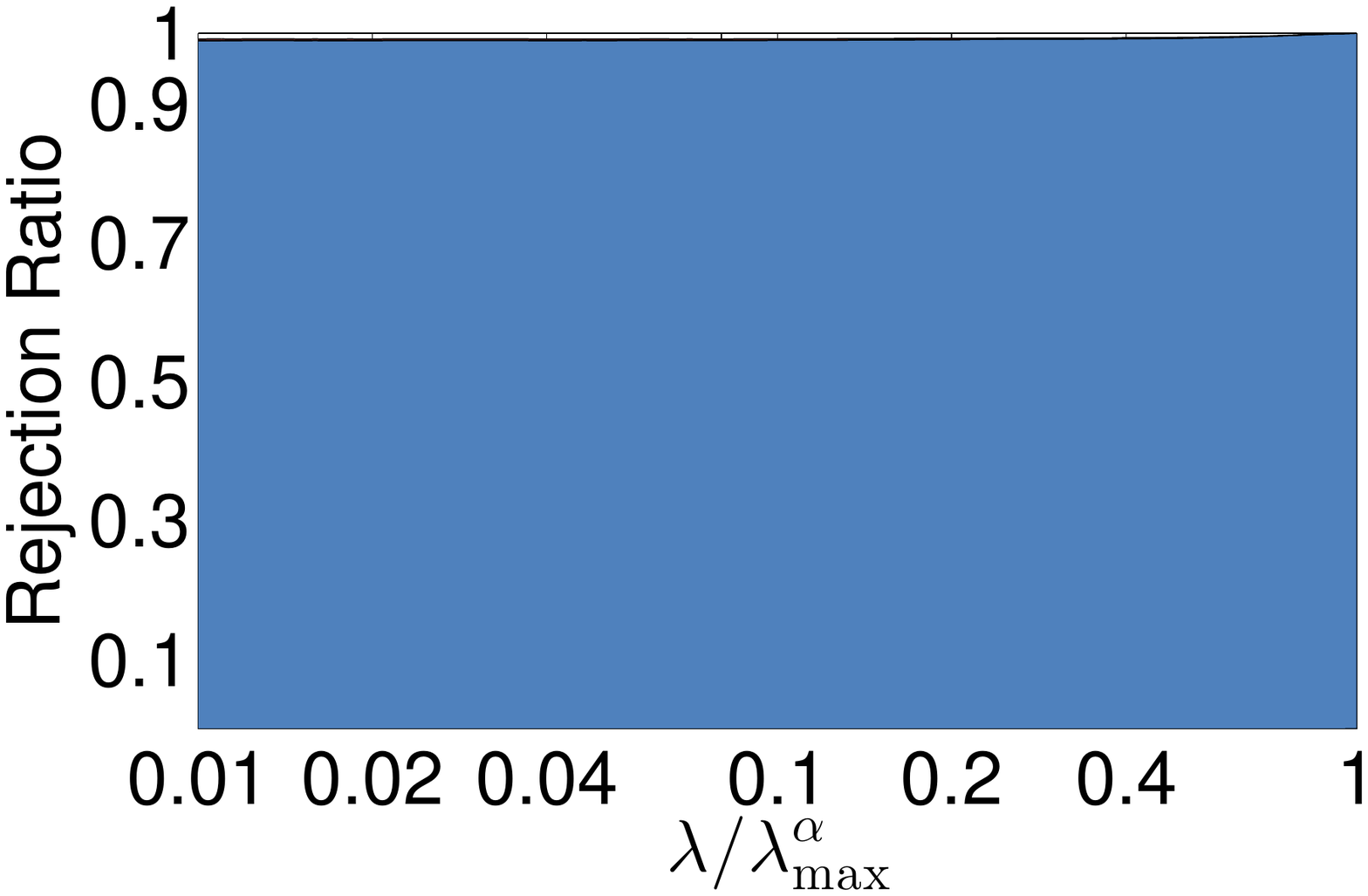}
		}
		\subfigure[$\alpha=\tan(60^{\circ})$] { \label{fig:ADNI_WMV_60}
			\includegraphics[width=0.22\columnwidth]{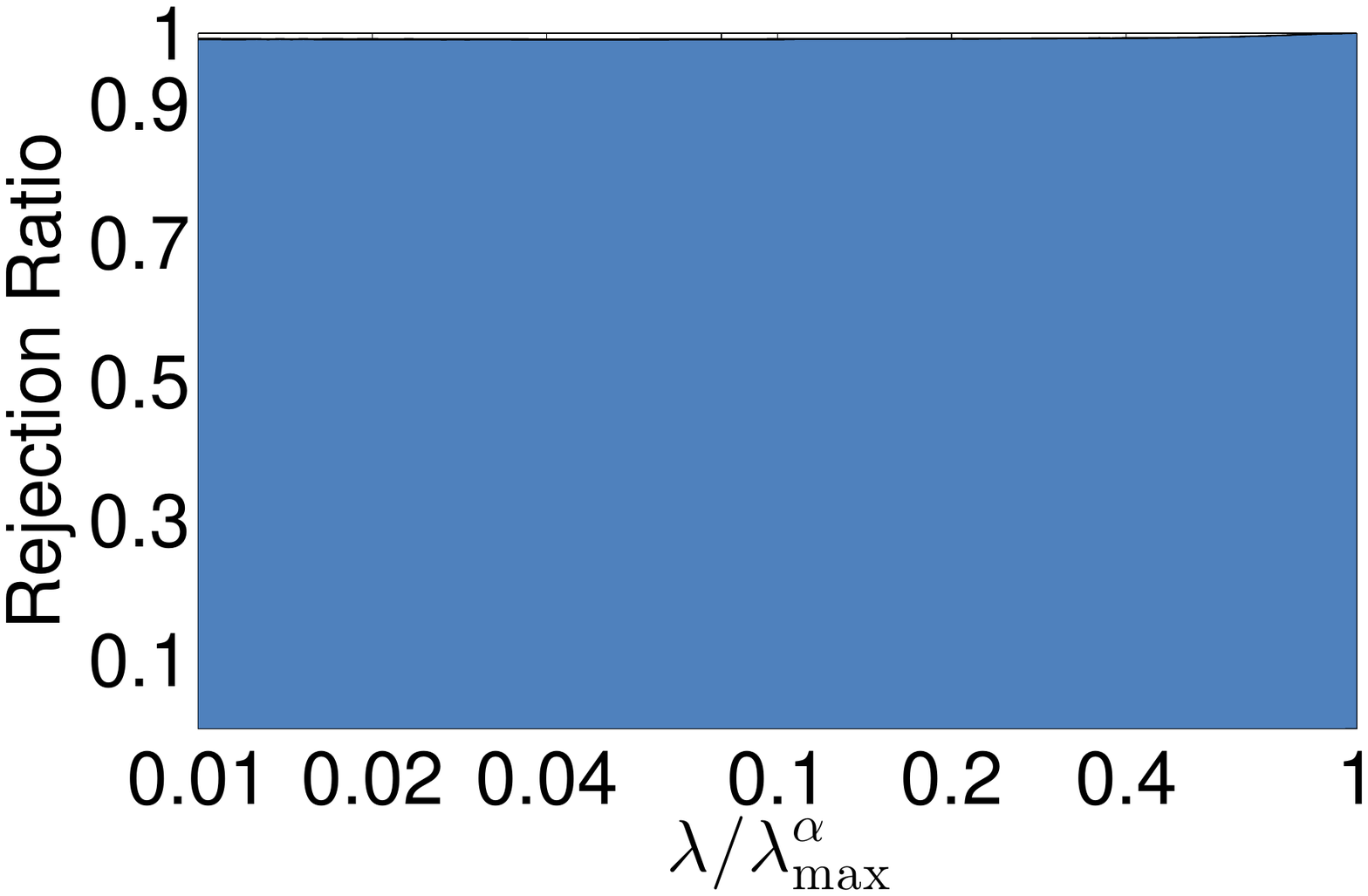}
		}
		\subfigure[$\alpha=\tan(75^{\circ})$] { \label{fig:ADNI_WMV_75}
			\includegraphics[width=0.22\columnwidth]{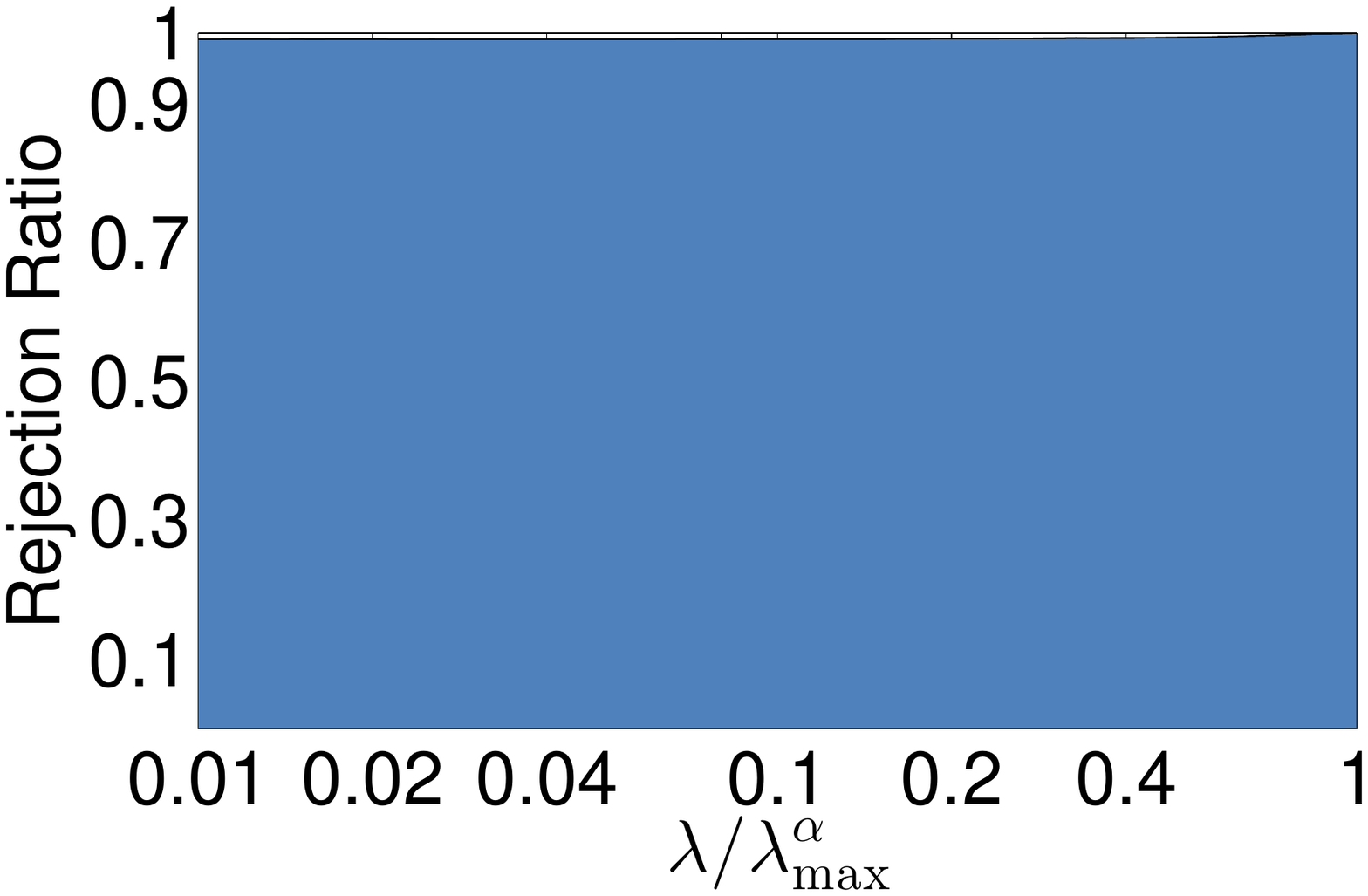}
		}
		\subfigure[$\alpha=\tan(85^{\circ})$] { \label{fig:ADNI_WMV_85}
			\includegraphics[width=0.22\columnwidth]{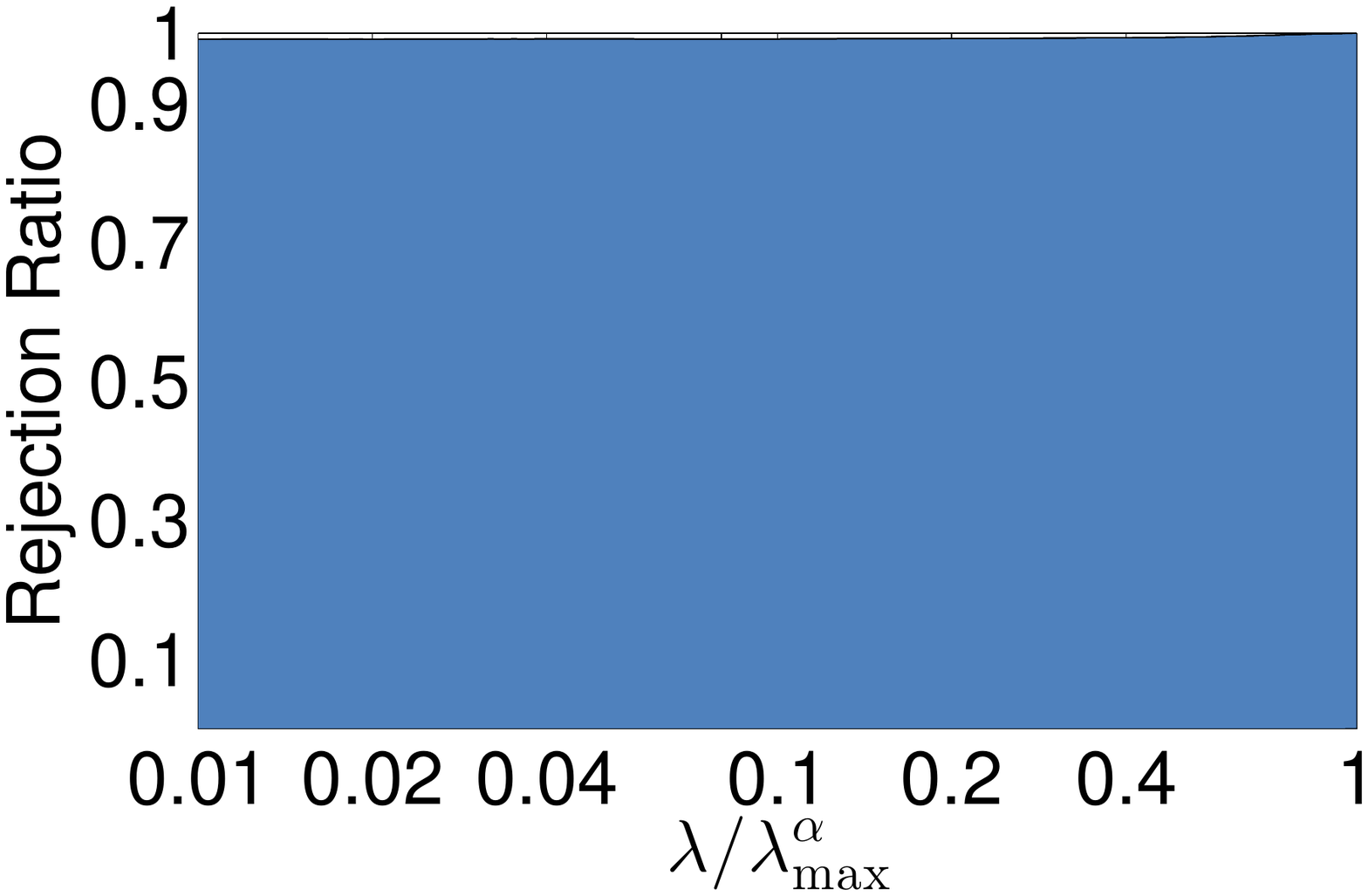}
		}
	}\\[-0.4cm]
	\caption{Rejection ratios of TLFre on the ADNI data set with white matter volume as response. }
	\label{fig:ADNI_WMV}
\end{figure*}

\setlength{\tabcolsep}{.18em}
\begin{table}
	\begin{center}
		\caption{Running time (in seconds) for solving SGL along a sequence of $100$ tuning parameter values of $\lambda$ equally spaced on the logarithmic scale of ${\lambda}/{\lambda_{\rm max}^{\alpha}}$ from $1.0$ to $0.01$ by (a): the solver \cite{SLEP} without screening; (b): the solver combined with TLFre. We perform experiments on the ADNI data sets. The response vectors are GMV and WMV, respectively.
		}\label{table:TLFre_runtime_real}\vspace{3mm}
		\begin{scriptsize}
			\def\arraystretch{1.25}
			\begin{tabular}{ l c|c|c|c|c|c|c|c| }
				\cline{2-9}
				& \multicolumn{1}{|c|}{$\alpha$} &  $\tan(5^{\circ})$ & $\tan(15^{\circ})$ &  $\tan(30^{\circ})$ & $\tan(45^{\circ})$ & $\tan(60^{\circ})$ & $\tan(75^{\circ})$ & $\tan(85^{\circ})$ \\
				\cline{2-9}\\ [-2.5ex]\hline
				\multicolumn{1}{|r|}{\multirow{4}{*}{ADNI+GMV}}  & \multicolumn{1}{|c|}{solver} & 30652.56 & 30755.63 & 30838.29  & 31096.10 & 30850.78 & 30728.27 & 30572.35 \\ \cline{2-9}
				\multicolumn{1}{|r|}{}  & \multicolumn{1}{|c|}{TLFre} & 64.08 & 64.56 & 64.96 & 65.00 & 64.89 & 65.17 & 65.05\\\cline{2-9}
				\multicolumn{1}{|r|}{} & \multicolumn{1}{|c|}{TLFre+solver} & 372.04 & 383.17 & 386.80 & 402.72 & 391.63 & 385.98 & 382.62\\\cline{2-9}
				\multicolumn{1}{|r|}{}  & \multicolumn{1}{|c|}{\textbf{speedup}} & \textbf{82.39} & \textbf{80.27} & \textbf{79.73} & \textbf{77.22} & \textbf{78.78} & \textbf{79.61} & \textbf{79.90} \\\hline\hline
				\multicolumn{1}{|r|}{\multirow{4}{*}{ADNI+WMV}}  & \multicolumn{1}{|c|}{solver} & 29751.27 & 29823.15 & 29927.52 & 30078.62 & 30115.89 & 29927.58 & 29896.77 \\\cline{2-9}
				\multicolumn{1}{|r|}{}  & \multicolumn{1}{|c|}{TLFre} & 62.91 & 63.33 & 63.39 & 63.99 & 64.13 & 64.31 & 64.36\\\cline{2-9}
				\multicolumn{1}{|r|}{}  & \multicolumn{1}{|c|}{TLFre+solver} & 363.43 & 364.78 & 386.15 & 393.03 & 395.87 & 400.11 & 399.48\\\cline{2-9}
				\multicolumn{1}{|r|}{} & \multicolumn{1}{|c|}{\textbf{speedup}} & \textbf{81.86} & \textbf{81.76} & \textbf{77.50} & \textbf{76.53} & \textbf{76.08} & \textbf{74.80} & \textbf{74.84} \\\hline
			\end{tabular}
		\end{scriptsize}
	\end{center}
	\vspace{-0.1in}
	
\end{table}

We perform experiments on the Alzheimer's Disease Neuroimaging Initiative (ADNI) data set (\url{http://adni.loni.usc.edu/}). The data matrix consists of $747$ samples with $426040$  \mbox{single} nucleotide polymorphisms (SNPs), which are divided into $94765$ groups. The response vectors are the grey matter volume (GMV) and white matter volume (WMV), respectively. 

The figures in the upper left corner of \figref{fig:ADNI_GMV} and \figref{fig:ADNI_WMV} show the plots of $\lambda_1^{\rm max}(\lambda_2)$ (see Corollary \ref{cor:lambdamx_SGL}) and the sampled parameter values of $\alpha$ and $\lambda$. The other figures present the rejection ratios of (\ref{rule:L1}) and (\ref{rule:L2}) by blue and red regions, respectively. We can see that almost all of the inactive groups/features are discarded by TLFre. The rejection ratios of $r_1+r_2$ are very close to $1$ in all cases. Table \ref{table:TLFre_runtime_real} shows that TLFre leads to a very significant speedup (about $80$ times). In other words, the solver without screening needs about eight and a half hours to solve the $100$ SGL problems for each value of $\alpha$. However, combined with TLFre, the solver needs only six to eight minutes. Moreover, we can observe that the computational cost of TLFre is negligible compared to that of the solver without screening. This demonstrates the efficiency of TLFre.

\subsection{DPC for Nonnegative Lasso}\label{ssec:exp_nnlasso}

In this experiment, we evaluate the performance of DPC on two synthetic data sets and six real data sets. We integrate DPC with the solver \cite{SLEP} to solve the nonnegative Lasso problem along a sequence of $100$ parameter values of $\lambda$ equally spaced on the logarithmic scale of $\lambda/\lambda_{\textup{max}}$ from $1.0$ to $0.01$. The two synthetic data sets are the same as the ones we used in Section \ref{sssec:exp_syn_SGL}. To construct $\beta^*$, we first randomly select $10$ percent of features. The corresponding components of $\beta^*$ are populated from a standard Gaussian and the remaining ones are set to 0. We list the six real data sets and the corresponding experimental settings as follows.
\begin{enumerate}
	\item[a)] \textbf{Breast Cancer data set} \citep{West2001,Shevade2003}: this data set contains $7129$ gene expression values of $44$ tumor samples (thus the data matrix $\mathbf{X}$ is of $44\times 7129$). The response vector $\mathbf{y}\in\{1,-1\}^{44}$
	contains the binary label of each sample.
	\item[b)] \textbf{Leukemia data set} \citep{Armstrong2002}: this data set contains $11225$ gene expression values of $52$ samples ($\mathbf{X}\in\mathbb{R}^{52\times 11225}$). The response vector $\mathbf{y}$ contains the binary label of each sample. 
	\item[c)] \textbf{Prostate Cancer data set} \citep{Petricoin2002}: this data set contains $15154$ measurements of 132 patients ($\mathbf{X}\in\mathbb{R}^{132\times 15154}$). By protein mass spectrometry, the features are indexed by time-of-flight values, which are related to the mass over charge ratios of the constituent proteins in the blood. The response vector $\mathbf{y}$ contains the binary label of each sample. 
	\item[d)] \textbf{PIE face image data set} \citep{Sim2003,Cai2007}: this data set contains $11554$ gray face images (each has $32\times 32$ pixels) of $68$ people, taken under different poses, illumination conditions and expressions. In each trial, we first randomly pick an image as the response $\mathbf{y}\in\mathbb{R}^{1024}$, and then use the	remaining images to form the data matrix $\mathbf{X}\in\mathbb{R}^{1024\times 11553}$. We run $100$ trials and report the average performance of DPC.
	\item[e)] \textbf{MNIST handwritten digit data set} \citep{Lecun1998}: this data set contains grey images of scanned handwritten digits (each has $28\times 28$ pixels). The training and test sets contain $60,000$ and $10,000$ images, respectively. We first randomly select $5000$ images for each digit from the training set and get a data matrix ${\bf X}\in\mathbb{R}^{784\times 50000}$. Then, in each trial, we randomly select an image from the testing set as the response ${\bf y}\in\mathbb{R}^{784}$. We run $100$ trials and report the average performance of the screening rules.
	\item[f)] \textbf{Street View House Number (SVHN) data set} \citep{Netzer2001}: this data set contains color images of street view house numbers (each has $32\times 32$ pixels), including $73257$ images for training and $26032$ for testing. In each trial, we first randomly select an image as the response ${\bf y}\in\mathbb{R}^{3072}$, and then use the remaining ones to form the data matrix ${\bf X}\in\mathbb{R}^{3072\times 99288}$. We run $20$ trials and report the average performance.
\end{enumerate}

We present the \emph{rejection ratios}---the ratio of the number of inactive features identified by DPC to the actual number of inactive features---in \figref{fig:DPC_rej_ratio}. We also report the running time of the solver with and without DPC, the time for running DPC, and the corresponding \emph{speedup} in Table \ref{table:DPC_runtime}.

\figref{fig:DPC_rej_ratio} shows that DPC is very effective in identifying the inactive features even for small parameter values: the rejection ratios are very close to $100\%$ for the entire sequence of parameter values on the eight data sets. Table \ref{table:DPC_runtime} shows that DPC leads to a very significant speedup on all the data sets. 
Take MNIST as an example. The solver without DPC takes $50$ minutes to solve the $100$ nonnegative Lasso problems. However, combined with DPC, the solver only needs $10$ seconds. The speedup gained by DPC on the MNIST data set is thus more than $300$ times. Similarly, on the SVHN data set, the running time for solving the $100$ nonnegative Lasso problems by the solver without DPC is close to seven hours. However, combined with DPC, the solver takes less than two minutes to solve all the $100$ nonnegative Lasso problems, leading to a speedup about $230$ times. Moreover, we can also observe that the computational cost of DPC is very low---which is negligible compared to that of the solver without DPC.

\begin{figure*}[t]
	\centering{
		\subfigure[Synthetic 1] { \label{fig:syn_no_cor_nnlasso}
			\includegraphics[width=0.22\columnwidth]{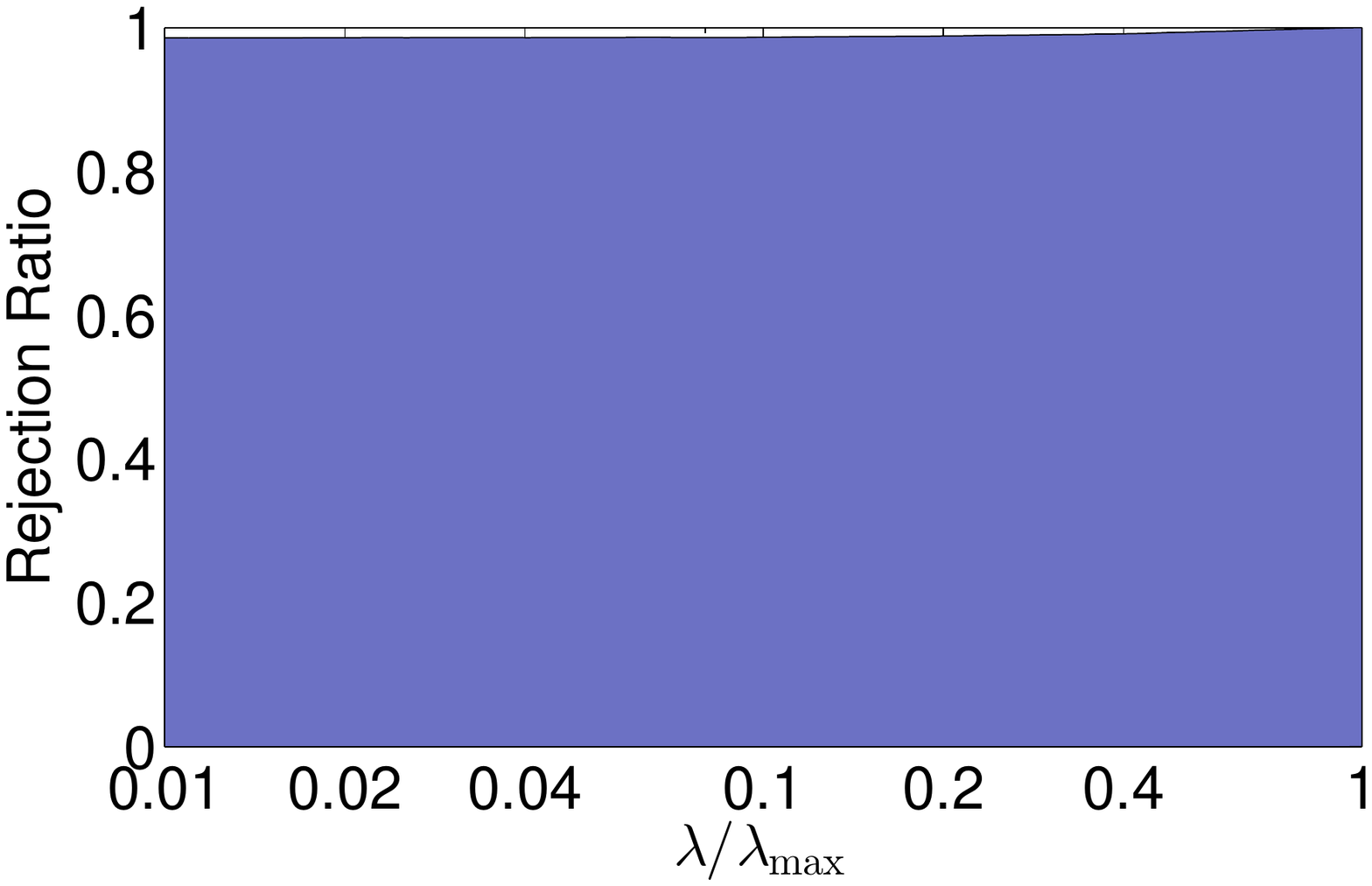}
		}
		\subfigure[Synthetic 2] { \label{fig:syn_pos_vcor_nnlasso}
			\includegraphics[width=0.22\columnwidth]{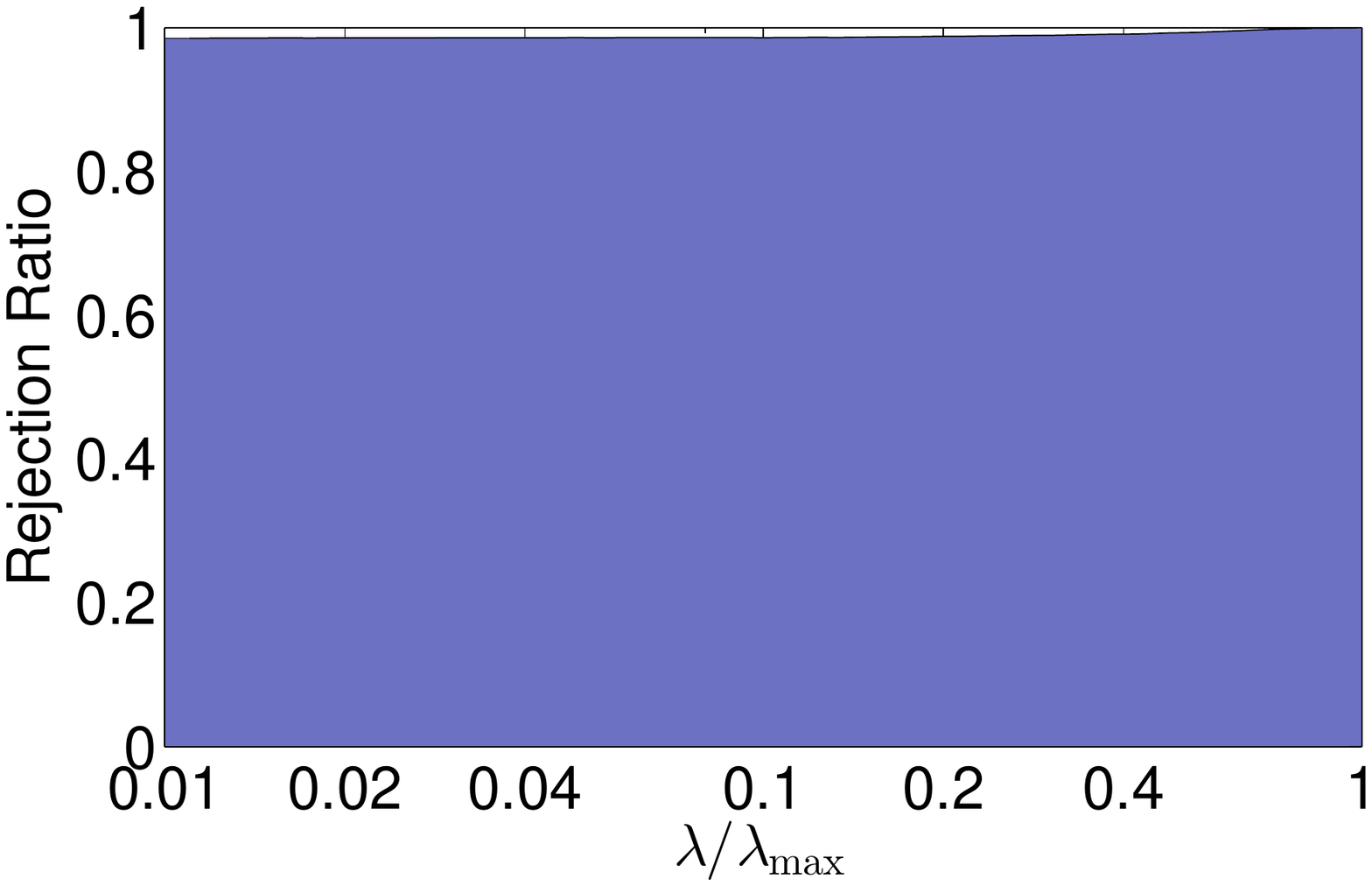}
		}
		\subfigure[Breast Cancer] { \label{fig:breastcancer}
			\includegraphics[width=0.22\columnwidth]{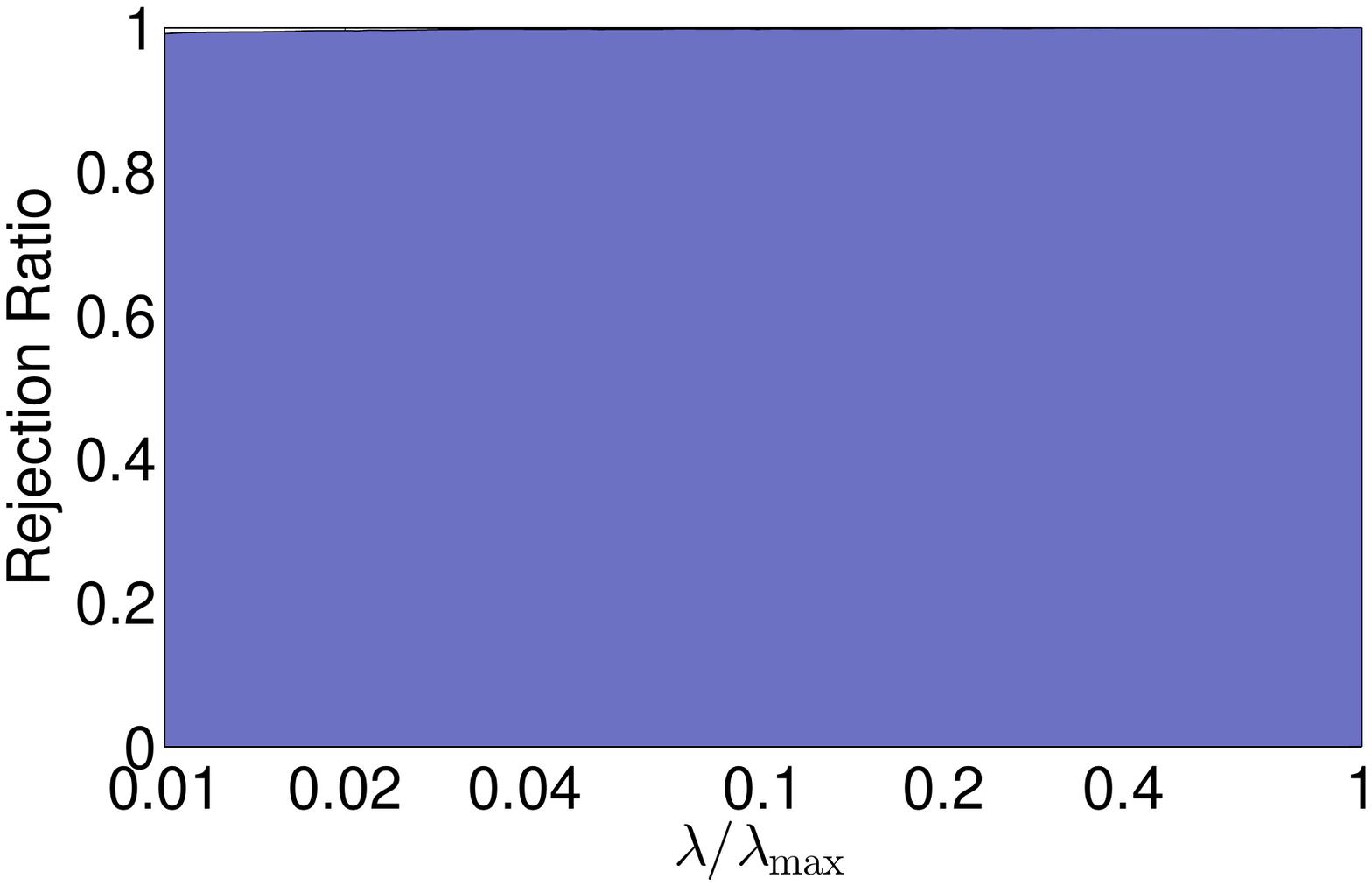}
		}
		\subfigure[Leukemia] { \label{fig:leukemia}
			\includegraphics[width=0.22\columnwidth]{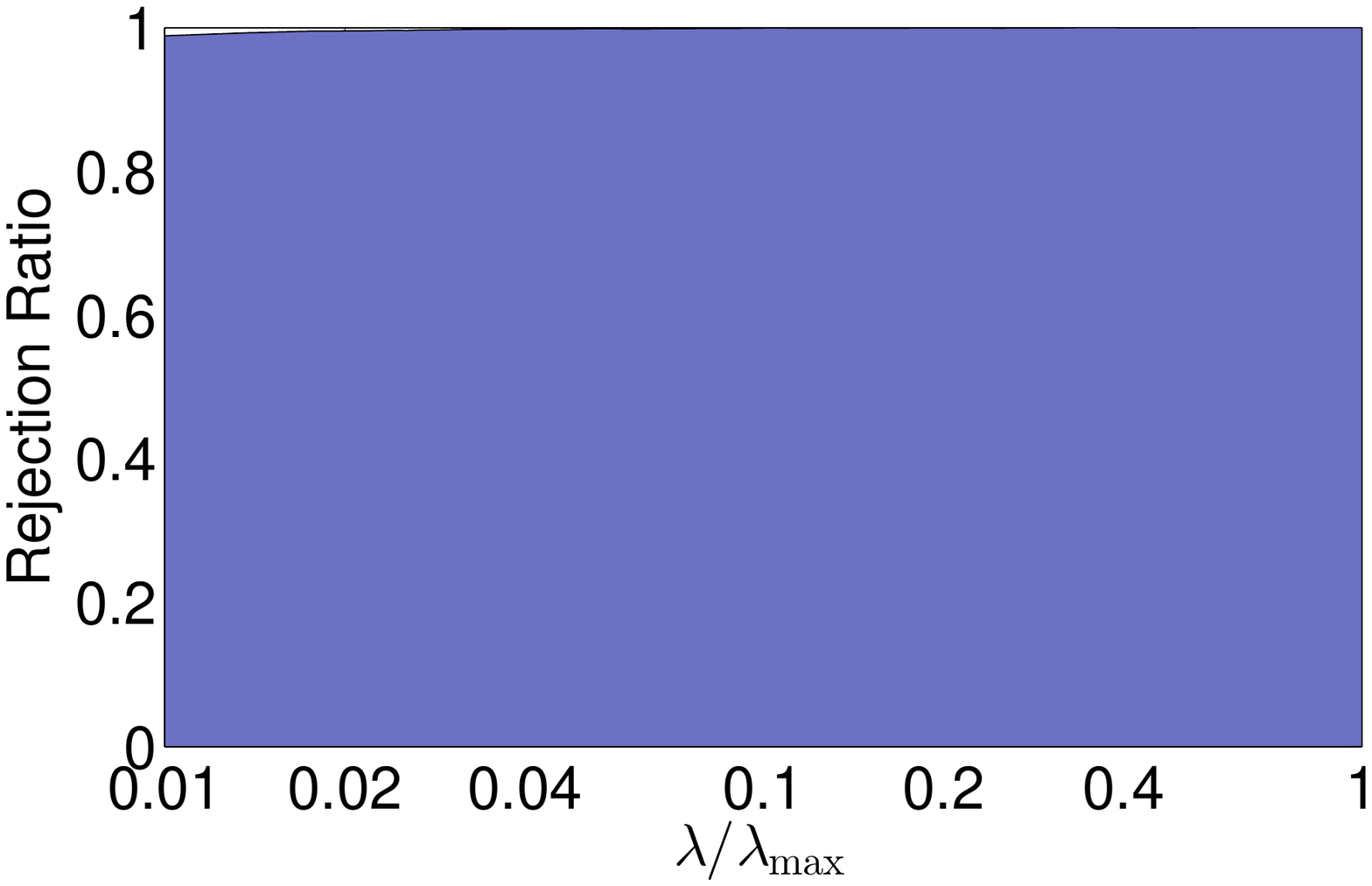}
		}\\[-0.3cm]
		\subfigure[Prostate Cancer] { \label{fig:prostatecancer}
			\includegraphics[width=0.22\columnwidth]{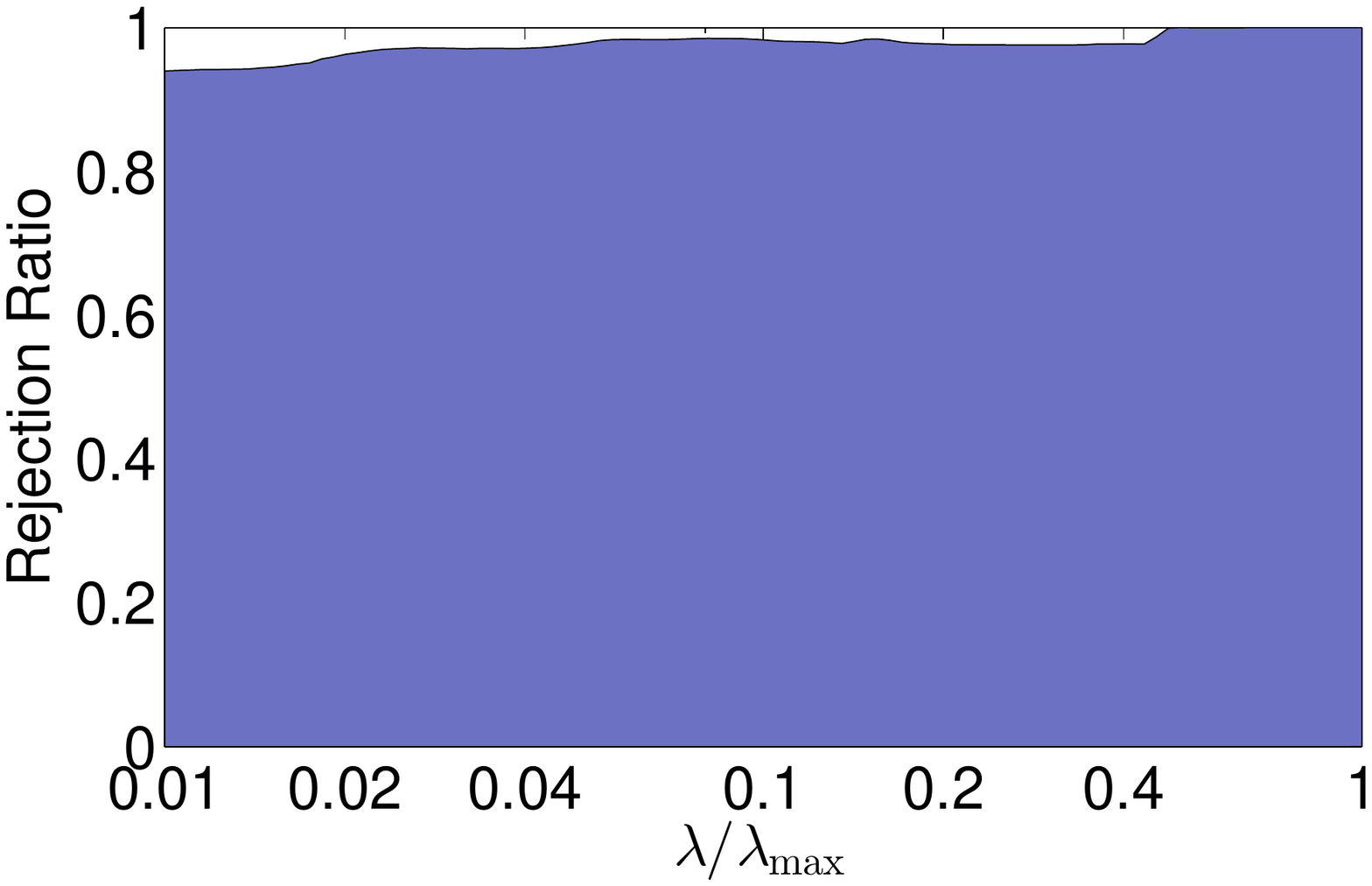}
		}
		\subfigure[PIE] { \label{fig:pie}
			\includegraphics[width=0.22\columnwidth]{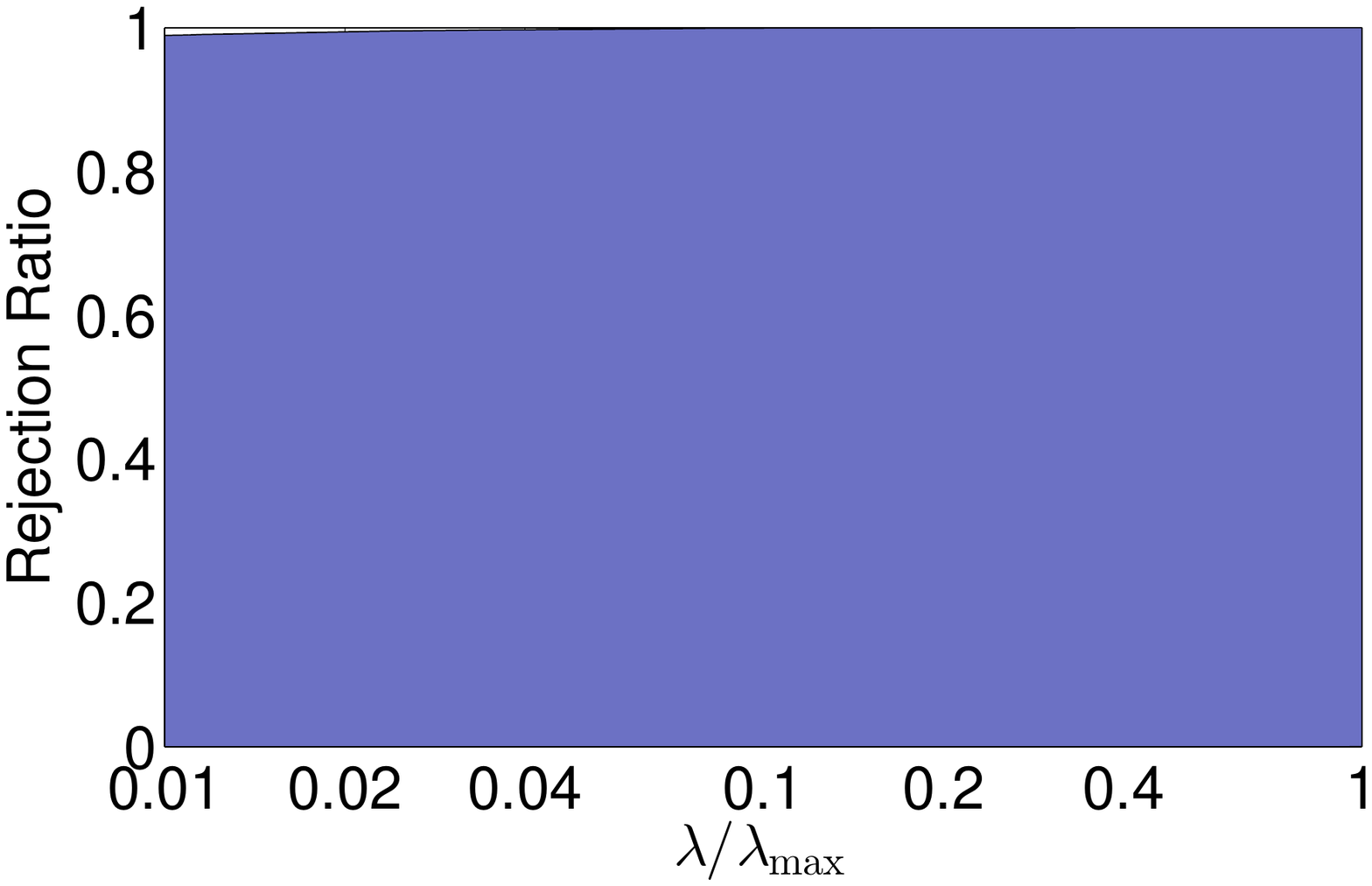}
		}
		\subfigure[MNIST] { \label{fig:mnist}
			\includegraphics[width=0.22\columnwidth]{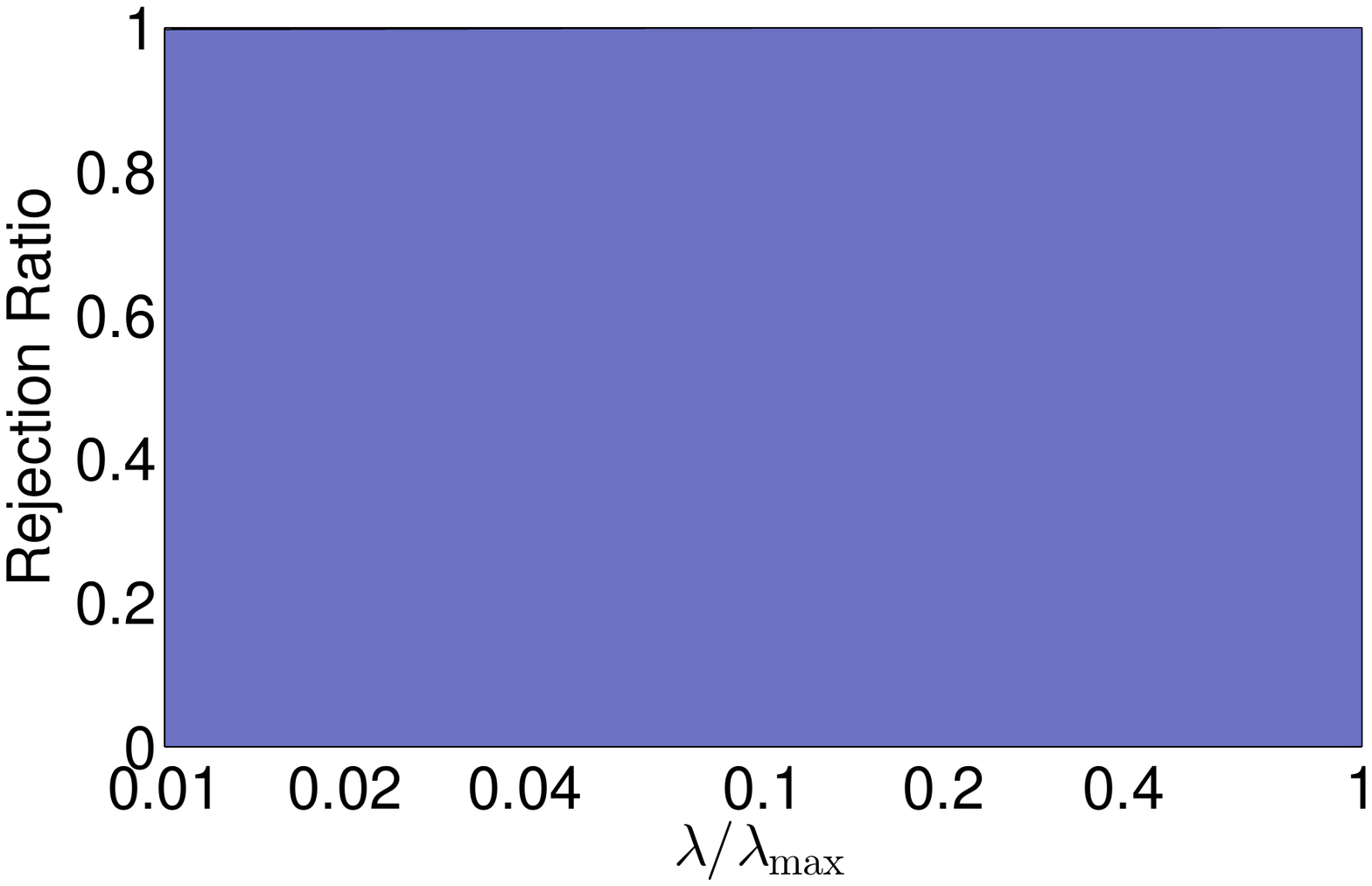}
		}
		\subfigure[SVHN] { \label{fig:svhn}
			\includegraphics[width=0.22\columnwidth]{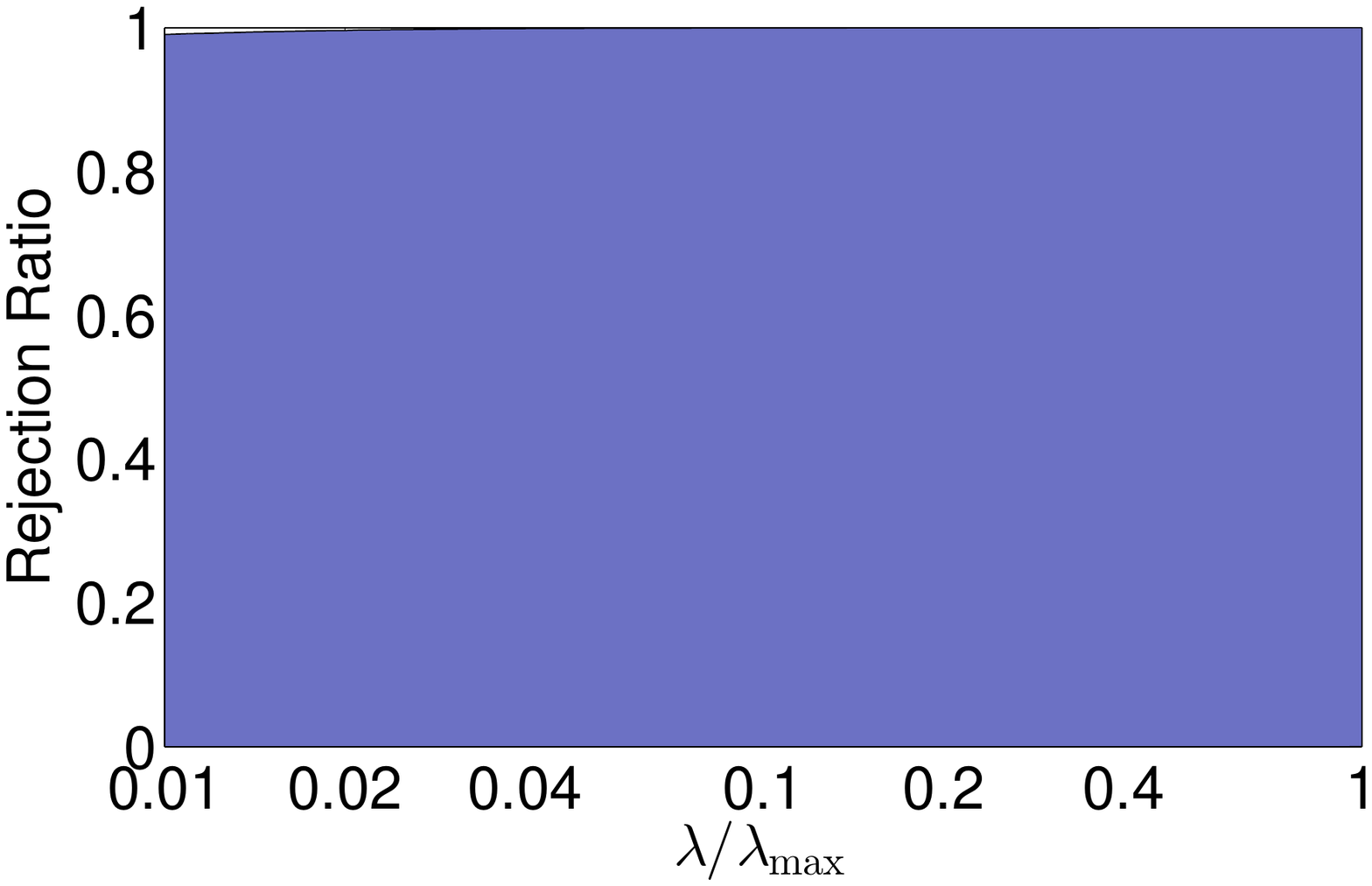}
		}
	}\\[-0.3cm]
	\caption{Rejection ratios of DPC on eight data sets. }
	\vspace{-0.15in}
	\label{fig:DPC_rej_ratio}
\end{figure*}

\setlength{\tabcolsep}{.18em}
\begin{table}
	\begin{center}
		\caption{Running time (in seconds) for solving nonnegative Lasso along a sequence of $100$ tuning parameter values of $\lambda$ equally spaced on the logarithmic scale of ${\lambda}/{\lambda_{\rm max}^{\alpha}}$ from $1.0$ to $0.01$ by (a): the solver \cite{SLEP} without screening; (b): the solver combined with DPC. 
		}\label{table:DPC_runtime}\vspace{3mm}
		\begin{scriptsize}
			\def\arraystretch{1.25}
			\begin{tabular}{ l c|c|c|c|c|c|c|c| }
				\cline{2-9}
				 &  \multicolumn{1}{|c|}{Synthetic 1} & Synthetic 2 &  Breast Cancer & Leukemia & Prostate Cancer & PIE & MNIST & SVHN\\
				\cline{2-9}\\ [-2.5ex]\hline
				\multicolumn{1}{|c|}{solver} & 218.37 & 204.06 & 23.40  & 34.04 & 187.82 & 674.04 & 3000.69 & 24761.07\\ \hline
				\multicolumn{1}{|c|}{DPC} & 0.31 & 0.29 & 0.03 & 0.06 & 0.23 & 1.16 & 3.53 & 30.59\\\hline
				\multicolumn{1}{|c|}{DPC+solver} & 5.52 & 6.10 & 2.18 & 3.37 & 6.37 & 5.01 & 9.31 & 104.93\\\hline
				\multicolumn{1}{|c|}{\textbf{speedup}} & \textbf{39.56} & \textbf{33.45} & \textbf{10.73} & \textbf{10.10} & \textbf{29.49} & \textbf{134.54} & \textbf{322.31} & \textbf{235.98}\\\hline
			\end{tabular}
		\end{scriptsize}
	\end{center}
	\vspace{-0.1in}
\end{table}

\section{Conclusion}
In this paper, we propose a novel feature reduction method for SGL via decomposition of convex sets. We also derive the set of parameter values that lead to zero solutions of SGL. To the best of our knowledge, TLFre is the first method which is applicable to sparse models with multiple sparsity-inducing regularizers. More importantly, the proposed approach provides novel framework for developing screening methods for complex sparse models with multiple sparsity-inducing regularizers, e.g., $\ell_1$ SVM that performs both sample and feature selection, fused Lasso and tree Lasso with more than two regularizers. To demonstrate the flexibility of the proposed framework, we develop the DPC screening rule for the nonnegative Lasso problem. Experiments on both synthetic and real data sets demonstrate the effectiveness and efficiency of TLFre and DPC. 
We plan to generalize the idea of TLFre to $\ell_1$ SVM, fused Lasso and tree Lasso, which are expected to consist of multiple layers of screening.

\clearpage
\newpage
{
	\small
	\bibliographystyle{abbrv}
	\bibliography{refs}
}
\clearpage
\newpage

\appendix

\section{Sparse-Group Lasso}

\subsection{The Lagrangian Dual Problem of SGL}\label{subsection:Lagrangian_supplement}

We derive the dual problem of SGL in (\ref{prob:dual_SGL_Lagrangian}) via the Lagrangian multiplier method.

By introducing an auxiliary variable
\begin{align}\label{eqn:auxiliary_SGL}
	\mathbf{z}=\mathbf{y}-\sum_{g=1}^G\mathbf{X}_g\beta_g,
\end{align}
the SGL problem in (\ref{prob:SGL}) becomes:
\begin{align*}
	\min_{\beta}\,\left\{\frac{1}{2}\|\mathbf{z}\|^2+\alpha\lambda\sum_{g=1}^G\sqrt{n_g}\|\beta_g\|+\lambda\|\beta\|_1:\mathbf{z}=\mathbf{y}-\sum_{g=1}^G\mathbf{X}_g\beta_g\right\}.
\end{align*}
Let $\lambda\theta$ be the Lagrangian multiplier, the Lagrangian function is
\begin{align}\label{eqn:Lagrangian_SGL}
	L(\beta,\mathbf{z};\theta)=&\frac{1}{2}\|\mathbf{z}\|^2+\alpha\lambda\sum_{g=1}^G\sqrt{n_g}\|\beta_g\|+\lambda\|\beta\|_1+\langle\lambda\theta,\mathbf{y}-\sum_{g=1}^G\mathbf{X}_g\beta_g-\mathbf{z}\rangle\\
	=&\alpha\lambda\sum_{g=1}^G\sqrt{n_g}\|\beta_g\|+\lambda\|\beta\|_1-\lambda\langle\theta,\sum_{g=1}^G\mathbf{X}_g\beta_g\rangle+\frac{1}{2}\|\mathbf{z}\|^2-\lambda\langle\theta,\mathbf{z}\rangle+\lambda\langle\theta,\mathbf{y}\rangle.
\end{align}
Let
\begin{align*}
	f_1(\beta)=&\sum_{g=1}^G\,f_1^g(\beta_g)=\sum_{g=1}^G\,\left(\alpha\lambda\sqrt{n_g}\|\beta_g\|+\lambda\|\beta_g\|_1-\lambda\langle\theta,\mathbf{X}_g\beta_g\rangle\right),\\
	f_2(\mathbf{z})=&\frac{1}{2}\|\mathbf{z}\|^2-\lambda\langle\theta,\mathbf{z}\rangle.
\end{align*}
To derive the dual problem, we need to minimize the Lagrangian function with respect to $\beta$ and $\mathbf{z}$. In other words, we need to minimize $f_1$ and $f_2$, respectively. We first consider
\begin{align*}
	\min_{\beta_g}\,f_1^g(\beta_g)=\alpha\lambda\sqrt{n_g}\|\beta_g\|+\lambda\|\beta\|_1-\lambda\langle\theta,\mathbf{X}_g\beta_g\rangle.
\end{align*}
By the Fermat's rule, we have
\begin{align}\label{eqn:opt_cond0_minf1_SGL}
	0\in\partial f_1^g(\beta_g)=\alpha\lambda\sqrt{n_g}\partial \|\beta_g\|+\lambda\partial \|\beta_g\|_1-\lambda\mathbf{X}_g^T\theta,
\end{align}
which leads to
\begin{align}\label{eqn:opt_cond_minf1_SGL}
	\mathbf{X}_g^T\theta=\alpha\sqrt{n_g}\zeta_1+\zeta_2,\hspace{2mm}\zeta_1\in\partial \|\beta_g\|,\hspace{1mm}\zeta_2\in\partial \|\beta_g\|_1.
\end{align}
By noting that
\begin{align*}
	\langle\zeta_1,\beta_g\rangle=\|\beta_g\|,\hspace{2mm}\langle\zeta_2,\beta_g\rangle=\|\beta_g\|_1,
\end{align*}
we have
\begin{align*}
	\langle\mathbf{X}_g^T\theta,\beta_g\rangle=\alpha\sqrt{n_g}\partial \|\beta_g\|+\partial \|\beta_g\|_1.
\end{align*}
Thus, we can see that
\begin{align}\label{eqn:minf1_SGL}
	0=\min_{\beta_g}\,f_1^g(\beta_g).
\end{align}
Moreover, because $\zeta_1\in\partial \|\beta_g\|,\hspace{1mm}\zeta_2\in\partial \|\beta_g\|_1$, \eqref{eqn:opt_cond_minf1_SGL} implies that
\begin{align}\label{eqn:dual_feasible_lagrangian_SGL}
	\mathbf{X}_g^T\theta\in \alpha\sqrt{n_g}\mathcal{B}+\mathcal{B}_{\infty}.
\end{align}

To minimize $f_2$, the Fermat's rule results in
\begin{align}\label{eqn:opt_cond_minf2_SGL}
	\mathbf{z}=\lambda\theta,
\end{align}
and thus
\begin{align}\label{eqn:minf2_SGL}
	-\frac{\lambda^2}{2}\|\theta\|^2=\min_{\mathbf{z}}\,f_2(z).
\end{align}

In view of \eqref{eqn:Lagrangian_SGL}, \eqref{eqn:minf1_SGL}, \eqref{eqn:minf2_SGL} and \eqref{eqn:dual_feasible_lagrangian_SGL}, the dual problem of SGL can be written as
\begin{align*}
	\sup_{\theta}\,\left\{\frac{1}{2}\|\textbf{y}\|^2-\frac{1}{2}\left\|\theta-\frac{\textbf{y}}{\lambda}\right\|^2:\mathbf{X}_g^T\theta\in \alpha\sqrt{n_g}\mathcal{B}+\mathcal{B}_{\infty},\,g=1,\ldots,G\right\},
\end{align*}
which is equivalent to (\ref{prob:dual_SGL_Lagrangian}).

Recall that $\beta^*(\lambda,\alpha)$ and $\theta^*(\lambda,\alpha)$ are the primal and dual optimal solutions of SGL, respectively. By \eqref{eqn:auxiliary_SGL}, \eqref{eqn:opt_cond0_minf1_SGL} and \eqref{eqn:opt_cond_minf2_SGL}, we can see that the KKT conditions are
\begin{align*}
	\lambda\theta^*(\lambda,\alpha)=&\mathbf{y}-\mathbf{X}\beta^*(\lambda,\alpha),\\
	\mathbf{X}_g^T\theta^*(\lambda,\alpha)\in& \alpha\sqrt{n_g}\partial \|\beta_g^*(\lambda,\alpha)\|+\partial \|\beta_g^*(\lambda,\alpha)\|_1,\hspace{1mm}g=1,\ldots,G.
\end{align*}

\end{document}